\providecommand{\mathbold}[1]{\bm{#1}}
\newtheoremstyle{myThm}   % Name
     {\topsep}                          % Space above 
     {\topsep}                          % Space below
     {\itshape}                         % Body font
     {}                                      % Indent amount
     {\sffamily\bfseries}           % Theorem head font
     {.}                                     % Punctuation after theorem head
     {.5em}                              % Space after theorem head
     {}                                      % Theorem head specifications. Empty means normal
\newtheoremstyle{myRem}   % Name
     {\topsep}                          % Space above 
     {\topsep}                          % Space below
     {}                         % Body font
     {}                                      % Indent amount
     {\sffamily\bfseries}           % Theorem head font
     {.}                               % Punctuation after theorem head
     {.5em}                              % Space after theorem head
     {}                                      % Theorem head specifications. Empty means normal
\newtheoremstyle{myDef}   % Name
     {\topsep}                          % Space above 
     {\topsep}                          % Space below
     {\itshape}                         % Body font
     {}                                      % Indent amount
     {\sffamily\bfseries}           % Theorem head font
     {.}                                 % Punctuation after theorem head
     {.5em}                              % Space after theorem head
     {}                                      % Theorem head specifications. Empty means normal     
\theoremstyle{myThm}
\newtheorem{theorem}{Theorem}
\newtheorem{lemma}{Lemma}
\newtheorem{proposition}{Proposition}
\newtheorem{corollary}{Corollary}
\newtheorem{assumption}{Assumption}
\theoremstyle{myRem}
\newtheorem{remark}{Remark}
\theoremstyle{myDef}
\newtheorem{definition}{Definition}
\DeclareMathOperator*{\argmax}{\text{argmax}}
\newcommand{\binomial}{\text{Binomial}}
\newcommand{\cover}{\mathcal{C}}
\renewcommand{\epsilon}{\varepsilon}
\newcommand{\kldiv}{\text{KL}}
\newcommand{\naturals}{\mathbb{N}}
\newcommand{\reals}{\mathbb{R}}
\newcommand{\risk}{R}
\newcommand{\sfield}{\mathcal{G}}
\newcommand{\simplex}{\Delta}
\newcommand{\support}{\text{supp}}
\renewcommand{\xspace}{\mathcal{X}}
\newcommand{\yspace}{\mathcal{Y}}
\newcommand{\expect}{\mathbb{E}}
\newcommand{\prob}{\mathbb{P}}
\newcommand{\ind}{\mathbold{1}}
\newcommand{\truepositive}{\text{TP}}
\newcommand{\truenegative}{\text{TN}}
\newcommand{\falsepositive}{\text{FP}}
\newcommand{\falsenegative}{\text{FN}}
\newcommand{\confusionmatrix}{\text{CM}}
\newcommand{\tpe}{(\text{tpe})}
\newcommand{\tne}{(\text{tne})}
\newcommand{\fpe}{(\text{fpe})}
\newcommand{\fne}{(\text{fne})}
\newcommand{\eventA}{\mathcal{A}}
\newcommand{\eventB}{\mathcal{B}}
\newcommand{\eventC}{\mathcal{C}}
\newcommand{\eventD}{\mathcal{D}}
\newcommand{\emptne}{(\hat{\text{tne}})}
\newcommand{\empfne}{(\hat{\text{fne}})}
\newcommand{\precision}{\text{prec}}
\newcommand{\recall}{\text{rec}}
\newcommand{\fone}{\text{F1}}
\newcommand{\mcc}{\text{MCC}}
\newcommand{\uniform}{\text{unif}}
\newcommand{\excessrisk}{\mathcal{E}}
\newcommand{\uniformerror}{U}
\newcommand{\boundaryproximal}{\partial'}
\newcommand{\closedball}{\overline{B}}
\newcommand{\openball}{B}
\renewcommand{\P}{\mathcal{P}} % Class of probability distributions
\newcommand{\R}{\reals} % Real numbers
\newcommand{\X}{\mathcal{X}} % Sample space
\renewcommand{\hat}{\widehat} % Make hats wide by default
\renewcommand{\tilde}{\widetilde} % Make tildes wide by default
\newcommand{\vect}[1]{\bm{#1}}
\title{Multiclass Classification via Class-Weighted Nearest Neighbors}
\author{Justin Khim\footnote{Machine Learning Department, Carnegie Mellon University, Pittsburgh, PA 15213}
\and Ziyu Xu$^*$ \and Shashank Singh$^*$\footnote{Google Pittsburgh}}
\begin{document}

\maketitle

%---------------------------------------------%
%---------------------------------------------%

\begin{abstract}
We study statistical properties of the \(k\)-nearest neighbors algorithm for multiclass classification, with a focus on settings where the number of classes may be large and/or classes may be highly imbalanced. In particular, we consider a variant of the $k$-nearest neighbor classifier with  non-uniform class-weightings, for which we derive upper and minimax lower bounds on accuracy, class-weighted risk, and uniform error.
Additionally, we show that uniform error bounds lead to bounds on the difference between empirical confusion matrix quantities and their population counterparts across a set of weights.
As a result, we may adjust the class weights to optimize classification metrics such as F1 score or Matthew's Correlation Coefficient that are commonly used in practice, particularly in settings with imbalanced classes.
We additionally provide a simple example to instantiate our bounds and numerical experiments.
\end{abstract}
%---------------------------------------------%
%---------------------------------------------%

%---------------------------------------------%
%---------------------------------------------%
\section{Introduction}
\label{sec:Intro}

Classification is a fundamental problem in statistics and machine learning that arises in many scientific and engineering problems.
Scientific applications include identifying plant and animal species from body measurements, determining cancer types based on gene expression, and satellite image processing \citep{fisher1936use, fisher1938statistical, khan2001classification, lee2004cloud};
in modern engineering contexts, credit card fraud detection, handwritten digit recognition, word sense disambiguation, and object detection in images are all examples of classification tasks.

These applications have brought two new challenges: multiclass classification with a potentially large number of classes and imbalanced data.
For example, in online retailing, websites have hundreds of thousands or millions of products, and they may like to categorize these products within a pre-existing taxonomy based on product descriptions \citep{lin2018overview}.
While the number of classes alone makes the problem difficult, an added difficulty with text data is that it is usually highly imbalanced, meaning that a few classes may constitute a large fraction of the data while many classes have only a few examples.
In fact, \cite{feldman2019does} notes that if the data follows the classical Zipf distribution for text data \citep{zipf1936psycho}, i.e., the class probabilities satisfy a power-law distribution, then up to 35\% of seen examples may appear only once in the training data.
Additionally, natural image data also seems to have the problems of many classes and imbalanced data \citep{salakhutdinov2011learning, zhu2014capturing}.

Focusing on the problem of imbalanced data, researchers have found that a few heuristics help ``do better,'' and the most principled and studied of these is weighting.
There are a number of forms of weighting; we consider the most basic in which we incur a loss of weight \(q_{c}\) for misclassifying an example of class \(c\) and refer to this method as class-weighting.
Class-weighting provides a principled approach to applications such as credit card fraud detection or online retailing, in which it can be fairly easy to assign a cost to an example of a given class, e.g., perhaps it costs a credit card company hundreds of dollars on average to pay a fraudulent charge.
As a result, weighting has been studied in a number of settings, including as a wrapper around a black-box classifier \citep{domingos1999metacost}, in support vector machines (SVMs) \citep{lin2002support, scott2012calibrated}, and in neural networks \citep{zhou2006training}.
Additionally, weighting has been observed to be equivalent to adjusting the threshold that determines the decision boundary, and this has also been used to estimate class probabilities from hard classifiers \citep{wang2008probability, wu2010robust, wang2019multiclass}.

Of course, while ``doing better'' on imbalanced data usually corresponds to somehow improving performance on small classes, this is a vague notion.
In practice, success is often evaluated using a different metric than prediction accuracy, and examples of popular metrics include precision, recall, \(F_{\beta}\)-measure, and Matthew's Correlation Coefficient \citep{van1974foundation, van1979information}.
There are a couple of lines of work in this area, and each usually focuses on a particular class of metrics.
The first line considers plug-in binary classification, and optimizing many of these metrics amounts to finding the proper threshold for the decision rule \citep{koyejo2014consistent, lewis1995evaluating, menon2013statistical, narasimhan2014statistical}.
The other line of work concerns optimization for pre-existing algorithms such as SVMs and neural networks \citep{dembczynski2013optimizing, fathony2019genericMetrics, joachims2005support}.
Statistically, the best result from this line of work is consistency in that the algorithm under consideration asymptotically optimizes the metric of choice under additional assumptions.

A popular algorithm that has received little theoretical attention for multiclass classification and imbalanced classification is \(k\)-nearest neighbors (kNN).
Theoretically, kNN is well-understood in binary classification with respect to accuracy and excess risk \citep{biau2015lectures, chaudhuri2014rates}, and because it provides class probability estimates, kNN can readily be combined with class-weighting.
On the other hand, most research on using kNN for imbalanced classification problems focuses on algorithmic modifications.
Examples include prototype selection and weighting, in which a set of possibly-weighted representative points are chosen based on the training set to use with the kNN classifier \citep{liu2011class, lopez2014addressing, vluymans2016eprennid},
and gravitational methods, in which the distance function is modified to resemble the gravitational force \citep{cano2013weighted, zhu2015gravitational}.
Further variations are surveyed in \cite{fernandez2018learning}.

In this paper, we consider class-weighted nearest neighbors for multiclass classification.
First, we extend theoretical results on the accuracy, risk, weighted risk, and uniform error to the multiclass setting.
These include upper bounds for kNN as well as matching minimax lower bounds.
Second, using our results for uniform error, we obtain bounds on the difference between the empirical confusion matrix and the population confusion matrix uniformly over a set of weights \(q\).
These lead to quantitative upper bounds on the difference between empirical and population values for a given metric and allow us to optimize commonly-used performance metrics such as \(F_{\beta}\) score adaptively with respect to the choice of weight \(q\), since \(q\) is the multiclass analog of the decision threshold.
Our upper bounds depend on the data-generating distribution, and we show that this is ultimately unavoidable via corresponding lower bounds.

The remainder of this paper is organized as follows.
In Section~\ref{sec:Setup}, we establish our notation.
In Section~\ref{sec:MainResults}, we state our main results for the nearest neighbor classifier.
In Section~\ref{sec:ApplicationGeneralClassification}, we present the convergence of the empirical confusion matrix to the true confusion matrix, which implies the convergence of many general classification metrics.
In Section~\ref{sec:Example}, we illustrate our bounds numerically on a simple example distribution.
In Section~\ref{sec:NumericalResults}, we consider simple algorithms for optimizing the F1 score based on our results.
Finally, we conclude with a discussion in Section~\ref{sec:Discussion}.
Due to space considerations, all proofs are deferred to the appendices.
Since the uniform convergence over weightings is, to the best of our knowledge, novel even for binary classification, we also consider binary classification in the appendix, and we note that the results are stronger than for multiclass classification.

%---------------------------------------------%
%---------------------------------------------%
\subsection{Related Work}

The kNN classifier, first published by \citet{fix1951discriminatory}, is one of the oldest and most well-studied nonparametric classifiers. Early theoretical results include those of \citet{cover1967nearest}, who showed that the misclassification risk of the kNN classifier with $k = 1$ is at most twice that of the Bayes-optimal classifier, and \citet{stone1977consistent}, who showed that the kNN classifier is Bayes-consistent if $k \to \infty$ at an appropriate rate. For overviews of the theory of kNN methods in binary classification (as well as in regression and density estimation), see \citet{devroye1996probabilistic, gyorfi2002distribution, biau2015lectures}. Presently, we discuss some of the most recent related work in kNN.

\cite{samworth2012optimal} studies schemes for reweighting the kNN classifier in order to improve rates for highly smooth regression function; to do this, he uses the order of neighbors' distances, but not their labels as in this paper.
\cite{chaudhuri2014rates} proves general distribution-dependent upper and lower bounds for the excess risk with respect to accuracy of kNN.
Additionally, they introduce a general smoothness condition and specialize their results to a few specific settings.
\cite{doring2018rate} considers the excess risk with respect to accuracy and the \(L_2\) risk of kNN under a modified Lipschitz condition, which is a special case of the smoothness condition of \cite{chaudhuri2014rates}, while avoiding the assumption that the density is lower bounded away from \(0\). 
\cite{gadat2016classification} considers classification accuracy for general distributions without a strong density assumption and provides rates of convergence of excess risk where \(k\) is allowed to vary across the covariate space.
\cite{cannings2019local} also considers a semi-supervised classification setting where \(k\) is allowed to vary across the covariate space.
Here, the unlabeled samples are used to estimate the density, and the authors obtain an asymptotic form of the excess risk.

Unlike accuracy bounds, our bounds on uniform risk are more closely related to risk bounds for kNN regression, of which the results of \cite{biau2010rates} are representative.
\cite{biau2010rates} gives convergence rates for kNN regression in $L_2$ risk, weighted by the covariate distribution, in terms of covering numbers of the sample space and the variance of the noise. While closely related to our bounds on uniform ($L_\infty$) risk, their results differ in a few key ways. First, minimax rates under $L_\infty$ are necessarily worse than under $L_2$ risk by a logarithmic factor (as implied by our lower bounds). Second, instead of assuming categorical labels, they assume noise with finite variance; as a result, naively applying their bound to our multiclass setting would give a logarithmic dependency on the number $C$ of classes, whereas we derive rates independent of $C$.
Perhaps most importantly, that fact that their risk is weighted by the covariate distribution allows them to avoid our assumption that the covariate density is lower bounded away from \(0\). However, the lower boundedness assumption is unavoidable under $L_\infty$ risk and is ultimately necessary for our confusion matrix bounds.

Convergence rates for classification typically require an assumption about how well classes are separated. In binary nonparametric classification, this is commonly specified by the Tsybakov margin condition (also called the Tsybakov or Mammen-Tsybakov noise condition), first proposed by \citet{mammen1999smooth}.
This condition allows for fast rates of convergence for the excess risk; in particular, the rate may be faster than the parametric \(O(n^{-1/2})\) rate.
\cite{audibert2007fast} further demonstrate that plug-in classifiers can achieve very fast rates of convergence under strict conditions. Our results for classification accuracy rely on an appropriate generalization of the Tsybakov noise condition to the weighted multiclass case.

Relative to the large body of statistical theory on kNN classification, little attention has been given to the multiclass case. To the best of our knowledge, the only results are the recent ones of \cite{puchkin2020adaptive}, who consider a method for aggregating over many values of \(k\) in multiclass kNN in order to adapt to the unknown smoothness of the regression function. As in \citet{samworth2012optimal}, the underlying kNN regression estimates are weighted by distance and not by class label. To derive fast convergence rates, \citet{puchkin2020adaptive} introduce a natural multiclass analogue of the margin condition of \citep{mammen1999smooth}. We note that their unweighted setting differs from our class-weighted setting in terms of the margin assumption, kNN classifier, and loss function used.

In statistical learning theory, basic multiclass results in terms of accuracy can be found in standard texts \citep{mohri2012}.
Interestingly, cost-weighting has been studied in the multiclass case before with class imbalance as a motivation \citep{scott2012calibrated} but in the context of calibrated losses, i.e., the guarantee that minimizing a surrogate loss for the zero-one classification loss does in fact lead to a Bayes-optimal estimator \citep{liu2007fisher, tewari2007consistency}.

In addition to weighting, there are three other methods that are commonly used for imbalanced classification: margin adjustment, data augmentation, and Neyman-Pearson classification.
Prior work considers adjusting the margins of SVMs to appropriately handle class-weighting \citep{lin2002support}, and \cite{scott2012calibrated} allows for class-based modifications to the margin as well.
More recent work adjusts the margins for deep neural network classifiers \citep{cao2019learning}.
Although theoretical interest in data augmentation is growing as a result of its success in deep learning \citep{chen2019invariance}, the techniques used for imbalanced classification, particularly SMOTE, are poorly understood \citep{chawla2002smote}. 
A number of variants have followed, including the use of generative adversarial networks (GANs) to produce additional data \citep{mariani2018bagan}.
Neyman-Pearson classification attempts to minimize the misclassification error on one class subject to a constraint on the maximum misclassification error on a second class.
Unlike many methods in imbalanced classification, Neyman-Pearson classification is fairly well-understood theoretically \citep{rigollet2011neyman, tong2013plug, tong2016survey}.
The results build on work in both empirical risk minimization and plug-in methods for nonparametric classification.

Finally, while the focus of our paper is statistical first and foremost, we do point out related computational work, particularly regarding a large number of classes and kNN.
In the text-processing community, the problem of a large number of classes is known as extreme classification.
Most of the work in this area focuses on efficient computation when algorithms must be sublinear in the number of classes \citep{joulin2016bag, yen2018loss}.

There is also extensive work on computing the exact \(k\)-nearest neighbors estimate more efficiently, but, particularly in high dimensions, fast approximations to \(k\)-nearest neighbors are also considered \citep{andoni2006near, andoni2018approximate, indyk2018approximate, dong2019scalable}.
The goal is to design a classifier that can be quickly evaluated on a large amount of data, especially in higher dimensions.
While such approximate algorithms are certainly of interest in the case of modern large-scale datasets, in this paper, we consider simpler exact algorithms and focus on the statistical problem.
Additionally, there is some work that attempts to bridge the gap between computational and statistical efficiency in classification.
Work in this area is motivated by \cite{hart1968condensed}, who first proposed compressing a data sample for use in classification.
\cite{kontorovich2017nearest} considers a compression scheme that leads to a \(1\)-nearest neighbor classifier and proves that this is Bayes consistent in finite-dimensional and certain infinite-dimensional settings.
\cite{gottlieb2018near} studies the computational hardness of approximate sampling compression, gives a compression scheme, and provides a PAC-learning guarantee.
Finally, \cite{efremenko2020fast} provides a compression algorithm based on locality sensitive hashing and quantifies the convergence rate of the excess risk.

%---------------------------------------------%
%---------------------------------------------%
\section{Setup}
\label{sec:Setup}

We consider a sample of \(n\) points \((X_{1}, Y_{1}), \ldots, (X_{n}, Y_{n})\) drawn from some distribution \(P_{X, Y}\) on \(\xspace \times \yspace\) with marginals $P_X$ and $P_Y$.
For our purposes, \((\xspace, \rho)\) is a separable metric space, and we assume \(\yspace = \{1, \ldots, C\} = [C]\).
Let 
\[
\simplex^{C - 1} 
= \left\{a \in \reals^{C} : \sum_{c = 1}^{C} a_{c} = 1, a_{c} \geq 0, \; c = 1, \ldots, C\right\}
\]
be the \((C - 1)\)-dimensional simplex.
The regression function \(\eta: \xspace \to \simplex^{C - 1}\) is defined as
\[
\eta_{c}(x) = \prob\left(Y = c| X = x\right);
\]
that is, given a point $X_i$ we assume the label $Y_i$ has a categorical conditional distribution with mean $\eta(X_i)$.
Additionally, we define \(\eta\) on a subset \(A \subseteq \xspace\) by
\(
\eta_{c}(A)
=
\prob\left(Y = c| X \in A\right).
\)

%---------------------------------------------%
%---------------------------------------------%
\subsection{The Class-Weighted Nearest Neighbor Classifier}

In this section, we define the kNN regressor and classifier.
Given a point \(x\) in \(\xspace\), we define the reordered points \(X_{\sigma_{1}(x)}, \ldots, X_{\sigma_{n}(x)}\) such that
\[
\rho\left(X_{\sigma_{1}(x)}, \; x\right)
\leq 
\ldots
\leq 
\rho\left(X_{\sigma_{n}(x)}, \; x\right).
\]
Tie-breaking procedures are well-known; so we assume without loss of generality that there are no ties.
Let \(S \subseteq \xspace\) be a measurable set.
We define the nearest neighbors regression estimator on \(S\) by
\[
\hat{\eta}_{c}(S) 
= 
\frac{\sum_{i = 1}^{n} \ind\left\{X_{i} \in S, \; Y_{i} = c\right\}}{\sum_{i = 1}^{n} \ind\left\{X_{i} \in S\right\}}.
\]
We are most interested in using this with balls in \(\xspace\).
Thus, we define the open ball of radius $r$ centered at $x$ to be $\openball(x, r) = \{x' \in \X : \rho(x, x') < r\}$ and
the closed ball of radius \(r\) centered at \(x\) to be \(\closedball(x, r) = \{x' \in \X : \rho(x, x') \leq r\}\) .

Now, we turn to defining the kNN regression estimate.
Fix an integer \(k > 0\).
Then, the \(k\)-nearest neighbor regressor at \(x\) for class \(c\) is defined to be
\begin{align}
& \begin{aligned}
\hat{\eta}_{c}(x)
=
\hat{\eta}_{c}\left(\closedball\left(x, \rho\left(X_{\sigma_{k}(x)}, \; x\right)\right)\right).
\label{eqn:KNNRegressor}
\end{aligned}
\end{align}
In words, the kNN classifier \(f_{n, k}(x)\) is the plurality class of the \(k\) nearest points to \(x\).

Finally, we define the \(q\)-weighted kNN classifier.
Let \(q\) be a vector in the positive orthant \(\reals^{C}_{+}\).
Then, the \(q\)-weighted kNN classifier is
\begin{align}
& \begin{aligned}
f_{q, n, k}(x)
=
\argmax_{c \in [C]} \;
q_{c} \hat{\eta}_{c}\left(x\right).
\label{eqn:qWeightedKNN}
\end{aligned}
\end{align}
Intuitively, \(q_{c}\) is the weight put on classifying an example of class \(Y = c\), and
the usual unweighted case corresponds to selecting \(q = (1, \ldots, 1)\).
In the unweighted case, we drop the \(q\) from the classifier subscript and write \(f_{n, k}(x)\) for the standard kNN estimate at \(x\).
Finally, in many of our results, we reference the maximum element of \(q\), and so we write \(q_{\max} = \max_{j = 1, \ldots, C} q_j\).

%---------------------------------------------%
%---------------------------------------------%
\subsection{Error Measures}

In the following sections, we define various error measures and additional terminology.
Accuracy and uniform error are the first two, and these are the quantities for which we derive bounds for kNN.
Subsequently, we consider the confusion matrix.

%---------------------------------------------%
%---------------------------------------------%
\subsubsection{Accuracy and Risk}

In the usual setting, we are interested in two quantities when evaluating the nearest neighbor classifier: the probability of a suboptimal prediction and the risk.
First, we define the \(q\)-Bayes-optimal classifier \(f^{*}_{q}\) by
\[
f^{*}_{q}(x)
= \argmax_{c \in [C]} \; q_{c} \eta_{c}(x).
\]
Again, if \(q = (1, \ldots, 1)\), we drop the subscript and write \(f^*\).
For a new test sample \((X, Y)\), denote let \(\prob_{X}\) be the probability measure with respect to \(X\).
Then, the first quantity we are interested in is the accuracy, denoted by
\[
\prob_{X}\left(f_{q, n, k}(X) \neq f^{*}_{q}(X)\right),
\]
which is the probability of a suboptimal choice on new data.
Note that this is still a random quantity depending on the sample of \(n\) data points.
When we need to take the probability or expectation with respect to the sample, we occasionally emphasize this by writing \(\prob_{n}\) and \(\expect_{n}\).

Second, we consider the risk.
The risk of a classifier \(f\) is
\begin{align*}
& \begin{aligned}
\risk(f) 
&=
\expect_{n}
\prob_{X, Y}\left(f(X) \neq Y\right)
=
\expect_{n}
\expect_{X, Y}
\ind\left\{
f(X) \neq Y
\right\}.
\end{aligned}
\end{align*}
The Bayes risk is \(\risk^{*} = \risk(f^{*}(X))\), and we analyze the excess risk
\[
\excessrisk(f) 
:=
\risk(f) - \risk^{*}
=
\expect_{n} \expect_{X, Y} \ind\left\{
f(X) \neq f^{*}(X)
\right\}.
\]

Finally, we consider the \(q\)-weighted risk.
Define the class-conditioned risk of class \(c\) to be
\[
\risk_{c}(f)
=
\expect_{n}
\prob_{X, Y}\left(
f(X) \neq Y | Y = c
\right).
\]
Note that this is related to the population precision of \(f\) on class \(c\).
Define the vector of marginal probabilities \(p\) by
\(
p_{c}
=
\prob_{X, Y}\left(Y = c\right).
\)
Then, the \(q\)-weighted risk is
\[
\risk_{q}(f)
=
\sum_{c = 1}^{C} q_{c} p_{c} \risk_{c}(f).
\]
Note that the \(q\)-weighted risk does reduce to the usual risk \(\risk(f)\) when \(q = (1, \ldots, 1)\), but we have considered the risk and weighted risks separately because they require different assumptions to analyze.
Additionally, the \(q\)-Bayes classifier \(f_{q}^{*}\) is the minimizer of \(\risk_{q}\), and so we define the \(q\)-Bayes risk to be \(\risk_{q}^{*}\) and the excess \(q\)-risk to be \(\excessrisk_{q}(f) = \risk_{q}(f) - \risk_{q}^{*}\).

%---------------------------------------------%
%---------------------------------------------%
\subsubsection{Uniform Error}

In this section, we consider the uniform error.
We define the uniform error to be 
\begin{equation}
\uniformerror(\hat{\eta})
=
\max_{c \in [C]} \|\hat{\eta}_{c} - \eta_{c}\|_{\xspace, \infty},
\label{eqn:UniformError}
\end{equation}
where, for a function $f : \X \to \R$, $\|f\|_{\xspace, \infty} := \sup_{x \in \X} |f(x)|$ denotes the $\sup$-norm of $f$.

For our results on uniform error, we impose additional requirements on our space and regression function, and so we discuss additional notation.
One of the key assumptions is that \((X, \rho)\) is a totally bounded metric space, and
under the assumption that \((\X, \rho)\) is totally bounded, we may introduce covering numbers.
For any $r > 0$, let $N(r)$ denote the $r$-covering number of $(\X,\rho)$; that is, $N(r)$ is the smallest positive integer such that, for some $x_1,...,x_{N(r)} \in X$,
\[
\X \subseteq \bigcup_{i = 1}^{N(r)} \openball(x_i, r).
\]
Finally, for positive integers $n$, let
\[S(n)
:= \sup_{x_1,...,x_n \in \X} \left| \left\{ \{x_1,...,x_n\} \cap \openball(x, r) : x \in \X, r \geq 0 \right\} \right|
\]
denote the shattering coefficient of the class of open balls in $(\X, \rho)$.

%---------------------------------------------%
%---------------------------------------------%
\subsubsection{General Classification Metrics}

In this section, we consider general classification metrics.
A number of commonly-used metrics are derived from the confusion matrix of a classifier, i.e., the true positive, true negative, false positive, and false negative rates. 
Presently, we define the confusion matrix entries.

Usually, practitioners consider the empirical versions of the confusion matrix entries, and we denote these by
\(\widehat{\truepositive}\), \(\widehat{\truenegative}\), \(\widehat{\falsepositive}\), and \(\widehat{\falsenegative}\) respectively. For simplicity, let 
\begin{align*}
M(q, c, x) 
&= \max_{j \neq c} \; q_{j} \eta_{j}(X)
&
\hat{M}(q, c, x)
&= \max_{j \neq c} \; q_{j} \hat{\eta}_{j}(X).
\end{align*}
The confusion matrix entries are then defined by
\begin{align}
&\begin{aligned}
\widehat{\truenegative}_{c}(q) 
&= \frac{1}{n} \sum_{i = 1}^{n} \ind\left\{Y_{i} \neq c\right\} \ind\left\{q_{c} \hat{\eta}_{c}(X_{i}) < \hat{M}(q, c, X_{i})\right\}  
\\ % &\qquad 
\widehat{\falsenegative}_{c}(q) 
&= \frac{1}{n} \sum_{i = 1}^{n} \ind\left\{Y_{i} = c\right\} \ind\left\{q_{c} \hat{\eta}_{c}(X_{i}) < \hat{M}(q, c, X_{i})\right\}\\
\widehat{\falsepositive}_{c}(q) 
&= \frac{1}{n} \sum_{i = 1}^{n} \ind\left\{Y_{i} \neq c\right\} \ind\left\{q_{c} \hat{\eta}_{c}(X_{i}) \geq \hat{M}(q, c, X_{i})\right\}
\\ % &\qquad 
\widehat{\truepositive}_{c}(q) 
&= \frac{1}{n} \sum_{i = 1}^{n} \ind\left\{Y_{i} = c\right\} \ind\left\{q_{c} \hat{\eta}_{c}(X_{i}) \geq \hat{M}(q, c, X_{i})\right\}.
\end{aligned}
\end{align}
In contrast to the usual empirical use of the confusion matrix, we normalize each entry in order to discuss convergence.
The population confusion matrix is a result of evaluating the confusion matrix quantities on the true distribution, which gives
\begin{align}
&\begin{aligned}
\truenegative_{c}(q) 
&= \int_{\xspace} (1 - \eta_{c}(X)) 
\ind\left\{q_{c} \eta_{c}(X) < M(q, c, X)\right\} dP_{X}  
\\
\falsenegative_{c}(q) 
&= \int_{\xspace} \eta_{c}(X) 
\ind\left\{q_{c} \eta_{c}(X) < M(q, c, X)\right\} dP_{X} \\
\falsepositive_{c}(q) 
&= \int_{\xspace} (1 - \eta_{c}(X)) 
\ind\left\{q_{c} \eta_{c}(X) \geq M(q, c, X)\right\} dP_{X} \\ 
\truepositive_{c}(q) 
&= 
\int_{\xspace} \eta_{c}(X) 
\ind\left\{q_{c} \eta_{c}(X) \geq M(q, c, X)\right\} dP_{X}.
\end{aligned}
\end{align}

As we must discretize the space of weights, we define the weights that are covered by a given discretization.
\begin{definition}
A weight \(q\) is class \(c\)-covered by \((q', q'')\) if
\begin{align*}
q_{c} &< q'_{c} &\qquad q_{j} > q'_{j} \text{ for all } j \neq c, \\
q_{c} &> q''_{c} & \qquad q_{j} < q''_{j} \text{ for all } j \neq c.
\end{align*}
Let \(Q \subset \reals^{C}\) be a finite set of weights.
We say that \(Q\) class \(c\)-covers \(q\) if it contains a class \(c\)-cover \((q', q'')\), and we say that \(Q\) is a multiclass cover if it is a \(c\) cover for all \(c\) in \([C]\).
\label{def:CoveredWeight}
\end{definition}

%---------------------------------------------%
%---------------------------------------------%
%---------------------------------------------%
%---------------------------------------------%
\section{Main Results}
\label{sec:MainResults}

In this section, we present our main results for accuracy, risk, and uniform error of nearest neighbor classifiers.

%---------------------------------------------%
%---------------------------------------------%
\subsection{Accuracy and Risk}

First, we consider bounds on the accuracy, risk, and weighted risk.
The results and proofs divide the space \(\xspace\) into two parts: a ``good'' region where the optimal classification is made with high probability and a ``bad'' region near the true decision boundary.
The results are largely multiclass extensions from \cite{chaudhuri2014rates}, although we note that the rate differs in \(n\) for the multiclass risk.

Let \(\Delta\) in \((0, 1]\) be a constant.
We define the effective boundary \(\partial_{p, q, \Delta}\) by
\[
\partial_{p, q, \Delta}
  = \biggr\{
x \in \xspace
\; \biggr| \text{ for some $r \leq r_{p}(x)$, }\; q_{f^{*}_{q}(x)} \eta_{f^{*}_{q}(x)}(B(x, r)) <  \max_{c \neq f^{*}_{q}(x)} q_{c} \eta_{c}(B(x, r)) + \Delta
\biggr\}
\]
where \(r_{p}(x) = \inf\{r: P_{X}(\closedball(x, r)) \geq p\}.\)
That is, $\partial_{p, q, \Delta}$ is the set of points $x$ such that the difference in probability between the optimal class $f^{*}_{q}(x)$ at $x$ and some other class, on some small ball $B(x, r)$, is less than $\Delta$.
We can now state the theorem.

\begin{theorem}
Let \(\delta\) be in \((0, 1)\), and pick a positive integer \(k \leq n\).
Define the terms
\begin{align*}
p
=
\frac{k}{n} \cdot \frac{1}{1 - \sqrt{(4 / k) \log (2 / \delta)}}
\quad \text{ and } \quad
\Delta
=
\min\left(1, \; \sqrt{\frac{2 q_{\max}^{2}}{k}\left(\log C  + 2\log \frac{2}{\delta}\right)}
\right).
\end{align*}
Then, with probability at least \(1 - \delta\) with respect to the training data, for a new sample \((X, Y)\) we have
\begin{align*}
& \begin{aligned}
\prob\left(f_{q, n, k}(X) \neq f^{*}_{q}(X)\right)
&\leq
\delta
+
P_{X}\left(\partial_{p, q, \Delta}\right).
\end{aligned}
\end{align*}
\label{theorem:Theorem5Analogue}
\end{theorem}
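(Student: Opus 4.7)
The plan is to adapt the pointwise strategy of \citet{chaudhuri2014rates} to the class-weighted multiclass setting. The key idea is to partition the test distribution into the effective boundary $\partial_{p,q,\Delta}$, which directly contributes $P_X(\partial_{p,q,\Delta})$ to the bound, and its complement, on which I show that the $q$-weighted kNN classifier agrees with the $q$-Bayes classifier with probability at least $1 - \delta$ over the training sample. Integrating over the test sample via Fubini then yields the stated bound.

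For the complement of the boundary, I would fix any $x \notin \partial_{p,q,\Delta}$, set $c^* = f^*_q(x)$, and control two high-probability events on the training data. The first event, $\eventA_x$, is that the $k$-NN radius satisfies $\rho(X_{\sigma_k(x)}, x) \leq r_p(x)$. Because $P_X(\closedball(x, r_p(x))) \geq p$, the number of training samples inside this ball stochastically dominates a $\binomial(n, p)$ variable, and the specific form of $p$ in the theorem is exactly what a multiplicative Chernoff lower tail bound requires to give $\prob(\eventA_x^c) \leq \delta/2$. The second event, $\eventB_x$, is that $q_c |\hat\eta_c(x) - \eta_c(\closedball(x, r_k(x)))| < \Delta/2$ for every $c \in [C]$. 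Conditional on the nearest-neighbor locations, the labels $Y_{\sigma_i(x)}$ are independent categorical draws with means $\eta(X_{\sigma_i(x)})$, so Hoeffding's inequality applied class-by-class, combined with a union bound over $c \in [C]$ and the $\log C + 2\log(2/\delta)$ factor built into $\Delta$, will yield $\prob(\eventB_x^c) \leq \delta/2$.

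On $\eventA_x \cap \eventB_x$, since $r_k(x) \leq r_p(x)$, the defining condition of $\partial_{p,q,\Delta}$ applied at the particular radius $r = r_k(x)$ gives
\[
q_{c^*}\, \eta_{c^*}(\closedball(x, r_k(x))) \geq \max_{c \neq c^*} q_c\, \eta_c(\closedball(x, r_k(x))) + \Delta.
\]
Two applications of $\eventB_x$ (once for $c^*$ and once for each competing class) together with the triangle inequality lift this to the analogous strict inequality for $\hat\eta$, forcing $f_{q,n,k}(x) = c^* = f^*_q(x)$. A Fubini argument over $X \sim P_X$ then trades the per-$x$ training failure probability $\delta$ on the interior for the boundary mass $P_X(\partial_{p,q,\Delta})$, closing the proof.

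I expect the main obstacle to be the concentration in $\eventB_x$, because $r_k(x)$ itself depends on the training data and $\hat\eta_c(x)$ is therefore a ratio of correlated random sums. The cleanest route I see is a two-stage conditioning: first condition on all $k$ neighbor locations so that the labels are conditionally independent and Hoeffding controls $|\hat\eta_c(x) - k^{-1}\sum_i \eta_c(X_{\sigma_i(x)})|$, and then condition on the outermost neighbor $X_{\sigma_k(x)}$ to expose the remaining $k-1$ neighbors as i.i.d.\ samples from $P_X$ restricted to $\openball(x, r_k(x))$, allowing a second Hoeffding application to relate $k^{-1}\sum_i \eta_c(X_{\sigma_i(x)})$ to $\eta_c(\closedball(x, r_k(x)))$. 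Folding the $O(1/k)$ contribution of the boundary point $X_{\sigma_k(x)}$ and the gap between the open and closed ball versions of $\eta_c$ into a single Hoeffding constant is the delicate bookkeeping step that pins down the exact form of $\Delta$ stated in the theorem.
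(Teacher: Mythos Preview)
Your overall decomposition into the effective boundary and its complement matches the paper, and your identification of the two key events (radius control via multiplicative Chernoff, label concentration via Hoeffding) is correct. However, there is a genuine gap in how you pass from the per-$x$ failure probabilities to the high-probability-over-training statement of the theorem.

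You claim that for each fixed $x \notin \partial_{p,q,\Delta}$ the training failure probability is at most $\delta$, and that ``a Fubini argument over $X \sim P_X$'' then closes the proof. But Fubini only gives an expectation bound: writing $R := \prob_X(f_{q,n,k}(X) \neq f^*_q(X),\ X \notin \partial_{p,q,\Delta})$, you would obtain $\expect_n[R] \leq \delta$, not a high-probability bound on $R$. The theorem asserts that $R \leq \delta$ with $\prob_n$-probability at least $1-\delta$, and the missing step is Markov's inequality, $\prob_n(R \geq \delta) \leq \expect_n[R]/\delta$. For this to yield $\prob_n(R \geq \delta) \leq \delta$ you need the per-$x$ bounds to be $O(\delta^2)$, not $O(\delta)$. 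In fact this is exactly what the stated $p$ and $\Delta$ deliver: with $\gamma = \sqrt{(4/k)\log(2/\delta)}$ one has $\exp(-k\gamma^2/2) = (\delta/2)^2$, and with the stated $\Delta$ one has $C\exp(-k\Delta^2/(2q_{\max}^2)) = (\delta/2)^2$. Your reading of these constants as producing $\delta/2$ is off by a square, and that square is precisely what makes the Markov step work.

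On the concentration event $\eventB_x$, your proposed two-stage Hoeffding (labels given locations, then locations given the outermost neighbor) can be made to work but is more involved than necessary. The paper instead conditions once on the $(k+1)$st nearest neighbor: under this conditioning the $k$ inner points are i.i.d.\ from the joint distribution restricted to $\closedball' = \closedball(x, \rho(x, X_{\sigma_{k+1}(x)}))$, so each label satisfies $\prob(Y_i = c) = \eta_c(\closedball')$ marginally. Hoeffding is then applied directly to the pairwise variable $Z_i = q_a \ind\{Y_i = a\} - q_b \ind\{Y_i = b\}$, which has range $q_a + q_b \leq 2q_{\max}$ and mean exactly $q_a\eta_a(\closedball') - q_b\eta_b(\closedball')$, followed by a union bound over the $C-1$ competing classes. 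This collapses your two stages into one and sidesteps the bookkeeping you flagged as delicate.
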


Theorem~\ref{theorem:Theorem5Analogue} tells us that, with high probability over the training data, sub-optimal classifications of new test points are most likely to occur within the effective boundary $\partial_{p,q,\Delta}$.
Compared to binary classification, here the boundary terms \(\Delta\) must be larger by a factor of \(\log C\) within the root.
Of course, the effect it has on the error depends on the underlying distribution.
This theorem can be specialized to more traditional cases of interest where we impose more restrictions.
In the following sections, we consider smooth regression functions and a margin condition.

%---------------------------------------------%
%---------------------------------------------%
\subsubsection{Smooth Measures}

In this section, we specialize our accuracy result to smooth regression functions.
The first step is to define a notion of smoothness.
In the binary classification case of \cite{chaudhuri2014rates}, the authors define a regression function \(\eta\) to be \((\alpha, L)\)-smooth if
\[
|\eta(x) - \eta(\openball(x, r))| \leq L P_{X}(\openball(x, r))^{\alpha}.
\]
The regression function \(\eta\) is observed to be \((\alpha, L)\)-smooth if it is \(d \alpha\)-H\"{o}lder.

In the multiclass case, we define smoothness entry-wise.
Given scalars \(\alpha\) in \((0, 1 / d)\) and \(L > 0\), we say that \(\eta\) is \((\alpha, L)\)-smooth if
\begin{equation}
|\eta_{c}(x) - \eta_{c}(\closedball(x, r))|
\leq
LP_{X}\left(\closedball(x, r)\right)^{\alpha}
\label{eqn:SmoothnessMulticlass}
\end{equation}
for each \(c\) in \([C]\).
As in the univariate case, this is implied if the \(c\)th coordinate of the regression function is \(d\alpha\)-H\"{o}lder.

\begin{corollary}
Let \(\eta\) be an \((\alpha, L)\)-smooth function.
Then with probability at least \(1 - \delta\), we have the upper bound
\begin{align*}
&\begin{aligned}
\prob(&f_{q, n, k}(X) \neq f^{*}_{q}(X))
\leq
\delta +
\prob\biggr(
q_{f^{*}_{q}(X)}\eta_{f^{*}_{q}(X)}(X)
\leq
 \\ &\qquad
q_{c} \eta_{c}(X) + \sqrt{\frac{2q_{\max}^{2}}{k}\left(\log C + 2 \log \frac{2}{\delta}\right)}
+
2q_{\max}L\left(\frac{2k}{n}\right)^{\alpha}
\text{ for some }c \neq f^{*}_{q}(X)
\biggr).
\end{aligned}
\end{align*}
\label{corollary:SmoothTheorem}
\end{corollary}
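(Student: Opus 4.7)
The plan is to apply Theorem~\ref{theorem:Theorem5Analogue} directly and then use the $(\alpha,L)$-smoothness hypothesis to convert the event $\{X \in \partial_{p,q,\Delta}\}$, which is defined in terms of ball-averages $\eta_c(B(x,r))$, into an event expressed purely in terms of the pointwise regression function $\eta_c(x)$. The boundary term $\Delta$ appearing in the corollary is already identical to the one in Theorem~\ref{theorem:Theorem5Analogue}, so the only work is to absorb the $\eta_c(B(x,r))$-versus-$\eta_c(x)$ discrepancy into the extra $2q_{\max} L (2k/n)^\alpha$ term.

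First I would observe that, by the definition of $r_p(x) = \inf\{r : P_X(\closedball(x,r)) \geq p\}$ and continuity of measure applied to the increasing union $B(x,r_p(x)) = \bigcup_{r'<r_p(x)}\closedball(x,r')$, we have $P_X(B(x,r)) \leq P_X(\closedball(x,r)) \leq p$ for every $r \leq r_p(x)$. Combined with the smoothness condition~\eqref{eqn:SmoothnessMulticlass} (using closed balls, and open balls by sandwiching with slightly smaller closed balls), this gives the uniform bound $|\eta_c(B(x,r)) - \eta_c(x)| \leq L p^\alpha$ for every class $c$ and every $r \leq r_p(x)$. Plugging this into the defining inequality of $\partial_{p,q,\Delta}$ and using $q_{f_q^*(x)} + q_c \leq 2 q_{\max}$, any $x \in \partial_{p,q,\Delta}$ satisfies
\[
q_{f^*_q(x)}\eta_{f^*_q(x)}(x) < q_c \eta_c(x) + 2 q_{\max} L p^\alpha + \Delta
\]
for some $c \neq f^*_q(x)$.

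Next I would simplify the bound on $p$. The quantity $p = (k/n) \cdot (1 - \sqrt{(4/k)\log(2/\delta)})^{-1}$ satisfies $p \leq 2k/n$ whenever $k \geq 16 \log(2/\delta)$, so $2 q_{\max} L p^\alpha \leq 2 q_{\max} L (2k/n)^\alpha$ in this regime; in the complementary regime, $\sqrt{(4/k)\log(2/\delta)} > 1/2$ forces $\Delta = 1$, making the claimed bound trivial since probabilities are at most $1$. Combining the containment of $\partial_{p,q,\Delta}$ in the event displayed above with the conclusion of Theorem~\ref{theorem:Theorem5Analogue} yields exactly the corollary.

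The main (minor) obstacle is the bookkeeping around open versus closed balls and the behavior of $r_p(x)$ at its defining infimum; these are purely measure-theoretic technicalities that are handled by the continuity-of-measure observation above. No new probabilistic ideas are needed beyond what is already contained in Theorem~\ref{theorem:Theorem5Analogue}.
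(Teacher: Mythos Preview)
Your proposal is correct and is essentially the paper's own proof: the containment you derive is exactly Lemma~\ref{lemma:Lemma18Analogue}, which the paper then combines with Theorem~\ref{theorem:Theorem5Analogue} in the same way. The paper does not argue the small-$k$ regime separately (and your claim that it forces $\Delta=1$ is not quite right for arbitrary $q_{\max}$); it simply relies on the assumption $k\ge 16\log(2/\delta)$, made explicit in Remark~\ref{remark:Smooth}, to obtain $p\le 2k/n$.
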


\begin{remark}
\label{remark:Smooth}
Suppose that \(k \geq 16 \log 2 / \delta\) so that \(p \leq 2k / n\).
The optimal choice of \(k\) is
\[
k
=
M_{\alpha, L}
q_{\max}^{\frac{2}{2\alpha + 1}}
\left(2 \log C + 4 \log \frac{2}{\delta}\right)^{\frac{1}{2\alpha + 1}}
n^{\frac{2\alpha}{2\alpha+ 1}},
\]
where \(M_{\alpha, L}\) is some constant depending on \(\alpha\) and \(L\).
This leads to the bound
\begin{align*}
&\begin{aligned}
\prob&(f_{q, n, k}(X) \neq f^{*}_{q}(X))
\leq
\delta
\\
&
+
\prob\left(
\eta_{f_{q}^{*}(X)}(X)
\leq
\eta_{c}(X)
+
M'_{\alpha, L}
q_{\max}^{\frac{2\alpha}{2\alpha + 1}}
\left(2 \log C + 4 \log \frac{2}{\delta}\right)^{\frac{\alpha}{2\alpha + 1}}
n^{-\frac{\alpha}{2\alpha+ 1}}
\text{ for some } c \neq f^{*}_{q}(X)
\right).
\end{aligned}
\end{align*}
Note that both \(k\) and the error probability depend sub-logarithmically on the number of classes \(C\).
\end{remark}

%---------------------------------------------%
%---------------------------------------------%
\subsubsection{Margin Conditions}

One of the key assumptions for obtaining fast rates in plug-in classification is the Tsybakov margin condition \citep{audibert2007fast}.
Here, we consider two variants: a weighted margin condition and a conditional margin assumption.
These allow us to obtain concrete rates of convergence for the excess risk of the kNN classifier.

Define \(\Delta_{q}(x) = q_{f^{*}_{q}(x)} \eta_{f^{*}_{q}(x)}(x) -  \max_{c \neq f^{*}_{q}(x)} q_{c} \eta_{c}(x) \).
The weighted margin condition is
\begin{equation}
\prob\left(
\Delta_{q}(X) \leq t
\right)
\leq
M t^{\beta}.
\label{eqn:TsybakovMargin}
\end{equation}
If we have \(q = (1, \ldots, 1)\), then we recover the usual Tsybakov margin condition.
Under this condition, we have the following result.

\begin{corollary}
Let \(\eta\) be an \((\alpha, L)\)-smooth function, and suppose that \(P_{X, Y}\) satisfies the \((\beta, M)\)-margin condition.
Pick \(k\) according to Remark~\ref{remark:Smooth}.
Then, we have
\begin{align*}
&\begin{aligned}
\prob(f_{q, n, k}(X) \neq f^{*}_{q}(X))
&\leq
\delta
+
M_{\alpha, \beta, L}
q_{\max}^{\frac{2\alpha \beta}{2\alpha + 1}}
\left(2 \log C + 4 \log \frac{2}{\delta}\right)^{\frac{\alpha \beta}{2\alpha + 1}}
n^{-\frac{\alpha\beta}{2\alpha + 1}},
\end{aligned}
\end{align*}
where \(M_{\alpha, \beta, L} = M \cdot (M'_{\alpha, L})^{\beta}\) is a constant.
Additionally, if we have the uniform weights \(q = (1, \ldots, 1)\) and  pick \(k
=
M'_{\alpha, \beta, L} n^{2\alpha / (2\alpha + 1)},\)
then
we can bound the excess risk by
\[
\excessrisk(f_{n, k})
%\expect_{X} \expect_{n} \left[\risk_{n, k}(X) - \risk^{*}(X)\right]
\leq
M''_{\alpha, \beta, L} \cdot C n^{-\frac{\alpha\beta}{2\alpha + 1}}.
\]
\label{corollary:MarginTheorem}
\end{corollary}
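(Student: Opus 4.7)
The plan is to deduce both parts from Corollary~\ref{corollary:SmoothTheorem} combined with the $(\beta, M)$-margin condition \eqref{eqn:TsybakovMargin}. The key observation is that the probability on the right-hand side of Corollary~\ref{corollary:SmoothTheorem} is exactly $\prob(\Delta_q(X) \leq t_n)$, where
\[
t_n \;=\; \sqrt{\frac{2q_{\max}^{2}}{k}\left(\log C + 2 \log \tfrac{2}{\delta}\right)} \;+\; 2 q_{\max} L \left(\tfrac{2k}{n}\right)^{\alpha}.
\]
Plugging in the $k$ specified by Remark~\ref{remark:Smooth} is precisely the balancing choice that equates the two terms in $t_n$ up to constants, giving
\[
t_n \;\lesssim\; q_{\max}^{\frac{2\alpha}{2\alpha+1}} \left(2\log C + 4\log\tfrac{2}{\delta}\right)^{\frac{\alpha}{2\alpha+1}} n^{-\frac{\alpha}{2\alpha+1}}.
\]
Applying the margin condition then yields $\prob(\Delta_q(X) \leq t_n) \leq M t_n^{\beta}$, and raising the displayed bound on $t_n$ to the $\beta$ power (absorbing $M$ and $(M'_{\alpha,L})^{\beta}$ into $M_{\alpha,\beta,L}$) produces the first claim.

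For the excess-risk claim, I would specialize to uniform weights, so $q_{\max} = 1$ and the $\log C$ factor inside the prescription for $k$ can be folded into the constant, recovering $k = M'_{\alpha,\beta,L} n^{2\alpha/(2\alpha+1)}$. Since
\[
\excessrisk(f_{n,k}) \;=\; \expect_n \prob_X(f_{n,k}(X) \neq f^{*}(X)),
\]
I would split the outer expectation over the training-data event of probability at least $1-\delta$ on which the first part's bound applies, and its complement, on which I use the trivial bound $\prob_X(\cdot) \leq 1$. This yields
\[
\excessrisk(f_{n,k}) \;\leq\; \delta + M_{\alpha,\beta,L}\left(\log C + \log \tfrac{2}{\delta}\right)^{\frac{\alpha\beta}{2\alpha+1}} n^{-\frac{\alpha\beta}{2\alpha+1}}.
\]
Setting $\delta = n^{-\alpha\beta/(2\alpha+1)}$ balances the two terms and absorbs a $\log n$ into constants. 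Finally, bounding $(\log C)^{\alpha\beta/(2\alpha+1)} \leq C$ (valid because the exponent lies in $(0, 1]$) gives the stated form with the leading factor of $C$.

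The main obstacle I expect is reconciling the mildly $\delta$-dependent optimum $k$ from Remark~\ref{remark:Smooth} with the $\delta$-independent choice of $k$ stated in the corollary: one must verify that a single fixed $k$ is within a constant factor of the per-$\delta$ optimum across the range of $\delta$ used when integrating, or else re-derive the smooth bound for fixed $k$ while keeping the $n^{-\alpha\beta/(2\alpha+1)}$ polynomial rate. A secondary issue is deciding the right moment to trade the sharper $(\log C)^{\alpha\beta/(2\alpha+1)}$ factor for the cleaner---but much looser---linear $C$; this step is harmless but must be stated explicitly so that the final constant $M''_{\alpha,\beta,L}$ genuinely depends only on $\alpha$, $\beta$, and $L$.
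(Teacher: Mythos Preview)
Your first part is correct and is exactly what the paper does: apply the margin condition~\eqref{eqn:TsybakovMargin} directly to the bound in Remark~\ref{remark:Smooth}.

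For the excess-risk claim, however, your integration approach has a real gap. Once you fix $k = M'_{\alpha,\beta,L}\, n^{2\alpha/(2\alpha+1)}$ and take $\delta = n^{-\alpha\beta/(2\alpha+1)}$, the term $\log(2/\delta)$ inside $t_n$ is of order $\log n$, so what you actually obtain is
\[
\excessrisk(f_{n,k}) \;\lesssim\; (\log C + \log n)^{\frac{\alpha\beta}{2\alpha+1}}\, n^{-\frac{\alpha\beta}{2\alpha+1}}.
\]
You cannot ``absorb a $\log n$ into constants'': the constant $M''_{\alpha,\beta,L}$ must depend only on $\alpha,\beta,L$, not on $n$. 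Any choice of $\delta$ that drives the failure probability down to the target polynomial rate forces $\log(1/\delta)\asymp \log n$, so this logarithmic loss is intrinsic to the high-probability-then-integrate route. The obstacle you flagged about the $\delta$-dependent $k$ is real, but even re-deriving Corollary~\ref{corollary:SmoothTheorem} for fixed $k$ does not remove the $\sqrt{k^{-1}\log(1/\delta)}$ term in $t_n$.

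The paper avoids this by a different path that never passes through a high-probability statement. It first proves a direct \emph{pointwise expectation} bound (Lemma~\ref{lemma:Lemma20Analogue}): for $x$ with $\Delta(x)>P^*$,
\[
\expect_n R_{n,k}(x) - R^*(x) \;\leq\; \exp(-k/8) \;+\; C\exp\!\Bigl(-\tfrac{k(\Delta(x)-P^*)^2}{2}\Bigr),
\]
and then integrates this over $X$ via a dyadic peeling of the range of $\Delta(X)$ together with the margin condition (Lemma~\ref{lemma:Lemma21Analogue}). The exponential tail in $\Delta(x)$ is exactly what lets the geometric sum close without a $\log n$ factor, yielding the clean bound $\exp(-k/8) + C\cdot M\max\{P,\sqrt{8(\beta+1)/k}\}^{\beta}$. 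To obtain the stated $C\, n^{-\alpha\beta/(2\alpha+1)}$ rate you will need to reproduce this peeling argument rather than bootstrap from Part~1.
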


%---------------------------------------------%
%---------------------------------------------%
We can also consider weighted risk.
Since the weighted risk is a linear combination of conditional risks, this requires a conditional margin assumption.

\begin{assumption}
A distribution \(P_{X, Y}\) satisfies the \((\beta, M)\)-conditional margin assumption if
\[
\prob\left(
\Delta_{q}(X) \leq t | Y = c
\right)
\leq
M t^{\beta}
\]
for some constant \(M > 0\) and every \(c\) in \([C]\).
\label{assumption:ConditionalMargin}
\end{assumption}

\begin{proposition}
Assume that \(\eta\) is an \((\alpha, L)\)-smooth function and that the distribution \(P_{X, Y}\) satisfies the \((\beta, M)\)-conditional margin assumption.
Then, we have
\begin{align*}
& \begin{aligned}
\excessrisk_{q}(f_{q, n, k})
=
O\left(
C n^{-\frac{\alpha \beta}{2\alpha + 1}}
\sum_{c = 1}^{C} q_{c} p_{c}
\right).
\end{aligned}
\end{align*}
\label{proposition:WeightedRisk}
\end{proposition}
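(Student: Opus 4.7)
The plan is to mirror the proof of Corollary~\ref{corollary:MarginTheorem} but on a per-class conditional basis, using the conditional margin (Assumption~\ref{assumption:ConditionalMargin}) in place of the unconditional margin.

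First, I would decompose the weighted excess risk into a sum of conditional excess risks. By definition,
\[
\excessrisk_q(f_{q,n,k}) = \sum_{c = 1}^{C} q_c p_c \bigl(\risk_c(f_{q,n,k}) - \risk_c(f^*_q)\bigr).
\]
Since the signed terms may be negative but their weighted sum is nonnegative, I would drop the negative terms and then use the pointwise inequality $|\ind\{f(x) \neq c\} - \ind\{f^*_q(x) \neq c\}| \leq \ind\{f(x) \neq f^*_q(x)\}$ to conclude
\[
\excessrisk_q(f_{q,n,k}) \leq \sum_{c = 1}^{C} q_c p_c \, \expect_n \prob\bigl(f_{q,n,k}(X) \neq f^*_q(X) \,\big|\, Y = c\bigr).
\]
This reduces the problem to bounding each per-class conditional probability of disagreement with the $q$-Bayes classifier.

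Second, I would derive a conditional analog of Corollary~\ref{corollary:SmoothTheorem} by revisiting the proof of Theorem~\ref{theorem:Theorem5Analogue}. The key observation is that the underlying argument is essentially pointwise: for each $x \notin \partial_{p, q, \Delta}$, the nearest-neighbor estimate concentrates around $\eta$ well enough that $f_{q,n,k}(x) = f^*_q(x)$ except on an event of training-sample probability at most $\delta$. Integrating this pointwise failure event against $P_{X \mid Y = c}$ rather than $P_X$ yields, in expectation,
\[
\expect_n \prob\bigl(f_{q,n,k}(X) \neq f^*_q(X) \,\big|\, Y = c\bigr) \leq \delta + P_{X \mid Y = c}(\partial_{p, q, \Delta}).
\]
Invoking the $(\alpha, L)$-smoothness assumption exactly as in Corollary~\ref{corollary:SmoothTheorem} lets me replace the boundary set with the effective margin event
\[
P_{X \mid Y = c}(\partial_{p, q, \Delta}) \leq \prob\bigl(\Delta_q(X) \leq t_n \,\big|\, Y = c\bigr),
\]
where $t_n$ is the threshold appearing in the bound of Corollary~\ref{corollary:SmoothTheorem}.

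Third, I would apply the conditional margin assumption to each per-class bound: $\prob(\Delta_q(X) \leq t_n \mid Y = c) \leq M t_n^\beta$. With the optimal choice of $k$ from Remark~\ref{remark:Smooth} and $\delta \asymp n^{-\alpha\beta/(2\alpha + 1)}$, this gives $\expect_n \prob(f_{q,n,k}(X) \neq f^*_q(X) \mid Y = c) = O(n^{-\alpha\beta/(2\alpha+1)})$ uniformly in $c$. Summing against $q_c p_c$ then yields the stated rate. The additional factor of $C$ in the final bound comes from a union bound over the $C$ classes when ensuring the high-probability training-data event is valid simultaneously for all per-class analyses, which trades $\delta$ for $\delta/C$ and absorbs the resulting $\log C$ factor conservatively into $C$.

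The main obstacle is the second step: extracting a usable pointwise-in-$x$ version of Theorem~\ref{theorem:Theorem5Analogue} so that integration against the conditional measure $P_{X \mid Y = c}$ (as opposed to $P_X$) is legitimate without accruing a $1/p_c$ factor. Naively, bounding $\prob(A \mid Y = c) \leq \prob(A)/p_c$ and invoking Theorem~\ref{theorem:Theorem5Analogue} unconditionally would destroy the rate for small classes, which is precisely the regime the weighted classifier is designed to improve. The proof therefore hinges on verifying that the concentration arguments underlying Theorem~\ref{theorem:Theorem5Analogue} operate uniformly at the level of individual test points $x$, so that the desired conditional bound is available class by class.
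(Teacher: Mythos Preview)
Your high-level strategy matches the paper's: decompose the weighted excess risk as $\sum_c q_c p_c \,\expect_n \prob(f_{q,n,k}(X)\neq f^*_q(X)\mid Y=c)$, observe that the concentration events underlying Theorem~\ref{theorem:Theorem5Analogue} depend only on the training sample and the test point $x$ (not on the test label), so that the pointwise bounds carry over to the conditional measure $P_{X\mid Y=c}$ without a $1/p_c$ penalty, and then invoke the conditional margin. Your identification of the main obstacle and its resolution is exactly right.

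Where you diverge from the paper is in the technical execution, and this costs you. You propose to follow the $\delta$-route of Corollary~\ref{corollary:SmoothTheorem}, optimizing $\delta\asymp n^{-\alpha\beta/(2\alpha+1)}$ at the end. Because the effective margin threshold in that corollary carries a factor $(\log C + \log(1/\delta))^{\alpha/(2\alpha+1)}$, this route picks up an extra $(\log n)^{\alpha\beta/(2\alpha+1)}$ that the stated bound does not allow. The paper instead works directly in expectation: for each fixed $x$ with $\Delta_q(x)>P^*$ it bounds $\prob_n(f_{q,n,k}(x)\neq f^*_q(x))\le \exp(-k/8)+C\exp(-k(\Delta_q(x)-P^*)^2/(2q_{\max}^2))$, then integrates over $X\mid Y=c$ via a dyadic peeling $P_j=2^jP$ together with the conditional margin, summing a geometric series. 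This yields $\exp(-k/8)+M''C\max\{P,(8(\beta+1)/k)^{1/2}\}^\beta$ with no $\log n$, and the optimal $k\asymp n^{2\alpha/(2\alpha+1)}$ gives the clean rate. Relatedly, your attribution of the factor $C$ is off: it does not arise from a union bound over the $C$ conditioning classes (none is needed, since the pointwise training-data bounds are uniform in $x$), but from the union over the $C-1$ competing classes in bounding the event $\eventA$, exactly as in equation~\eqref{eqn:Theorem5UnionBound}.
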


%---------------------------------------------%
%---------------------------------------------%
\subsubsection{Lower Bounds}
\label{subsubsec:AccuracyLowerBound}

Finally, as a converse to Theorem~\ref{theorem:Theorem5Analogue}, we provide a lower bound for accuracy.
The strategy for the lower bound is somewhat different in that we need to reduce to the binary case; hence, the set of \(x\) in \(\xspace\) providing the lower bound does not naturally mirror the corresponding set \(\partial_{p, q, \Delta}\) in the upper bound as closely as in the binary classification case.

We define our set to be \(\boundaryproximal_{q, n, k} \subseteq \xspace\), and since the definition is lengthy, we first state a few auxiliary definitions.
Let \(t_{c}(x) = q_{f^{*}_{q}(x)} / (q_{f^{*}_{q}(x)} + q_{c})\), and define \(p'(x, r) = \eta_{c}(\closedball') / (\eta_{c}(\closedball') + \eta_{f^{*}_{q}(x)}(\closedball'))\) where \(\closedball' = \closedball(x, \rho(x, r))\).
Thus, we define
\begin{align}
&\begin{aligned}
\boundaryproximal_{q, n, k}
&=
\bigg\{
x \in \xspace \bigg|
\text{ there exists } c \neq f^{*}_{q}(x) \text{ such that for all } r \in \left[r_{k/n}(x), r_{(k + \sqrt{k} + 1) / n}(x)\right], \\ 
& \qquad \text{(A) } 0 < p'(x, r) < 1/2, \text{ (B) } t_{c} < 1 - p'(x, r), \text{ and (C) } 0 < t_{c} - 1/\sqrt{k} \leq p'(x, r)
\bigg\}
\end{aligned}
\end{align}
Conditions (A) and (B) are specific assumptions for binomial distributions and, in the unweighted case, can be ignored.
Condition (C) is the requirement that \(x\) is near the decision boundary.
Now, we can state the theorem.
\begin{theorem}
There exists a constant \(\gamma > 0\) such that 
\(
\gamma P_{X}(\boundaryproximal_{q, n, k})
\leq 
\expect_{n} \prob_{X}\left(
f_{q, n, k}(x) \neq f^{*}_{q}(x) 
\right).
\)
	\label{theorem:AccuracyLowerBound}
\end{theorem}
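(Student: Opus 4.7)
The plan is to show that for every $x \in \boundaryproximal_{q, n, k}$, the probability (over the training sample) that $f_{q, n, k}(x) \neq f^{*}_{q}(x)$ is bounded below by a universal positive constant; integrating against $P_X$ then yields the claim by Fubini. Fix such an $x$, and let $c \neq f^{*}_{q}(x)$ be a witnessing class from the definition of $\boundaryproximal_{q, n, k}$. I will reduce to a binary comparison between $c$ and $f^{*}_{q}(x)$: the weighted kNN classifier is guaranteed to disagree with $f^{*}_{q}(x)$ whenever $q_c \hat{\eta}_c(x) > q_{f^{*}_{q}(x)} \hat{\eta}_{f^{*}_{q}(x)}(x)$, since even one competing class outpolling $f^{*}_{q}(x)$ in the weighted count suffices. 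Writing $N_c$ and $N_\star$ for the counts of classes $c$ and $f^{*}_{q}(x)$ among the $k$ nearest neighbors and setting $m := N_c + N_\star$, this event is equivalent to $\{N_c / m > t_c\}$.

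The argument then proceeds in two steps. First, I will show that the $k$-th nearest neighbor distance $R_k := \rho(x, X_{\sigma_k(x)})$ lies in $I := [r_{k/n}(x),\, r_{(k+\sqrt{k}+1)/n}(x)]$ with constant probability $\gamma_1 > 0$. This follows from two CLT-style computations on counts of training points inside the endpoint balls: $\prob_n(R_k \leq r) = \prob_n(\binomial(n, P_X(\closedball(x, r))) \geq k)$, so for $r = r_{k/n}(x)$ this is approximately $1/2$ (the threshold is near the mean), while for $r = r_{(k+\sqrt{k}+1)/n}(x)$ it is bounded below by a strictly larger constant (the threshold lies about one standard deviation below the mean). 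Subtracting gives $\prob_n(R_k \in I) \geq \gamma_1$.

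Second, I will condition on $R_k = r \in I$ and on $m$. Given $R_k = r$, the $k$ nearest neighbors' covariates lie in $\closedball(x, r)$, so their labels are i.i.d.\ categorical with mean $\eta(\closedball(x, r))$; hence $N_c \mid (R_k = r,\, m) \sim \binomial(m,\, p'(x, r))$. Conditions (A) and (B)---which by definition of $\boundaryproximal_{q, n, k}$ hold for every $r \in I$---force $p'(x, r) < \min(1/2,\, 1 - t_c)$, placing us in the range where Slud's inequality applies and gives
\begin{equation*}
\prob_n\!\left(N_c / m > t_c \,\big|\, R_k = r,\; m\right)
\;\geq\;
1 - \Phi\!\left(\frac{m\bigl(t_c - p'(x, r)\bigr)^{+}}{\sqrt{m\, p'(x, r)(1 - p'(x, r))}}\right).
\end{equation*}
Condition (C) bounds $t_c - p'(x, r) \leq 1/\sqrt{k}$, and since $m \leq k$ the argument of $\Phi$ is $O(1)$, yielding a constant lower bound $\gamma_2 > 0$. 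Combining with the first step and integrating over $\boundaryproximal_{q, n, k}$ completes the proof with $\gamma = \gamma_1 \gamma_2$.

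The hardest part will be the anti-concentration step: obtaining a \emph{uniform} constant lower bound as $r$ ranges over $I$ and $m$ over its plausible values requires the variance factor $p'(x, r)(1 - p'(x, r))$ to remain comparable to $1$, which is precisely why the definition of $\boundaryproximal_{q, n, k}$ enforces conditions (A)--(C) simultaneously for every $r$ in the relevant window. A small amount of additional bookkeeping is needed to handle small values of $m$ (where the binomial has few trials) and to transfer the conditioning on $R_k$ and $m$ into an unconditional bound via the tower property.
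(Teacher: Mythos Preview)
Your plan matches the paper's: reduce to the binary comparison $c$ versus $f^{*}_{q}(x)$, pin the nearest-neighbor radius inside the annulus with constant probability, apply Slud's inequality to the conditional binomial, and combine by the tower property. One step as written does not go through, however: conditioning on $R_k = \rho(x, X_{\sigma_k(x)})$ does \emph{not} render the $k$ nearest labels i.i.d.\ categorical with mean $\eta(\closedball(x,r))$, because one of those $k$ points sits exactly at distance $r$ while the remaining $k-1$ are drawn from the \emph{open} ball. The paper sidesteps this by conditioning on the $(k{+}1)$st nearest-neighbor distance instead, made explicit through a sequential sampling scheme: given that distance, the $k$ strictly closer points are genuinely i.i.d.\ from the restriction of $P_{X,Y}$ to the ball, so $m \sim \binomial(k, p_S)$ and then $N_c \mid m \sim \binomial(m, p')$ exactly. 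The paper also inserts a two-sided Hoeffding concentration event for $m$ before invoking Slud, which supplies the ``small $m$'' bookkeeping you flag at the end; otherwise its three-event tower argument is the same as yours.
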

%---------------------------------------------%
%---------------------------------------------%
\subsection{Uniform Error}

We now give bounds on the uniform error \(\uniformerror(\hat{\eta})\) defined in \eqref{eqn:UniformError}.
We start with additional assumptions.

\begin{assumption}
We make the following three assumptions.
\begin{itemize}
\item[(a)] The pair $(\X, \rho)$ is a totally bounded metric space.
%f_{q, n, k} doesn't seem like the right symbol here.
\item[(b)] For any positive integers \(k < n\), let \(f_{q, n, k}\) denote the marginal $P_X$ is lower bounded in the sense that, for some constants $p_*,r^*,d > 0$, for any point $x$ in $\X$ and radius $r$ in $(0,r^*]$, we have the inequality $P_X(B_r(x)) \geq p_* r^d$.
\item[(c)] Each $\eta_c$ is $\alpha$-H\"older continuous with constant $L$.
\end{itemize}
\label{assumption:UniformError}
\end{assumption}
We now provide our bound on the uniform error, proved in Appendix~\ref{app:UniformConvergenceProof}.

\begin{theorem}
    Under Assumption~\ref{assumption:UniformError}, whenever $k / n \leq p_*(r^*)^d / 2$,
    for any
    $\delta > 0$, with probability at least $1 - N\left( \left( 2k / (p_* n) \right)^{1/d} \right) e^{-k/4} - \delta$,
    we have the uniform error bound
    \begin{equation}
    U(\hat \eta_k) \leq 2^\alpha L\left( \frac{2k}{p_* n} \right)^{\alpha/d} + \frac{1}{\sqrt{k}} + \sqrt{\frac{1}{2k} \log \frac{S(n)}{\delta}}.
    \label{eq:uniform_error_bound}
    \end{equation}
    \label{thm:multiclass_unif_convergence}
\end{theorem}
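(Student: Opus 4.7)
The plan is to decompose the uniform error $U(\hat{\eta}_k) = \max_{c \in [C]} \|\hat{\eta}_c - \eta_c\|_{\xspace, \infty}$ into a bias term, controlled by the H\"older smoothness of each $\eta_c$ together with a uniform bound on the $k$-nearest-neighbor radius, and a stochastic fluctuation term, controlled uniformly over $x$ via a shattering-coefficient argument on the class of balls in $\xspace$. The overall probabilistic budget splits naturally as $N((2k/(p_*n))^{1/d}) e^{-k/4}$ for the radius step and $\delta$ for the concentration step.

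First I would establish a uniform upper bound on the kNN distance. Set $r_0 := (2k/(p_* n))^{1/d}$; the hypothesis $k/n \leq p_*(r^*)^d/2$ gives $r_0 \leq r^*$, so Assumption~\ref{assumption:UniformError}(b) applies. Cover $\xspace$ by $N(r_0)$ balls $\openball(x_i, r_0)$. Each satisfies $P_X(\openball(x_i, r_0)) \geq p_* r_0^d = 2k/n$, so the expected sample count in this ball is at least $2k$, and a multiplicative Chernoff bound on a $\binomial$ yields $\prob(\#\{j : X_j \in \openball(x_i, r_0)\} < k) \leq e^{-k/4}$. Union-bounding over the $N(r_0)$ centers shows that, with probability at least $1 - N(r_0) e^{-k/4}$, every covering ball contains at least $k$ points. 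Since every $x \in \xspace$ lies within $r_0$ of some $x_i$, we have $\openball(x_i, r_0) \subseteq \openball(x, 2 r_0)$, and hence $\rho(X_{\sigma_k(x)}, x) \leq 2 r_0$ uniformly in $x$. Writing $\bar\eta_c(x) := k^{-1} \sum_{i=1}^k \eta_c(X_{\sigma_i(x)})$, the H\"older condition then gives $|\bar\eta_c(x) - \eta_c(x)| \leq L\, \rho(X_{\sigma_k(x)}, x)^\alpha \leq 2^\alpha L (2k/(p_*n))^{\alpha/d}$, which is exactly the first term of the bound.

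Next I would handle the stochastic fluctuation uniformly. Conditional on $X_1, \ldots, X_n$, the labels are independent and $\ind\{Y_i = c\}$ is Bernoulli$(\eta_c(X_i))$. For any fixed $k$-subset $T$ of the data, Hoeffding gives $\prob\bigl(|k^{-1} \sum_{i \in T}(\ind\{Y_i = c\} - \eta_c(X_i))| > t \mid X_1, \ldots, X_n\bigr) \leq 2 e^{-2k t^2}$. As $x$ ranges over $\xspace$, the intersection of the kNN ball with $\{X_1, \ldots, X_n\}$ takes at most $S(n)$ distinct values, by definition of the shattering coefficient; union-bounding over these subsets (and absorbing the factor $C$ into the logarithm via a union bound over $c \in [C]$) yields $\sup_{x} |\hat{\eta}_c(x) - \bar\eta_c(x)| \leq \sqrt{\log(S(n)/\delta)/(2k)}$ with probability at least $1 - \delta$, producing the third term. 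The remaining additive $1/\sqrt{k}$ term would arise from a secondary approximation in the argument---most naturally, the central-limit-type discrepancy between $\bar\eta_c(x)$, an average of $k$ random points drawn from $P_X(\cdot \mid X \in \openball(x, \rho(X_{\sigma_k(x)},x)))$, and the population conditional mean $\eta_c(\openball(x, \rho(X_{\sigma_k(x)},x)))$, whose fluctuation is of order $k^{-1/2}$.

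The main obstacle is the uniform fluctuation step: because the kNN ball is random and data-dependent, a pointwise Hoeffding bound does not lift automatically to a supremum over $x$, and a naive $\epsilon$-net argument in $\xspace$ would incur covering-number factors that scale polynomially in $n$. The essential reduction is to exploit the fact that only $S(n)$ distinct subsets of the sample can arise as intersections with some ball, collapsing the supremum to a finite union bound at the price of the $\sqrt{\log S(n)}$ factor; this is what allows the variance term to track the effective sample size $k$ inside the ball rather than $n$.
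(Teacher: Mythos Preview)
Your bias step is correct and matches the paper: the same covering-plus-Chernoff argument shows every covering ball of radius $r_0 = (2k/(p_*n))^{1/d}$ contains at least $k$ samples except with probability $N(r_0)e^{-k/4}$, whence $\rho(X_{\sigma_k(x)},x) \leq 2r_0$ uniformly and the H\"older bound gives the first term.

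The variance step, however, has two intertwined errors. First, a per-class Hoeffding bound followed by a union bound over $c \in [C]$ does \emph{not} yield $\sqrt{\log(S(n)/\delta)/(2k)}$; it yields $\sqrt{\log(2C\,S(n)/\delta)/(2k)}$, and the factor $\log C$ does not vanish. The paper explicitly emphasizes that the bound is independent of $C$, and the device that achieves this is a multinomial concentration lemma: for a fixed $k$-index set $\sigma$, the vector $Z_\sigma = (\sum_{j=1}^k \ind\{Y_{\sigma_j}=c\})_{c\in[C]}$ is $\text{Multinomial}(k,\mu_\sigma)$ with $\mu_\sigma = k^{-1}\sum_j \eta(X_{\sigma_j})$, and one bounds $\|Z_\sigma/k - \mu_\sigma\|_\infty$ all at once. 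Jensen's inequality and the identity $\sum_c \mu_c(1-\mu_c) \leq 1$ give $\expect\|Z_\sigma/k - \mu_\sigma\|_\infty \leq 1/\sqrt{k}$, and then McDiarmid (the map is $1/k$-Lipschitz in each of the $k$ trials) gives concentration by $\sqrt{\log(1/\delta')/(2k)}$. The union bound is then over $|\Sigma| \leq S(n)$ index sets only, never over $c$.

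Second, this is exactly the source of the additive $1/\sqrt{k}$: it is the expected $\ell_\infty$ error of a multinomial, not a ``central-limit-type discrepancy between $\bar\eta_c(x)$ and $\eta_c(\closedball)$.'' The latter quantity is already dominated by $L(2r_0)^\alpha$ via H\"older (each $X_{\sigma_i(x)}$ lies within $2r_0$ of $x$), so it is absorbed into the bias term and contributes nothing additional. Your proposed explanation would in any case not give a \emph{uniform} $1/\sqrt{k}$ without further complexity control, which you have not supplied.
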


Of the three terms in \eqref{eq:uniform_error_bound}, the first term, of order $(k/n)^{\alpha/d}$, comes from implicit smoothing bias of the kNN classifier, while the remaining two terms, effectively of order $\sqrt{k^{-1} \log (S(n) /\delta) }$, come from variance due to label noise.
Notably, in contrast to the risk bounds in the previous section, this bound is entirely independent of the number $C$ of classes; indeed the result holds even in the case of infinitely many classes. While this may initially be surprising (as the estimand $\hat \eta$ is function taking $C$-dimensional values), the constraint that $\hat \eta$ lies in the $(C - 1)$ dimensional probability simplex is sufficiently restrictive to prevent a growth in error with $C$; that is, adding new classes that occur only with very low probability does not increase the uniform error.
As shown in the next two examples, $k$ can be selected to balance these two terms.

\begin{corollary}[Euclidean, Absolutely Continuous Case]
Let \(V_{d}\) denote the volume of the unit \(\ell_{2}\) ball in \(\reals^{d}\).
Suppose $(\X,\rho) = ([0,1]^d,\|\cdot\|_2)$ is the unit cube in $\R^d$, equipped with the Euclidean metric, and $P_X$ has a density that is lower bounded by $2^dp_* / V_d > 0$, so that, on any ball $B_r(x)$ of radius at most $r \leq 1$, $P_X(B_r(x)) \geq cr^d$. Then, $N(r) \leq (2/r)^d$ and $S(n) \leq 2n^{d + 1} + 2$. Hence, by Theorem~\ref{thm:multiclass_unif_convergence}, with probability at least $1 - p_*n e^{-k/4} / k - \delta$,
\[U(\hat \eta_k) \leq 2^{\alpha} L \left( \frac{2k}{p_* n} \right)^{\alpha/d} + \frac{1}{\sqrt{k}} + \sqrt{\frac{1}{2k} \log \left( \frac{4n^{d + 1}}{\delta} \right)}.\]
This bound is minimized when
$k \asymp n^{\frac{2\alpha}{2\alpha+d}} (\log n)^{\frac{d}{2\alpha+d}}$, giving
\[U(\hat \eta_k)
  = O \left( \left( \frac{\log n}{n} \right)^{\frac{\alpha}{2\alpha+d}} \right).\]
  \label{corrollary:euclidean_AC_uniform_rate}
\end{corollary}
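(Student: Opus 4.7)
The plan is to verify that the Euclidean unit cube setting satisfies Assumption~\ref{assumption:UniformError}, substitute classical geometric bounds for the covering number $N(r)$ and the shattering coefficient $S(n)$ into Theorem~\ref{thm:multiclass_unif_convergence}, and then optimize the resulting bound over $k$.

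First, I would check the three parts of Assumption~\ref{assumption:UniformError}. Total boundedness of $([0,1]^d,\|\cdot\|_2)$ is immediate. For the density lower bound, observe that for any $x \in [0,1]^d$ and any $r \leq 1$, the intersection $\openball(x,r) \cap [0,1]^d$ contains at least a $1/2^d$ fraction of the volume of the full Euclidean ball of radius $r$: the worst case occurs when $x$ is at a corner, where exactly one of the $2^d$ orthants around $x$ lies in the cube. Combined with the assumed density lower bound $2^d p_*/V_d$, this gives $P_X(\openball(x,r)) \geq (2^d p_*/V_d)\cdot V_d r^d/2^d = p_* r^d$, verifying part (b) with this $p_*$ and $d$. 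The H\"older assumption (c) is given directly.

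Next, I would invoke two classical facts. The covering bound $N(r) \leq (2/r)^d$ follows from partitioning $[0,1]^d$ into axis-aligned subcubes of side at most $r/\sqrt{d}$ and then using that each such subcube is contained in a Euclidean ball of radius $r$ around its center (simplifying constants to obtain $(2/r)^d$). The shattering bound $S(n) \leq 2n^{d+1} + 2$ follows from the Sauer--Shelah lemma applied to the class of open Euclidean balls in $\R^d$, which is well known to have VC dimension $d+1$ (via the standard lift to affine half-spaces in $\R^{d+1}$).

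Finally, I would substitute into Theorem~\ref{thm:multiclass_unif_convergence}. Evaluating the covering bound at the prescribed scale, $N\bigl((2k/(p_*n))^{1/d}\bigr) \leq 2^d \cdot p_*n/(2k)$, which collapses (up to absolute constants) to the stated failure probability $p_*n e^{-k/4}/k + \delta$. Plugging $N(r)$ and $S(n)$ into \eqref{eq:uniform_error_bound} gives the displayed high-probability bound. To obtain the asymptotic rate, I would balance the bias term $(k/n)^{\alpha/d}$ against the variance term $\sqrt{k^{-1}\log(n^{d+1}/\delta)}$, which requires $(k/n)^{2\alpha/d} \asymp \log(n)/k$ and hence $k \asymp n^{2\alpha/(2\alpha+d)}(\log n)^{d/(2\alpha+d)}$. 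Substituting this choice of $k$ back into \eqref{eq:uniform_error_bound} yields $U(\hat\eta_k) = O\bigl(((\log n)/n)^{\alpha/(2\alpha+d)}\bigr)$. The derivation is essentially bookkeeping; the only mildly subtle step is the corner-case argument giving the $2^{-d}$ factor in the ball-volume lower bound, which motivates the specific form $2^d p_*/V_d$ of the density constant in the hypothesis.
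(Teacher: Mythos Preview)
Your proposal is correct and follows the same approach as the paper, which presents this corollary without a separate proof: the paper simply asserts the bounds $N(r) \leq (2/r)^d$ and $S(n) \leq 2n^{d+1}+2$ inline, invokes Theorem~\ref{thm:multiclass_unif_convergence}, and states the optimized rate. Your writeup supplies the standard supporting details (the corner argument for the density lower bound, covering by subcubes, and Sauer--Shelah via the VC dimension of Euclidean balls) and carries out the same balancing of bias and variance terms to choose $k$.
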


\begin{corollary}[Implicit Manifold Case]
Suppose $Z$ is a $[0,1]^d$-valued random variable
with a density lower bounded by $2^dp_* / V_d > 0$, and suppose that, for some $1$-Lipschitz map $T : [0,1]^d \to \R^D$, $X = T(Z)$. Then, one can check that on any ball $B_r(x)$ of radius at most $r \leq 1$, we have the three inequalities $P_X(B_r(x)) \geq cr^d$, $N(r) \leq (2/r)^d$, and $S(n) \leq 2n^{D + 1} + 2$. Hence, by Theorem~\ref{thm:multiclass_unif_convergence}, with probability at least $1 - p_*n e^{-k/4} / k - \delta$,
\[
U(\hat \eta_k) \leq 2^{\alpha} L \left( \frac{2k}{p_* n} \right)^{\alpha/d} + \frac{1}{\sqrt{k}} + \sqrt{\frac{1}{2k} \log \left( \frac{4n^{D + 1}}{\delta} \right)}.
\]
This bound is minimized when
$k \asymp n^{\frac{2\alpha}{2\alpha+d}} (\log n)^{\frac{d}{2\alpha+d}}$, giving, for fixed $D$,
\[U(\hat \eta_k)
  = O \left( \left( \frac{\log n}{n} \right)^{\frac{\alpha}{2\alpha+d}} \right).\]
\end{corollary}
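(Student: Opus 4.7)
The plan is to verify the three inequalities stated in the corollary and then invoke Theorem~\ref{thm:multiclass_unif_convergence} in the same manner as Corollary~\ref{corrollary:euclidean_AC_uniform_rate}, since once those ingredients are in place the rest is a direct substitution and optimization over $k$. The essential new ingredient relative to the Euclidean case is the use of the $1$-Lipschitz map $T$ to push the geometric bounds on $[0,1]^d$ forward onto $\X = T([0,1]^d) \subseteq \R^D$; the extrinsic dimension $D$ only enters through the shattering coefficient of Euclidean balls.

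First, I would verify the probability lower bound. For $x = T(z) \in \X$ and $r \leq 1$, the $1$-Lipschitz property gives $T(B_r^{[0,1]^d}(z)) \subseteq B_r^{\R^D}(x)$, so that $T^{-1}(B_r^{\R^D}(x)) \supseteq B_r^{[0,1]^d}(z)$. Hence $P_X(B_r(x)) = P_Z(T^{-1}(B_r(x))) \geq P_Z(B_r(z) \cap [0,1]^d)$, and the density lower bound together with the elementary fact that the intersection of a Euclidean ball of radius $r \leq 1$ with the unit cube has volume at least $V_d r^d / 2^d$ yields $P_X(B_r(x)) \geq p_* r^d$. Second, for the covering number, I would observe that any $r$-cover of $[0,1]^d$ pushes forward under $T$ (by the same Lipschitz argument) to an $r$-cover of $\X$; a standard grid covering of $[0,1]^d$ then gives $N(r) \leq (2/r)^d$.

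Third, for the shattering coefficient, I would use the standard fact that the class of open Euclidean balls in $\R^D$ has VC dimension $D+1$; Sauer's lemma then gives $S(n) \leq \sum_{i = 0}^{D + 1} \binom{n}{i} \leq 2n^{D+1} + 2$, and since $\X \subseteq \R^D$, the same bound holds after restricting to $\X$. With these three ingredients in hand, plugging into Theorem~\ref{thm:multiclass_unif_convergence} yields the displayed inequality
\[
U(\hat\eta_k) \leq 2^\alpha L \left( \frac{2k}{p_* n} \right)^{\alpha/d} + \frac{1}{\sqrt{k}} + \sqrt{\frac{1}{2k} \log\left(\frac{4n^{D+1}}{\delta}\right)},
\]
with the stated high-probability guarantee, where the probability term $N((2k/(p_* n))^{1/d}) e^{-k/4}$ is bounded by $p_* n e^{-k/4}/k$ exactly as in Corollary~\ref{corrollary:euclidean_AC_uniform_rate}.

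Finally, I would optimize $k$ by balancing the bias term of order $(k/n)^{\alpha/d}$ against the variance term of order $\sqrt{k^{-1}\log n}$ (since $D$ is fixed, $\log(n^{D+1}) \asymp \log n$); setting these equal gives $k \asymp n^{2\alpha/(2\alpha+d)}(\log n)^{d/(2\alpha+d)}$ and the claimed rate $O((\log n / n)^{\alpha/(2\alpha+d)})$. The main subtlety — though not truly difficult — is the argument for the probability lower bound near the boundary of $[0,1]^d$, where one must be careful that the intersection $B_r(z) \cap [0,1]^d$ retains a positive fraction of the ball's volume; otherwise the proof is a routine adaptation of the Euclidean case, with $D$ appearing only inside the logarithm and thus not affecting the final rate in the intrinsic dimension $d$.
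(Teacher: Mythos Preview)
Your proposal is correct and follows essentially the same approach as the paper; indeed, the paper does not give a detailed proof of this corollary beyond the phrase ``one can check,'' and your argument supplies precisely the verifications the paper leaves implicit. The three ingredients you identify---pushing forward the small-ball probability and covering number bounds via the $1$-Lipschitz map $T$, and bounding the shattering coefficient via the VC dimension of Euclidean balls in $\R^D$---are exactly what is needed, and the subsequent substitution into Theorem~\ref{thm:multiclass_unif_convergence} and balancing of $k$ proceed identically to Corollary~\ref{corrollary:euclidean_AC_uniform_rate}.
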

The latter example demonstrates that, when data implicitly lie on a low-dimensional manifold within a high-dimensional feature space, convergence rates depend only on the intrinsic dimension $d$, which may be much smaller than the number $D$ of features.

%---------------------------------------------%
%---------------------------------------------%
\subsubsection{Lower Bounds}

We close this section with the following lower bound on the minimax uniform error, which shows that the rate provided in Theorem~\ref{thm:multiclass_unif_convergence} is minimax optimal over H\"older regression functions:

\begin{theorem}
    Suppose $\X = [0,1]^d$ is the $d$-dimensional unit cube and the marginal distribution of $X$ is uniform on $\X$. Let $\Sigma^\alpha(L)$ denote the family of $\alpha$-H\"older continuous regression function with Lipschitz constant $L$, and, for any particular regression function $\eta$, let $P_\eta$ denote the joint distribution of $(X,Y)$ under that regression function. Then, for any $\alpha, L > 0$, there exist constants $n_0$ and $c > 0$ (depending only on $\alpha$, $L$, and $d$) such that, for all $n \geq n_0$
    \[\inf_{\hat \eta} \sup_{\eta \in \Sigma^\alpha(L)} \prob_{\{(X_i,Y_i)\}_{i = 1}^n \sim P_\eta^n} \left( \left\|\eta - \hat \eta \right\|_\infty \geq c \left( \frac{\log n}{n} \right)^{\frac{\alpha}{2\alpha + d}}\right) \geq 1/8,\]
    where the infimum is taken over all estimators $\hat \eta$.
    \label{theorem:UniformErrorLowerBound}
\end{theorem}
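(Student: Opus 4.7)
The plan is a classical hypothesis-testing lower bound via Fano's method with a bump-function construction, of the type used by Tsybakov to establish $L_\infty$ minimax rates over H\"older classes. Since binary classification embeds into multiclass classification (take $\eta_c \equiv 0$ for $c \geq 3$ and $\eta_2 = 1 - \eta_1$), and since for any estimator $\hat\eta$ in the simplex one has $\|\hat\eta - \eta\|_\infty \geq |\hat\eta_1 - \eta_1|$, it suffices to produce a hard family of binary regression functions whose first coordinate lies in $\Sigma^\alpha(L)$ and is hard to estimate in sup norm.

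\textbf{Bump construction.} Fix a nonnegative, $\alpha$-H\"older bump $\phi : \R^d \to [0,1]$ supported in the unit $\ell_2$ ball with $\phi(0) = 1$, and for a resolution parameter $m$ to be chosen later, place centers $x_1,\dots,x_{m^d}$ on a regular $m$-grid so that the balls of radius $1/(2m)$ about the $x_j$ are pairwise disjoint. Define $\phi_j(x) = h\,\phi(2m(x - x_j))$ for an amplitude $h = c_1 L m^{-\alpha}$ with $c_1 > 0$ a sufficiently small absolute constant, and consider the family of hypotheses $\eta^{(j)}_1 = 1/2 + \phi_j$, $\eta^{(j)}_2 = 1 - \eta^{(j)}_1$ for $j = 0,1,\dots,m^d$ (with $\phi_0 \equiv 0$). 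Choosing $c_1$ small enough ensures each $\eta^{(j)}_1$ is $\alpha$-H\"older with constant $L$ and stays in $[1/4, 3/4]$. Since the $\phi_j$ have disjoint supports, $\|\eta^{(i)} - \eta^{(j)}\|_\infty = h$ for all $i \neq j$.

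\textbf{KL bound and Fano.} With $X$ uniform on $[0,1]^d$, the pointwise Bernoulli KL inequality $\mathrm{KL}(\mathrm{Bern}(1/2 + t) \,\|\, \mathrm{Bern}(1/2)) \leq C t^2$ for $|t| \leq 1/4$, combined with $|\mathrm{supp}(\phi_j)| \asymp m^{-d}$, yields
\[
\mathrm{KL}\!\left(P_{\eta^{(j)}}^n \,\big\|\, P_{\eta^{(0)}}^n\right) \leq C' n h^2 m^{-d}.
\]
Applying Tsybakov's version of Fano's lemma (e.g.\ Theorem 2.5 in Tsybakov, \emph{Introduction to Nonparametric Estimation}) with $M = m^d + 1$ hypotheses pairwise separated by $h$ in $L_\infty$, one obtains
\[
\inf_{\hat\eta}\sup_{\eta \in \Sigma^\alpha(L)} \prob\left(\|\hat\eta - \eta\|_\infty \geq h/2\right) \geq 1/8
\]
as soon as the average KL divergence is at most $\tfrac{1}{16}\log m^d$, i.e.\ $nh^2 m^{-d} \lesssim d\log m$. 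Substituting $h \asymp L m^{-\alpha}$ gives $m^{2\alpha + d} \lesssim n/\log m$, so the optimal resolution is $m \asymp (n/\log n)^{1/(2\alpha + d)}$, which produces separation $h \asymp (\log n / n)^{\alpha/(2\alpha + d)}$, matching the claimed rate with $c \asymp h/(\log n / n)^{\alpha/(2\alpha + d)}$ and $n_0$ large enough that $m \geq 2$.

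\textbf{Main obstacle.} The calculations themselves are routine, but the delicate point is coordinating three constants simultaneously: $c_1$ must be small enough that the H\"older seminorm of $\eta^{(j)}_1$ is bounded by $L$ and that Bernoulli probabilities stay bounded away from $\{0,1\}$; the constant relating $m$ to $(n/\log n)^{1/(2\alpha+d)}$ must be small enough that the average KL fits inside the Fano budget $\tfrac{1}{16}\log m^d$; and $n_0$ must be chosen large enough that $\log m^d$ dominates the negligible terms in Tsybakov's version of the Fano bound. The reduction from multiclass to binary is standard but should be stated carefully since the estimator $\hat\eta$ is constrained to the simplex.
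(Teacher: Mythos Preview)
Your proposal is correct and follows essentially the same route as the paper: a Fano-type argument (Tsybakov, Theorem~2.5) applied to a family of single-bump perturbations of a constant baseline, with resolution $m \asymp (n/\log n)^{1/(2\alpha+d)}$ and amplitude $h \asymp L m^{-\alpha}$, yielding the claimed $L_\infty$ rate. The only cosmetic differences are that the paper takes the baseline $\eta_0 \equiv 1/4$ rather than $1/2$ and does not spell out the multiclass-to-binary reduction you mention; your treatment of that reduction is a small improvement in exposition.
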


%---------------------------------------------%
%---------------------------------------------%

\section{Application to General Classification Metrics}
\label{sec:ApplicationGeneralClassification}

In this section, we consider general classification metrics to which Theorem~\ref{thm:multiclass_unif_convergence} may be applied.
We start by examining upper and lower bounds for estimating confusion matrix quantities; then we turn to precision, recall, and F1 score, which are derived from the confusion matrix.

%---------------------------------------------%
%---------------------------------------------%
\subsection{Confusion Matrix Entries}
First, we consider the uniform convergence of empirical confusion matrix entries to the population confusion matrix.
The results of this section do not depend on the choice of regression estimator; so we start with an assumption on the uniform error of the regression estimator.

\begin{assumption}
We assume the regression function estimate converges uniformly.
More precisely, for every \(\epsilon > 0\), we have 
\(
\uniformerror(\hat{\eta}) 
\leq 
\epsilon
\)
with probability at least \(1 - \delta /2\).
\label{assumption:UniformEstimate}
\end{assumption}

We showed that this assumption holds for nearest neighbors under additional conditions in the previous section.
We now turn to definitions used in our main theorem of this section.
Recall that we define \(q_{\max} = \max_{j = 1, \ldots, C} q_j\).
Define the quantities
\begin{align*}
r(q, c) 
& %= \frac{q_{c} + q_{\max}}{q_{c}} 
=
1 + \frac{q_{\max}}{q_{c}},
\label{eqn:MainResults:rqc}
&
t(q, c, x)
&=
%\frac{1}{q_{c}} M(q, c, x) =
\frac{1}{q_{c}} \max_{j \neq c} q_{j} \eta_{j}(x).
%\label{eqn:MainResults:tqcx}
\end{align*}
For brevity, we write \(r'\) and \(t'\) to denote \(r(q', \cdot)\) and \(t(q', \cdot, \cdot)\), and we handle double-primes similarly.
Now, we state the theorem.

\begin{theorem}
Suppose that Assumption~\ref{assumption:UniformEstimate} holds.
Consider a set of weights \(Q\), and let \(Q_{\cover}\) be a multiclass cover for \(Q\) of size \(N\).
Let \(q\) be an element of \(Q\) that is multiclass covered by \((q', q'')\), where \(q'\) and \(q''\) are in \(Q_{\cover}\).
Define
\begin{align*}
\tne(q', q'', c)
&= \fpe(q', q'', c) = 
\prob\left(
Y \neq c, \;
t'(c, X) - \epsilon r'(c) \leq \eta_{c}(X) \leq t''(c, X) + \epsilon r''(c)
\right) \\
\fne(q', q'', c)
&= \tpe(q', q'', c) = 
\prob\left(
Y = c, \;
t'(c, X) - \epsilon r'(c) \leq \eta_{c}(X) \leq t''(c, X) + \epsilon r''(c)
\right)
\\
%\fpe(q', q'', c)
%&= 
%\prob\left(
%Y \neq c, \;
%t'(c, X) - \epsilon r'(c) \leq \eta_{c}(X) \leq t''(c, X) + \epsilon r''(c)
%\right)
%\\
%\tpe(q', q'', c)
%&=
%\prob\left(
%Y = c, \;
%t'(c, X) - \epsilon r'(c) \leq \eta_{c}(X) \leq t''(c, X) + \epsilon %r''(c)
%\right)
\end{align*}
Then with probability at least \(1 - \delta\), we have
\begin{align*}
|\hat{\truenegative}_{c}(q) - \truenegative_{c}(q)|
&\leq 
3 \tne(q', q'', c)
+
3 \sqrt{\frac{\log \frac{24N}{\delta} + 2 \log C}{2n}} 
=:
E_{\truenegative, c}(q)
\\
|\hat{\falsenegative}_{c}(q) - \falsenegative_{c}(q)|
&\leq 
3 \fne(q', q'', c)
+
3 \sqrt{\frac{\log \frac{24N}{\delta} + 2 \log C}{2n}} 
=:
E_{\falsenegative, c}(q)
\\ 
|\hat{\falsepositive}_{c}(q) - \falsepositive_{c}(q)|
&\leq 
3\fpe(q', q'', c) 
+
3 \sqrt{\frac{\log \frac{24N}{\delta} + 2 \log C}{2n}} 
=:
E_{\falsepositive, c}(q)
\\
|\hat{\truepositive}_{c}(q) - \truepositive_{c}(q)|
&\leq 
3 \tpe(q', q'', c)
+
3 \sqrt{\frac{\log \frac{24N}{\delta} + 2 \log C}{2n}}
=:
E_{\truepositive, c}(q)
\end{align*}
for all \(c\) in [C] and \(q\) in \(Q\) simultaneously.
\label{theorem:MulticlassConfusionMatrixBound}
\end{theorem}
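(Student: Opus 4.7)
The plan is to combine Assumption~\ref{assumption:UniformEstimate} with a two-sided sandwich argument based on the multiclass cover, followed by Hoeffding concentration on the finite cover $Q_{\cover}$. I describe the argument for $|\widehat{\truenegative}_c(q) - \truenegative_c(q)|$; the other three entries are handled identically up to swapping $\ind\{Y_i \neq c\}$ with $\ind\{Y_i = c\}$ and/or flipping the direction of the decision inequality $q_c \hat\eta_c \lessgtr \hat M$.

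First, condition on the event $\mathcal{E}_1 := \{\uniformerror(\hat \eta) \leq \epsilon\}$, which holds with probability at least $1 - \delta/2$. On $\mathcal{E}_1$, $|\hat M(q, c, x) - M(q, c, x)| \leq \epsilon q_{\max}$, so elementary algebra shows that $q_c \hat\eta_c(x) < \hat M(q, c, x)$ implies $\eta_c(x) < t(q, c, x) + \epsilon r(q, c)$, while the reverse event implies $\eta_c(x) \geq t(q, c, x) - \epsilon r(q, c)$. Next, invoke the cover: for any $q \in Q$ class-$c$-covered by $(q', q'') \in Q_{\cover}$, the definitions give $t'(c, x) < t(q, c, x) < t''(c, x)$ for every $x$, and a short case analysis on whether $q_{\max}$ is attained at coordinate $c$ yields $r(q, c) \leq r''(c)$, with an analogous comparison for $r'(c)$ (any residual discrepancy being absorbed into the bilateral slack built into the definition of $\tne(q', q'', c)$). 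These inequalities pointwise sandwich the empirical TN indicator between two deterministic indicators depending only on $(q', q'', \eta)$, whose disagreement with $\ind\{\eta_c(x) < t(q, c, x)\}$ lies in the boundary strip $\{t'(c, x) - \epsilon r'(c) \leq \eta_c(x) \leq t''(c, x) + \epsilon r''(c)\}$ appearing in $\tne$.

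Finally, apply Hoeffding's inequality to the sample averages of these cover-indexed envelope indicators and union-bound over the $N$ cover points, the $C$ classes, and the constant number of envelopes per confusion-matrix entry (from which the constant $24$ and the $2\log C$ inside the log arise after grouping). This yields, with probability at least $1 - \delta/2$, a uniform concentration bound of order $\sqrt{(\log(24N/\delta) + 2\log C)/(2n)}$. A triangle inequality $|\widehat{\truenegative}_c(q) - \truenegative_c(q)| \leq |\widehat{\truenegative}_c(q) - \widehat{E}| + |\widehat{E} - E| + |E - \truenegative_c(q)|$, where $\widehat E$ and $E$ are the empirical and population envelope probabilities at $(q', q'')$, decomposes the error into one Hoeffding term and two boundary terms, each of which is bounded by $\tne(q', q'', c)$ since they equal the population mass of the sandwich gap intersected with $\{Y \neq c\}$. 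This accounts for the factor $3$ in front of $\tne(q', q'', c)$, and the total failure probability is $\delta/2 + \delta/2 = \delta$.

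The main obstacle is the bookkeeping in the sandwich step: ensuring the pointwise slack $\epsilon r(q, c)$ produced by uniform convergence of $\hat \eta$ fits inside the cover-level slacks $\epsilon r'(c)$ and $\epsilon r''(c)$ that define $\tne$. This reduces to a short case analysis on the location of $q_{\max}$, after which monotonicity of $r(\cdot, c)$ in the arguments under cover comparisons does the work. Once this is established, the Hoeffding concentration and triangle inequality steps are routine, and the remaining three confusion matrix entries follow by the symmetric arguments indicated above.
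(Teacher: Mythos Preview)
Your approach (a direct pointwise sandwich at $(q',q'')$) differs from the paper's, which routes through the intermediate quantity $\widetilde{\truenegative}_c(q') = \tfrac{1}{n}\sum_i \ind\{Y_i\neq c\}\ind\{q'_c\eta_c(X_i)<M(q',c,X_i)\}$ and uses the four-term triangle inequality
\[
|\widehat{\truenegative}_c(q)-\truenegative_c(q)|\leq A(q')+B(q')+C(q,q')+D(q,q'),
\]
with $A=|\widehat{\truenegative}_c(q')-\widetilde{\truenegative}_c(q')|$, $B=|\widetilde{\truenegative}_c(q')-\truenegative_c(q')|$, $C=|\truenegative_c(q)-\truenegative_c(q')|$, and $D=|\widehat{\truenegative}_c(q)-\widehat{\truenegative}_c(q')|$. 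The organizational point is that the paper applies cover monotonicity \emph{before} uniform convergence (bounding $D$ by $\widehat{\truenegative}_c(q'')-\widehat{\truenegative}_c(q')$ first, and only then passing from $\hat\eta$ to $\eta$), so the slacks that appear are $\epsilon r'(c)$ and $\epsilon r''(c)$ directly; the quantity $r(q,c)$ never enters.

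Your route applies uniform convergence at $q$ first, producing slack $\epsilon r(q,c)$, and then tries to fit it inside the cover slacks via monotonicity of $r(\cdot,c)$. This fails on the lower side: one always has $r'(c)\leq r(q,c)$, not the reverse (if $q_{\max}$ is attained at some $j\neq c$ while $q'_{\max}=q'_c$, then $r'(c)=2<1+q_j/q_c=r(q,c)$), so the inequality $t'(c,x)-\epsilon r'(c)\leq t(q,c,x)-\epsilon r(q,c)$ your envelope needs is not guaranteed. Your sandwich is nonetheless correct, but for a different reason: $\eta_c(x)<t'(c,x)-\epsilon r'(c)$ implies $q'_c(\eta_c+\epsilon)<q'_{j^*}(\eta_{j^*}-\epsilon)$ for the $q'$-maximizer $j^*\neq c$, and the cover inequality $q'_{j^*}/q'_c<q_{j^*}/q_c$ then transfers this to $q_c(\eta_c+\epsilon)<q_{j^*}(\eta_{j^*}-\epsilon)$, whence $q_c\hat\eta_c(x)<\hat M(q,c,x)$ on $\mathcal E_1$. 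This single-competitor ratio argument is what is actually needed; the ``monotonicity of $r(\cdot,c)$'' you invoke does not do it. A smaller issue: in your three-term decomposition, the term $|\widehat{\truenegative}_c(q)-\widehat E|$ is controlled by the \emph{empirical} strip mass, not by the population quantity $\tne(q',q'',c)$, so your accounting of ``two boundary terms each bounded by $\tne$'' is off by one Hoeffding step.
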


Simultaneous uniform convergence of the confusion matrix entries is a powerful tool for proving the uniform convergence of general classification metrics.
We consider an example in Section~\ref{sec:Example}.
However, we do note that this bound cannot be calculated from the data itself, unlike in empirical risk minimization, since the bounds depend on the underlying data distribution.

%---------------------------------------------%
%---------------------------------------------%
\subsection{Lower Bounds}
\label{subsec:ConfusionMatrix:LowerBounds}

In this section, we consider a lower bound for estimating confusion matrix entries.
We make two simplifications. 
First we just consider estimating the true negatives, since similar results may be derived analogously for the true positives, false positives, and false negatives.
Second, we consider the binary case.
In terms of notation, we consider a threshold \(t\), which in binary classification is equivalent to \(t = q_{1} / (q_{1} + q_{2})\).
For a more detailed treatment of the binary case, see Appendix~\ref{sec:AppConfusionMatrixQuantitiesBinary}.

\begin{proposition}
Consider the class of probability distributions \(\mathfrak{P}(\alpha, L)\)  consisting of joint distributions of \((X, Y)\) on the space \(\xspace = [0, 1]^{d} \cup \{x'\}\) for some \(x'\) not in \([0, 1]^{d}\)  such that the regression function belongs to the class \(\Sigma^{\alpha}(L)\) of \(\alpha\)-H\"{o}lder functions with constant \(L\).
For any estimator \(\hat{\truenegative}\), there exists a distribution \(P_{X, Y}\) belonging to \(\mathfrak{P}(\alpha, L)\) for an appropriately-chosen \(L\) such that with probability at least \(\delta < 1/4\), there is a threshold \(t\) such that
\[
\frac{p}{4} 
\leq 
|\hat{\truenegative}(t) - \truenegative(t)|
\]
where \(p\) is defined by
\(
p
=
\prob\left(t - \epsilon \leq \eta(X) \leq t + \epsilon
\right)
\)
and \(\epsilon\) can be chosen to satisfy the inequality
\(
\epsilon
\geq 
B
\left(
\frac{\log 1 / (4 \delta)}{p n}
\right)^{\alpha / (2\alpha + d)}
\)
for some constant \(B\).
\label{prop:LowerBoundAlpha}
\end{proposition}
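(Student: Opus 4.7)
The strategy is the standard information-theoretic minimax lower bound: exhibit a family of distributions in $\mathfrak{P}(\alpha, L)$ whose induced $\truenegative(t)$ values are sufficiently spread while their sample distributions are nearly indistinguishable. The target rate $(\log(1/(4\delta))/(pn))^{\alpha/(2\alpha+d)}$ is the classical nonparametric minimax rate associated with H\"older classes, rescaled by the effective sample size $np$, which points to a multi-bump construction analogous to the one used for the uniform-error lower bound in Theorem~\ref{theorem:UniformErrorLowerBound}.

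I would take $P_X = p \cdot \text{Unif}([0,1]^d) + (1-p)\delta_{x'}$, fix a compactly supported smooth profile $\phi \in \Sigma^\alpha(c_\phi)$, and place $m \asymp r^{-d}$ disjoint rescaled copies $\phi_{r, j}(x) := \phi((x - x_j)/r)$ on a regular $r$-grid of $[0,1]^d$, each of amplitude $\epsilon := L r^\alpha / c_\phi$ so that all perturbations stay in $\Sigma^\alpha(L)$. For each sign vector $\sigma \in \{-1, +1\}^m$, define
\[
\eta_\sigma(x) = \tfrac{1}{2} + \epsilon \sum_{j = 1}^m \sigma_j \phi_{r, j}(x) \; \text{on } [0,1]^d, \qquad \eta_\sigma(x') = 0.
\]
For appropriately chosen $L$ each $\eta_\sigma$ lies in $\Sigma^\alpha(L)$, and the band mass $\prob(|\eta_\sigma(X) - 1/2| \leq \epsilon)$ equals $p$ up to a dimensional constant, matching the $p$ in the proposition with $t = 1/2$.

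The two standard quantitative estimates then deliver the stated rate. The single-coordinate KL divergence satisfies $\kldiv(P_\sigma^n \| P_{\sigma'}^n) \lesssim n \cdot (p/m) \cdot \epsilon^2$ when $\sigma, \sigma'$ differ in one coordinate, while flipping a coordinate shifts $\truenegative_\sigma(t)$ by at least $c_1 p/m$ for a constant $c_1$. Substituting $m \asymp r^{-d}$ and $\epsilon = L r^\alpha$ into the single-coordinate indistinguishability condition $np\epsilon^2/m \lesssim \log(1/(4\delta))$ rearranges precisely to $\epsilon \lesssim (\log(1/(4\delta))/(np))^{\alpha/(2\alpha+d)}$, which is the stated bound. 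To amplify the per-coordinate $c_1 p/m$ gap into the $p/4$ separation at a single threshold, I would apply a high-probability version of Assouad's lemma (e.g., Birg\'e's form of Fano's inequality on a Gilbert--Varshamov packing of $\{-1,+1\}^m$ with pairwise Hamming distance $\Omega(m)$), while using the existential quantifier over $t$ in the statement to pick a data-dependent threshold that isolates the contribution of a bump on which the estimator must fail.

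The main technical obstacle is precisely this final coupling step between the threshold $t$ and the bump-sign identification: at any \emph{fixed} $t$ the quantity $\truenegative_\sigma(t)$ depends on $\sigma$ only through the count $k_-(\sigma) = |\{j : \sigma_j = -1\}|$, so it is effectively a linear functional of $\sigma$ and a naive single-threshold two-point argument would only yield the parametric rate $\epsilon \asymp \sqrt{\log(1/\delta)/(np)}$ rather than the H\"older rate. Allowing the threshold to be chosen after the fact, as the inner existential quantifier over $t$ permits, is what couples the problem to sup-norm estimation of $\eta$ and lifts the rate to the nonparametric exponent $\alpha/(2\alpha+d)$, essentially inheriting the hard instance from Theorem~\ref{theorem:UniformErrorLowerBound}; making this reduction rigorous---specifying, for each data realization and each packing element, a threshold $t$ at which the Fano-guaranteed sign misidentification translates into a $p/4$-sized discrepancy in $\hat{\truenegative}(t)$---is where the bulk of the bookkeeping lies.
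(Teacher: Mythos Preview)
Your proposal overcomplicates the argument and rests on a misconception. The paper obtains the result via a \emph{two-point} Le Cam argument at the single fixed threshold $t = 1/2$, not via Fano or Assouad on a multi-bump family. The construction takes $P_X = p\cdot\mathrm{Unif}([0,1]^d) + (1-p)\delta_{x'}$, a single rescaled bump $h(x) = g(Mx - \tfrac{1}{2}\ind)$ with $M = n^{1/(2\alpha+d)}$, and sets
\[
\eta_\pm(x) = \tfrac{1}{2} \pm \gamma \pm c\,M^{-\alpha}h(x) \quad\text{on }[0,1]^d, \qquad \eta_\pm(x') = 1,
\]
where $\gamma > 0$ is a tiny constant satisfying $\|M^{-\alpha}h + 2\gamma\|_{L_2}^2 \le 2\|M^{-\alpha}h\|_{L_2}^2$, so $\gamma^2 \lesssim M^{-(2\alpha+d)}$. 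The two ingredients do separate jobs. The constant shift $\pm\gamma$ forces $\eta_+ > 1/2$ and $\eta_- < 1/2$ \emph{everywhere} on $[0,1]^d$, so $\truenegative_+(1/2) = 0$ while $\truenegative_-(1/2) \ge p/2$; this already gives the $p/4$ separation. The localized bump contributes only $O(M^{-(2\alpha+d)})$ to $\|\eta_+ - \eta_-\|_{L_2}^2$ (hence to the KL divergence), yet lifts $\epsilon := \sup_x |\eta_\pm(x) - 1/2|$ to order $M^{-\alpha}$. One then bounds $\kldiv(P_+, P_-) \lesssim p\,M^{-(2\alpha+d)}$, applies $\max\{P_+^n(E), P_-^n(E^c)\} \ge \tfrac{1}{4}\exp(-n\,\kldiv(P_+, P_-))$ with $E = \{\hat{\truenegative}(1/2) > p/4\}$, and solves $\delta \ge \tfrac{1}{4}\exp(-c\,pn\,M^{-(2\alpha+d)})$ for $M^{-\alpha}$ to obtain exactly $\epsilon \ge B(\log(1/(4\delta))/(pn))^{\alpha/(2\alpha+d)}$.

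Your assertion that ``a naive single-threshold two-point argument would only yield the parametric rate'' is therefore wrong: the nonparametric exponent arises because the two hypotheses differ mainly on a set of Lebesgue measure $M^{-d}$, not because many hypotheses are compared. The trick you are missing is precisely the decoupling of the $\truenegative$ gap (driven by the infinitesimal global shift $\gamma$) from the band width $\epsilon$ (driven by the localized bump). Meanwhile, the obstacle you flagged in your own plan---converting a Fano-type sign misidentification into a $p/4$ gap at some data-dependent threshold---is real: as you observed, $\truenegative_\sigma(t)$ depends on $\sigma$ only through the sign count for every $t$, so no choice of threshold separates packing elements, and there is no obvious reduction from an arbitrary estimator $\hat{\truenegative}(\cdot)$ to a sup-norm estimator of $\eta$. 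The paper's two-point construction sidesteps this entirely.
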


At this point, we take a moment to clarify how to view Proposition~\ref{prop:LowerBoundAlpha} as a natural lower bound to Theorem~\ref{theorem:MulticlassConfusionMatrixBound}.
First, the lower bound \(p\) is the natural analogue to \(\tne(q', q'', c)\).
Here, since we have to specify the distribution for the lower bound, we get the actual threshold \(t\) in the expression for \(p\) instead of \(t'(c, X)\) and \(t''(c, X)\) as in the expression for \(\tne(q', q'', c)\) from a covering argument.
Second, for the class of \(\alpha\)-H\"{o}lder regression functions, we obtain the proper rate in \(\epsilon\).
Using Theorem~\ref{thm:multiclass_unif_convergence} in conjunction with Theorem~\ref{theorem:MulticlassConfusionMatrixBound}, the \(\epsilon\) in \(\tne(q', q'', c)\) is of order \(O(n^{-\alpha/(2\alpha + d)})\) up to logarithmic factors.
Our lower bound contains such an \(\epsilon\) of order \(\Omega(n^{-\alpha/(2\alpha + d)})\), and so the bounds match in \(\epsilon\).
Finally, the expansion of \(\xspace\) via the union to include the discrete point \(x'\) is solely to control the size of \(p\) and may be omitted by simply considering \(p = 1\).

%---------------------------------------------%
%---------------------------------------------%

%---------------------------------------------%
%---------------------------------------------%
\subsection{Precision, Recall, and F1 Score}
\label{subsec:Examples:PrecisionEtc}

Now, we turn to the uniform convergence of class-wise precision, class-wise recall, and macro-averaged F1 score with respect to weights.
Similar results can be obtained for other general classification metrics such as macro-averaged and micro-averaged precision and recall, usually simultaneously with these results in the sense that there is no further degradation of the probability with which these events hold.

First, we define class-wise precision, recall, and F1 scores by
\begin{align*}
\precision_{c}(q)
&= 
\frac{\truepositive_{c}(q)}{\truepositive_{c}(q) + \falsenegative_{c}(q)}
& %\\
\recall_{c}(q)
&=
\frac{\truepositive_{c}(q)}{\truepositive_{c}(q) + \falsenegative_{c}(q)} % \\
& \fone_{c}(q)
&=
2 \frac{\precision_{c}(q) \cdot \recall_{c}(q)}{\precision_{c}(q) + \recall_{c}(q)}.
\end{align*}
Next, the macro-averaged F1 score is
\begin{align}
\fone(q)
&=
\frac{1}{C} \sum_{j = 1}^{C} \fone_{c}(q).
\label{equation:macroF1}
\end{align}
We note that the empirical versions of these quantities may be obtained by simply replacing the confusion matrix entries by their empirical counterparts, and we denote them by \(\hat{\precision}_{c}\), \(\hat{\recall}_{c}\), \(\hat{\fone}_{c}\), and \(\hat{\fone}\).
Now, we present our corollaries for these quantities.

\begin{corollary}
Define \(E_{\precision, c}(q)\) and \(E_{\recall, c}(q)\) by
\begin{align*}
E_{\precision, c}(q)
&=
3 
\frac{E_{\truepositive, c}(q) + E_{\falsepositive, c}(q)}{\truepositive_{c}(q) + \falsepositive_{c}(t) 
- E_{\truepositive, c}(q) - E_{\falsepositive, c}(q)} \\ 
E_{\recall, c}(q)
&=
3 
\frac{E_{\truepositive, c}(q) + E_{\falsenegative, c}(q)}{\truepositive_{c}(q) + \falsenegative{c}(t) 
- E_{\truepositive, c}(q) - E_{\falsenegative, c}(q)}.
\end{align*}
Suppose Assumption~\ref{assumption:UniformEstimate} holds.
Then we have
\begin{align*}
|\hat{\precision}_{c}(q) - \precision_{c}(q)|
&\leq 
E_{\precision, c}(q)   
&\text{ and }&&
|\hat{\recall}_{c}(q) - \recall_{c}(q)|
&\leq 
E_{\recall, c}(q)
\end{align*}
for all \(q\) for which \(E_{\precision, c}(q) > 0\) and
for all \(q\) for which \(E_{\recall, c}(q) > 0\) with probability at least \(1 - \delta\).
\label{corollary:Examples:PrecisionRecall}
\end{corollary}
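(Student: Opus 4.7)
The plan is to reduce the claim to a direct algebraic perturbation argument on top of Theorem~\ref{theorem:MulticlassConfusionMatrixBound}. That theorem already delivers, on an event of probability at least $1 - \delta$ and uniformly in $c \in [C]$ and $q \in Q$, the simultaneous bounds $|\hat{\truepositive}_c(q) - \truepositive_c(q)| \leq E_{\truepositive,c}(q)$, $|\hat{\falsepositive}_c(q) - \falsepositive_c(q)| \leq E_{\falsepositive,c}(q)$, and $|\hat{\falsenegative}_c(q) - \falsenegative_c(q)| \leq E_{\falsenegative,c}(q)$. I would condition on this event for the remainder of the argument; no further union bound or covering step is needed, since the required uniformity in $c$ and $q$ is inherited directly.

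The core algebraic step is to control a ratio of the form $a/(a+b)$ when the numerator and denominator are perturbed by known amounts. Writing $a = \hat{\truepositive}_c(q)$, $b = \hat{\falsepositive}_c(q)$, $A = \truepositive_c(q)$, $B = \falsepositive_c(q)$, I would use the decomposition
\[
\frac{a}{a+b} - \frac{A}{A+B} = \frac{a - A}{a+b} - \frac{A}{A+B} \cdot \frac{(a + b) - (A + B)}{a+b},
\]
apply the triangle inequality together with $A/(A+B) \leq 1$ and $|(a+b)-(A+B)| \leq |a-A| + |b-B|$, to obtain
\[
|\hat{\precision}_c(q) - \precision_c(q)| \leq \frac{2|a - A| + |b - B|}{a + b} \leq \frac{2 E_{\truepositive,c}(q) + E_{\falsepositive,c}(q)}{a + b}.
\]
The denominator is then bounded from below by $a + b \geq \truepositive_c(q) + \falsepositive_c(q) - E_{\truepositive,c}(q) - E_{\falsepositive,c}(q)$, and the numerator is dominated by $3(E_{\truepositive,c}(q) + E_{\falsepositive,c}(q))$, which yields exactly $E_{\precision,c}(q)$. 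The recall bound follows from the identical argument with $\falsenegative$ in place of $\falsepositive$ throughout.

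The only detail worth flagging is the positivity hypothesis. Since the numerator $E_{\truepositive,c}(q) + E_{\falsepositive,c}(q)$ is always nonnegative, the assumption $E_{\precision,c}(q) > 0$ is equivalent to $\truepositive_c(q) + \falsepositive_c(q) > E_{\truepositive,c}(q) + E_{\falsepositive,c}(q)$, which is exactly what ensures both that the lower bound on $a+b$ is strictly positive (so that the final inequality is meaningful) and that $\hat{\precision}_c(q)$ is well-defined on the good event; analogous remarks apply to recall. Overall there is no genuine obstacle beyond this bookkeeping---the entire corollary is a short consequence of Theorem~\ref{theorem:MulticlassConfusionMatrixBound} combined with the ratio-perturbation identity above.
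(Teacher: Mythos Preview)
Your proposal is correct and reaches the same conclusion as the paper, but the algebraic route differs. The paper separately upper- and lower-bounds $\hat{\precision}_c(q)$: for the upper bound it writes
\[
\hat{\precision}_c(q) \leq \frac{\truepositive_c(q) + E_{\truepositive,c}(q)}{\truepositive_c(q) + \falsepositive_c(q) - E_{\truepositive,c}(q) - E_{\falsepositive,c}(q)},
\]
factors this as $(\precision_c(q) + \text{small})(1 + B_{\precision,c}(q))$ with $B_{\precision,c}(q) = E_{\precision,c}(q)/3$, obtains $\precision_c(q) + 2B + B^2$, and then does a case split on $B \leq 1$ versus $B > 1$ (invoking $\precision_c \leq 1$ in the latter case) to reach $3B$. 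The lower bound is handled symmetrically and gives $-2B$. Your single-identity approach, writing the difference of ratios directly and applying the triangle inequality, avoids both the separate upper/lower treatment and the case split entirely; it also never uses that precision is bounded by $1$. The price you pay is a slightly looser intermediate numerator ($2E_{\truepositive,c} + E_{\falsepositive,c}$ crudely bounded by $3(E_{\truepositive,c} + E_{\falsepositive,c})$), but since the target bound already carries the factor $3$ this costs nothing. Both arguments sit on the same probability-$1-\delta$ event from Theorem~\ref{theorem:MulticlassConfusionMatrixBound}, so the probabilistic content is identical.
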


%Now, we consider F1 score.

\begin{corollary}
Suppose that Assumption~\ref{assumption:UniformEstimate} holds.
Define \(E_{\fone, c}(q)\) by
\[
E_{\fone, c}(q)
=
9
\frac{E_{\precision, c}(q) + E_{\recall, c}(q)}{\precision_{c}(q) + \recall_{q}(q) - E_{\precision, c}(q) - E_{\recall, c}(q)}.
\]
Then with probability at least \(1 - \delta\), we have
\(
|\hat{\fone}_{c}(q) - \fone_{c}(q)|
\leq 
E_{\fone, c}(q)
\)
for each \(c\) in \([C]\), and consequently we obtain
\[
|\hat{\fone}(q) - \fone(q)|
\leq 
\frac{1}{C} \sum_{j = 1}^{C} 
E_{\fone, c}(q).
\]
\label{corollary:Examples:F1Score}
\end{corollary}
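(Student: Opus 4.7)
The plan is to apply Corollary~\ref{corollary:Examples:PrecisionRecall} to transfer the uniform control on confusion matrix entries to uniform control on per-class precision and recall, and then to push these errors through the harmonic-mean structure of the F1 score via an elementary algebraic identity.

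First, I would condition on the single high-probability event delivered by Corollary~\ref{corollary:Examples:PrecisionRecall}, on which
\[
|\hat{\precision}_{c}(q) - \precision_{c}(q)| \leq E_{\precision, c}(q)
\quad \text{and} \quad
|\hat{\recall}_{c}(q) - \recall_{c}(q)| \leq E_{\recall, c}(q)
\]
hold simultaneously for every \(c \in [C]\) and every \(q \in Q\). Because Corollary~\ref{corollary:Examples:PrecisionRecall} itself inherits its probability from the same underlying uniform-error event of Assumption~\ref{assumption:UniformEstimate} that Theorem~\ref{theorem:MulticlassConfusionMatrixBound} uses, no additional union bound is needed and the confidence remains \(1 - \delta\).

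Next, for fixed \(c\) and \(q\), write \(p = \precision_{c}(q)\), \(r = \recall_{c}(q)\), and their empirical counterparts \(\hat p, \hat r \in [0,1]\). The key step is the algebraic identity
\[
\frac{pr}{p + r} - \frac{\hat p \hat r}{\hat p + \hat r}
  = \frac{p \hat p (r - \hat r) + r \hat r (p - \hat p)}{(p + r)(\hat p + \hat r)},
\]
verified by expanding both sides. Combining this with \(p \hat p \leq p\), \(r \hat r \leq r\) (since \(\hat p, \hat r \leq 1\)) and \(p/(p+r), r/(p+r) \leq 1\) yields
\[
|\hat{\fone}_{c}(q) - \fone_{c}(q)|
  \leq \frac{2\bigl(E_{\precision, c}(q) + E_{\recall, c}(q)\bigr)}{\hat p + \hat r}.
\]
A triangle-inequality bound \(\hat p + \hat r \geq p + r - E_{\precision, c}(q) - E_{\recall, c}(q)\) then produces the stated form of \(E_{\fone, c}(q)\); the slightly looser constant \(9\) (rather than the \(2\) arising from the direct estimate) accommodates the cruder intermediate inequalities used when one wishes the bound to line up cleanly with the format of the precision/recall bound.

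Finally, the macro-averaged bound follows immediately from the triangle inequality applied to the linear combination in \eqref{equation:macroF1}:
\[
|\hat{\fone}(q) - \fone(q)|
  \leq \frac{1}{C}\sum_{c = 1}^{C} |\hat{\fone}_{c}(q) - \fone_{c}(q)|
  \leq \frac{1}{C}\sum_{c = 1}^{C} E_{\fone, c}(q).
\]
The only mild obstacle is the positivity condition \(\precision_{c}(q) + \recall_{c}(q) > E_{\precision, c}(q) + E_{\recall, c}(q)\) needed for the displayed denominator to be meaningful; this is implicit in the hypothesis that \(E_{\precision, c}(q)\) and \(E_{\recall, c}(q)\) themselves are finite (which in turn requires positivity of their denominators in Corollary~\ref{corollary:Examples:PrecisionRecall}), and is the propagation-of-constants bookkeeping that must be tracked carefully throughout the chain of nested ratio bounds.
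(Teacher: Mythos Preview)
Your argument is correct, and it takes a genuinely different route from the paper's. The paper proves the per-class bound by separately upper- and lower-bounding $\hat{\fone}_c(q)$: it replaces the numerator and denominator by their worst-case perturbations, factors out $\fone_c(q)$, and obtains a bound of the form $\fone_c(q) + 5B_{\fone,c}(q) + 4B_{\fone,c}(q)^2$ on one side and $\fone_c(q) - 3B_{\fone,c}(q)$ on the other, then absorbs the quadratic term by splitting into cases $B_{\fone,c}(q) \lessgtr 1$. Your approach via the difference-of-harmonic-means identity
\[
\frac{pr}{p+r} - \frac{\hat p \hat r}{\hat p + \hat r}
= \frac{p\hat p(r-\hat r) + r\hat r(p-\hat p)}{(p+r)(\hat p + \hat r)}
\]
avoids the separate upper/lower bounds and the quadratic term entirely, and in fact delivers the sharper constant $2$ rather than $9$; your remark that the $9$ in the statement is simply a looser version of what you prove is correct. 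Both arguments condition on the same high-probability event from Corollary~\ref{corollary:Examples:PrecisionRecall}, so the probability bookkeeping is identical. The only caveat, which you already note, is that the identity requires $\hat p + \hat r > 0$ and the final display requires $p + r > E_{\precision,c}(q) + E_{\recall,c}(q)$; the paper imposes the analogous condition implicitly through the sign of $E_{\fone,c}(q)$.
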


%---------------------------------------------%
%---------------------------------------------%
\section{An Example}
\label{sec:Example}

In this section, we consider an example with three classes on the unit interval to make our results on general classification metrics clearer.
Specifically, we specify a concrete distribution, illustrate the effects of weighting on optimal class selection, and compute the theoretical errors for estimating the confusion matrix implied by our bounds.
Finally, we investigate the error empirically through a simulation study. Python code for reproducing the results in this section is available at \url{https://github.com/neilzxu/weighted_knn_classification}.

First, we specify the distribution.
We let \(X\) be drawn uniformly from \([0, 1]\), and we set 
\begin{align*}
    \eta_{1}(x) &= \exp(-2x) \cos^{2}(4\pi x) \\
    \eta_{2}(x) &= (1 - x)(1 - \eta_{1}(x)) \\ 
    \eta_{3}(x) &= x (1 - \eta_{1}(x)).
\end{align*}
Note that the vector of marginal probabilities is approximately \(p = (0.22, 0.35, 0.43)\).

Next, we examine the effect of weighting.
Figure~\ref{figure:Example:RegressionWeighting} gives the unweighted and weighted regression function for the weighting \(q = (0.5, 0.3, 0.2)\), which emphasizes the first class most prominently.
\begin{figure}[t]
\centering 
\minipage{0.5\textwidth}
\includegraphics[width=3in]{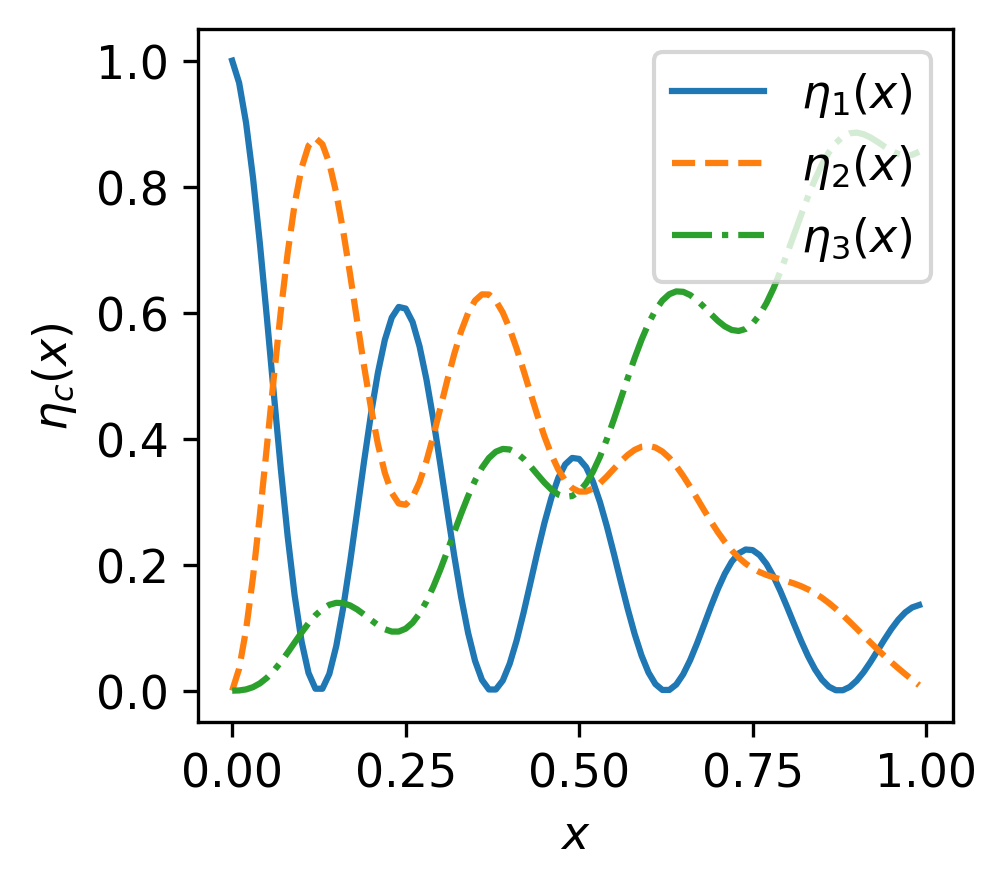}
\endminipage 
\minipage{0.5\textwidth}
%\vspace{-60pt}
\includegraphics[width=3in]{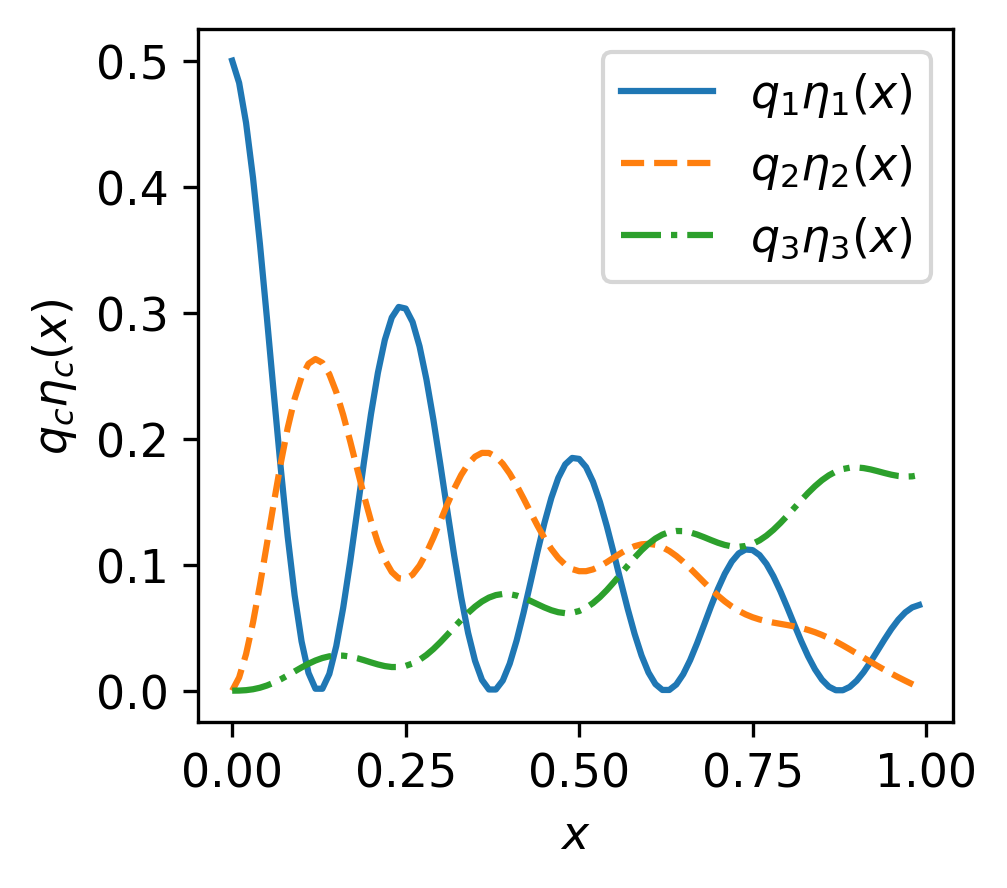}
\endminipage\hfill
%\vspace{-30pt}
\caption{The values of the unweighted and weighted regression function on the unit interval. 
         Note that the weighting increases the areas in which classing the data as class \(1\) is optimal.}
\label{figure:Example:RegressionWeighting}
\end{figure}

Finally, we examine the error region that defines \(\tne\) and \(\fne\) for this regression function and weighting \(q = (0.5, 0.3, 0.2)\).
To determine the error, we have to define a set of permissible weightings \(Q\), a cover \(Q_{\cover}\), and an error \(\epsilon\).
First, we select \(Q = \{q: \sum_{c = 1}^{3} q_{c} = 1, q_c \geq 0.1 \text{ for all } c\}\).
Second, we define \(Q_{\cover} = Q \cap \{0.01 \times (a,  b, c): a, b, c \in \naturals\}\).
Note that since \(q\) is in \(Q_{\cover}\) we could pick \(q = q' = q''\), but for illustration purposes or under a slight perturbation of \(q\) of order \(0.001\), we select \(q' = (0.52, 0.29, 0.19)\) and \(q'' = (0.48, 0.31, 0.21)\) to cover \(q\).
Finally, we set \(\epsilon = 0.1.\)

With these parameters, we estimate the true negative error and false negative error.
To estimate these, we compute a fine grid \(x_{1}, \ldots, x_{N}\) of the unit interval.
Then, we set
\begin{align*}
    \emptne(q', q'', 1) 
    &= \frac{1}{N} \sum_{i = 1}^{N} (1 - \eta_{1}(x_{i})) \ind\left\{t'(1, x_{i}) - \epsilon r'(1)
                                                                     \leq \eta_{1}(x_{i})
                                                                     \leq t''(1, x_{i}) + \epsilon r''(1) \right\} \\
    \empfne(q', q'', 1) 
    &= \frac{1}{N} \sum_{i = 1}^{N} \eta_{1}(x_{i}) \ind\left\{t'(1, x_{i}) - \epsilon r'(1)
                                                               \leq \eta_{1}(x_{i})
                                                               \leq t''(1, x_{i}) + \epsilon r''(1) \right\}
\end{align*}
With a grid of size \(N = 1,000\), we obtain \(\emptne(q', q'', 1) = 0.20\) and \(\empfne(q', q'', 1) = 0.06\).
While the error is large for these parameters, it is consequently easy to visualize in Figure~\ref{figure:Example:ErrorRegion}.
\begin{figure}[t]
\centering 
%\minipage{0.5\textwidth}
%\includegraphics[width=\linewidth]{figs/gamma_2d_03t.png}
%\endminipage 
%\minipage{0.5\textwidth}
%\vspace{-60pt}
\includegraphics[width=3in]{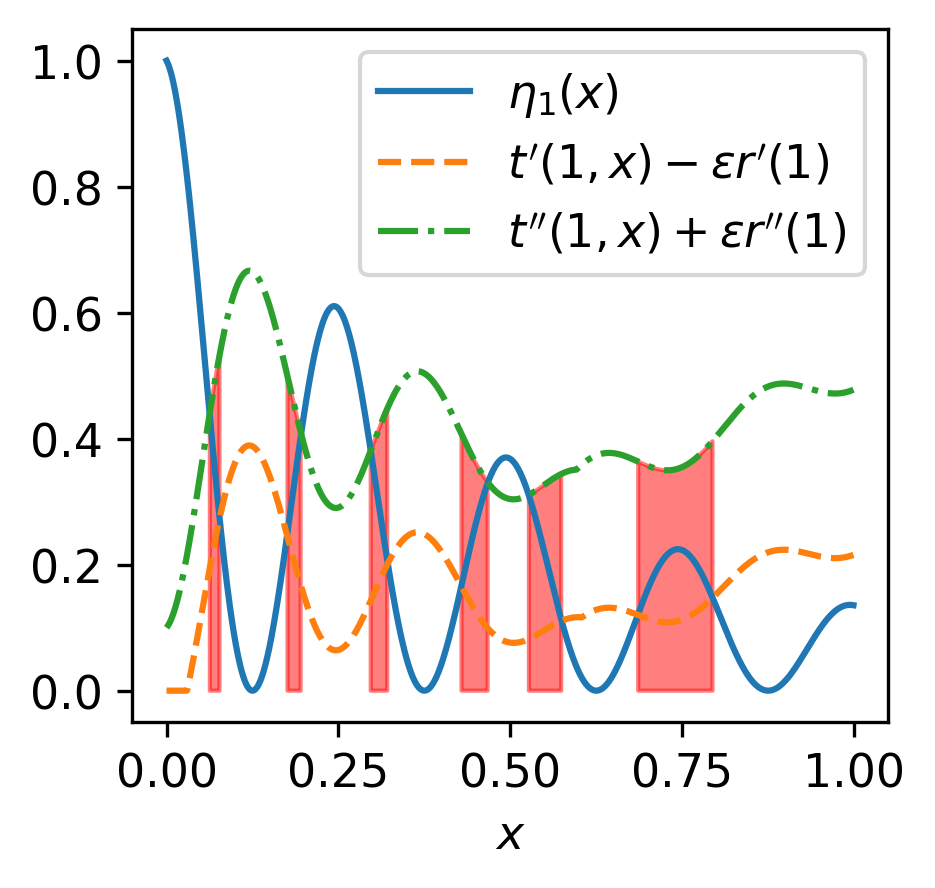}
%\endminipage\hfill
%\vspace{-30pt}
\caption{The error regions for the regression function when \(\epsilon = 0.1\).}
\label{figure:Example:ErrorRegion}
\end{figure}
Specifically, we plot the curves \(\eta_{1}(x)\), \(t'(1, x_{i}) - \epsilon r'(1)\), and \(t''(1, x_{i}) + \epsilon r''(1)\).
Additionally, we shade the region down to the \(x\)-axis where \(\eta_{1}(x)\) is between the other two curves, and the measure of the shaded region on the \(x\)-axis is equal to \(\emptne(q', q'', 1) + \empfne(q', q'', 1)\).
Informally, when \(\eta_{1}(x)\) is not between the other two curves, then there is enough signal to determine that class 1 should or should not be chosen with high probability; when \(\eta_{1}(x)\) is between the other two curves, then making an error is more likely.

Our final task in analyzing this example is to draw samples, compute a kNN estimator, and then determine the true errors in estimating the confusion matrix.
For this, we run \(1,000\) trials in which we pick \(n\) sample points and use weighted \(k\)-nearest neighbors with \(q = (0.5, 0.3, 0.2)\).
For each trial, we calculate the error of the confusion matrix entries, and then we average this over the trials.
The results can be seen in Table~\ref{table:EmpiricalConfusionMatrixErrors}.
For a large enough sample size, we see that the errors do indeed diminish, e.g., for \(n = 1,000\) and \(k = 49\), all confusion matrix errors are under \(0.05\).
\begin{table}[t]
    \caption{Average confusion matrix errors over \(1,000\) trials of \(n\) samples with \(k\)-nearest neighbors weighted by \(q = (0.5, 0.3, 0.2)\). For this particular distribution and weighting, true negatives are hardest to estimate.}
    \centering
    \begin{tabular}{llcccc}
    \(n\)     & \(k\) & \(|\hat{\truenegative}_{1} - \truenegative_{1}|\) & \(|\hat{\truepositive}_{1} - \truepositive_{1}|\) 
     & \(|\hat{\falsenegative}_{1} - \falsenegative_{1}|\) & \(|\hat{\falsepositive}_{1} - \falsepositive_{1}|\)\\ \bottomrule
    50 & 18 & \(0.16\) & \(0.08\) & \(0.05\) & \(0.12\) \\
    100 & 23 & \(0.11\) & \(0.06\) & \(0.04\) & \(0.08\) \\
    1,000 & 49 & \(0.03\) & \(0.01\) & \(0.01\) & \(0.02\)
    \end{tabular}
    \label{table:EmpiricalConfusionMatrixErrors}
\end{table}
Note that these numbers are not directly comparable to the theoretical error upper bound obtained earlier because we did not have an exact relationship between \(\epsilon\) and \(n\) due to unknown constants.
However, the numerical results do suggest that the confusion matrix entries may be accurately estimated when the sample size is sufficiently large.
This ultimately serves as justification for using empirical estimates of the target metric to adjust weights \(q\) when attempting to optimize a general classification metric such as \(\fone\) score. We discuss this further in the next section.

%---------------------------------------------%
%---------------------------------------------%
%---------------------------------------------%
%---------------------------------------------%
\section{Optimization}
\label{sec:NumericalResults}

In this section we introduce two basic algorithms for optimizing a general classification metric and perform experiments on real and synthetic data to examine the error with respect to the macro-averaged F1-score defined in equation~\eqref{equation:macroF1}.
In contrast to the example given in the previous section where \(q\) was fixed, here we choose \(q\) adaptively. Python code for reproducing the results in this section is available at \url{https://github.com/neilzxu/weighted_knn_classification}.

%---------------------------------------------%
%---------------------------------------------%
\subsection{Algorithms}

We start by introducing our algorithms.
Our first algorithm is a coordinate-wise greedy algorithm for choosing \(q\).
At each step, we construct candidates by increasing or decreasing a single coordinate of \(q\) and normalizing to obtain a new \(q'\). 
We do this for all coordinates, yielding \(2C\) candidates, and then we select new weighting \(q''\) that has the highest \(\hat{\fone}\)-score.
Our algorithm is given in Algorithm~\ref{algorithm:CoordinateGreedy}, and we use \(\vect{e}_{i}\) to denote the \(i\)th standard basis vector.

%---------------------------------------------%
%---------------------------------------------%
\begin{algorithm}[t]
	\caption{Greedy Coordinate Search}
	\SetKwInOut{Input}{Input}
	\SetKwInOut{Output}{Output}
	\Input{step size \(\gamma\), number of steps \(T\), initial weights \(q^{(0)}\), empirical F1 function \(\hat{\fone}\).}
	\For{\(t\) in \(1, \ldots, T\)}{
		\For{\(i\) in \(1, \ldots, C\)}{
			\For{sign in \(\{+1, -1\}\)}{
				\(q_{\text{step}} = \left(q^{(t - 1)} + \text{sign}* \gamma * \vect{e}_i\right)_{+}\)\\
				\(q_{\text{candidate}} = \frac{q_{\text{step}}}{\left\|q_{\text{step}}\right\|_1} \)\\
				\If{\(\hat{\fone}(q_{\text{candidate}}) \geq \hat{\fone}\left(q^{(t)}\right)\)\textsl{}}{
					\(q^{(t)} \gets q_{\text{candidate}}\)\\	
				}
			}
		}
	}
	\Output{The weights \(q^{(T)}\).}
\label{algorithm:CoordinateGreedy}
\end{algorithm}
%---------------------------------------------%
%---------------------------------------------%

In addition to the greedy algorithm, we also perform a grid search over the weights, and we defer details of this algorithm to the Appendix.
The benefit of grid search is that it is attains better performance than greedy algorithms in the absence of additional structure; the drawback is that in general computation scales exponentially in the number of classes \(C\).

%---------------------------------------------%
%---------------------------------------------%
\subsection{Data}

We use two types of data for our experiments: synthetic and real.
The synthetic data comes from the simple distribution described in Section~\ref{sec:Example}.
For the synthetic experiments, we use an initial weighting \(q^{(0)} = (0.3, 0.3, 0.4)\), which is close to the unweighted classifier, and use \(k = \min\left(n, 5n^{1/3}\right)\) nearest neighbors, which leads to asymptotically optimal convergence of the uniform error, on a sample of size \(n\).

For real data, we use the Covertype dataset \citep{blackard1999comparative} from the UCI dataset repository \citep{blake1998uci}. 
This dataset contains 54 cartographic features and 7 classes corresponding to different types of forest cover for a patch of land.
The dataset is split into train, dev, and test sets with 11,340, 3,780, and 565,892 samples respectively.

%---------------------------------------------%
%---------------------------------------------%
\subsection{Results}

In this section, we examine the performance of our algorithms on the two datasets. 

%---------------------------------------------%
%---------------------------------------------%
\subsubsection{Synthetic dataset}

\begin{figure}[t]
	\begin{subfigure}{0.48\textwidth}
		\includegraphics[width=\columnwidth]{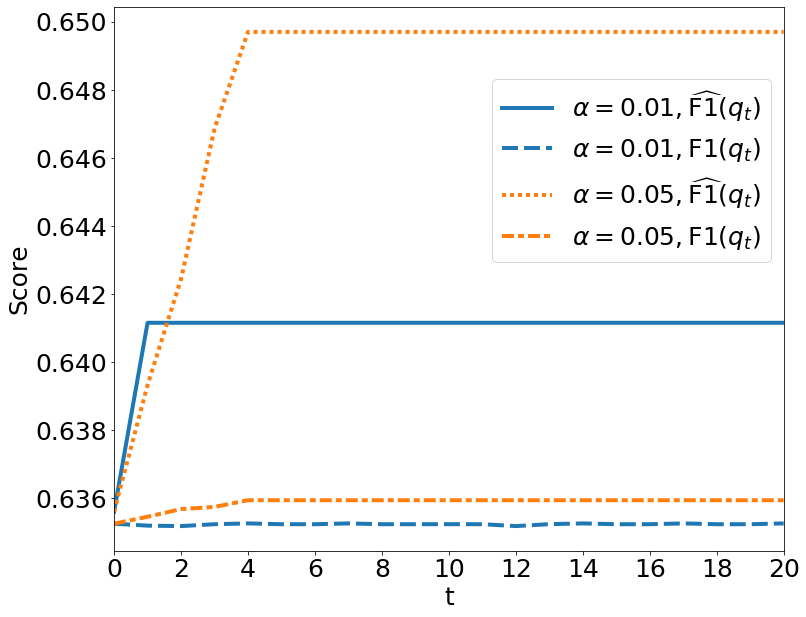}
		\vspace{-15pt}
		\caption{Empirical and population F1 by step.}
		\label{subfig:StepSizeF1}
	\end{subfigure}
	\begin{subfigure}{0.48\textwidth}
		\includegraphics[width=\columnwidth]{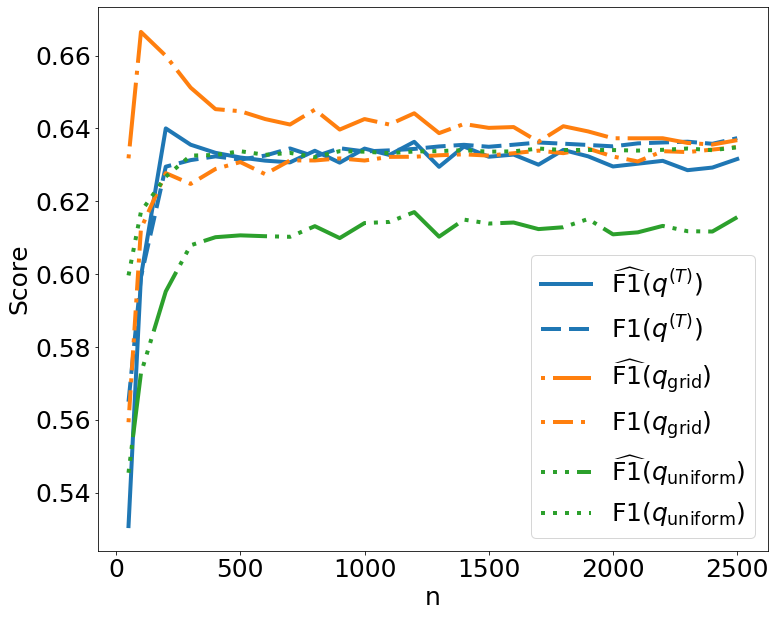}
		\vspace{-15pt}
		\caption{Empirical and population F1 by sample size.}
		\label{subfig:GreedyGridComp}
	\end{subfigure}	
	\caption{Empirical and population F1-score by step in the greedy algorithm and by sample size in the greedy algorithm, grid search, and unweighted classifier. For (a), the sample size is \(n = 1,000\). The quality of the solution depends very modestly on the step size. For (b), the greedy algorithm is run with \(T = 20\) and \(\gamma = 0.01\), and grid search is conducted with differences of \(0.01\). The greedy algorithm achieves similar performance to that of grid search, and the difference between empirical and population F1 decreases with \(n\). It also has marginally better population F1 than the baseline unweighted classifier for sample sizes greater than \(n=1,000\).}
	\label{figure:Results:Synthetic}
\end{figure}

We start by considering the synthetic data.
We perform two different experiments on the synthetic dataset. First, we examine the effect of different step sizes on the weights found by the greedy algorithm. Second, we compare the performance of each algorithm as the training set size increases. In addition to examining \(\hat{\fone}\), we can also compute the population \(\fone\) score to determine how well the resulting classifier generalizes. For our experiments we use \(N = 10,000\) to form a grid for numerical computation of the confusion matrix, and calculate the F1 score using the confusion matrix. 

To examine the difference in \(\hat{\fone}\) and \(\fone\) achieved by the greedy algorithm for different step sizes, we use a step count \(T = 20\), a dataset of size \(n = 1,000\), and step sizes of \(0.01\) and \(0.05\). Figure~\ref{subfig:StepSizeF1} shows that a greedy algorithm with the step size \(0.05\) finds a weighting that is better in both empirical and population F1. However, the algorithm takes few steps, since it gets stuck quickly with both step sizes.  

Next, we examine the relationship between \(\hat{\fone}\) and \(\fone\) of the greedy algorithm, grid algorithm, and unweighted classifier as \(n\) increases. For the greedy algorithm we choose parameters \(T = 20\) and \(\gamma = 0.01\). To calculate \(\hat{\fone}\) and \(\fone\), we average each score of each algorithm over 50 trials for each \(n\).  For grid search, we choose a grid over the weight space with a spacing of 0.01 between points. Figure~\ref{subfig:GreedyGridComp} shows that \(\hat{\fone}\) seems to converge towards \(\fone\) as \(n\) increases for all 3 classifiers, although the greedy and grid algorithms converge much faster than the unweighted classifier. The convergence is consistent with the convergence of individual entries of the empirical confusion matrix to their corresponding entries in the true confusion matrix. For all 3 classifiers, the population F1 remains relatively similar across all \(n\), with the greedy algorithm performing marginally better than the unweighted classifier, and the grid algorithm performing marginally worse. The similarity in performance between the unweighted, i.e., usual kNN, and the approaches that optimize the weights may be in part due to the relatively balanced nature of the problem, since the class probabilities are \(0.22\), \(0.35\), and \(0.43\) respectively, leading to the optimal choice of weights being relatively close to having equal weights across classes.

The greedy algorithm also has consistently higher population F1 than grid search. This may be a consequence of the initial weighting being close to final weighting for the greedy algorithm, allowing it to do more slightly more granular search than the grid algorithm. In any case, these experiments demonstrate that greedy search over the weights can perform as well as grid search empirically.
%---------------------------------------------%
%---------------------------------------------%
\subsubsection{Real dataset}

For the real dataset, we fit the underlying kNN model to the training set, and then fit weights on the dev set. We finally evaluate the \(\fone\) performance of our optimized weights on the test set. Here, we consider \(\hat{\fone}\) as the \(\fone\) score of classifier on the training set and population \(\fone\) as the \(\fone\) score on the test set. For the kNN classifier, we chose \(k=160\). We set the initial weighting for our greedy algorithm to be the balanced weighting, and search for \(T = 25\) steps with a step size of \(\alpha = 0.02\). For the grid algorithm, we use a grid with a spacing of 0.083 between points. In addition to considering these algorithms, we also consider the unweighted baseline kNN classifier, as well as a simple logistic regression classifier trained with stochastic gradient descent.

\renewcommand{\arraystretch}{1.5}
\begin{table}[t]
    \caption{Training and testing \(\fone\) scores on the Covertype dataset for our greedy classifier, grid search classifier, unweighted kNN, and a linear classifier (logistic regression). The linear classifier performs best, reflecting the separability of the dataset. Greedy search outperforms both grid search and baseline unweighted classifier on test \(\fone\). The learned weights for the greedy and grid search have a similar distribution, yet the \(\hat{\fone}\) score of grid search is much higher, suggesting overfitting.}
    \centering
    \begin{tabular}{l||ll|lllllll}
    Algorithm     & \(\hat{\fone}\) & \(\fone\)  
    & \(q_{1}\) & \(q_{2}\) & \(q_{3}\) & \(q_{4}\) & \(q_{5}\) & \(q_{6}\) & \(q_{7}\) \\ \bottomrule
    Greedy     & 0.134 & 0.144 & 0.045 & 0.195 & 0.151 & 0.317 & 0.069 & 0.152 & 0.070  \\
    Grid       & 0.170 & 0.072 & 0.0 & 0.083 & 0.167 & 0.583 & 0.0 & 0.167 & 0.0   \\
    Unweighted & 0.054  & 0.072 & 0.143 & 0.143 & 0.143 & 0.143 & 0.143 & 0.143 & 0.143  \\
    Linear  & 0.467 &  0.242 & N/A   & N/A   & N/A   & N/A   & N/A   & N/A   & N/A
    \end{tabular}
    \label{table:RealResults}
\end{table}

We present our results, consisting of the train \(\fone\), test \(\fone\), and final class weights, in Table~\ref{table:RealResults}.
The linear classifier vastly outperforms the kNN models, showing the relatively linearly separable nature of this dataset. Reflecting their performances on the synthetic experiment, the greedy algorithm has the highest test \(\fone\) score when compared to the grid algorithm and the unweighted classifier, both of which have the same F1 score. The higher \(\hat{\fone}\) of the grid algorithm compared to the greedy algorithm suggests it overfit on the training dataset. We do note that the learned weights of the two algorithms are similar, however. In particular, both the greedy and grid algorithm place much of their weight on class 4. 

%---------------------------------------------%
%---------------------------------------------%
%---------------------------------------------%
%---------------------------------------------%
\section{Discussion}
\label{sec:Discussion}

While we make progress in the theory of modern classification problems using nearest neighbors, there are still a number of future directions, both statistical and computational.
First, there are still many questions on how to optimize a given metric such as F1 score.
For instance, it is unclear that our algorithms find the optimal weighting, or whether such weighted plug-in approaches are even optimal as in the binary classification case.
Second, we would also like a method that finds an optimal weighting as the number of classes grows large.
In this regime, we cannot effectively use grid search because it requires exponential computation in the number of classes.
Finally, the ultimate goal is to produce a computationally efficient classifier with minimax optimal statistical risk with respect to a given metric, again such as F1 score.
To the best of our knowledge, such a classifier and the rate of convergence of the risk is not known for any non-trivial problem, parametric or nonparametric.

\section*{Acknowledgments}

JK was partially supported by Accenture, Rakuten, and Lockheed Martin.
We would like to thank L\'aszl\'o Gy\"orfi and Aryeh Kontorovich for their helpful comments and suggestions.

%---------------------------------------------%
%---------------------------------------------%
%\bibliographystyle{abbrvnat}
\bibliography{extreme_refs}

\newpage
\appendix

%---------------------------------------------%
%---------------------------------------------%
%---------------------------------------------%
%---------------------------------------------%
\section{Proofs for Accuracy and Risk}

In this section, we consider proofs for accuracy and risk.
Our main results are Theorem~\ref{theorem:Theorem5Analogue}, Corollary~\ref{corollary:SmoothTheorem}, Corollary~\ref{corollary:MarginTheorem}, Proposition~\ref{proposition:WeightedRisk}, and Theorem~\ref{theorem:AccuracyLowerBound}.

%---------------------------------------------%
%---------------------------------------------%
\subsection{Proof of Theorem~\ref{theorem:Theorem5Analogue}}

The goal of this section is to prove Theorem~\ref{theorem:Theorem5Analogue}.
Before the main proof,  we establish a few lemmas.

\begin{lemma}
Let \(x\) be an arbitrary point.
Define the events
\begin{align*}
& \begin{aligned}
\eventA
&= \left\{
(q_{f^{*}_{q}(x)} \hat{\eta}_{f^{*}_{q}(x)}(B') - q_{c} \hat{\eta}_{c}(B'))
-
(q_{f^{*}_{q}(x)} \eta_{f^{*}_{q}(x)}(B') - q_{c} \eta_{c}(B'))
<
-\Delta
%\right. \\&\qquad  \left.
 \text{ for some } c \neq f^{*}_{q}(x)
\right\} \\
\eventB
&=
\left\{
r_{p}(x)
<
\rho\left(x, X_{\sigma_{k + 1}(x)}\right)
\right\}\\
\eventC
&=
\left\{
x \in \partial_{p, \Delta}
\right\}.
\end{aligned}
\end{align*}
Then, we have the inequality
\begin{align*}
& \begin{aligned}
\ind\left\{
f_{q, n, k}(x) \neq f^{*}_{q}(x)
\right\}
&\leq
\ind(\eventA)
% \\&\qquad
+
\ind(\eventB)
+
\ind(\eventC).
\end{aligned}
\end{align*}
\label{lemma:Lemma7Analogue:Weighted}
\end{lemma}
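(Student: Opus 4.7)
The plan is to prove the contrapositive: assuming $\eventA^{c} \cap \eventB^{c} \cap \eventC^{c}$ holds, I will deduce that $f_{q, n, k}(x) = f^{*}_{q}(x)$, in which case the left-hand indicator is $0$ and the inequality is trivial. Throughout, write $B' := \closedball(x, \rho(x, X_{\sigma_{k}(x)}))$ for the closed ball used by the kNN regression estimator at $x$, so that by \eqref{eqn:KNNRegressor} one has $\hat{\eta}_{c}(x) = \hat{\eta}_{c}(B')$ for every class $c$.

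Step 1 uses $\eventB^{c}$ to control the geometry: since $\rho(x, X_{\sigma_{k + 1}(x)}) \leq r_{p}(x)$, the ball $B'$ has radius at most $r_{p}(x)$. By the no-ties assumption on the sample, there is a small $\tau > 0$ such that the open ball $B(x, \rho(x, X_{\sigma_{k}(x)}) + \tau)$ still has radius at most $r_{p}(x)$ and contains exactly the same $k$ sample points as $B'$. This lets me bridge the open-vs-closed ball distinction between the kNN regressor and the definition of $\partial_{p, q, \Delta}$. Step 2 then invokes $\eventC^{c}$: because $x \notin \partial_{p, q, \Delta}$, for every $r \leq r_{p}(x)$ and every $c \neq f^{*}_{q}(x)$, $q_{f^{*}_{q}(x)} \eta_{f^{*}_{q}(x)}(B(x, r)) \geq q_{c} \eta_{c}(B(x, r)) + \Delta$. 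Specializing $r$ to the enlarged open ball from Step 1 and letting $\tau \downarrow 0$ (continuity of $P_{X}$) yields the population-level separation $q_{f^{*}_{q}(x)} \eta_{f^{*}_{q}(x)}(B') \geq q_{c} \eta_{c}(B') + \Delta$ for every $c \neq f^{*}_{q}(x)$.

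Step 3 is the empirical-to-population bridge supplied by $\eventA^{c}$: for every $c \neq f^{*}_{q}(x)$,
\[
q_{f^{*}_{q}(x)} \hat{\eta}_{f^{*}_{q}(x)}(B') - q_{c} \hat{\eta}_{c}(B') \; \geq \; \left(q_{f^{*}_{q}(x)} \eta_{f^{*}_{q}(x)}(B') - q_{c} \eta_{c}(B')\right) - \Delta.
\]
Chaining this with the bound from Step 2 gives a right-hand side of at least $\Delta - \Delta = 0$, so $q_{f^{*}_{q}(x)} \hat{\eta}_{f^{*}_{q}(x)}(B') \geq q_{c} \hat{\eta}_{c}(B')$ for every competitor $c$. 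By the definition of the weighted kNN classifier in \eqref{eqn:qWeightedKNN}, $f^{*}_{q}(x)$ therefore lies in the empirical $\argmax$, and the same tie-breaking convention used throughout the paper selects it, giving $f_{q, n, k}(x) = f^{*}_{q}(x)$ and completing the contrapositive.

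The main obstacle I anticipate is the open-versus-closed ball mismatch between $\partial_{p, q, \Delta}$ (open balls) and the kNN estimator (the closed ball $B'$); the slight-enlargement trick in Step 1 together with a routine continuity-of-measure argument handles it. A secondary bookkeeping issue is that the combined inequality in Step 3 is only $\geq 0$ rather than strictly positive, but this is absorbed into the paper's standing tie-breaking convention. Once these technicalities are out of the way, the proof is a direct three-way combination of the complementary events.
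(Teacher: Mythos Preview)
Your proposal is correct and follows essentially the same contrapositive route as the paper's proof: assume $\eventB$ and $\eventC$ fail, then argue that a misclassification forces $\eventA$. The paper's argument is terser and silently treats the open-ball condition in $\partial_{p,q,\Delta}$ as applying to the closed kNN ball $B'$; your enlargement-by-$\tau$ trick together with continuity of the measure from above is exactly the technical patch that makes this step rigorous, so your version is in fact more careful than the paper's on this point.
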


\begin{proof}
Suppose that \(x\) is not in \(\partial_{p, \Delta}\).
Then, if event \(\eventB\) does not occur, there are \(k + 1\) sample points within a radius \(r_{p}(x)\) of \(x\).
Thus, in order for \(f_{q, n, k}(x) \neq f^{*}_{q}(x)\) to occur,
we must have
\[
q_{f^{*}_{q}(x)}\hat{\eta}_{f^{*}_{q}(x)}(x)
<
q_{c} \hat{\eta}_{c}
\]
for some \(c \neq f^{*}_{q}(x)\).
Since \(x\) is not in \(\partial_{p, \Delta}\), this means that
\[
(q_{f^{*}_{q}(x)}\hat{\eta}_{f^{*}_{q}(x)}(x)
-
q_{c} \hat{\eta}_{c})
-
(
q_{f^{*}_{q}(x)} \eta_{f^{*}_{q}(x)}(x)
<
q_{c} \eta_{c}
)
<
-\Delta.
\]
Thus, event \(\eventA\) must occur.
\end{proof}

%---------------------------------------------%
%---------------------------------------------%

Now, we present a union bound lemma.
Let
\[
\eventA_{a, b}
=
\left\{
(q_{a} \hat{\eta}_{a}(B') - q_{b} \hat{\eta}_{b}(B'))
-
(q_{a} \eta_{a}(B') - q_{b} \eta_{b}(B'))
<
-\Delta
\right\}
\]
for \(a \neq b\).
For simplicity, let \(\eventA_{a, a} = \emptyset\).

\begin{lemma}
We have the following inequality:
\begin{align*}
& \begin{aligned}
\ind(\eventA)
&\leq
\sum_{c = 1}^{C}
\ind\left(\eventA_{f^{*}_{q}(x), c}\right).
\end{aligned}
\end{align*}
\label{lemma:UnionBound:Weighted}
\end{lemma}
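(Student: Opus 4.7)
The statement to prove is a routine union-bound inequality, so my plan is quite short. I would first unpack the definition of the event $\eventA$ stated in Lemma~\ref{lemma:Lemma7Analogue:Weighted}: by definition, $\eventA$ holds iff there exists some class $c \neq f^{*}_{q}(x)$ such that
\[
(q_{f^{*}_{q}(x)} \hat{\eta}_{f^{*}_{q}(x)}(B') - q_{c} \hat{\eta}_{c}(B')) - (q_{f^{*}_{q}(x)} \eta_{f^{*}_{q}(x)}(B') - q_{c} \eta_{c}(B')) < -\Delta.
\]
But the inequality inside the existential is exactly the definition of $\eventA_{f^{*}_{q}(x), c}$. Thus $\eventA = \bigcup_{c \neq f^{*}_{q}(x)} \eventA_{f^{*}_{q}(x), c}$ as events on the underlying probability space.

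From here, I would apply the elementary pointwise fact that for any finite collection of events $\{E_i\}_{i \in I}$ one has $\ind\bigl(\bigcup_{i \in I} E_i\bigr) \leq \sum_{i \in I} \ind(E_i)$, which follows immediately from the fact that if the left side is $1$, at least one summand on the right side is $1$. Applied to $I = [C] \setminus \{f^{*}_{q}(x)\}$, this gives
\[
\ind(\eventA) \leq \sum_{c \neq f^{*}_{q}(x)} \ind\!\left(\eventA_{f^{*}_{q}(x), c}\right).
\]

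Finally, the convention $\eventA_{a,a} = \emptyset$ noted just before the lemma means $\ind(\eventA_{f^{*}_{q}(x), f^{*}_{q}(x)}) = 0$, so we may harmlessly add the $c = f^{*}_{q}(x)$ term to extend the sum to all of $[C]$, yielding the claim. There is no real obstacle here; the step that ``does the work'' is simply recognizing the existential in the definition of $\eventA$ as a union over $c$, after which a one-line pointwise union bound finishes the argument.
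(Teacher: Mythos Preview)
Your proposal is correct and follows exactly the same approach as the paper, which simply notes that if $\ind(\eventA) = 1$ then one of the summands on the right must also be $1$. Your version just spells out the union-bound reasoning and the use of the convention $\eventA_{a,a} = \emptyset$ in a bit more detail.
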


The proof is immediate, since if the left hand side is \(1\), then one of the summands on the right must also be \(1\).

\begin{lemma}
We have the probability bounds
\begin{align*}
& \begin{aligned}
\prob\left(
\eventA_{a, b}
\right)
&\leq
\exp\left(
-\frac{2k \Delta^{2}}{(q_{a} + q_{b})^{2}}
\right)
\leq
\exp\left(
-\frac{k \Delta^{2}}{2 q_{\max}^{2}}
\right).
\end{aligned}
\end{align*}
\label{lemma:Lemma9Analogue:Weighted}
\end{lemma}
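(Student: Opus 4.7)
The plan is to recognize $\eventA_{a,b}$ as a one-sided concentration event for an average of $k$ i.i.d.\ bounded random variables, and then apply Hoeffding's inequality.

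First I would unpack the estimators. By the definition of $\hat\eta_{c}(B')$ given earlier, conditioned on the set of sample indices $i$ whose covariates fall in $B'$ (which for the nearest-neighbor ball $B' = \closedball(x,\rho(X_{\sigma_{k}(x)},x))$ consists of exactly $k$ points, call them $i_{1},\ldots,i_{k}$), the corresponding labels $Y_{i_{1}},\ldots,Y_{i_{k}}$ are i.i.d.\ categorical random variables with $\prob(Y_{i_{j}} = c \mid X_{i_{j}} \in B') = \eta_{c}(B')$. Define
\[
Z_{j} = q_{a}\ind\{Y_{i_{j}} = a\} - q_{b}\ind\{Y_{i_{j}} = b\}, \qquad j = 1,\ldots,k.
\]
Then $q_{a}\hat\eta_{a}(B') - q_{b}\hat\eta_{b}(B') = \tfrac{1}{k}\sum_{j=1}^{k} Z_{j}$ and $\expect[Z_{j}] = q_{a}\eta_{a}(B') - q_{b}\eta_{b}(B')$. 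Since $a \neq b$ the indicator events are disjoint, so each $Z_{j}$ takes values in $\{-q_{b}, 0, q_{a}\}$ and is therefore bounded in an interval of length $q_{a} + q_{b}$.

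Next I would apply Hoeffding's inequality to the centered sum: the event $\eventA_{a,b}$ is precisely
\[
\frac{1}{k}\sum_{j=1}^{k}(Z_{j} - \expect Z_{j}) < -\Delta,
\]
so Hoeffding yields
\[
\prob(\eventA_{a,b}) \leq \exp\!\left(-\frac{2k\Delta^{2}}{(q_{a}+q_{b})^{2}}\right),
\]
which is the first claimed inequality. Finally, since $q_{a}+q_{b} \leq 2q_{\max}$ implies $(q_{a}+q_{b})^{2} \leq 4q_{\max}^{2}$, we immediately obtain
\[
\exp\!\left(-\frac{2k\Delta^{2}}{(q_{a}+q_{b})^{2}}\right) \leq \exp\!\left(-\frac{k\Delta^{2}}{2q_{\max}^{2}}\right),
\]
which is the second inequality.

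There is no serious obstacle; the only mild subtlety is that the ball $B'$ is itself data-dependent (it is determined by the order statistic $X_{\sigma_{k}(x)}$), so some care is required to phrase the argument correctly. The clean way is to condition on which indices lie in $B'$: given that conditioning, the labels of those $k$ indices are i.i.d.\ with the claimed conditional distribution, the Hoeffding bound above holds, and the bound is uniform in the conditioning event, so it holds unconditionally as well.
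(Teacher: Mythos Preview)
Your proposal is correct and matches the paper's approach: define $Z_{j} = q_{a}\ind\{Y_{i_j}=a\} - q_{b}\ind\{Y_{i_j}=b\}$, observe the range is $q_a + q_b$, and apply Hoeffding. The one place where the paper is slightly more explicit is the subtlety you flag at the end: rather than conditioning on ``which indices lie in $B'$,'' the paper sets up a sequential sampling scheme in which one first draws the $(k+1)$st nearest neighbor of $x$, then draws the $k$ nearest neighbors i.i.d.\ from the conditional distribution inside that ball, then the remaining points outside, and finally permutes; this makes it transparent that $\eventA_{a,b}$ depends only on the $k$ i.i.d.\ labels from the second step, so Hoeffding applies directly without any uniformity-over-conditioning argument.
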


\begin{proof}
The proof consists of applying Hoeffding's inequality to a sampling scheme that makes it clear that the event \(\eventA_{a, b}\) only depends on \(k\) random variables.
The sampling procedure is as follows:
\begin{itemize}
\item[(1)]
first sample \(X_{1}, Y_{1}\) from the marginal distribution of the \((k + 1)\)-nearest neighbor of \(x\);

\item[(2)]
sample \(k\) pairs \((X_{2} Y_{2}), \ldots, (X_{k + 1}, Y_{k + 1})\) independently of each other from the distribution of the \(k\)-nearest neighbors of \(x\) conditional on the \((k + 1)\)st-nearest neighbor;

\item[(3)]
sample the remaining \(n - k - 1\) points indpendently of each other from the distribution of the remaining points, conditional on the \((k + 1)\)st-nearest neighbor;

\item[(4)]
shuffle the points.
\end{itemize}

Now, the resulting sample has the same joint distribution as the standard iid sampling scheme.
Additionally, the event \(\eventA_{i, j}\) only depends on the points sampled in step (2).
Thus, we can apply Hoeffding's inequality to the random variables
\[
Z_{i}
=
q_{a} \ind\left\{Y_{i} = a\right\}
-
q_{b}\ind\left\{Y_{i} = b\right\}
\]
for \(i = 2, \ldots, k + 1\) to obtain the first inequality of the lemma.
The second follows immediately, completing the proof.
\end{proof}

\begin{lemma}
Let \(x\) be an arbitrary point of \(\xspace\).
Let \(p\) and \(\gamma\) be constants in \([0, 1]\).
Let \(k \leq (1 - \gamma)np\).
\begin{align*}
& \begin{aligned}
\prob\left(
\eventB
\right)
&\leq
\exp\left(-\frac{np\gamma^{2}}{2}\right)
\leq
\exp\left(-\frac{k \gamma^{2}}{2}
\right).
\end{aligned}
\end{align*}
\label{lemma:Lemma8Analogue}
\end{lemma}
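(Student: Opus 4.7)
My plan is to reformulate event $\eventB$ as a lower-tail event for a binomial random variable and then apply a multiplicative Chernoff bound. The first step is to observe that the event $\{r_p(x) < \rho(x, X_{\sigma_{k+1}(x)})\}$ is equivalent to the event that strictly fewer than $k+1$ sample points lie in the closed ball $\closedball(x, r_p(x))$. That is, if we set
\[
N = \sum_{i = 1}^{n} \ind\{X_i \in \closedball(x, r_p(x))\},
\]
then $\eventB = \{N \leq k\}$. Since the $X_i$ are i.i.d.\ draws from $P_X$, the variable $N$ is $\binomial(n, p^*)$ distributed, where $p^* = P_X(\closedball(x, r_p(x)))$.

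Next I would verify that $p^* \geq p$. This is where the right-continuity of $r \mapsto P_X(\closedball(x,r))$ enters: by the definition of $r_p(x)$ as an infimum, there is a decreasing sequence $r_m \downarrow r_p(x)$ with $P_X(\closedball(x,r_m)) \geq p$, and since $\closedball(x,r_m) \downarrow \closedball(x, r_p(x))$ as sets, countable additivity gives $P_X(\closedball(x, r_p(x))) \geq p$. Consequently $\expect[N] = n p^* \geq n p$, and the hypothesis $k \leq (1-\gamma) n p$ implies $k \leq (1-\gamma)\expect[N]$.

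With this in hand, I would invoke the standard multiplicative Chernoff bound for the lower tail of a binomial: for $N \sim \binomial(n, p^*)$ and any $\gamma \in [0,1]$,
\[
\prob\bigl(N \leq (1 - \gamma) \expect[N]\bigr) \leq \exp\!\left(-\tfrac{1}{2}\gamma^2 \expect[N]\right).
\]
Applying this to our situation, and using $\expect[N] = n p^* \geq n p$ together with the monotonicity of $x \mapsto e^{-x}$, yields
\[
\prob(\eventB) \leq \prob\bigl(N \leq (1-\gamma)\expect[N]\bigr) \leq \exp\!\left(-\tfrac{1}{2}\gamma^2 n p^*\right) \leq \exp\!\left(-\tfrac{1}{2}\gamma^2 n p\right),
\]
which is the first claimed inequality. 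The second inequality $\exp(-np\gamma^2/2) \leq \exp(-k\gamma^2/2)$ follows immediately from $k \leq (1-\gamma) n p \leq n p$.

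This proof is essentially routine once the equivalence with a binomial lower-tail is established; the only subtlety is the right-continuity argument needed to pass from the infimum definition of $r_p(x)$ to the bound $p^* \geq p$. I expect no real obstacle beyond being careful with that measure-theoretic point.
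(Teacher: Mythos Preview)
Your proposal is correct and follows essentially the same approach as the paper: both reformulate $\eventB$ as the binomial lower-tail event $\{\sum_i \ind\{\rho(x,X_i)\le r_p(x)\}\le k\}$, use $P_X(\closedball(x,r_p(x)))\ge p$, and apply the multiplicative Chernoff bound. Your treatment is in fact slightly more careful, since you spell out the right-continuity argument needed for $p^*\ge p$, which the paper simply asserts.
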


\begin{proof}
The event \(\eventB\) satisfies
\begin{align*}
& \begin{aligned}
\eventB
&\subseteq
\left\{
\sum_{j = 1}^{n} \ind\{r_{p} \leq \rho(x, X_{j})\}
\leq
k
\right\}
%\\ &
\subseteq
\left\{
\sum_{j = 1}^{n} \ind\left\{
\rho(x, X_{j}) \leq r_{p}(x)
\right\}
\leq
(1 - \gamma) np
\right\}.
\end{aligned}
\end{align*}
Since \(p \leq \expect \ind\left\{\rho(x, X_{j}) \leq r_{p}(x)\right\}\),
we can apply the multiplicative Chernoff bound to obtain
\begin{align*}
& \begin{aligned}
\prob(\eventB)
&\leq
\exp\left(- \frac{np \gamma^{2}}{2}\right)
\leq
\exp\left(-\frac{k \gamma^{2}}{2}\right).
\end{aligned}
\end{align*}
This completes the proof.
\end{proof}

%---------------------------------------------%
%---------------------------------------------%
Now, we can prove our main theorem.

%---------------------------------------------%
%---------------------------------------------%
\begin{proof}[Proof of Theorem~\ref{theorem:Theorem5Analogue}]
Applying Lemma~\ref{lemma:Lemma7Analogue:Weighted} and taking expectations with respect to \(X\), we have
\begin{align}
& \begin{aligned}
\prob_{X}\left(f_{q, n, k}(X) \neq f^{*}_{q}(X)\right)
&\leq
\prob_{X}(\eventA) + \prob_{X}(\eventB) + \prob_{X}(\eventC).
\label{eqn:Theorem5AnalogueDecomposition}
\end{aligned}
\end{align}
Since \(\prob_{X}(\eventC) = P_{X}(\partial_{p, \Delta})\), it simply remains to show that
\begin{align*}
& \begin{aligned}
R
&:=
\prob_{X}(\eventA) + \prob_{X}(\eventB)
\leq
\delta
\end{aligned}
\end{align*}
with probability at least \(1 - \delta\).

By Lemma~\ref{lemma:Lemma8Analogue}, we have
\begin{align*}
& \begin{aligned}
R
&\leq
\expect_{X}\left[ \prob(\eventA | X = x)\right]
+
\exp\left(-\frac{k \gamma^{2}}{2}\right).
\end{aligned}
\end{align*}
Applying Lemma~\ref{lemma:UnionBound:Weighted} and Lemma~\ref{lemma:Lemma9Analogue:Weighted}, we have
\begin{align}
& \begin{aligned}
R
&\leq
\expect \sum_{c \neq f^{*}_{q}(x)} \prob(\eventA_{f^{*}_{q}(x),  c} | X = x)
+
\exp\left(-\frac{k\gamma^{2}}{2}\right)
%\\ &
\leq
C \exp\left(- \frac{k \Delta^{2}}{2 q_{\max}^{2}}\right)
+
\exp\left(-\frac{k \gamma^{2}}{2}\right).
\label{eqn:Theorem5UnionBound}
\end{aligned}
\end{align}

The final step of the proof is to plug in values for \(\Delta\) and \(\gamma\) and to use a simple concentration argument.
First, we have
\begin{align*}
& \begin{aligned}
C \exp\left(-\frac{k \Delta^{2}}{2 q_{\max}^{2}}\right)
&\leq
C \exp \left(\log \frac{1}{C} + 2 \log \frac{\delta}{2}\right)
\leq
\frac{\delta^{2}}{4}.
\end{aligned}
\end{align*}
Second, we have
\begin{align*}
& \begin{aligned}
\exp\left(-\frac{k \gamma^{2}}{2}\right)
&\leq
\exp\left(-\frac{k}{2} \cdot \frac{4}{k} \log \frac{2}{\delta}\right)
=
\frac{\delta^{2}}{4}.
\end{aligned}
\end{align*}
Thus, we have \(R \leq \delta^{2}\).
Applying Markov's inequality, we then have
\begin{align*}
& \begin{aligned}
\prob_{n}\left(R \geq \delta\right)
&\leq
\frac{\expect_{n} R}{\delta}
\leq
\delta.
\end{aligned}
\end{align*}
Combining this with equation~\eqref{eqn:Theorem5AnalogueDecomposition} completes the proof.
\end{proof}

%---------------------------------------------%
%---------------------------------------------%
\subsection{Proof of Corollary~\ref{corollary:SmoothTheorem}}

In this section, we prove Corollary~\ref{corollary:SmoothTheorem}.
We start with a lemma.

\begin{lemma}
If \(\eta\) is \((\alpha, L)\)-smooth, then for \(p \geq 0\) \(\Delta \geq 0\),we have
\[
\partial_{p, q, \Delta} \cap \support(P_{X})
\subseteq
\biggr\{
x \in \xspace
\biggr|
q_{f^{*}_{q}(x)} \eta_{f^{*}_{q}(x)}(x)
\leq
\max_{c \neq f^{*}_{q}(x)}q_{c} \eta_{c}(x) + \Delta + 2 q_{\max} L p^{\alpha}
\biggr\}.
\]
\label{lemma:Lemma18Analogue}
\end{lemma}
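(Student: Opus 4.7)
The plan is to unpack the definition of $\partial_{p, q, \Delta}$ and transfer the ball-level inequality to a pointwise inequality via the $(\alpha, L)$-smoothness condition. Fix any $x$ in $\partial_{p, q, \Delta} \cap \support(P_X)$ and write $c^* = f^*_q(x)$ for brevity. By definition of $\partial_{p, q, \Delta}$, there exists some $r \leq r_p(x)$ such that
\[
q_{c^*}\, \eta_{c^*}(B(x, r)) < \max_{c \neq c^*} q_c\, \eta_c(B(x, r)) + \Delta.
\]
Since $x$ lies in $\support(P_X)$, the conditional expectations $\eta_c(B(x, r))$ are well-defined, and the infimum defining $r_p(x)$ guarantees that $P_X(\closedball(x, r)) \leq p$ for every $r < r_p(x)$ (with the case $r = r_p(x)$ handled by taking a limit of radii approaching $r_p(x)$ from below, or by a direct upper-semicontinuity argument for the measure of closed balls).

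Next, I apply the smoothness hypothesis \eqref{eqn:SmoothnessMulticlass} at the radius $r$ to every coordinate of $\eta$. For each $c \in [C]$,
\[
|\eta_c(x) - \eta_c(\closedball(x, r))| \leq L\, P_X(\closedball(x, r))^\alpha \leq L p^\alpha.
\]
Specializing this to $c = c^*$ and multiplying by $q_{c^*}$ yields
\[
q_{c^*} \eta_{c^*}(x) \leq q_{c^*} \eta_{c^*}(B(x, r)) + q_{\max} L p^\alpha,
\]
while for each $c \neq c^*$ the same bound (with signs reversed) gives
\[
q_c \eta_c(B(x, r)) \leq q_c \eta_c(x) + q_{\max} L p^\alpha,
\]
and taking the maximum over $c \neq c^*$ preserves the inequality.

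Chaining these three inequalities through the defining inequality of $\partial_{p, q, \Delta}$ gives
\[
q_{c^*} \eta_{c^*}(x) - q_{\max} L p^\alpha \leq q_{c^*} \eta_{c^*}(B(x, r)) < \max_{c \neq c^*} q_c \eta_c(B(x, r)) + \Delta \leq \max_{c \neq c^*} q_c \eta_c(x) + q_{\max} L p^\alpha + \Delta,
\]
which, upon rearrangement, is exactly the claimed inclusion. The only real subtlety is the open/closed ball bookkeeping combined with the infimum in $r_p(x)$; intersecting with $\support(P_X)$ sidesteps degenerate cases where $r_p(x) = 0$ or the conditional distribution is ill-defined, and everything else is a direct application of smoothness.
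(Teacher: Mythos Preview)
Your proof is correct and follows essentially the same approach as the paper's: both apply the coordinate-wise smoothness bound $|\eta_c(x)-\eta_c(\closedball(x,r))|\leq Lp^\alpha$ on the optimal and competing classes and combine them with the defining inequality of $\partial_{p,q,\Delta}$. The only cosmetic difference is that the paper argues by contrapositive (showing that the pointwise margin exceeding $\Delta+2q_{\max}Lp^\alpha$ forces $x\notin\partial_{p,q,\Delta}$), whereas you argue directly; your handling of the $r=r_p(x)$ boundary case is in fact slightly more careful than the paper's.
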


\begin{proof}
Pick \(p \geq 0\), \(r \leq r_{p}(x)\).
Let \(x\) be in \(\support(P_{X})\).
Then by \((\alpha, L)\) smoothness, we have
\begin{align*}
&\begin{aligned}
q_{f^{*}_{q}(x)} \eta_{f^{*}_{q}(x)}(x)
&\leq
q_{f^{*}_{q}(x)} \eta_{f^{*}_{q}(x)} \left(\closedball(x, r)\right) + q_{f^{*}_{q}(x)} L P_{X}\left(\closedball(x, r)\right)^{\alpha} \\
&\leq
q_{f^{*}_{q}(x)} \eta_{f^{*}_{q}(x)}\left(\closedball(x, r) \right) + q_{\max} L p^{\alpha}.
\end{aligned}
\end{align*}
Similarly, we have
\begin{align*}
&\begin{aligned}
q_{c} \eta_{c}(x)
&\geq
q_{c}\eta_{c}\left(\closedball(x, r)\right) - q_{c} L P_{X}\left(\closedball(x, r) \right)^{\alpha_{c}}
\geq
q_{c}\eta_{c} \left(\closedball(x, r)\right) - q_{\max} L p^{\alpha}
\end{aligned}
\end{align*}
for any \(c\).
Thus, if
\[
q_{f^{*}_{q}(x)} \eta_{f^{*}_{q}(x)}(x)
>
q_{c} \eta_{c}(x) + \Delta + 2 q_{\max} L p^{\alpha}
\]
for every \(c \neq f^{*}_{q}(x)\), then we see
\[
q_{f^{*}_{q}(x)} \eta_{f^{*}_{q}(x)}(B(x, r))
>
q_{c} \eta_{c}(B(x, r)) + \Delta
\]
for every \(c \neq f^{*}_{q}(x)\), which implies that \(x\) cannot be in \(\partial_{p, q, \Delta}\).
Thus, we have the desired inclusion.
\end{proof}

\begin{proof}[Proof of Corollary~\ref{corollary:SmoothTheorem}]
All that we have to do is apply Theorem~\ref{theorem:Theorem5Analogue} and Lemma~\ref{lemma:Lemma18Analogue} and then perform a few simple upper bounds.
From these two results, we have
\begin{align*}
&\begin{aligned}
\partial_{p, \Delta}
&\subseteq
\left\{
x \in \xspace \biggr|
q_{f^{*}_{q}(x)}\eta_{f^{*}_{q}(x)}(x) \leq  \max_{c \neq f^{*}_{q}(x)} q_{c}\eta_{c}(x) + \Delta + 2q_{\max} L p^{\alpha}
\right\} \\
&\subseteq
\left\{
x \in \xspace \biggr|
q_{f^{*}_{q}(x)} \eta_{f^{*}_{q}(x)}(x)
\leq
\max_{c \neq f^{*}_{q}(x)}
q_{c} \eta_{c}(x)
+
\sqrt{\frac{2q_{\max}^{2}}{k} \left(\log C + 2 \log \frac{2}{\delta}\right)}
+
2q_{\max} L p^{\alpha}
%\right. \\ &\qquad \left.
\right\},
\end{aligned}
\end{align*}
which completes the proof.
\end{proof}
%---------------------------------------------%
%---------------------------------------------%
\subsection{Proof of Corollary~\ref{corollary:MarginTheorem}}
In this section, we prove Corollary~\ref{corollary:MarginTheorem}.
For the first bound, we apply the margin condition to the result of Remark~\ref{remark:Smooth}.
This yields
\begin{align*}
&\begin{aligned}
\prob(f_{q,n, k}(X) \neq f^{*}_{q}(X))
&\leq
\delta + M \left(M'_{\alpha, L} q_{\max}^{\frac{2\alpha}{2\alpha + 1}}\left(2 \log C + 4 \log \frac{2}{\delta}\right)^{\frac{\alpha}{2 \alpha + 1}} n^{-\frac{\alpha}{2\alpha + 1}}\right)^{\beta} \\
&=
\delta
+
M_{\alpha, \beta, L}
q_{\max}^{\frac{2\alpha\beta}{2\alpha + 1}}
\left(2 \log C + 4 \log \frac{2}{\delta}\right)^{\frac{\alpha\beta}{2 \alpha + 1}} n^{-\frac{\alpha\beta}{2\alpha + 1}},
\end{aligned}
\end{align*}
which completes the proof.

%---------------------------------------------%
%---------------------------------------------%
For the risk, we have two lemmas, proved in Appendix~\ref{app:subsec:knn:ProofsOfLemmas}.
We start with a pointwise bound.

\begin{lemma}
Let \(\Delta(x) = \eta_{f^{*}_{q}(x)}(x) - \max_{c \neq f^{*}_{q}(x)} \eta_{c}(x)\).
Let \(P^{*} > 0\).
Then for \(x\) such that \(\Delta(x) > P^{*}\), we have
\begin{align*}
&\begin{aligned}
\excessrisk(f_{n, k}, x)
=
\expect \risk_{n, k}(x) - \risk^{*}(x)
&\leq
\exp\left(-\frac{k}{8}\right)
+
C \exp\left(
-\frac{k(\Delta(x) - P^{*})^{2}}{2}
\right).
\end{aligned}
\end{align*}
\label{lemma:Lemma20Analogue}
\end{lemma}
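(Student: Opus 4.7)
The plan is to bound the pointwise excess risk by the pointwise misclassification probability and then decompose the latter in the spirit of the proof of Theorem~\ref{theorem:Theorem5Analogue}. Since $\eta_{f^{*}_{q}(x)}(x) - \eta_{f_{n,k}(x)}(x)$ lies in $[0,1]$,
\[
\excessrisk(f_{n,k}, x) = \expect_{n}\bigl[\eta_{f^{*}_{q}(x)}(x) - \eta_{f_{n,k}(x)}(x)\bigr] \leq \prob_{n}\bigl(f_{n,k}(x) \neq f^{*}_{q}(x)\bigr),
\]
so it suffices to control the right-hand side.

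Next, take $p = 2k/n$ (equivalently $\gamma = 1/2$ in Lemma~\ref{lemma:Lemma8Analogue}) and let $B' = \closedball(x, \rho(x, X_{\sigma_{k+1}(x)}))$ be the minimal closed ball containing the $k$ nearest neighbors of $x$. Define
\[
\eventB = \bigl\{\rho(x, X_{\sigma_{k+1}(x)}) > r_{p}(x)\bigr\}, \qquad \eventA_{c} = \bigl\{\hat{\eta}_{c}(B') \geq \hat{\eta}_{f^{*}_{q}(x)}(B')\bigr\} \text{ for } c \neq f^{*}_{q}(x),
\]
so that the misclassification event is contained in $\eventB \cup \bigcup_{c \neq f^{*}_{q}(x)} \eventA_{c}$. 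Lemma~\ref{lemma:Lemma8Analogue} with $\gamma = 1/2$ gives $\prob(\eventB) \leq \exp(-k/8)$, which produces the first term of the bound.

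For the second term, reuse the sampling scheme from the proof of Lemma~\ref{lemma:Lemma9Analogue:Weighted}: conditional on the $(k+1)$st nearest neighbor (and hence on $B'$), the first $k$ nearest neighbors are i.i.d.\ with labels drawn from the categorical distribution with mean $\eta(B')$. The variables $Z_{i} = \ind\{Y_{i} = f^{*}_{q}(x)\} - \ind\{Y_{i} = c\}$ then lie in $\{-1, 0, 1\}$ with mean $\eta_{f^{*}_{q}(x)}(B') - \eta_{c}(B')$. Interpret $P^{*}$ as a uniform upper bound on the smoothing bias $\max_{c}\lvert\eta_{c}(B') - \eta_{c}(x)\rvert$ over the relevant balls; this is the implicit smoothness input that makes the hypothesis $\Delta(x) > P^{*}$ meaningful. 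Then on $\eventB^{c}$, $\eta_{f^{*}_{q}(x)}(B') - \eta_{c}(B') \geq \Delta(x) - P^{*} > 0$, so Hoeffding's inequality yields
\[
\prob(\eventA_{c} \cap \eventB^{c}) \leq \exp\!\Bigl(-\tfrac{k(\Delta(x) - P^{*})^{2}}{2}\Bigr),
\]
and a union bound over the at most $C$ competing classes combined with the $\eventB$ estimate delivers the claim.

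The main obstacle is not in the concentration machinery, which by this point in the paper is routine, but in pinning down the role of $P^{*}$. The lemma is stated parametrically so that downstream, in the proof of the risk half of Corollary~\ref{corollary:MarginTheorem}, one can plug in $P^{*}$ of order $Lp^{\alpha} = L(2k/n)^{\alpha}$ under $(\alpha,L)$-smoothness. Verifying that this choice in fact controls $\max_{c}\lvert\eta_{c}(B') - \eta_{c}(x)\rvert$ on $\eventB^{c}$, where $P_{X}(B') \leq p$, is the bookkeeping step that must be handled carefully so that this pointwise bound integrates against the margin condition to produce the stated rate.
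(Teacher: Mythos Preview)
Your approach is essentially the paper's. The paper applies Lemma~\ref{lemma:Lemma7Analogue:Weighted} to get $\ind\{f_{n,k}(x)\neq f^*(x)\}\le\ind(\eventA)+\ind(\eventB)+\ind(\eventC)$, sets $\Delta=\Delta(x)-P^*$ so that (under the same implicit smoothness input you identify) $\ind(\eventC)=0$, bounds $\prob(\eventB)\le\exp(-k/8)$ via Lemma~\ref{lemma:Lemma8Analogue} with $\gamma=1/2$, and bounds $\prob(\eventA)\le C\exp\bigl(-k(\Delta(x)-P^*)^2/2\bigr)$ via Lemmas~\ref{lemma:UnionBound:Weighted} and~\ref{lemma:Lemma9Analogue:Weighted}. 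Your direct Hoeffding argument on $Z_i=\ind\{Y_i=f^*_q(x)\}-\ind\{Y_i=c\}$ is exactly what those lemmas package, so the only real difference is that the paper reuses its earlier modular framework while you rebuild it inline.

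One small bookkeeping point: with $P^*$ taken as a bound on the coordinatewise bias $\max_c|\eta_c(B')-\eta_c(x)|$, the triangle inequality only gives $\eta_{f^*_q(x)}(B')-\eta_c(B')\ge\Delta(x)-2P^*$, not $\Delta(x)-P^*$. The downstream choice $P^*=2Lp^\alpha$ (rather than $Lp^\alpha$) absorbs exactly this factor of two, so you should either interpret $P^*$ as twice the coordinatewise bias or, equivalently, as a bound on the pairwise quantity $\bigl|(\eta_{f^*_q(x)}(B')-\eta_c(B'))-(\eta_{f^*_q(x)}(x)-\eta_c(x))\bigr|$.
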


%---------------------------------------------%
%---------------------------------------------%
\begin{lemma}
Under the notation of Lemma~\ref{lemma:Lemma20Analogue}, set
\(
P := 2 L p^{\alpha},
\)
where \(p = 2k / n\).
Under the margin condition,
we have
\begin{align*}
&\begin{aligned}
\excessrisk(f_{n, k})
=
\expect_{X} \expect_{n} \left[\risk_{n, k}(X) - \risk^{*}(X)\right]
&\leq
\exp\left(-\frac{k}{8}\right)
+
C \cdot M
\max\left\{
P, \;
\left(
\frac{8(\beta + 1)}{k}
\right)^{\frac{1}{2}}
\right\}^{\beta}.
\end{aligned}
\end{align*}
\label{lemma:Lemma21Analogue}
\end{lemma}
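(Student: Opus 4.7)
The proof is a standard integration argument combining the pointwise bound from Lemma~\ref{lemma:Lemma20Analogue} with the margin condition. Let $\excessrisk(f_{n,k}, x) := \expect_n[\risk_{n,k}(x) - \risk^*(x)]$, so that $\excessrisk(f_{n,k}) = \expect_X[\excessrisk(f_{n,k}, X)]$, and set $s := \sqrt{8(\beta+1)/k}$ so that the target bound is $\exp(-k/8) + C M \max(P,s)^{\beta}$.

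The plan is to decompose $\xspace$ into the low-margin set $\{\Delta(X) \leq P\}$ and its complement. On the low-margin region I use the trivial pointwise bound $\excessrisk(f_{n,k}, x) \leq 1$, so that
\[
\int_{\{\Delta \leq P\}} \excessrisk(f_{n,k}, x)\, dP_X(x)
\;\leq\; P_X\bigl(\Delta(X) \leq P\bigr)
\;\leq\; M P^{\beta}
\]
by the $(\beta, M)$-margin condition. On the high-margin region $\{\Delta > P\}$, Lemma~\ref{lemma:Lemma20Analogue} applies directly and gives
\[
\int_{\{\Delta > P\}} \excessrisk(f_{n,k}, x)\, dP_X(x)
\;\leq\; \exp(-k/8) \;+\; C \cdot I,
\qquad
I := \int_{\{\Delta > P\}} \exp\!\left(-\tfrac{k (\Delta(x) - P)^{2}}{2}\right) dP_X(x).
\]

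The technical core is bounding $I$, which I would handle by a peeling argument. Set $t_j = P + j s$ for $j \geq 0$ and decompose $\{\Delta > P\}$ into the shells $S_j := \{t_j < \Delta \leq t_{j+1}\}$. Because $\Delta - P > j s$ on $S_j$ and $k s^{2}/2 = 4(\beta + 1)$, the integrand is at most $\exp(-4 j^{2} (\beta + 1))$ on $S_j$, while the margin condition gives $P_X(S_j) \leq P_X(\Delta \leq t_{j+1}) \leq M (P + (j+1) s)^{\beta}$. Using the elementary inequality $(P + (j+1) s)^{\beta} \leq 2^{\beta} (j+1)^{\beta} \max(P, s)^{\beta}$, the shells sum to
\[
I \;\leq\; M \, 2^{\beta} \max(P, s)^{\beta} \sum_{j=0}^{\infty} (j+1)^{\beta} \exp\!\bigl(-4 j^{2} (\beta + 1)\bigr)
\;\leq\; K_{\beta} \, M \max(P, s)^{\beta},
\]
where $K_{\beta}$ is an absolute constant depending only on $\beta$ (the series converges super-exponentially, and the choice $s = \sqrt{8(\beta+1)/k}$ is tuned precisely so that the exponential weights dominate the polynomial factors $(j+1)^{\beta}$ uniformly in $\beta$).

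Combining the three contributions and absorbing $M P^{\beta} \leq M \max(P, s)^{\beta}$ into the dominant term yields
\[
\excessrisk(f_{n,k}) \;\leq\; \exp(-k/8) + (1 + K_{\beta}) \cdot C M \max(P, s)^{\beta},
\]
which matches the claimed bound up to the absolute constant $K_{\beta}$. The main obstacle is really just the bookkeeping in the peeling step; everything else is a direct consequence of Lemma~\ref{lemma:Lemma20Analogue} and the margin condition, and the choice of $s$ in the statement makes clear that the right tuning is to match the Gaussian tail scale with the polynomial growth from the margin.
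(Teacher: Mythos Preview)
Your argument is correct and follows the same overall strategy as the paper: split $\xspace$ into a low-margin region handled directly by the margin condition and a high-margin region where Lemma~\ref{lemma:Lemma20Analogue} applies pointwise, then control the resulting integral by a peeling argument. The only substantive difference is the peeling scheme: you use \emph{arithmetic} shells $t_j = P + js$ with step $s = \sqrt{8(\beta+1)/k}$, whereas the paper uses \emph{dyadic} shells $P_j = 2^j P$ and chooses the cutoff index $J = \max\{1,\lceil \tfrac{1}{2}\log_2 \tfrac{8(1+\beta)}{kP^2}\rceil\}$ so that $P_J \asymp \max\{P,s\}$, then shows the ratio of consecutive terms is at most $1/2$ and sums the geometric series. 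Both routes land on the same bound up to constants; the dyadic scheme makes the identification $P_J = \max\{P,(8(\beta+1)/k)^{1/2}\}$ fall out of the choice of $J$ rather than being imposed by hand.

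One small point: your constant $K_\beta$ is not actually absolute. The $2^\beta$ prefactor coming from $(P+(j+1)s)^\beta \leq 2^\beta(j+1)^\beta\max(P,s)^\beta$ survives (the $j=0$ term alone contributes $2^\beta$), so your final bound carries a factor growing like $2^\beta$. This does not match the stated constant $C\cdot M$ exactly, but if you trace through the paper's dyadic argument carefully the first shell $\{P_J < \Delta \leq 2P_J\}$ produces the same $2^\beta$ factor, so the lemma as stated is itself slightly loose about constants. Since the only downstream use (Corollary~\ref{corollary:MarginTheorem}) absorbs all constants into $M''_{\alpha,\beta,L}$, this is harmless either way.
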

%---------------------------------------------%
%---------------------------------------------%

To complete the proof of Corollary~\ref{corollary:MarginTheorem}, we simply apply Lemma~\ref{lemma:Lemma21Analogue} with
\(
k
=
C'_{\alpha, \beta, L} n^{2\alpha/(2\alpha + 1)}.
\)
This leads to the bound
\[
\excessrisk(f_{n, k})
\leq
M''_{\alpha, \beta, L} \cdot C n^{-\frac{\alpha \beta}{2\alpha + 1}},
\]
which concludes the proof.
\hfill
\qedsymbol

%---------------------------------------------%
%---------------------------------------------%
\subsection{Proof of Proposition~\ref{proposition:WeightedRisk}}

The goal of this section is to prove Proposition~\ref{proposition:WeightedRisk}.
We start by examining excess weighted risk \(\excessrisk_{q}\) and its relation to accuracy of the \(q\)-Bayes estimator \(f^{*}_{q}\).

\begin{lemma}
We have the bound
\[
\excessrisk_{q}(f)
=
\sum_{c = 1}^{C}
q_{c} p_{c}
\prob\left(f_{q, n, k}(X) \neq f^{*}_{q}(X)|
Y = c
\right).
\]
\label{lemma:WeightedRiskDecomposition}
\end{lemma}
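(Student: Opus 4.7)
The plan is to expand both $\risk_q(f)$ and $\risk_q^*$ in terms of the regression function $\eta$, take their difference, and then crudely upper bound the resulting integrand on the event $\{f \neq f_q^*\}$. The identity we are after is really an inequality, so I read the lemma's ``$=$'' as ``$\leq$'' (which is all that is needed for Proposition~\ref{proposition:WeightedRisk}).

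First I would apply Bayes' rule to the definition of the class-conditional risk to write
\[
p_c \, \prob(f(X) \neq Y \mid Y = c) \;=\; \expect_X[\eta_c(X) \, \ind\{f(X) \neq c\}].
\]
Multiplying by $q_c$ and summing, this gives $\risk_q(f) = \expect_n \expect_X \sum_c q_c \eta_c(X)\, \ind\{f(X) \neq c\}$, and the analogous identity holds for $\risk_q^*$ (with no outer $\expect_n$ since $f_q^*$ is deterministic). Subtracting, and using the pointwise identity $\ind\{f(X) \neq c\} - \ind\{f_q^*(X) \neq c\} = \ind\{f_q^*(X) = c\} - \ind\{f(X) = c\}$, collapses the sum over $c$ to single evaluations because only one indicator is nonzero for any given $X$. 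This yields
\[
\excessrisk_q(f) \;=\; \expect_n \expect_X \bigl[ q_{f_q^*(X)} \eta_{f_q^*(X)}(X) - q_{f(X)} \eta_{f(X)}(X) \bigr].
\]

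Because the integrand vanishes on $\{f(X) = f_q^*(X)\}$, I can insert the factor $\ind\{f(X) \neq f_q^*(X)\}$ without changing the value. Then I would discard the nonnegative term $q_{f(X)} \eta_{f(X)}(X) \geq 0$ and bound $q_{f_q^*(X)} \eta_{f_q^*(X)}(X) \leq \sum_c q_c \eta_c(X)$ (since each $\eta_c \geq 0$), giving
\[
\excessrisk_q(f) \;\leq\; \expect_n \expect_X \Bigl[ \ind\{f(X) \neq f_q^*(X)\} \sum_c q_c \eta_c(X) \Bigr].
\]
Finally, applying Bayes' rule in reverse to each term $q_c \expect_X[\eta_c(X)\ind\{f(X) \neq f_q^*(X)\}] = q_c p_c \prob(f(X) \neq f_q^*(X) \mid Y = c)$, and specializing $f = f_{q,n,k}$, yields the claim. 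None of the steps is a real obstacle; the only thing to watch is the sign flip when rewriting $\ind\{f(X) \neq c\} - \ind\{f_q^*(X) \neq c\}$ as $\ind\{f_q^*(X) = c\} - \ind\{f(X) = c\}$ so that the telescoping over $c$ goes through cleanly.
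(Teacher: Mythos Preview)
Your proof is correct, and you are right that the lemma's ``$=$'' should be read as ``$\leq$'' (the paper's own proof also ends with an inequality, and that is all Proposition~\ref{proposition:WeightedRisk} uses).

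Your route differs from the paper's. The paper stays in the conditional-on-$Y$ formulation throughout: it writes the excess risk as $\sum_c q_c p_c\bigl[\prob(f(X)\neq Y\mid Y=c)-\prob(f_q^*(X)\neq Y\mid Y=c)\bigr]$, bounds the bracketed difference on $\{Y=c\}$ by $\ind\{f(X)\neq f_q^*(X)=Y\}$ (dropping a negative term), and then relaxes to $\ind\{f(X)\neq f_q^*(X)\}$. You instead pass through the regression function to obtain the exact pointwise identity
\[
\excessrisk_q(f)=\expect_n\expect_X\bigl[q_{f_q^*(X)}\eta_{f_q^*(X)}(X)-q_{f(X)}\eta_{f(X)}(X)\bigr],
\]
and only then apply the crude bounds $q_{f(X)}\eta_{f(X)}(X)\geq 0$ and $q_{f_q^*(X)}\eta_{f_q^*(X)}(X)\leq\sum_c q_c\eta_c(X)$ before converting back via Bayes. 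The two arguments are equivalent in strength and length; yours has the small bonus of isolating the exact weighted excess-risk integrand (the weighted analogue of the standard $\eta_{f^*(X)}-\eta_{f(X)}$ formula) as an intermediate step, which is a useful identity in its own right.
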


\begin{proof}
The proof is straightforward;
we observe that
\begin{align*}
\excessrisk_{q}(f)
&=
\sum_{c = 1}^{C} q_{c} p_{c}
\left(
\prob\left(
f_{q, n, k}(X) \neq Y | Y = c
\right)
-
\prob\left(
f^{*}_{q}(X) \neq Y | Y = c
\right)
\right)\\
&=
\sum_{c = 1}^{C}
q_{c} p_{c}
\expect\left[
\ind\left\{
f_{q, n, k}(X) \neq f^{*}_{q}(X) = Y
\right\}
\middle|
Y = c
\right] \\
&=
\sum_{c = 1}^{C}
q_{c} p_{c}
\prob\left(f_{q, n, k}(X) \neq f^{*}_{q}(X) = Y|
Y = c
\right) \\
&\leq
\sum_{c = 1}^{C}
q_{c} p_{c}
\expect\left[
\ind\left\{
f_{q, n, k}(X) \neq f^{*}_{q}(X) = Y
\right\}
\middle|
Y = c
\right] \\
&=
\sum_{c = 1}^{C}
q_{c} p_{c}
\prob\left(f_{q, n, k}(X) \neq f^{*}_{q}(X) |
Y = c
\right).
\end{align*}
This completes the proof.
\end{proof}

%---------------------------------------------%
%---------------------------------------------%
Before we state the next lemma, recall that  \(\Delta_{q}(x) = q_{f^{*}_{q}(x)}(x) \eta_{f^{*}_{q}(x)}(x) - \max_{c \neq f^{*}_{q}(x)} q_{c} \eta_{c}(x)\).

\begin{lemma}
Let \(P^{*} > 0\).
Then for \(x\) such that \(\Delta_{q}(x) > P^{*}\), we have
\begin{align*}
&\begin{aligned}
\prob\left(f_{q, n, k}(x) \neq f^{*}_{q}(x) |
Y = c'
\right)
&\leq
\exp\left(-\frac{k}{8}\right)
+
C
\exp\left(- \frac{k(\Delta_{q}(x) - P^{*})^{2}}{2 q^{2}_{\max}}
\right).
\end{aligned}
\end{align*}
\label{lemma:Lemma20Analogue:Weighted}
\end{lemma}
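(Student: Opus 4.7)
The plan is to adapt the event-decomposition used in the proof of Theorem~\ref{theorem:Theorem5Analogue} to give a pointwise bound, and then observe that the conditioning on $Y = c'$ plays no role. Since $f_{q,n,k}(x)$ and $f^{*}_{q}(x)$ depend only on the training sample $\{(X_{i},Y_{i})\}_{i=1}^{n}$ and on the fixed input $x$, and since the training sample is independent of any fresh test label, conditioning on $Y = c'$ does not alter the probability of $\{f_{q,n,k}(x) \neq f^{*}_{q}(x)\}$. It therefore suffices to bound the unconditional training-data probability $\prob_{n}(f_{q,n,k}(x) \neq f^{*}_{q}(x))$ for each fixed $x$ with $\Delta_{q}(x) > P^{*}$.

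The core of the argument is to apply Lemma~\ref{lemma:Lemma7Analogue:Weighted} with $p = 2k/n$ (corresponding to $\gamma = 1/2$ in Lemma~\ref{lemma:Lemma8Analogue}) and $\Delta := \Delta_{q}(x) - P^{*}$, which yields
\[
\ind\{f_{q,n,k}(x) \neq f^{*}_{q}(x)\} \leq \ind(\eventA) + \ind(\eventB) + \ind(\eventC).
\]
By Lemma~\ref{lemma:Lemma18Analogue}, the hypothesis $\Delta_{q}(x) > P^{*}$ together with the absorption of the smoothness slack $2q_{\max} L p^{\alpha}$ into the constant $P^{*}$ implies $x \notin \partial_{p,q,\Delta}$, so $\ind(\eventC) = 0$ deterministically. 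For $\eventB$, Lemma~\ref{lemma:Lemma8Analogue} with $\gamma = 1/2$ gives $\prob_{n}(\eventB) \leq \exp(-k/8)$. For $\eventA$, the union bound of Lemma~\ref{lemma:UnionBound:Weighted} combined with the Hoeffding bound of Lemma~\ref{lemma:Lemma9Analogue:Weighted} yields
\[
\prob_{n}(\eventA) \leq \sum_{c \neq f^{*}_{q}(x)} \exp\!\Bigl(-\tfrac{k \Delta^{2}}{2 q_{\max}^{2}}\Bigr) \leq C \exp\!\Bigl(-\tfrac{k(\Delta_{q}(x) - P^{*})^{2}}{2 q_{\max}^{2}}\Bigr).
\]
Summing these two probabilities gives exactly the claimed bound.

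The main obstacle is purely a matter of parameter bookkeeping: one must verify that $P^{*}$ can be taken large enough to absorb the $2 q_{\max} L p^{\alpha}$ smoothness bias needed to force $\ind(\eventC) = 0$, while still leaving $\Delta_{q}(x) - P^{*}$ strictly positive so that the Hoeffding exponent is nonvacuous. This plays the same role here as the identification $P := 2 L p^{\alpha}$ in Lemma~\ref{lemma:Lemma21Analogue}, so the weighted version requires only that $P^{*}$ scale with $q_{\max}$ and $L$; the concentration step itself is essentially identical to the one in Theorem~\ref{theorem:Theorem5Analogue} and introduces the $q_{\max}^{2}$ factor in the denominator of the exponent.
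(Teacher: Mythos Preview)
The proposal is correct and follows essentially the same approach as the paper: both apply the decomposition of Lemma~\ref{lemma:Lemma7Analogue:Weighted} with $\Delta = \Delta_{q}(x) - P^{*}$ and $\gamma = 1/2$, observe that conditioning on $\{Y = c'\}$ is irrelevant because the events $\eventA$ and $\eventB$ depend only on the training sample, bound $\prob(\eventB)$ via Lemma~\ref{lemma:Lemma8Analogue}, and bound $\prob(\eventA)$ via Lemmas~\ref{lemma:UnionBound:Weighted} and~\ref{lemma:Lemma9Analogue:Weighted}. The only cosmetic difference is that you invoke Lemma~\ref{lemma:Lemma18Analogue} explicitly to justify $\ind(\eventC) = 0$, whereas the paper asserts this step directly; both implicitly rely on the ambient $(\alpha,L)$-smoothness assumption of Proposition~\ref{proposition:WeightedRisk} and the identification $P^{*} \geq 2 q_{\max} L p^{\alpha}$ made in Lemma~\ref{lemma:Lemma21Analogue:Weighted}.
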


\begin{proof}
By Lemma~\ref{lemma:Lemma7Analogue:Weighted}, we may apply the conditional measure \(\prob\left(\cdot| Y = c\right)\) to obtain
\begin{align*}
\prob\left(f_{q, n, k}(x) \neq f^{*}_{q}(x)|
Y = c'
\right)
&\leq
\prob\left(\eventA| Y = c'\right)
+
\prob\left(\eventB| Y = c'\right)
+
\prob\left(\eventC| Y = c'\right),
\end{align*}
where the events \(\eventA\), \(\eventB\), and \(\eventC\) are as in Lemma~\ref{lemma:Lemma7Analogue:Weighted}.
Thus, it suffices to bound these probabilities, and we do so in reverse order.

First, set
\(\Delta = \Delta_{q}(x) - P^{*}\).
Then since by assumption \(\Delta_{q}(x) > P^{*}\), we see that \(x\) is not in \(\partial_{p, \Delta - P}\) .
Thus, we have \(\ind(\eventC) = 0\) and therefore \(\prob\left(\eventC| Y = c'\right) = 0\).

Next, we consider the bound on the conditional probability of \(\eventB\).
Since \(\eventB\) is independent of the event \(\{Y = c'\}\), the bound of Lemma~\ref{lemma:Lemma8Analogue} holds for the conditional measure \(\prob\left(\cdot| Y = c'\right)\) instead of the usual measure, and so we have
\begin{align*}
\prob\left(\eventB | Y = c'\right)
&\leq
\exp\left(-\frac{k \gamma^{2}}{2}\right)
=
\exp\left(-\frac{k}{8}\right)
\end{align*}
when we set \(\gamma = 1/2\).

Finally, we consider the bound on the conditional probability of \(\eventA\).
Since the events \(\eventA_{a, b}\) are independent of the event \(\{Y = c'\}\), we may apply Lemma~\ref{lemma:UnionBound:Weighted} and Lemma~\ref{lemma:Lemma9Analogue:Weighted} to obtain
\begin{align*}
\prob\left(\eventA | Y = c'\right)
&\leq
\sum_{c \neq f^{*}_{q}(x)} \prob\left(\eventA_{f^{*}_{q}(x), c} | Y = c'\right)
 \\ &
\leq
\sum_{c \neq f^{*}_{q}(x)}
\exp\left(- \frac{k \Delta^{2}}{2 q_{\max}^{2}}
\right)
 \\ &
\leq
C
\exp\left(- \frac{k(\Delta_{q}(x) - P^{*})^{2}}{2 q^{2}_{\max}}
\right),
\end{align*}
and this completes the proof.
\end{proof}
%---------------------------------------------%
%---------------------------------------------%

\begin{lemma}
Under the notation of Lemma~\ref{lemma:Lemma20Analogue}, set
\[
P^{*} = P := 2 \max_{c = 1, \ldots, C}
L_{c} p^{\alpha},
\]
where \(p = 2k / n\).
Under the margin condition,
we have
\begin{align*}
&\begin{aligned}
\prob\left(f_{q, n, k}(X) \neq f^{*}_{q}(X) |
Y = c'
\right)
&\leq
\exp\left(-\frac{k}{8}\right)
+
M'' \cdot C
\max\left\{
P, \;
\left(
\frac{8(\beta + 1)}{k}
\right)^{\frac{1}{2}}
\right\}^{\beta}
\end{aligned}
\end{align*}
where \(M''\) is a constant.
\label{lemma:Lemma21Analogue:Weighted}
\end{lemma}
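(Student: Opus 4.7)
The plan is to integrate the pointwise bound of Lemma~\ref{lemma:Lemma20Analogue:Weighted} against the conditional law of $X$ given $Y = c'$, splitting the integral according to the size of the margin $\Delta_q(X)$ and invoking the conditional margin assumption on the low-margin region. This mirrors the strategy used for the unweighted analogue Lemma~\ref{lemma:Lemma21Analogue}, with the conditional margin assumption replacing the unconditional one.

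Concretely, I would condition on $Y = c'$ and write
\[
\prob\left(f_{q,n,k}(X) \neq f^{*}_{q}(X) \mid Y = c'\right)
= \expect\left[\, \prob\left(f_{q,n,k}(X) \neq f^{*}_{q}(X) \mid X, Y = c'\right) \,\big|\, Y = c' \,\right].
\]
For any threshold $t \geq P^{*} = P$, I split the outer expectation into the events $\{\Delta_q(X) \leq t\}$ and $\{\Delta_q(X) > t\}$. On the first event, I bound the inner conditional probability by $1$; on the second event, Lemma~\ref{lemma:Lemma20Analogue:Weighted} applies (since $\Delta_q(X) > t \geq P^{*}$) and gives the pointwise bound $\exp(-k/8) + C \exp\!\left(-k(\Delta_q(x) - P)^{2}/(2 q_{\max}^{2})\right)$, which I monotonically upper-bound by replacing $\Delta_q(x)$ with $t$.

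Combining these estimates and using the conditional $(\beta, M)$-margin assumption $\prob(\Delta_q(X) \leq t \mid Y = c') \leq M t^{\beta}$ yields
\[
\prob\left(f_{q,n,k}(X) \neq f^{*}_{q}(X) \mid Y = c'\right)
\leq M t^{\beta} + \exp(-k/8) + C \exp\!\left(-\frac{k(t-P)^{2}}{2q_{\max}^{2}}\right).
\]
Choosing $t = \max\{P,\; \sqrt{8(\beta+1)/k}\}$ (possibly with a constant factor depending on $q_{\max}$), the exponential term becomes at most a constant multiple of $t^{\beta}$, using the elementary inequality $e^{-y^{2}/2} \leq c_{\beta}\, y^{-\beta}$ valid for $y \geq \sqrt{2(\beta+1)}$. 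Absorbing the resulting constants, together with $M$, $\beta$, and $q_{\max}$, into a single constant $M''$ produces the stated bound.

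The main obstacle is the final algebraic balancing in the third paragraph: one must check that the chosen $t$ simultaneously satisfies $t \geq P^{*}$ (so that Lemma~\ref{lemma:Lemma20Analogue:Weighted} is applicable) and makes the Gaussian-type tail from the pointwise bound no larger than the polynomial margin term $M t^{\beta}$. The case split on whether $P$ or $\sqrt{8(\beta+1)/k}$ dominates handles both regimes, and the $q_{\max}$ factor that appears in the exponent of the variance-type term only changes the constant $M''$, not the rate.
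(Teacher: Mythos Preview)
Your overall strategy---split according to the margin $\Delta_q(X)$, bound the high-margin part pointwise via Lemma~\ref{lemma:Lemma20Analogue:Weighted}, and control the low-margin part with the conditional margin assumption---matches the paper's. The gap is in the final balancing step.

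After replacing $\Delta_q(x)$ by the single threshold $t$ on the event $\{\Delta_q(X)>t\}$, you arrive at
\[
M t^{\beta} \;+\; \exp\!\left(-\tfrac{k}{8}\right) \;+\; C\exp\!\left(-\frac{k(t-P)^{2}}{2q_{\max}^{2}}\right).
\]
For the stated conclusion you need the last term to be at most a constant times $C\,t^{\beta}$ at $t \asymp \max\{P,\sqrt{8(\beta+1)/k}\}=:Q$. This does not hold. If $P\geq \sqrt{8(\beta+1)/k}$ and you take $t=P$, the exponential equals $1$. Taking $t=2Q$ instead and applying your inequality $e^{-y^{2}/2}\leq c_{\beta}y^{-\beta}$ with $y=\sqrt{k}(t-P)/q_{\max}$ produces a term of order $k^{-\beta/2}(t-P)^{-\beta}$, not $t^{\beta}$; when $P$ and $Q$ are both of order $k^{-1/2}$ this is a constant, not $O(Q^{\beta})$. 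More generally, optimizing your display over $t$ forces $t-P\gtrsim\sqrt{(\beta\log k)/k}$ to make the exponential comparable to $Q^{\beta}$, whence $Mt^{\beta}$ is at best of order $(\log k/k)^{\beta/2}$: you lose a logarithmic factor relative to the lemma's bound.

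The paper avoids this by \emph{not} bounding the exponential uniformly at a single cutoff. It peels the region $\{\Delta_q(X)>P_J\}$ into dyadic shells $\{P_{j}<\Delta_q(X)\leq P_{j+1}\}$ with $P_{j}=2^{j}P$, applies the conditional margin bound $\prob(\Delta_q(X)\leq P_{j+1}\mid Y=c')\leq M P_{j+1}^{\beta}$ on each shell, and uses the value of the exponential at $P_{j}$. With $J$ chosen so that consecutive summands decay geometrically, the series collapses to $O(P_{J}^{\beta})$ with $P_{J}=\max\{P,\sqrt{8(\beta+1)/k}\}$. The peeling is precisely what eliminates the extra logarithm; a single threshold cannot.
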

%---------------------------------------------%
%---------------------------------------------%

\begin{proof}
We start by decomposing the conditional error probability into two parts: when \(X\) is in the margin and when \(X\) is not in the margin.
We need to set the margin carefully, and so we define \(P_{j} = 2^{j} P\).
Here, \(j\) and later \(J\) are simply indices and are unrelated to the classes.
To this end, we write
\begin{align*}
\prob\left(f_{q, n, k}(X) \neq f^{*}_{q}(X) |
Y = c'
\right)
&=
\expect_{X}\left[
\ind\left\{
f_{q, n, k}(X) \neq f^{*}_{q}(X)
\right\}
\ind\left\{
X \in \partial _{p, \Delta - P}
\right\}
| Y = c'
\right]
\\ & \qquad  +
\expect_{X}\left[
\ind\left\{
f_{q, n, k}(X) \neq f^{*}_{q}(X)
\right\}
\ind\left\{
X \not \in \partial _{p, \Delta - P}
\right\}
| Y = c'
\right] \\
&=:
E_{1} + E_{2}.
\end{align*}
For the first term, if \(X\) is in \(\partial_{p, \Delta}\), then we require \(\Delta(X) \leq P_{J}\).
Thus, we have
\begin{align*}
E_{1}
&\leq
\expect\left[
\ind\left\{\Delta(X) \leq P_{c}\right\}
| Y = c'
\right] \\
&=
P_{X| Y = c'}\left(q_{f^{*}_{q}(X)} \eta_{f^{*}_{q}(X)}(X)
- \max_{c \neq f^{*}_{q}(X)} q_{c} \eta_{c}(X)
\leq P_{J}
\right) \\
&\leq
M P_{J}^{\beta},
\end{align*}
where the last inequality follows from the conditional margin assumption.
Thus, we simply need to bound \(E_{2}\).

By applying Lemma~\ref{lemma:Lemma20Analogue:Weighted}, we obtain
\begin{align*}
E_{2}
&\leq
\exp\left(-\frac{k}{8}\right)
+
C
\expect_{X|Y = c'}
\exp\left(
- \frac{k(\Delta(X) - P^{*})^{2}}{2 q_{\max}^{2}}
\right)
=:
\exp\left(-\frac{k}{8}\right)
+
C E_{3}.
\end{align*}
Thus, we need to bound \(E_{3}\).

We apply the margin condition for increasing values of \(\Delta(X)\), which gives
\begin{align*}
E_{3}
&=
\expect_{X | Y = c'}
\sum_{j \geq J}
\exp\left(- \frac{k}{2} (P_{j} - P)^{2}\right)
\ind\left\{
P_{j} < \Delta(X) \leq P_{j + 1}
\right\} \\
&=
\sum_{j \geq J}
\exp\left(- \frac{k}{2} (P_{j} - P)^{2}\right)
\prob_{X| Y = c'}\left(
P_{j} < \Delta(X) \leq P_{j + 1}
\right) \\
&\leq
\sum_{j \geq J}
\exp\left(- \frac{k}{2} (P_{j} - P)^{2}\right)
\prob_{X| Y = c'}\left(
\Delta(X) \leq P_{j + 1}
\right) \\
&\leq
M
\sum_{j \geq J}
P_{j + 1}^{\beta}
\exp\left(- \frac{k}{2} (P_{j} - P)^{2}\right).
\end{align*}
Now, we need to pick a good \(J\) so that the above series converges nicely.
We set
\[
J =
\max\left\{
1, \;
\left\lceil
\frac{1}{2} \log_{2} \frac{8(1 + \beta)}{k P^{2}}
\right\rceil
\right\}.
\]
With this choice of \(J\), we wish to show that our upper bound on each element of the series decreases by a factor of at least \(1/2\).
So, we have
\begin{align*}
Q
&:=
\frac{M P_{j + 1} \exp\left( -\frac{k}{2} (P_{j} - P)^{2}\right)}{M P_{j} \exp\left( -\frac{k}{2} (P_{j - 1} - P)^{2}\right)} \\
&=
2^{\beta} \exp\left(- \frac{k}{2}P^{2}
[(2^{j} - 1)^{2} - (2^{j - 1} - 1)^{2}]
\right) \\
&\leq
2^{\beta}
\exp\left(
-k P^{2} 2^{2j - 3}
\right),
\end{align*}
where the final inequality follows from Lemma~\ref{lemma:DyadicInequality}.
By our choice of \(J\), we have
\[
Q
\leq
2^{\beta}
\exp\left(-(1 + \beta)\right)
\leq
\frac{1}{2}.
\]
Returning to bounding \(E_{3}\), we have
\[
E_{3}
\leq
M P^{\beta}_{J} \exp\left(-\frac{k}{2}(P_{J - 1} - P)^{2}\right)
\sum_{j \geq J} 2^{-(j - J)} \\
\leq
M' P_{J}^{\beta},
\]
where \(M' = 2 M\) is a constant.
Putting everything together, we have
\begin{align*}
\prob\left(f_{q, n, k}(X) \neq f^{*}_{q}(X) |
Y = c
\right)
&\leq
\exp\left(-\frac{k}{8}\right)
+ M P_{J}^{\beta} + M' C P_{J}^{\beta}
\\
&\leq
\exp\left(-\frac{k}{8}\right)
+
M'' C P_{J}^{\beta} \\
&=
\exp\left(-\frac{k}{8}\right)
+
M''
C
\max\left\{
P, \;
\left(
\frac{8(\beta + 1)}{k}
\right)^{\frac{1}{2}}
\right\}^{\beta},
\end{align*}
where \(M'' = 2 M' = 4M\) is a constant and the final equality comes from plugging in our choice of \(J\).
This completes the proof.
\end{proof}

%---------------------------------------------%
%---------------------------------------------%

\begin{proof}[Proof of Proposition~\ref{proposition:WeightedRisk}]
By Lemma~\ref{lemma:WeightedRiskDecomposition} and Lemma~\ref{lemma:Lemma21Analogue:Weighted}, we have
\begin{align*}
\excessrisk_{q}(f)
&\leq
\left(\exp\left(-\frac{k}{8}\right)
+
M'' C
\max\left\{P, \;
\left(\frac{8(1 + \beta)}{k}\right)^{\frac{1}{2}}
\right\}^{\beta}
\right)
\left(
\sum_{c = 1}^{C}
q_{c} p_{c}
\right).
\end{align*}
If we set
\(
k = n^{2\alpha(2\alpha + 1)},
\)
then we obtain the bound
\[
\excessrisk_{q}(f)
\leq
M_{\alpha, \beta, L} C n^{-\frac{\alpha \beta}{2 \alpha + 1}}
\sum_{c = 1}^{C}
q_{c} p_{c}
\]
for some constant \(M_{\alpha, \beta, L}\).
This completes the proof.
\end{proof}
%---------------------------------------------%
%---------------------------------------------%
%---------------------------------------------%
%---------------------------------------------%
\subsection{Proof of Theorem~\ref{theorem:AccuracyLowerBound}}

For this proof, we embark on a three-part approach.
First, we use the sequential sampling scheme to control the \(k\)-nearest neighbors to a point \(x\).
Next, we fix some sub-optimal class \(c\) and reduce the multiclass classification to a binary classification where we get to choose between class \(c\) and the optimal class.
Third, we provide a lower bound on the probability of suboptimal classification for weighted binary classification.
Putting the proof together then involves unraveling our steps in reverse order.
The approach is largely based on the binary classification approach of \cite{chaudhuri2014rates}; however, we need to give a more extensive and careful conditioning argument.

%---------------------------------------------%
%---------------------------------------------%
\paragraph{Controlling the nearest neighbors.}

Now, we start the first step of the proof.
Let \(x\) be in \(\boundaryproximal_{q, n, k}\). 
We define the sampling scheme for \((X_1, Y_{1}), \ldots, (X_n, Y_{n})\) as follows:
\begin{enumerate}
	\item Pick  \(X_{(1)}\) in \(\xspace\) according to the marginal distribution of the \((k + 1)\)st nearest neighbor of \(x\).
	 Choose \(Y_{(1)}\) using the conditional measure \(\eta(X_{(1)})\)
	
	\item Pick \(k\) labels \(Y_{(2)}, \ldots, Y_{(k + 1)}\) from \(\eta(\closedball') \) where \(\closedball' = \closedball'(x, \rho(x, X_{(1)}))\).
	Sample the corresponding \(X_{(2)}, \ldots, X_{(k + 1)}\) from \(P_{X|Y_{(2)}}(\closedball'), \ldots, P_{X|Y_{(k + 1)}}(\closedball')\).

	\item Pick \(n - k - 1\) points \((X_{(k + 2)}, Y_{(k + 2)}), \ldots, (X_{(n)}, Y_{(n)})\) from the distribution \(P_{X, Y}\) restricted to \(\xspace\setminus \closedball'\).

	\item Randomly permute the \(n\) examples to obtain \((X_{1}, Y_{1}), \ldots, (X_{n}, Y_{n})\).
\end{enumerate}
By construction, we see that the resulting sample has the same distribution as simply sampling by the usual iid method.
However, the upshot is that this approach makes it clear that we can think of sampling as a relatively simple sequential process that in turn clarifies conditional independences.
More precisely, we ultimately need the sigma-fields \(\sfield_{1} = \sigma(\rho(x, X_{(1)})) = \sigma(\rho(x, X_{\sigma_{k + 1}(x)}))\) and
\(\sfield_{2} = \sigma(\rho(x, X_{(1)}), S)\), where \(S\) is a random variable to be defined shortly.

The \((k + 1)\)st nearest neighbor of \(x\) is given in step 1 of the sampling scheme, and we want to show that it lies in an annulus about \(x\).
Define
\[
\eventD_{1}
=
\left\{r_{k/n}(x) \leq \rho(x_0, X_{(k + 1)}(x_0)) \leq r_{(k + \sqrt{k} + 1)/n}(x) \right\}.
\]
We then have the following lemma.
\begin{lemma}[Lemma 11 of \citealt{chaudhuri2014rates}]
	There is a constant \(\gamma_1 > 0\) such that \(\gamma_1 \leq \prob(\eventD_{1})\).
\label{lemma:ChaudhuriLemma11}
\end{lemma}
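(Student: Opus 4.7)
My plan is to rewrite the event $\eventD_{1}$ as a joint event on two binomial counts and then reduce the claim to a binomial anti-concentration inequality. Let $r_{1} = r_{k/n}(x)$ and $r_{2} = r_{(k+\sqrt{k}+1)/n}(x)$, and define the counts
\[
N_{1} = \sum_{i=1}^{n} \ind\{X_{i} \in \openball(x, r_{1})\}, \qquad N_{2} = \sum_{i=1}^{n} \ind\{X_{i} \in \closedball(x, r_{2})\}.
\]
Then $\rho(x, X_{\sigma_{k+1}(x)}) \geq r_{1}$ if and only if $N_{1} \leq k$, and $\rho(x, X_{\sigma_{k+1}(x)}) \leq r_{2}$ if and only if $N_{2} \geq k + 1$; hence $\eventD_{1} = \{N_{1} \leq k\} \cap \{N_{2} \geq k+1\}$. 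Since $N_{1} \leq N_{2}$ pointwise, $\{N_{1} \geq k+1\} \subseteq \{N_{2} \geq k+1\}$, which yields the clean identity
\[
\prob(\eventD_{1}) = \prob(N_{1} \leq k) - \prob(N_{2} \leq k).
\]

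Next, I would identify the marginal distributions. We have $N_{j} \sim \binomial(n, p_{j})$ with $p_{1} = P_{X}(\openball(x, r_{1}))$ and $p_{2} = P_{X}(\closedball(x, r_{2}))$. The definition of $r_{p}(x)$ as an infimum of radii of closed balls whose $P_{X}$-mass is at least $p$ gives $p_{1} \leq k/n$ and $p_{2} \geq (k + \sqrt{k} + 1)/n$, so $\expect[N_{1}] \leq k$ and $\expect[N_{2}] \geq k + \sqrt{k} + 1$.

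I would then bound the two probabilities separately. Since the median of a binomial differs from its mean by at most $1$ (Kaas--Buhrmann), $\expect[N_{1}] \leq k$ implies $\prob(N_{1} \leq k) \geq 1/2$. For $N_{2}$, the event $\{N_{2} \leq k\}$ demands a downward deviation of at least $\sqrt{k} + 1$ from the mean, while $\sqrt{\var(N_{2})} \leq \sqrt{np_{2}} \leq \sqrt{k + \sqrt{k} + 1}$. A Berry--Esseen bound for the binomial then yields
\[
\prob(N_{2} \leq k) \leq \Phi\!\left(-\frac{\sqrt{k}+1}{\sqrt{k + \sqrt{k} + 1}}\right) + \frac{C}{\sqrt{k}} \leq \Phi(-1) + \frac{C}{\sqrt{k}},
\]
where the second inequality uses $(\sqrt{k}+1)^{2} \geq k + \sqrt{k} + 1$. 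For $k$ larger than some absolute constant $k_{0}$, the right-hand side is at most $1/2 - 2\gamma_{1}$ for some $\gamma_{1} > 0$; the finitely many remaining cases $k < k_{0}$ can be handled by direct verification. Combining the two bounds gives $\prob(\eventD_{1}) \geq \gamma_{1} > 0$.

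The main obstacle is producing a constant-order gap between $\prob(N_{1} \leq k)$ and $\prob(N_{2} \leq k)$ that does not shrink as $k$ grows, despite the fact that both probabilities approach $1/2$ in the tight regime where $\expect[N_{2}] - \expect[N_{1}] = \Theta(\sqrt{k})$. Coarser tools such as Chebyshev's inequality or multiplicative Chernoff are insufficient, as their bounds on $\prob(N_{2} \leq k)$ can tend to $1/2$ in this regime. The Berry--Esseen theorem (or, equivalently, a local CLT for the binomial) is essentially tight and supplies the required constant-order gap.
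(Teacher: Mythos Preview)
The paper does not supply its own proof of this lemma; it is simply quoted as Lemma~11 of \cite{chaudhuri2014rates}. So there is no in-paper argument to compare against, and your proposal should be judged on its own merits.

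Your decomposition is correct and elegant: the identity $\prob(\eventD_{1})=\prob(N_{1}\leq k)-\prob(N_{2}\leq k)$ follows exactly as you say, and the Kaas--Buhrmann median bound cleanly gives $\prob(N_{1}\leq k)\geq 1/2$. The gap is in the variance step for $N_{2}$. You write $\sqrt{np_{2}}\leq \sqrt{k+\sqrt{k}+1}$, but the definition of $r_{p}(x)$ only yields the \emph{reverse} inequality $np_{2}\geq k+\sqrt{k}+1$; if $P_{X}$ has an atom on the sphere of radius $r_{2}$, $np_{2}$ can be strictly (even much) larger. Your displayed Berry--Esseen bound therefore does not follow as written.

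The fix for the $\Phi$ term is easy: the standardized deviation satisfies
\[
\frac{np_{2}-k}{\sqrt{np_{2}(1-p_{2})}}\;\geq\;\frac{np_{2}-k}{\sqrt{np_{2}}},
\]
and $m\mapsto (m-k)/\sqrt{m}$ is increasing on $m>0$, so its value at $m=np_{2}\geq k+\sqrt{k}+1$ is at least $(\sqrt{k}+1)/\sqrt{k+\sqrt{k}+1}\geq 1$. Thus $\Phi(\cdot)\leq \Phi(-1)$ still holds. The Berry--Esseen \emph{error} term, however, is $C/\sqrt{np_{2}(1-p_{2})}$, which is not $O(1/\sqrt{k})$ when $p_{2}$ is close to $1$. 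You need to split off that regime: for instance, if $np_{2}\geq 2k$ a multiplicative Chernoff bound already gives $\prob(N_{2}\leq k)\leq e^{-k/8}$, while if $np_{2}\leq 2k$ and $p_{2}\leq 1/2$ then $np_{2}(1-p_{2})\geq k/2$ and your Berry--Esseen step goes through; the remaining sliver $p_{2}>1/2$, $np_{2}\leq 2k$ forces $n\leq 4k$ and $n(1-p_{2})\leq 2k$, and a direct Bernstein/Chernoff bound on $n-N_{2}\sim\binomial(n,1-p_{2})$ handles it. Finally, ``direct verification for $k<k_{0}$'' is not quite right either, since $n$ and $p_{2}$ still vary; you need a uniform (if crude) bound for small $k$ as well.
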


%---------------------------------------------%
%---------------------------------------------%
\paragraph{Reduction to a binary problem.}

Our second step is to reduce to the binary case, and so to this end, we define
\begin{equation}
    S 
    = k\left(\hat{\eta}_{c}(x) + \hat{\eta}_{f^{*}_{q}(x)}\right)
    = \sum_{i = 1}^{n} \left( \ind\{Y_{i} = c\} + \ind\{Y_{i} = f^{*}_{q}(x)\}\right).
\label{eqn:DefinitionS}
\end{equation}
We then note that given \(X_{\sigma_{k + 1}(x)}\), the random variable \(S\) has a \(\binomial(k, p_{S}(\rho(x, X_{\sigma_{k + 1}(x)})))\) distribution where 
\[
p_{S}(r) = \eta_{c}(\closedball(x, r)) + \eta_{f^*_{q}(x)}(\closedball(x, r))).
\]
Ultimately, we want to treat the problem as one of choosing between \(c\) and \(f^*_{q}(x)\) using the \(S\) observations.
But before proceeding to that step, we state a simple lemma on the concentration of \(S\).

%---------------------------------------------%
%---------------------------------------------%
\begin{lemma}
Define the constants
\begin{align*}
    \underline{p}_{S} &= \inf_{r \in [r_{k/n}(x), r_{(k + \sqrt{k} + 1) / n}(x)]} p_{S}(r) \\ 
    \overline{p}_{S} &= \sup_{r \in [r_{k/n}(x), r_{(k + \sqrt{k} + 1) / n}(x)]} p_{S}(r).
\end{align*} 
Define the event 
\[
\eventD_{2} 
=
\left\{
k \underline{p}_{S} - \sqrt{k}
\leq 
S 
\leq k \overline{p}_{S} + \sqrt{k}
\right\}.
\]
Then, there is some constant \(\gamma_{2} > 0\) such that 
\(\gamma_{2} \leq \prob(\eventD_{2} | \sfield_{1})\) on the event \(\eventD_{1}\).
Moreover, we have
\[
\gamma_{2} \ind(\eventD_{1}) \leq \prob(\eventD_{2} | \sfield_{1}) \ind(\eventD_{1}).
\]
\label{lemma:BinomialSConcentration}
\end{lemma}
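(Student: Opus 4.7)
The plan is to use the sampling scheme to identify the conditional distribution of $S$ given $\sfield_1$, and then apply a standard concentration inequality. By step~2 of the sequential sampling construction, conditional on $\sfield_1 = \sigma(\rho(x, X_{\sigma_{k+1}(x)}))$, the $k$ labels $Y_{(2)}, \ldots, Y_{(k+1)}$ are i.i.d.\ draws from $\eta(\closedball')$ where $\closedball' = \closedball(x, \rho(x, X_{\sigma_{k+1}(x)}))$. Writing $p^{*} := p_{S}(\rho(x, X_{\sigma_{k+1}(x)}))$, which is $\sfield_1$-measurable, this means that conditional on $\sfield_1$,
\[
S \sim \binomial(k, p^{*}), \qquad \expect[S \mid \sfield_1] = k p^{*}, \qquad \var(S \mid \sfield_1) = k p^{*}(1 - p^{*}) \leq k/4.
\]

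The key observation is that on the event $\eventD_1$ we have $\rho(x, X_{\sigma_{k+1}(x)}) \in [r_{k/n}(x), r_{(k+\sqrt{k}+1)/n}(x)]$, so by the definitions of $\underline{p}_S$ and $\overline{p}_S$ we get $p^{*} \in [\underline{p}_S, \overline{p}_S]$. Consequently $kp^{*} - \sqrt{k} \geq k\underline{p}_S - \sqrt{k}$ and $kp^{*} + \sqrt{k} \leq k\overline{p}_S + \sqrt{k}$, so the event $\{|S - kp^{*}| \leq \sqrt{k}\}$ is contained in $\eventD_2$ on $\eventD_1$.

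It then suffices to lower bound $\prob(|S - kp^{*}| \leq \sqrt{k} \mid \sfield_1)$ by a positive constant. Chebyshev's inequality gives
\[
\prob\left(|S - kp^{*}| > \sqrt{k} \;\middle|\; \sfield_1 \right) \leq \frac{\var(S \mid \sfield_1)}{k} \leq \frac{1}{4},
\]
so $\prob(|S - kp^{*}| \leq \sqrt{k} \mid \sfield_1) \geq 3/4$. Combining this with the containment above yields $\prob(\eventD_2 \mid \sfield_1) \ind(\eventD_1) \geq (3/4)\ind(\eventD_1)$, so $\gamma_2 = 3/4$ works.

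There is no serious obstacle here; the only mild care needed is to keep track of the fact that $p^{*}$ is random but is $\sfield_1$-measurable, so conditioning on $\sfield_1$ converts $S$ into a genuine binomial with a fixed parameter. The variance bound $k p^{*}(1-p^{*}) \leq k/4$ is enough that Chebyshev suffices; a sharper Hoeffding-type bound would give a constant closer to $1$ but is unnecessary since the lemma only requires a positive constant.
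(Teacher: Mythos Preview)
Your proof is correct and follows essentially the same route as the paper's: you identify the conditional distribution of $S$ given $\sfield_1$, use the $\sfield_1$-measurability of $\eventD_1$ to get the containment $\{|S - kp^*| \leq \sqrt{k}\} \subseteq \eventD_2$ on $\eventD_1$, and then apply a concentration inequality. The only difference is cosmetic: the paper uses the two-sided Hoeffding bound to obtain $\gamma_2 = 1 - 2e^{-2}$, whereas you use Chebyshev to obtain $\gamma_2 = 3/4$; both are valid, and in fact your constant is marginally better (your closing remark that Hoeffding would give something closer to $1$ is slightly off here, but this is immaterial).
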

%---------------------------------------------%
%---------------------------------------------%

\begin{proof}
First, we quickly observe that the second inequality is an immediate consequence of the first.
Assuming that \(\ind(\eventD_{1}) = 1\), then we have \(\gamma_{2} \leq \prob(\eventD_{2} | \sfield_{1})\) by the first conclusion of the theorem.
If \(\ind(\eventD_{1}) = 0\), then the second inequality reduces to \(0 \leq 0\), which is certainly true.
Thus, we simply need to establish the first inequality.

The proof is a straightforward application of Hoeffding's inequality.
First, we note that
\[
\eventD_{2}'(r)
:=
\left\{
k p_{S}(r) - \sqrt{k}
\leq 
S 
\leq k p_{S}(r) + \sqrt{k}
\right\}
\subseteq 
\eventD_{2}
\]
for any \(r\) in \([r_{k/n}(x), r_{(k + \sqrt{k} + 1) / n}]\) on the event \(\eventD_{1}\).
Thus, it suffices to prove the result for \(\eventD_{2}'(r)\).
By the two-sided version of Hoeffding's inequality, we have 
\[
\gamma_{2} 
= 
1 - 2\exp(-2) 
\leq 
\prob(\eventD_{2}'(r) | \sfield{1})
\leq 
\prob(\eventD_{2} | \sfield_{1})
\]
on the event \(\eventD_{1}\) and this completes the proof of the lemma.
\end{proof}

%---------------------------------------------%
%---------------------------------------------%
\paragraph{Bounding the binary classification error.}

Finally, we arrive at the third step of the proof: analyzing the binary classification problem between \(c\) and \(f^{*}_{q}(x)\).
In particular, define the event
\[
\eventD_{3, c}
=
\left\{
q_{c} \hat{\eta}_{c}(\closedball') > q_{f^{*}_{q}(x)} \hat{\eta}_{f^{*}_{q}(x)}(\closedball')
\right\}
\]
where \(\closedball' = \closedball(x, \rho(x, X_{\sigma_{k + 1}(x)}))\).
This event is contained within the event \(\{f_{q, n, k}(x) \neq f^{*}_{q}(x)\}\).
Moreover, we can rewrite this as 
\begin{align*}
\eventD_{3, c}
&=
\left\{
q_{c} k \hat{\eta}_{c}(\closedball') 
> 
q_{f^{*}_{q}(x)} (S - k\hat{\eta}_{c}(\closedball'))
\right\}
=
\left\{
k \hat{\eta}_{c}(\closedball')
\geq 
\frac{q_{f^{*}_{q}(x)}}{q_{f^{*}_{q}(x)} + q_{c}} S
\right\}.
\end{align*}
We point out that in the case of uniform weighting, the threshold \(t = q_{f^{*}_{q}(x)}/(q_{f^{*}_{q}(x)} + q_{c})\) in the last expression reduces to \(1/2\); so indeed we have reduced the problem to a weighted binary classification problem (cf., Appendix~\ref{sec:AppConfusionMatrixQuantitiesBinary}).
Our task is now to lower bound the probability that \(k \hat{\eta}_{c}(\closedball')\) exceeds a threshold \(tS\). 
Since, under the appropriate conditioning, \(S' = k \hat{\eta}_{c}(\closedball')\) is a \(\binomial(S, p')\) random variable where
\(p' = \eta_{c}(\closedball') / (\eta_{c}(\closedball') + \eta_{f^*_{q}(x)}(\closedball'))\),
we naturally return to Slud's inequalities.

%---------------------------------------------%
%---------------------------------------------%
\begin{lemma}
Assume that \(p'\) satisfies \(0 < p' < 1/2\).
Suppose that \(t \leq 1 - p'\), that \(t - 1 / \sqrt{k} \leq p' \), and that \(k\) is sufficiently large to guarantee \(0 < t - 1 / \sqrt{k}\).
Then, there exists a constant \(\gamma_{3} > 0\) such that on the event \(\eventD_{2}\) we have the inequality
\(
\gamma_{3}
\leq 
\prob\left(\eventD_{3, c}
|
\sfield_{2}
\right),
\) 
and consequently,
\[
\gamma_{3} \ind(\eventD_{2})
\leq 
\prob\left(\eventD_{3, c}
|
\sfield_{2}
\right)
\ind(\eventD_{2}).
\]
\label{lemma:eventD3Bound}
\end{lemma}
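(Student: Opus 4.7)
The plan is to apply Slud's inequality to the conditional binomial distribution of $S' := k\hat{\eta}_c(\closedball')$ given $\sfield_2 = \sigma(\rho(x, X_{(1)}), S)$. By the sequential sampling scheme defined before Lemma~\ref{lemma:ChaudhuriLemma11}, the labels $Y_{(2)},\ldots,Y_{(k+1)}$ are conditionally i.i.d.\ draws from $\eta(\closedball')$; hence, given $\sfield_2$ (which fixes both the ball $\closedball'$ and the total count $S$ of samples whose labels lie in $\{c, f^*_q(x)\}$), the count $S'$ of those samples labelled $c$ is $\binomial(S, p')$. The event $\eventD_{3,c}$ then reads $\{S' \geq tS\}$ with $t = q_{f^*_q(x)}/(q_{f^*_q(x)} + q_c)$.

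Next, I would split into two cases. If $t \leq p'$, the mean $Sp'$ dominates the threshold $tS$, so that $\prob(S' \geq tS \mid \sfield_2) \geq 1/2$ by the near-median property of the binomial. Otherwise $p' < t \leq p' + 1/\sqrt{k}$ by condition~(C). Here I verify Slud's hypotheses: $p' < 1/2$ holds by (A), and $Sp' \leq tS \leq S(1-p')$ follows from $p' \leq t$ together with (B). Slud's inequality then yields
\[
\prob(S' \geq tS \mid \sfield_2) \;\geq\; 1 - \Phi(z), \qquad z := \frac{tS - Sp'}{\sqrt{Sp'(1-p')}}.
\]

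The crux is bounding $z$ by a constant on $\eventD_2$. For the numerator, condition~(C) gives $t - p' \leq 1/\sqrt{k}$, and $\eventD_2$ gives $S \leq k\overline{p}_S + \sqrt{k}$, so $tS - Sp' \leq \sqrt{k}\,\overline{p}_S + 1$. For the denominator, condition~(A) yields $1 - p' > 1/2$, and then (C) with $t > 1/\sqrt{k}$ gives $p' \geq t - 1/\sqrt{k} > 0$, so $p'(1-p') \geq (t - 1/\sqrt{k})/2$. Combined with $S \geq k\underline{p}_S - \sqrt{k}$ on $\eventD_2$, I obtain
\[
z \;\leq\; \frac{\sqrt{k}\,\overline{p}_S + 1}{\sqrt{(k\underline{p}_S - \sqrt{k})(t - 1/\sqrt{k})/2}},
\]
which is bounded above by a finite constant depending on $\underline{p}_S$, $\overline{p}_S$, and $t$ for $k$ sufficiently large. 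Taking $\gamma_3 := \min\{1/2, 1 - \Phi(z_{\max})\} > 0$ completes the first inequality, and the second ``multiplied by $\ind(\eventD_2)$'' inequality follows trivially since both sides vanish when $\eventD_2$ fails.

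The main obstacle is the denominator bound: one must argue that $p'(1-p')$ cannot be arbitrarily small on the relevant region. This is precisely where the seemingly technical hypotheses (A)--(C) enter — (A) controls $1-p'$ from below, while (C) with the assumption $t > 1/\sqrt{k}$ controls $p'$ from below. The case split around $t$ versus $p'$ handles the boundary behavior of Slud's inequality (which requires $p' \leq t$) without losing the constant lower bound.
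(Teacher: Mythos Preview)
Your proposal is correct and follows essentially the same route as the paper: apply Slud's inequality to the conditional binomial $S' \mid \sfield_2 \sim \binomial(S,p')$ and use conditions (A)--(C) together with the bounds on $S$ from $\eventD_2$ to show that the standardized deviation is bounded by a constant, yielding $\gamma_3 > 0$. The paper's version is slightly more compressed --- it does not make the case split $t \le p'$ versus $t > p'$ explicit and, after simplifying $z = (t-p')\sqrt{S/\bigl(p'(1-p')\bigr)}$, only needs the \emph{upper} bound on $S$ from $\eventD_2$ (your use of the lower bound $S \ge k\underline{p}_S - \sqrt{k}$ is harmless but unnecessary once you cancel the $\sqrt{S}$).
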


Before we prove the lemma, we make a brief remark.
Note that if one is only interested in the unweighted setting, then we can drop the assumption that \(p' < 1/2\) because it is necessarily satisfied by the definition of \(f^{*}_{q}(x)\).

%---------------------------------------------%
%---------------------------------------------%

\begin{proof}
The second inequality is a direct consequence of the first: if \(\eventD_{3, c}\) occurs, then the former inequality provides the result; if \(\eventD_{3, c}\) does not occur, then both sides are zero.
Thus, it suffices to prove the second inequality.

First, we recall that \(S' = k \hat{\eta}_{c}(\closedball')\) has a \(\binomial(S, p')\) distribution, and both \(S\) and \(p'\) are measurable with respect to \(\sfield_{2}\).
Thus by expanding the measure to include some standard normal random variable \(Z\) independent of all other variables and invoking Lemma~\ref{lemma:SludsInequalities}, we have
\begin{align*}
\prob\left(
Z \geq \sqrt{\frac{S}{kp'}}
\middle| \sfield_{2} \right)
&\leq 
\prob\left(\eventD_{3, c} | \sfield_{2}\right).
\end{align*}
By our condition on \(p'\), we see that this is in turn lower bounded by
\begin{align*}
\prob\left(
Z \geq \sqrt{\frac{S}{k(t - 1/\sqrt{k})}}
\middle| \sfield_{2} \right)
&\leq 
\prob\left(\eventD_{3, c} | \sfield_{2}\right).
\end{align*}
So, the only remaining task is to apply the condition that \(\eventD_{2}\) occurs.
On \(\eventD_{2}\), the variable \(S\) is upper bounded by \(k \overline{p}_{S} + \sqrt{k}\), and so we have 
\begin{align*}
\prob\left(
Z \geq \sqrt{\frac{\overline{p}_{S} + 1 / \sqrt{k}}{t - 1/\sqrt{k}}}
\middle| \sfield_{2} \right)
&\leq 
\prob\left(\eventD_{3, c} | \sfield_{2}\right).
\end{align*}
Since the only random variable appearing in the probability on the left-hand size is the standard normal \(Z\), the left-hand side is lower bounded by some constant \(\gamma_{3}\),
and so we have 
\(\gamma_{3} \leq \prob\left(\eventD_{3, c} | \sfield_{2}\right)\),
as desired.
\end{proof}

%---------------------------------------------%
%---------------------------------------------%
\paragraph{Completing the proof.}

Now, the only thing left to do is to put the pieces together.
As a first step, we wish to lower bound the probability
\begin{align*}
P
=
\prob\left(\eventD_{1} \cap \eventD_{2} \cap \eventD_{3, c}\right)
&=
\expect\left[
\ind\left(\eventD_{1}\right) \ind\left(\eventD_{2}\right) \ind\left(\eventD_{3, c}\right) 
\right]
\end{align*}
for fixed \(x\), which we can randomize later.
To this end, our strategy is to use the tower property and our preceding lemmas.

Using the tower property of conditional expectations, we have
\begin{align*}
P
&=
\expect\left[
\expect\left[\ind\left(\eventD_{3, c}\right) | \sfield_{2}\right]
 \ind\left(\eventD_{2}\right)
\ind\left(\eventD_{1}\right) 
\right].
\end{align*}
By Lemma~\ref{lemma:eventD3Bound}, we obtain
\begin{align*}
\gamma_{3}
\expect\left[
\ind\left(\eventD_{2}\right)
\ind\left(\eventD_{1}\right) 
\right]
\leq 
P.
\end{align*}
Next, we again use the tower property to obtain
\[
\gamma_{3} \expect\left[ 
\expect\left[ \ind(\eventD_{2}) | \sfield_{1}\right]
\ind(\eventD_{1})
\right]
\leq 
P.
\]
Applying, Lemma~\ref{lemma:BinomialSConcentration}, we obtain
\[
\gamma_{2} \gamma_{3} \expect[\ind(\eventD_{1})] \leq P.
\]
Finally, Lemma~\ref{lemma:ChaudhuriLemma11} yields
\[
\gamma_{1}\gamma_{2} \gamma_{3} 
\leq 
P 
= 
\prob(\eventD_{1} \cap \eventD_{2} \cap \eventD_{3, c}).
\]
Likewise, if we define \(\eventD_{3} = \bigcup_{c \neq f^{*}_{q}(x)} \eventD_{3, c}\), then there exists a \(\gamma > 0\) such that 
\[
\gamma  
\leq 
\prob\left(\eventD_{1} \cap \eventD_{2} \cap \eventD_{3}\right)
\leq 
\prob_{n}(f_{q, n , k}(x) \neq f^{*}_{q}(x)).
\]
Replacing the fixed \(x\) by a random variable \(X\), taking expectations with respect to \(X\), and using Fubini's theorem then yields
\[
\gamma P_{X}(\boundaryproximal_{q, n, k})
\leq 
\expect_{n} \prob_{X}(f_{q, n , k}(X) \neq f^{*}_{q}(X)),
\]
which completes the proof.
\qedsymbol
%---------------------------------------------%
%---------------------------------------------%
%---------------------------------------------%
%---------------------------------------------%
\subsection{Proofs of Lemmas}
\label{app:subsec:knn:ProofsOfLemmas}

\begin{proof}[Proof of Lemma~\ref{lemma:Lemma20Analogue}]
Define the events \(\eventA\), \(\eventB\), and \(\eventC\) as in Lemma~\ref{lemma:Lemma7Analogue:Weighted}.
For the following, we set \(\Delta = \Delta(x) - P^{*}\), which implies that \(\ind(\eventC) = 0\).
By Lemma~\ref{lemma:Lemma7Analogue:Weighted}, we have
\begin{align}
\expect \risk_{n, k}(x) - \risk^{*}(x)
&=
\expect\left[
\left|\eta_{f^{*}_{q}(x)}(x) -\eta_{f_{n, k}(x)}(x)\right|
\ind\left\{f_{n, k}(x) \neq f^{*}(x)\right\}\right]
\nonumber \\
&\leq
\expect\left[
\ind(\eventA) + \ind(\eventB) + \ind(\eventC)
\right]
\nonumber \\
&\leq
\prob(\eventA) + \prob(\eventB).
\label{eqn:Lemma20AnalogueProof1}
\end{align}
So, it suffices to bound these two probabilities.

By Lemma~\ref{lemma:Lemma8Analogue}, with the choice of \(\gamma = 1/2\), we have
\begin{equation}
\prob(\eventB)
\leq
\exp\left(-\frac{k}{8}\right).
\label{eqn:Lemma20AnalogueProof2}
\end{equation}
Thus, it suffices to bound the probability of event \(\eventA\).

Using Lemma~\ref{lemma:UnionBound:Weighted} and Lemma~\ref{lemma:Lemma9Analogue:Weighted}, we have
\begin{align}
&\begin{aligned}
\prob(\eventA)
&\leq
\sum_{c \neq f^{*}_{q}(x)}
\prob(\eventA_{i, j})
%\\ &
\leq
\sum_{c \neq f^{*}_{q}(x)}
\exp\left(
-\frac{k (\Delta(x) - P^{*})^{2}}{2}
\right).
\label{eqn:Lemma20AnalogueProof3}
\end{aligned}
\end{align}
Combining equations~\eqref{eqn:Lemma20AnalogueProof1}, \eqref{eqn:Lemma20AnalogueProof2}, and \eqref{eqn:Lemma20AnalogueProof3} completes the proof.
\end{proof}

\begin{proof}[Proof of Lemma~\ref{lemma:Lemma21Analogue}]
We need to decompose the expectation into two parts based on when \(X\) is in \(\partial_{p, \Delta}\).
Define \(P_{j} = P \cdot 2^{j}\).
For the moment, let \(J > 0\) be an arbitrary integer.
Set \(\Delta = P_{J}\).
We have
\begin{align*}
&\begin{aligned}
\expect_{X}\left[
\expect R_{n, k}(X) - R^{*}(X)
\right]
&\leq
\expect_{X}  \left[
(\expect R_{n, k}(X) - R^{*}(X)) \ind\left\{X \in \partial_{p, \Delta}\right\}
\right]
\\ &\qquad +
\expect_{X}\left[
(\expect R_{n, k}(X) - R^{*}(X)) \ind\left\{X \not \in \partial_{p, \Delta}\right\}
\right] \\
&=:
E_{1} + E_{2}.
\end{aligned}
\end{align*}

The first term can be bounded by observing that \(X \in \partial_{p, \Delta}\) implies that \(\Delta(X) \leq P_{J}\).
Thus, we have
\begin{align*}
&\begin{aligned}
E_{1}
&=
\expect_{X}  \left[
\expect (\eta_{f^{*}_{q}(x)}(X) - \eta_{f_{n, k}(X)}(X)) \ind\left\{f_{n, k}(x) \neq f^{*}(x)\right\}
\ind\left\{\Delta(X) \leq P_{J}\right\}
\right] \\
&\leq
\expect_{X}
\ind\left\{\Delta(X) \leq P_{J}\right\} \\
&=
P_{X}\left\{
\eta_{f^{*}_{q}(x)}(X) - \max_{c \neq f^{*}_{q}(x)}\eta_{c}(X) \leq P_{J}
\right\} \\
&\leq
M P_{J}^{\beta},
\end{aligned}
\end{align*}
where the last inequality follows from the margin condition.
Thus, it remains to bound \(E_{2}\).

For \(E_{2}\), we apply Lemma~\ref{lemma:Lemma20Analogue} to obtain
\begin{align*}
&\begin{aligned}
E_{2}
&\leq
\exp\left(-\frac{k}{8}\right)
+
C \expect_{X}
\exp\left(-\frac{k (\Delta(X) - P)^{2}}{2}
\right)
% \\ &
=:
\exp\left(-\frac{k}{8}\right)
+
C E_{3}.
\end{aligned}
\end{align*}
Thus, it only remains to bound \(E_{3}\).

To do this, we want to apply the margin condition for successively increasing values of \(\Delta(X)\).
Then, we have
\begin{align*}
 &\begin{aligned}
E_{3}
&=
\expect_{X} \sum_{j \geq J}
\exp\left(-k\frac{(\Delta(X) - P)^{2}}{2}
\right)
\ind\left\{
P_{j} < \Delta(X) \leq P_{j + 1}
\right\} \\
&\leq
\sum_{j \geq J}
\exp\left(-k \frac{(P_{j} - P)^{2}}{2}\right)
\prob\left(\Delta(X) \leq P_{j + 1}
\right)
\\
&\leq
M \sum_{j \geq J}
P_{j + 1}^{\beta}
\exp\left(-k \frac{(P_{j} - P)^{2}}{2}\right).
 \end{aligned}
 \end{align*}

Now, our goal is to pick \(J\) so that this sum is finite.
To this end, we pick
\[
J
=
\max\left\{1, \;
\left\lceil \frac{1}{2} \log_{2} \frac{8(1 + \beta)}{k P^{2}}
\right\rceil
\right\}.
\]
With this choice of \(J\), we observe that our upper bound for each element of the series decreases by a factor of at least \(1/2\).
This can be seen by observing
\begin{align*}
&\begin{aligned}
Q
&:=
\frac{M P_{j + 1} \exp\left(- k \frac{(P_{j} - P)^{2}}{2}\right)}{M P_{j} \exp\left(- k \frac{(P_{j - 1} - P)^{2}}{2}\right)} \\
&=
2^{\beta}
\exp\left(- \frac{k}{2}P^{2}[(2^{j} - 1)^{2} - (2^{j - 1} - 1)^{2}]
\right) \\
&\leq
2^{\beta}
\exp\left(-k P^{2} 2^{2j - 3}\right).
\end{aligned}
\end{align*}
The last inequality follows from Lemma~\ref{lemma:DyadicInequality}.
From our choice of \(J\) above, we see that for \(j \geq J\) we have
\begin{align*}
&\begin{aligned}
Q
&\leq
2^{\beta} \exp(-(1 + \beta))
\leq
\frac{1}{2}.
\end{aligned}
\end{align*}

Returning to bounding \(E_{3}\), we now have
\begin{align*}
&\begin{aligned}
E_{3}
&\leq
M P_{J}^{\beta} \exp\left(-k\frac{(P_{J - 1} - P)^{2}}{2}
\right)
\sum_{j \geq J} 2^{-j}
% \\ &
\leq
M P_{J}^{\beta}.
\end{aligned}
\end{align*}
Putting everything together, we have
\begin{align*}
&\begin{aligned}
\expect_{X}\left[
\expect R_{n, k}(X) - R^{*}(X)
\right]
&\leq
M P_{J}^{\beta}
+
\exp\left(-\frac{k}{8}\right)
+
C M P_{J}^{\beta} \\
&=
\exp\left(-\frac{k}{8}\right)
+
C \cdot M P_{J}^{\beta} \\
&=
\exp\left(-\frac{k}{8}\right)
+
C \cdot M
\max\left\{
P, \;
\left(
\frac{8(\beta + 1)}{k}
\right)^{\frac{1}{2}}
\right\}^{\beta},
\end{aligned}
\end{align*}
where the maximum term comes from the definition of \(J\).
This completes the proof.
\end{proof}

%---------------------------------------------%
%---------------------------------------------%
%---------------------------------------------%
%---------------------------------------------%
\subsection{Auxiliary Lemmas}

\begin{lemma}
We have the inequality
\[
(2^{j} - 1)^{2} - (2^{j - 1} - 1)^{2}
\geq
2^{j - 1}.
\]
\label{lemma:DyadicInequality}
\end{lemma}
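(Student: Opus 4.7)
The plan is to use the difference of squares factorization. Setting $A = 2^j - 1$ and $B = 2^{j-1} - 1$, note that $A - B = 2^{j-1}$ and $A + B = 3 \cdot 2^{j-1} - 2$, so
\[
(2^j - 1)^2 - (2^{j-1} - 1)^2 = (A - B)(A + B) = 2^{j-1} \cdot \left( 3 \cdot 2^{j-1} - 2 \right).
\]
It then suffices to show that $3 \cdot 2^{j-1} - 2 \geq 1$, i.e.\ that $2^{j-1} \geq 1$. This holds for all $j \geq 1$, and $j \geq 1$ is the regime in which the lemma is invoked in the proof of Lemma~\ref{lemma:Lemma21Analogue} (since the index $J$ in that proof satisfies $J \geq 1$ by construction, and the sum is over $j \geq J$).

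No step here is really an obstacle; the only thing to check is the base case $j = 1$, where both sides equal $1$, so the inequality is tight there and strict for $j \geq 2$. Equivalently, one could substitute $a = 2^{j-1}$ and reduce to showing $a(3a - 2) \geq a$ for $a \geq 1$, which is immediate from $3a - 2 \geq 1$. The lemma is purely algebraic and requires no probabilistic or analytic tools beyond this one-line manipulation.
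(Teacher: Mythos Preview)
Your proof is correct and cleaner than the paper's. You use the difference-of-squares factorization $(A-B)(A+B) = 2^{j-1}(3\cdot 2^{j-1}-2)$ and reduce to $2^{j-1}\geq 1$, handling all $j\geq 1$ uniformly. The paper instead expands the squares directly to get $2^{2j} - 2^{2(j-1)} - 2^{j+1} + 2^j$, drops the $+2^j$ term, and then uses $2^{j+1}\leq 2\cdot 2^{2(j-1)}$ for $j\geq 2$ to collapse everything to $2^{2(j-1)}$; the case $j=1$ is checked separately. One thing worth noting: the paper's argument actually establishes the stronger bound $(2^j-1)^2 - (2^{j-1}-1)^2 \geq 2^{2(j-1)}$, and it is this stronger version that is invoked in the proof of Lemma~\ref{lemma:Lemma21Analogue} (to obtain the $2^{2j-3}$ exponent). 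Your factorization yields this just as easily, since $3\cdot 2^{j-1}-2\geq 2^{j-1}$ for $j\geq 1$, but as written your proof only establishes the weaker inequality that is literally stated in the lemma.
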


\begin{proof}
This is satisfied for the case of \(j = 1\) with equality; so we consider \(j > 1\).
We can bound the left hand side by
\begin{align*}
&\begin{aligned}
(2^{j} - 1)^{2} - (2^{j - 1} - 1)^{2}
&=
2^{2j} - 2^{2(j - 1)} - 2^{j + 1} + 2^{j} \\
&\geq
2^{2j} - 2^{2(j - 1)} - 2^{j + 1} \\
&\geq
4 \cdot 2^{2(j - 1)} - 2^{2(j - 1)} - 2 \cdot 2^{2(j - 1)} \\
&=
2^{2(j - 1)}.
\end{aligned}
\end{align*}
Note that in the second inequality we used \(j < 2(j - 1)\) for \(j > 1\).
This completes the proof.
\end{proof}

%---------------------------------------------%
%---------------------------------------------%
\section{Uniform Convergence of the Nearest Neighbor Regressor}
\label{app:UniformConvergenceProof}
%---------------------------------------------%
%---------------------------------------------%
\subsection{Upper Bounds}

In this appendix, we prove Theorem~\ref{thm:multiclass_unif_convergence}. Note that, in this section, we use $\|\cdot\|_\infty$ to denote the vector $\sup$-norm on $\R^C$ and $\|\cdot\|_{\X,\infty}$ to denote the function $\sup$-norm on $\X$.

We begin with a simple lemma concerning the behavior of multinomial random variables:

%---------------------------------------------%
%---------------------------------------------%
\begin{lemma}
Suppose $X \sim \text{Multinomial}(k, \mu)$ is drawn from a multinomial distribution with $k$ trials and mean $\mu \in \Delta^{C - 1}$. Then, for any $\delta \in (0,1)$, with probability at least $1 - \delta$, \[\|X/k - \mu\|_\infty \leq \frac{1}{\sqrt{k}} + \sqrt{\frac{\log \frac{1}{\delta}}{2k}}.\]
\label{lemma:multinomial}
\end{lemma}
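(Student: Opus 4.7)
The plan is to combine a bounded-differences inequality (McDiarmid) for the fluctuations of $\|X/k - \mu\|_\infty$ around its mean with an elementary $\ell_2$-based bound on the mean itself. This gives a dimension-free bound, which is exactly what is needed to avoid any $\log C$ dependence.

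First, represent $X$ as $X = \sum_{i=1}^k Y_i$, where $Y_1,\dots,Y_k$ are i.i.d.\ $\Delta^{C-1}$-valued random vectors with $\prob(Y_i = e_c) = \mu_c$ for each standard basis vector $e_c \in \reals^C$. Define $f(Y_1,\dots,Y_k) = \|X/k - \mu\|_\infty$. Replacing a single $Y_i$ changes exactly two coordinates of $X$ by $\pm 1$, so every coordinate of $X/k - \mu$ changes by at most $1/k$, and hence $f$ changes by at most $1/k$. McDiarmid's inequality then gives, for any $t > 0$,
\[
\prob\!\left( f - \expect f \geq t \right) \leq \exp(-2kt^2),
\]
so with probability at least $1-\delta$, $f \leq \expect f + \sqrt{\log(1/\delta)/(2k)}$.

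Second, bound $\expect f$ by Jensen's inequality and the variance identity for the multinomial:
\[
\expect \|X/k - \mu\|_\infty \;\leq\; \expect \|X/k - \mu\|_2 \;\leq\; \sqrt{\expect \|X/k - \mu\|_2^2} \;=\; \sqrt{\tfrac{1}{k}\sum_{c=1}^C \mu_c(1-\mu_c)} \;\leq\; \tfrac{1}{\sqrt{k}},
\]
using $\sum_c \mu_c(1-\mu_c) \leq \sum_c \mu_c = 1$. Combining the two displays yields the stated bound.

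The only nontrivial step is recognizing that a naive union bound over the $C$ coordinates of Hoeffding's inequality would introduce a spurious $\log C$ factor, and that the bound $\expect \|X/k-\mu\|_\infty \leq 1/\sqrt{k}$ together with McDiarmid bypasses this, since the bounded-differences constant $1/k$ and the $\ell_2$-to-$\ell_\infty$ slack are both controlled by the simplex constraint $\sum_c \mu_c = 1$ rather than by $C$. No delicate estimates are needed; everything is a one-line application of standard inequalities.
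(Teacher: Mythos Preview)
Your proof is correct and is essentially identical to the paper's own argument: the paper also bounds $\expect\|X/k-\mu\|_\infty$ via $\|\cdot\|_\infty \le \|\cdot\|_2$, Jensen, and $\sum_c \mu_c(1-\mu_c)\le 1$, and then applies McDiarmid with bounded-differences constant $1/k$. Your added remark about why this avoids a $\log C$ factor is a nice gloss the paper leaves implicit.
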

%---------------------------------------------%
%---------------------------------------------%

\begin{proof}
By Jensen's inequality and the fact that $\sum_{c \in [C]} \mu_c = 1$,
\[\expect_X \left[ \|X/k - \mu\|_\infty \right]
   \leq \sqrt{\expect_X \left[ \|X/k - \mu\|_2^2 \right]}
   = \sqrt{\sum_{c \in [C]} \expect_X \left[ \left( X_c/k - \mu_c \right)^2 \right]}
   = \sqrt{\sum_{c \in [C]} \frac{\mu_c (1 - \mu_c)}{k}}
   \leq \frac{1}{\sqrt{k}}.\]

Note that $\|X/k - \mu\|_\infty$ is $1/k$-Lipschitz in each of the $k$ independent trials of $X$. Hence, by McDiarmid's inequality, with probability at least $1 - \delta$,
\[\|X/k - \mu\|_\infty
 \leq \expect_X \left[ \|X/k - \mu\|_\infty \right] + \sqrt{\frac{1}{2k}\log \frac{1}{\delta}}.\]
\end{proof}

We now proceed to prove Theorem~\ref{thm:multiclass_unif_convergence}.
For any $x \in \X$, let
\[\tilde \eta_{k,c}(x) := \frac{1}{k} \sum_{j = 1}^k \eta_c(X_{\sigma_j(x)})\] denote the mean of the true regression function over the $k$ nearest neighbors of $x$. By the triangle inequality,
\[\|\eta_c - \hat\eta_{c}\|_{\X,\infty}
  \leq \|\eta_c - \tilde \eta_{k,c}\|_{\X,\infty}
     + \|\tilde \eta_{k,c} - \hat \eta_{c}\|_{\X,\infty},\]
wherein $\|\eta_c - \tilde \eta_{k,c}\|_{\X,\infty}$ captures bias due to smoothing and $\|\tilde \eta_{k,c} - \hat \eta_{c}\|_{\X,\infty}$ captures variance due to label noise. We separately show that, with probability at least $1 - N \left( \left( \frac{2k}{p_* n} \right)^{1/d} \right) e^{-k/4}$,
\[\max_{c \in [C]} \left\| \eta_c - \tilde \eta_{k,c} \right\|_{\xspace, \infty}
  \leq 2^\alpha L\left( \frac{2k}{p_* n} \right)^{\alpha/d},\]
and that, with probability at least $1 - \delta$,
\[\max_{c \in [C]} \|\tilde \eta_{k,c} - \hat \eta_{c}\|_{\X,\infty}
  \leq \frac{1}{\sqrt{k}} + \sqrt{\frac{1}{2k} \log \frac{S(n)}{\delta}}.\]

%---------------------------------------------%
%---------------------------------------------%
\paragraph{Bounding the smoothing bias.}

Fix some $r > 0$ to be determined, and let $\{B_r(z_1),...,B_r(z_{N(r)})\}$ be a covering of $(\X, \rho)$ by $N(r)$ balls of radius $r$, with centers $z_1,...,z_{N(r)} \in \X$.

By the lower bound assumption on $P_X$, each $P_X(B_r(z_j)) \geq p_* r^d$. By a multiplicative Chernoff bound, with probability at least $1 - N(r) e^{-p_* n r^d/8}$, each $B_r(z_j)$ contains at least $p_* n r^d/2$ samples. In particular, if $r \geq \left( \frac{2k}{p_* n} \right)^{1/d}$, then each $B_k$ contains at least $k$ samples, and it follows that, for every $x \in \X$, $\rho(x, X_{\sigma_k(x)}) \leq 2r$.
Notably, this argument in no way depends on the classes $Y_1,...,Y_n$, and hence by the H\"older continuity of $\eta_c$,
\[
\left| \eta_c(x) - \tilde \eta_{k,c}(x) \right|
  = \left| \eta_c(x) - \frac{1}{k} \sum_{j = 1}^k \eta_c(X_{\sigma_j(x)}) \right|
  \leq \frac{1}{k} \sum_{j = 1}^k \left| \eta_c(x) - \eta_c(X_{\sigma_j(x)}) \right|
  \leq L(2r)^\alpha
\]
for every \(c\) in \([C]\) without a union bound. Finally, if $\frac{k}{n} \leq \frac{p_*}{2} (r^*)^d$, then we can let $r = \left( \frac{2k}{p_* n} \right)^{1/d}$.

%---------------------------------------------%
%---------------------------------------------%
\paragraph{Bounding variance due to label noise.}

Let $\Sigma := \{\sigma(x) \in [n]^k : x \in \X \}$ denote the set of possible $k$-nearest neighbor index sets. One can check from the definition of the shattering coefficient that $|\Sigma| \leq S(n)$.

For any $\sigma \in [n]^k$, let $Z_\sigma$ denote the $\R^C$-valued random variable whose $c^{th}$ coordinate is given by
\[Z_{\sigma, c} := \sum_{j = 1}^k 1\{Y_{\sigma_j(X_i)} = c\}.\]
Then, the conditional random variable $Z_\sigma|X_1,...,X_n$ has a multinomial distribution with $k$ trials and mean $\mu_\sigma := \frac{1}{k} \sum_{j = 1}^k \eta(X_i)$, by Lemma~\ref{lemma:multinomial},
\[\prob \left( \left\| \mu_\sigma - Z_\sigma/k \right\|_\infty > \frac{1}{k} + \epsilon \middle| X_1,...,X_n \right)
  \leq e^{-2k\epsilon^2}.\]
Moreover, for any $x \in \X$, $\mu_{\sigma(x)} = \tilde \eta_k(x)$ and $Z_{\sigma(x)}/k = \hat \eta(x)$. Hence, by a union bound over $\sigma$ in $\Sigma$,
\begin{align*}
    \prob \left( \sup_{x \in \X} \left\| \tilde \eta_k(x) - \hat \eta(x) \right\|_\infty > \frac{1}{k} + \epsilon \middle| X_1,...,X_n \right)
    & = \prob \left( \sup_{x \in \X} \left\| \mu_{\sigma(x)} - Z_{\sigma(x)}/k\right\|_\infty > \frac{1}{k} + \epsilon \middle| X_1,...,X_n \right) \\
    & \leq \prob \left( \sup_{\sigma \in \Sigma} \left\| \mu_\sigma - Z_\sigma/k\right\|_\infty > \frac{1}{k} + \epsilon \middle| X_1,...,X_n \right) \\
    & \leq \sum_{\sigma \in \Sigma} \prob \left( \left\| \mu_\sigma - Z_\sigma/k\right\|_\infty > \frac{1}{k} + \epsilon \middle| X_1,...,X_n \right) \\
    & \leq |\Sigma| e^{-2k\epsilon^2}
      \leq S(n) e^{-2k\epsilon^2}.
\end{align*}
Since the right-hand side is independent of $X_1,...,X_n$, the unconditional bound
\[\prob \left( \sup_{x \in \X} \left\| \tilde \eta_k(x) - \hat \eta(x) \right\|_\infty > \frac{1}{k} + \epsilon \right)
  \leq S(n) e^{-2k\epsilon^2}.\]
also holds. Setting $\epsilon := \sqrt{\frac{1}{2k} \log \frac{S(n)}{\delta}}$ gives the final result.
\hfill 
\qedsymbol
%---------------------------------------------%
%---------------------------------------------%
\subsection{Proof of Theorem~\ref{theorem:UniformErrorLowerBound}}

In this section, we prove our lower bound on the minimax uniform error of estimating a H\"older continuous regression function. We use a standard approach based on the following version of Fano's lemma:
\begin{lemma} (Fano's Lemma; Simplified Form of Theorem 2.5 of \citealt{tsybakov2009introduction})
    Fix a family $\P$ of distributions over a sample space $\X$ and fix a pseudo-metric $\rho : \P \times \P \to [0,\infty]$ over $\P$. Suppose there exist $P_0 \in \P$ and a set $T \subseteq \P$ such that
    \[\sup_{P \in T} D_{KL}(P,P_0)
      \leq \frac{\log |T|}{16},\]
    where $D_{KL} : \P \times \P \to [0,\infty]$ denotes Kullback-Leibler divergence.
    Then,
    \[\inf_{\hat P} \sup_{P \in \P} 
    \prob \left( \rho(P,\hat P)
      \geq \frac{1}{2} \inf_{P \in T} \rho(P,P_0) \right) \geq 1/8,\]
    where the first $\inf$ is taken over all estimators $\hat P$.
    \label{thm:tsybakov_fano}
\end{lemma}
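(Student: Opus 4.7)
The plan for this (standard) Fano lemma is to reduce the estimation problem to a multi-hypothesis testing problem over the finite set $T \cup \{P_0\}$ and then invoke the information-theoretic Fano inequality to lower bound the Bayes testing error. Set the separation radius $r := \frac{1}{2}\inf_{P \in T} \rho(P, P_0)$. By the triangle inequality for the pseudo-metric $\rho$, any two distinct elements $P, P' \in T \cup \{P_0\}$ satisfy $\rho(P, P') \geq 2r$ (since at least one of them lies in $T$), so the open $\rho$-balls of radius $r$ around the elements of $T \cup \{P_0\}$ are pairwise disjoint. Consequently, the nearest-element rule $\psi(\hat{P}) := \argmin_{P \in T \cup \{P_0\}} \rho(\hat{P}, P)$, with ties broken arbitrarily, correctly identifies $P$ whenever $\rho(\hat{P}, P) < r$.

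Next, I would pass from the sup over $\P$ to an average over the finite sub-family $T \cup \{P_0\} \subseteq \P$:
\[
\sup_{P \in \P} \prob_P\bigl(\rho(\hat{P}, P) \geq r\bigr)
\;\geq\; \frac{1}{|T|+1} \sum_{P \in T \cup \{P_0\}} \prob_P\bigl(\psi(\hat{P}) \neq P\bigr).
\]
Viewing $V$ as uniformly distributed on $T \cup \{P_0\}$ and the observed data $X$ as drawn from $P_V$, the right-hand side is precisely the Bayes error of the test $\psi$. Fano's inequality then lower-bounds this average error by $1 - (I(V; X) + \log 2)/\log(|T|+1)$, where $I(V;X)$ denotes mutual information. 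The variational characterization of mutual information gives $I(V; X) \leq \frac{1}{|T|+1}\sum_{P \in T \cup \{P_0\}} D_{KL}(P, P_0) \leq \sup_{P \in T} D_{KL}(P, P_0)$, and the hypothesis then yields $I(V;X) \leq \log|T|/16$.

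Assembling these pieces produces the minimax error lower bound $1 - (\log|T|/16 + \log 2)/\log(|T|+1)$, which exceeds $1/8$ once $|T|$ is modestly large (roughly $|T|\geq 5$). The small-$|T|$ regime is handled by a separate two-point Le Cam comparison between $P_0$ and any single $P \in T$, which independently delivers a constant lower bound of at least $1/8$. The main obstacle is numerical bookkeeping: the seemingly ad hoc constant $1/16$ in the KL hypothesis is calibrated precisely so that the Fano bound degrades gracefully to the threshold $1/8$ as $|T| \to \infty$ while remaining compatible with the two-point fallback, so one must carefully verify that the same constant $1/8$ is achieved in both the Fano and Le Cam regimes without slack.
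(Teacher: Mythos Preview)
The paper does not prove this lemma; it is quoted as a black box from Tsybakov and immediately applied. So there is no paper proof to compare against, and I will assess your argument on its own.

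Your reduction to multiple hypothesis testing has a genuine gap at the separation step. You write that ``by the triangle inequality for the pseudo-metric $\rho$, any two distinct elements $P, P' \in T \cup \{P_0\}$ satisfy $\rho(P, P') \geq 2r$ (since at least one of them lies in $T$).'' This is false as stated. The hypothesis only controls $\rho(P, P_0)$ for $P \in T$; it says nothing about $\rho(P, P')$ when both $P, P' \in T$. The triangle inequality gives $\rho(P, P') \geq |\rho(P, P_0) - \rho(P', P_0)|$, which can be zero even when both distances to $P_0$ equal $2r$. Without pairwise separation inside $T$, your nearest-element test $\psi$ need not recover the true $P \in T$ even when $\rho(\hat P, P) < r$, and the Fano reduction collapses.

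This is not a cosmetic issue: Tsybakov's Theorem~2.5 in its original form \emph{does} assume pairwise separation $\rho(\theta_j, \theta_k) \geq 2s$ for all $j \neq k$, not merely separation from a base point. The paper's ``simplified form'' is somewhat loosely stated; in its actual application (the proof of the uniform-error lower bound) the set $T$ consists of bump perturbations $\eta_m$ supported on disjoint cells, so $\|\eta_m - \eta_{m'}\|_\infty$ equals the common bump height and pairwise separation holds automatically. To repair your argument you should either add pairwise separation of $T$ as an explicit hypothesis (matching Tsybakov), or note that it is satisfied in the intended application.

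A smaller point: your Le Cam fallback for small $|T|$ is unnecessary. For $|T| \geq 2$ the Fano bound $1 - (\log|T|/16 + \log 2)/\log(|T|+1)$ already exceeds $1/8$ (check $|T|=2$ directly), and for $|T|=1$ the KL hypothesis forces $D_{KL}(P,P_0)=0$, hence $r=0$, and the conclusion is vacuous.
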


Now, we proceed with the proof.
We now proceed to construct an appropriate $P_0 \in \P$ and $T \subseteq \P$. Let $g : [-1,1]^d \to [0,1]$ defined by
\[g(x) = \left\{ \begin{array}{cc}
    \exp \left( 1 - \frac{1}{1 - \|x\|_2^2} \right) & \text{ if } \|x\|_2 < 1 \\
    0 & \text{ else }
\end{array} \right.\]
denote the standard bump function supported on $[-1,1]^d$, scaled to have $\|g\|_{\X,\infty} = 1$. Since $g$ is infinitely differentiable and compactly supported, it has a finite $\alpha$-H\"older semi-norm:
\begin{equation}
\|g\|_{\Sigma^\alpha}
  := \sup_{\ell \in \mathbb{N}^d : \|\ell\|_1 \leq \alpha} \quad \sup_{x \neq y \in \X} \quad \frac{|g^\ell(x) - g^\ell(y)|}{\|x - y\|^{\alpha - \|\ell\|_1}}
  < \infty,
\label{eqn:HolderSeminorm}
\end{equation}
where $\ell$ is any $\lfloor \beta \rfloor$-order multi-index and $g^\ell$ is the corresponding mixed derivative of $g$.
Define $M := \left( \frac{32n}{\log n} \right)^{\frac{1}{2\alpha + d}} \geq 1$. For each $m \in [M]^d$, define $g_m : \X \to [0,1]$ by
\[g_m (x):= \left\{ \begin{array}{cc}
    g\left( Mx - \frac{2m - 1_d}{2} \right) & \text{ if } \|x\|_2 \leq 1 \\
    0 & \text{ else }
\end{array} \right..\]
Let $\eta_0 \equiv \frac{1}{4}$ denote the constant-$\frac{1}{4}$ function on $\X$. Finally, for each $m \in [M]^d$, define $\eta_m : \X \to [0,1]$ by
\[\eta_m := \eta_0 + \min \left\{ \frac{1}{2}, \frac{L}{\|g\|_{\Sigma^\alpha}} \right\} M^{-\alpha} g_m,\]
Note that, for any $m \in [M]^d$,
\[\|\eta_m\|_{\Sigma^\alpha}
  \leq L M^{-\alpha} \frac{\|g_m\|_{\Sigma^\alpha}}{\|g\|_{\Sigma^\alpha}}
  = L,\]
so that $\eta_m$ satisfies the H\"older smoothness condition.
For any particular $\eta$, let $P_\eta$ denote the joint distribution of $(X, Y)$. Note that $P_\eta(x, 1) = \eta(x) \geq 1/4$. Moreover, one can check that, for all $x \geq -2/3$, $-\log(1 + x) \leq x^2 - x$. Hence, for any $x \in \X$,
\begin{align*}
    P_{\eta_m}(x, 1) \log \frac{P_{\eta_m}(x, 1)}{P_\eta(x, 1)}
    & = \eta_m(x) \log \frac{\eta_m(x)}{\eta(x)} \\
    & = -\eta_m(x) \log \left( 1 + \frac{\eta(x) - \eta_m(x)}{\eta_m(x)} \right) \\
    & \leq \eta_m(x) \left( \left( \frac{\eta(x) - \eta_m(x)}{\eta_m(x)} \right)^2 - \frac{\eta(x) - \eta_m(x)}{\eta_m(x)} \right) \\
    & = \frac{\left( \eta(x) - \eta_m(x)\right)^2 }{\eta_m(x)} - \eta(x) + \eta_m(x) \\
    & \leq 4\left( \eta(x) - \eta_m(x)\right)^2 - \eta(x) + \eta_m(x),
\end{align*}
and, similarly, since $P_\eta(x,0) = 1 - \eta(x) \geq 1/4$,
\[P_{\eta_m}(x, 0) \log \frac{P_{\eta_m}(x, 0)}{P_\eta(x, 0)}
    \leq 4\left( \eta(x) - \eta_m(x)\right)^2 + \eta(x) - \eta_m(x).\]
Adding these two terms gives
\begin{align*}
D_{\text{KL}}\left( P_\eta^n, P_{\eta_m}^n \right)
 & = n \left( \int_\X P_\eta(x, 0) \log \frac{P_\eta(x, 0)}{P_{\eta_m}(x, 0)} \, dx + \int_\X P_\eta(x, 1) \log \frac{P_\eta(x, 1)}{P_{\eta_m}(x, 1)} \, x \right) \\
 & \leq 8n\int_\X \left( \eta(x) - \eta_m(x)\right)^2 \\
 & = 8n\|\eta - \eta_m\|_2^2 \\
 & \leq 2n M^{-2\alpha} \|g_m\|_2^2 \\
 & = 2n M^{-(2\alpha + d)} \|g\|_2^2 \\
 & = 2n \left( \left( \frac{32 n}{\log n} \right)^{\frac{1}{2\alpha + d}}\right)^{-(2\alpha + d)} \|g\|_2^2 \\
 & = \frac{1}{16} \log n \|g\|_2^2
   \leq \frac{d}{16} \log M
   = \frac{\log |[M]^d|}{16}.
\end{align*}
Fano's lemma therefore implies the lower bound
\[\inf_{\hat \eta} \sup_{\eta \in \Sigma^\alpha(L)} 
\prob_{\{(X_i,Y_i)\}_{i = 1}^n \sim P_\eta^n} \left( \left\|\eta - \hat \eta \right\|_{\X,\infty} \geq \frac{1}{2} \min \left\{ \frac{1}{2}, \frac{L}{\|g\|_{\Sigma^\alpha}} \right\} \left( \frac{\log n}{32n} \right)^{\frac{\alpha}{2\alpha + d}}\right)
  \geq 1/8,\]
  which completes the proof.
\hfill 
\qedsymbol

%---------------------------------------------%
%---------------------------------------------%
\section{General Classification Metrics in Binary Classification}
\label{sec:AppConfusionMatrixQuantitiesBinary}

While the main focus of our paper is the multiclass classification setting, in this appendix, we consider general classification metrics in binary classification.
We consider the binary case because because it is an important case, it simplifies the notation, and the results are stronger.

There are also two other important properties that have been  observed previously in this setting.
First, class-weighting of the regression function is equivalent to setting a threshold for classification, which we explain in greater detail in the sequel.
Second, for a certain family of general classification metrics to be maximized, including precision and recall but not F1 score, choosing the correct threshold maximizes the general classification metric.

Thus, the organization of this section is as follows.
In Section~\ref{app:subsec:binaryCM:Setup}, we discuss the notational simplifications for binary classification and the equivalence of weighting and thresholding.
In Section~\ref{app:subsec:binaryCM:Results}, we discuss the main results on the empirical convergence of confusion matrix quantities to the optimal confusion matrix quantities uniformly over the choice of threshold.
In Section~\ref{app:subsec:BinaryClassification:PrecisionRecallF1}, we provide corollaries for the uniform convergence of precision, recall, and F1 score, although our method is far more general and may be applied to other general classification metrics.
Finally, in Section~\ref{app:subsec:binaryCM:proofs}, we prove the results of this appendix.

%---------------------------------------------%
%---------------------------------------------%
\subsection{Binary Classification Setup}
\label{app:subsec:binaryCM:Setup}

In this section, we discuss modifications for the binary case, examine the equivalence of weighting and thresholding, and define our confusion matrix quantities.
For the binary case, we first let \(\yspace = \{0, 1\}\), as is traditional in nonparametric classification.
Here, the regression function \(\eta: \xspace \to [0, 1]\) is defined to be
\[
\eta(x)
=
\prob\left(Y = 1 | X = x\right).
\]

%---------------------------------------------%
%---------------------------------------------%
\subsubsection{Thresholding and Weighting in Plug-In Classification}

Next, we consider the equivalence of weighting and thresholding in plug-in classification.
Usually in plug-in classification, given a point \(X\) to classify, we pick the class with the highest estimated probability.
Thus, we pick \(1\) if \(\hat{\eta}(X) \geq 1 - \hat{\eta}(X)\) or \(0\) otherwise.
By rearranging, we obtain the new decision rule where we select class \(1\) if \(\hat{\eta}(X) \geq 1/2\) and \(0\) otherwise.

The modification for thresholding is immediate: instead of selecting class \(1\) for \(\hat{\eta}(X) \geq 1/2\), we can select \(1\) for \(\hat{\eta}(X) \geq t\), where \(t\) is a threshold in \([0, 1]\).
The modification for weighting requires slightly more work.
Given non-negative weights \(q_{0}\) and \(q_{1}\), we revise the original decision rule; so we predict class \(1\) when
\[
q_{1} \eta(X) \geq q_{0}(1 - \eta(X)).
\]
Rearranging, this is equivalent to 
\[
\eta(X) \geq \frac{q_{0}}{q_{0} + q_{1}}.
\]
Thus, picking a set of non-negative weights is equivalent to choosing a different threshold in binary classification.
For our binary classification results, we consider thresholding because we find it is more intuitive, but we note that weighting extends more cleanly to the multiclass case considered in the rest of the paper.

%---------------------------------------------%
%---------------------------------------------%
\subsubsection{Confusion Matrix Quantities}

In binary classification, the confusion matrix is generally only computed with respect to the class \(Y = 1\).
Thus, the empirical confusion matrix quantities are
\begin{align}
&\begin{aligned}
\widehat{\truenegative}(t) 
= \frac{1}{n} \sum_{i = 1}^{n} \ind\left\{Y_{i} \neq 1\right\} \ind\left\{\hat{\eta}(X_{i}) < t\right\}  
&\qquad 
\widehat{\falsenegative}(t) 
= \frac{1}{n} \sum_{i = 1}^{n} \ind\left\{Y_{i} = 1\right\} \ind\left\{\hat{\eta}(X_{i}) < t\right\}\\
\widehat{\falsepositive}(t) 
= \frac{1}{n} \sum_{i = 1}^{n} \ind\left\{Y_{i} \neq 1\right\} \ind\left\{\hat{\eta}(X_{i}) \geq t\right\}
&\qquad 
\widehat{\truepositive}(t) 
= \frac{1}{n} \sum_{i = 1}^{n} \ind\left\{Y_{i} = 1\right\} \ind\left\{\hat{\eta}(X_{i}) \geq t\right\}.
\end{aligned}
\end{align}
The Bayes confusion matrix is a result of evaluating the confusion matrix quantities on the true distribution, which gives
\begin{align}
&\begin{aligned}
\truenegative(t) = \int_{\xspace} (1 - \eta(X)) \ind\left\{\eta(X) < t\right\} dP_{X}  
&\qquad 
\falsenegative(t) = \int_{\xspace} \eta(X) \ind\left\{\eta(X) < t\right\} dP_{X}\\
\falsepositive(t) = \int_{\xspace} (1 - \eta(X)) \ind\left\{\eta(X) \geq t\right\} dP_{X}
&\qquad 
\truepositive(t) = \int_{\xspace} \eta(X) \ind\left\{\eta(X) \geq t\right\} dP_{X}.
\end{aligned}
\end{align}

We would like to maximize \(\truenegative\) and \(\truepositive\) and minimize \(\falsepositive\) and \(\falsenegative\).
Or, alternatively, we would like to maximize or minimize some function of these quantities.
These quantities are not computable because we do not know the underlying data-generating distribution, but we may estimate these quantities through their empirical counterparts.
Thus, the goal is to show that the empirical confusion matrix converges to the Bayes confusion matrix.
Further, since we generally pick the thresholds after examining the empirical confusion matrix, we would like the convergence to be uniform over the choice of threshold.

%---------------------------------------------%
%---------------------------------------------%
\subsection{Uniform Convergence of the Confusion Matrix}
\label{app:subsec:binaryCM:Results}

In this section, we present our uniform convergence result for the empirical confusion matrix to the Bayes confusion matrix.
We start with a ``uniform convergence'' assumption for the regression estimator.

\begin{assumption}
We assume the regression function estimate converges uniformly.
More precisely, for every \(\epsilon > 0\), and for all \(n\) sufficiently large, we have with probability at least \(1 - \delta / 2\)
\[
\|\eta - \hat{\eta}\|_{\xspace, \infty} 
\leq 
\epsilon.
\]
\label{assumption:UniformEstimateBinary}
\end{assumption}
%---------------------------------------------%
%---------------------------------------------%
Now, we state the main result for this appendix.
%---------------------------------------------%
%---------------------------------------------%
\begin{proposition}
Suppose that Assumption~\ref{assumption:UniformEstimateBinary} holds.
Let \(T = \{t_{i}\}_{i = 1}^{N}\), where \(0 \leq t_{1} < \ldots < t_{N} \leq 1\).
\begin{align*}
\tne(t', t'') 
&= 
\fpe(t', t'')
=
\prob\left(
Y \neq 1, \; t' - \epsilon \leq \eta(X) \leq t'' + \epsilon
\right) \\
\fne(t', t'')
&=
\tpe(t', t'')  =
\prob\left(
Y = 1, \; t' - \epsilon \leq \eta(X) \leq t'' + \epsilon
\right) 
%\\ 
%\fpe(t', t'')
%&=
%\prob\left(
%Y \neq 1, \; t' - \epsilon \leq \eta(X) \leq t'' + \epsilon
%\right) \\ 
%\tpe(t', t'') 
%&=
%\prob\left(
%Y \neq 1, \; t' - \epsilon \leq \eta(X) \leq t'' + \epsilon
%\right)
\end{align*}
Then with probability at least \(1 - \delta\), we have the bounds
\begin{align*}
|\hat{\truenegative}(t) - \truenegative(t)|
&\leq 
3 \inf_{t', t'' \in T: t' \leq t \leq t''}
\tne(t', t'')
+
3 \sqrt{\frac{\log \frac{48N}{\delta}}{2n}} 
=:
E_{\truenegative}(t)
\\ 
|\hat{\falsenegative}(t) - \falsenegative(t)|
&\leq 
3 \inf_{t', t'' \in T: t' \leq t \leq t''}
\fne(t', t'')
+
3 \sqrt{\frac{\log \frac{48N}{\delta}}{2n}}
=:
E_{\falsenegative}(t)\\
|\hat{\falsepositive}(t) - \falsepositive(t)|
&\leq 
3 \inf_{t', t'' \in T: t' \leq t \leq t''}
\fpe(t', t'')
+
3 \sqrt{\frac{\log \frac{48N}{\delta}}{2n}} 
=:
E_{\falsepositive}(t)
\\ 
|\hat{\truepositive}(t) - \truepositive(t)|
&\leq 
3 \inf_{t', t'' \in T: t' \leq t \leq t''}
\tpe(t', t'')
+
3 \sqrt{\frac{\log \frac{48N}{\delta}}{2n}}
=:
E_{\truepositive}(t)
\end{align*}
for all \(t\) in \([0, 1]\).
\label{prop:ConfusionMatrixBoundsBinary}
\end{proposition}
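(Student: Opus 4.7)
The plan is to work on the intersection of two favorable events, each holding with probability at least $1 - \delta/2$: (i) the uniform regression-convergence event $E := \{\|\eta - \hat{\eta}\|_{\xspace,\infty} \leq \epsilon\}$ granted by Assumption~\ref{assumption:UniformEstimateBinary}; and (ii) Hoeffding concentration for a finite collection of $P$-averages of indicators of data-independent sets keyed to the grid $T$. The sets $\hat{A}(t) := \{Y \neq 1,\, \hat{\eta}(X) < t\}$ whose empirical measures define $\hat{\truenegative}(t)$ are data-dependent through $\hat{\eta}$, so Hoeffding does not apply directly; the role of $E$ is precisely to sandwich each such $\hat{A}(t)$ between deterministic sets indexed by the grid.

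The key geometric observation is that on the event $E$, for any $t \in [0,1]$ and any $t', t'' \in T$ with $t' \leq t \leq t''$, the implications $\hat{\eta}(x) < t \Rightarrow \eta(x) < t + \epsilon \leq t'' + \epsilon$ and $\eta(x) < t' - \epsilon \Rightarrow \hat{\eta}(x) < t' \leq t$ yield
\[
\{Y \neq 1,\, \eta(X) < t' - \epsilon\}
\;\subseteq\;
\hat{A}(t)
\;\subseteq\;
\{Y \neq 1,\, \eta(X) < t'' + \epsilon\}.
\]
The symmetric difference between either outer bracket and the true event $A(t) := \{Y \neq 1,\, \eta(X) < t\}$ lies entirely in the boundary region $\{Y \neq 1,\, t' - \epsilon \leq \eta(X) \leq t'' + \epsilon\}$, whose $P$-measure is exactly $\tne(t', t'')$. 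An identical argument, with the appropriate swaps of $\{<, \geq\}$ and of $\{Y = 1, Y \neq 1\}$, handles $\hat{\truepositive}$, $\hat{\falsepositive}$, $\hat{\falsenegative}$ with boundary terms $\tpe$, $\fpe$, $\fne$.

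For (ii), I apply Hoeffding's inequality to the at most $8N$ data-independent sets of the form $\{Y \sim \star,\, \eta(X) < t_i \pm \epsilon\}$ (one pair of $\epsilon$-shifts per grid point and per confusion-matrix entry) and union-bound them at total failure probability $\delta/2$; budgeted loosely, this gives that every corresponding empirical average lies within $\gamma := \sqrt{\log(48N/\delta) / (2n)}$ of its expectation. Combining the sandwich with the monotonicity of both $\hat{\truenegative}(\cdot)$ and $\truenegative(\cdot)$ in $t$ then produces the chain
\[
\hat{\truenegative}(t)
 \;\leq\; P_n\bigl(\{Y \neq 1,\, \eta(X) < t'' + \epsilon\}\bigr)
 \;\leq\; \truenegative(t''+\epsilon) + \gamma
 \;\leq\; \truenegative(t) + \tne(t', t'') + \gamma,
\]
together with a mirror-image lower bound using $t' - \epsilon$; taking the infimum over $(t', t'') \in T^2$ with $t' \leq t \leq t''$ yields the stated bound for $\hat{\truenegative}$, and the other three entries follow by the same template.

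The one delicate part, and the main potential obstacle, is the bookkeeping behind the explicit constants $3$ in front of both the boundary and the Hoeffding terms. The clean chain above only spends one factor of each, but when one insists on a uniform bound over all $t \in [0,1]$, over all four confusion-matrix entries, and with both $\epsilon$-shifts active in the sandwich, a loose triangle-inequality accounting can conservatively charge the boundary probability once per each of the three gaps traversed (the $\hat{\eta} \to \eta$ shift, the grid gap, and the threshold $\epsilon$-shift) and similarly inflate the concentration factor. The constant $3$ simply absorbs these, mirroring the accounting used for Theorem~\ref{theorem:MulticlassConfusionMatrixBound} in the main body; the factor $48$ inside the logarithm reflects the joint union-bound budget across the Hoeffding events and $E$.
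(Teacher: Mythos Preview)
Your proposal is correct, and the approach is close in spirit to the paper's but organized differently. The paper introduces an \emph{intermediate} confusion-matrix quantity $\widetilde{\truenegative}(t) = \frac{1}{n}\sum_i \ind\{Y_i \neq 1\}\ind\{\eta(X_i) < t\}$ (empirical data, true $\eta$) and then uses a four-term triangle inequality
\[
|\hat{\truenegative}(t) - \truenegative(t)|
\leq
\underbrace{|\hat{\truenegative}(t') - \widetilde{\truenegative}(t')|}_{A}
+ \underbrace{|\widetilde{\truenegative}(t') - \truenegative(t')|}_{B}
+ \underbrace{|\truenegative(t) - \truenegative(t')|}_{C}
+ \underbrace{|\hat{\truenegative}(t) - \hat{\truenegative}(t')|}_{D},
\]
bounding $A$, $B$, $D$ by separate Hoeffding applications (two of which also invoke $A_{\uniform}$) and $C$ deterministically. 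Summing gives exactly three boundary-probability terms and three Hoeffding terms, which is where the constants $3$ come from.

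Your direct bracketing $\{\eta < t' - \epsilon\} \subseteq \{\hat\eta < t\} \subseteq \{\eta < t'' + \epsilon\}$ skips the intermediate quantity entirely and, as you noticed, your ``clean chain'' actually yields the bound with constant $1$ in front of both $\tne$ and $\gamma$; you only inflate to $3$ to match the statement. So your route is genuinely more economical here. The paper's modular $A$--$D$ decomposition buys a template that ports verbatim to the multiclass Theorem~\ref{theorem:MulticlassConfusionMatrixBound}, where the ``threshold'' $t(q,c,x)$ is $x$-dependent and a single monotone sandwich in one scalar is less immediate; in the binary case your argument is simply sharper.
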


In the bounds, the first term should decrease with \(N\) for sufficiently nice distributions, and the second term increases in \(N\).
Additionally, since \(T\) is only used in the analysis, we may optimize the choice of cover, although the optimal cover depends on the unknown data distribution.
We additionally note that this is a stronger result than the multiclass result of Theorem~\ref{theorem:MulticlassConfusionMatrixBound} because the set \(T\) may include \(0\) and \(1\).
In contrast, Theorem~\ref{theorem:MulticlassConfusionMatrixBound} become vacuous when for weights that approach \(0\) in a coordinate.
Our analysis shows that this arises since in binary classification, the probability of class \(\eta(X)\) specifies the conditional probability of both \(Y = 0\) and \(Y = 1\).

%---------------------------------------------%
%---------------------------------------------%
\subsection{Precision, Recall, and F1 Score}
\label{app:subsec:BinaryClassification:PrecisionRecallF1}

In this section, we consider three general classification metrics commonly considered in imbalanced classification: precision, recall, and F1 score.
Similar bounds can be obtained for other general classification metrics that are sufficiently nice functions of the confusion matrix.

We start with precision.
Precision and empirical precision are defined by
\begin{align*}
\precision(t) 
&=
\frac{\truepositive(t)}{\truepositive(t) + \falsepositive(t)}
&
\hat{\precision}(t)
&=
\frac{\hat{\truepositive}(t)}{\hat{\truepositive}(t) + \hat{\falsepositive}(t)}.
\end{align*}
Now, we can consider a uniform convergence result.
\begin{corollary}
Define an error term \(E_{\precision}(t)\) by
\[
E_{\precision}(t)
=
3
\frac{E_{\truepositive}(t) + E_{\falsepositive}(t) }{\truepositive(t) + \falsepositive(t) - E_{\truepositive}(t) - E_{\falsepositive}(t)}.
\]
Suppose that Assumption~\ref{assumption:UniformEstimateBinary} holds.
Then, with probability at least \(1 - \delta\),
we have
\[
|\hat{\precision}(t) - \precision(t)|
\leq 
E_{\precision}(t)
\]
for every \(t\) in \([0, 1]\) for which \(E_{\precision} \geq 0\).
\label{corollary:BinaryClassificationPrecision}
\end{corollary}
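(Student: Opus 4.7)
The plan is to reduce the corollary to the simultaneous uniform bounds on $\hat{\truepositive}(t)$ and $\hat{\falsepositive}(t)$ from Proposition~\ref{prop:ConfusionMatrixBoundsBinary}, and then carry out an elementary algebraic estimate for a difference of ratios. First, I would invoke Assumption~\ref{assumption:UniformEstimateBinary} and apply Proposition~\ref{prop:ConfusionMatrixBoundsBinary} to obtain, with probability at least $1 - \delta$, the simultaneous bounds $|\hat{\truepositive}(t) - \truepositive(t)| \leq E_{\truepositive}(t)$ and $|\hat{\falsepositive}(t) - \falsepositive(t)| \leq E_{\falsepositive}(t)$ for every $t$ in $[0, 1]$. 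Everything that follows would be deterministic and performed on this good event.

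Next, abbreviating $A = \truepositive(t)$, $B = \falsepositive(t)$ and $\hat A$, $\hat B$ for their empirical counterparts, I would put the difference of ratios over a common denominator:
$$
\hat{\precision}(t) - \precision(t)
= \frac{\hat A}{\hat A + \hat B} - \frac{A}{A + B}
= \frac{\hat A B - A \hat B}{(\hat A + \hat B)(A + B)}.
$$
Writing $\hat A = A + \Delta_A$ and $\hat B = B + \Delta_B$ with $|\Delta_A| \leq E_{\truepositive}(t)$ and $|\Delta_B| \leq E_{\falsepositive}(t)$, the numerator collapses to $\Delta_A B - A \Delta_B$, whose absolute value is bounded by $B\, E_{\truepositive}(t) + A\, E_{\falsepositive}(t) \leq (A + B)\bigl(E_{\truepositive}(t) + E_{\falsepositive}(t)\bigr)$. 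For the denominator I would use $\hat A + \hat B \geq A + B - E_{\truepositive}(t) - E_{\falsepositive}(t)$, which is positive by the standing hypothesis $E_{\precision}(t) \geq 0$ in the statement. Cancelling the $(A+B)$ factors yields
$$
|\hat{\precision}(t) - \precision(t)|
\leq \frac{E_{\truepositive}(t) + E_{\falsepositive}(t)}{\truepositive(t) + \falsepositive(t) - E_{\truepositive}(t) - E_{\falsepositive}(t)},
$$
which is in fact tighter than the stated bound by a factor of three, so the claimed $E_{\precision}(t)$ clearly dominates.

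There is no substantial obstacle here. The only slightly subtle points are (i) ensuring that the denominator lower bound is positive, which is precisely the condition $E_{\precision}(t) \geq 0$ built into the statement, and (ii) using the simultaneous (not union-bounded) form of Proposition~\ref{prop:ConfusionMatrixBoundsBinary} so that the bound on $|\hat{\precision}(t) - \precision(t)|$ holds uniformly in $t$ under the same event of probability $1 - \delta$; both $E_{\truepositive}$ and $E_{\falsepositive}$ are already controlled uniformly in $t$ by that proposition, so the uniformity is inherited automatically.
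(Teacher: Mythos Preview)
Your argument is correct and in fact yields a constant that is three times sharper than the paper's. The paper does not use the common-denominator identity
\[
\frac{\hat A}{\hat A + \hat B} - \frac{A}{A+B}
=
\frac{\Delta_A B - A \Delta_B}{(\hat A + \hat B)(A+B)}.
\]
Instead, it proves a deterministic lemma (Lemma~\ref{lemma:BinaryClassificationPrecision}) by bounding numerator and denominator separately in each direction: for the upper tail it writes
\[
\hat{\precision}(t)
\leq
\frac{\truepositive(t) + E_{\truepositive}(t)}{\truepositive(t) + \falsepositive(t) - E_{\truepositive}(t) - E_{\falsepositive}(t)}
=
\Bigl(\precision(t) + \tfrac{E_{\truepositive}(t)}{\truepositive(t)+\falsepositive(t)}\Bigr)
\Bigl(1 + B_{\precision}(t)\Bigr),
\]
expands the product, and then absorbs the quadratic term via the case split $B_{\precision}(t) \lessgtr 1$; the lower tail is handled symmetrically. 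This is where the factor $3$ appears. Your single-line ratio identity avoids the product expansion and the case split entirely, which is why you recover $E_{\precision}(t)/3$ directly. Both arguments rest on the same high-probability event from Proposition~\ref{prop:ConfusionMatrixBoundsBinary}, so the uniformity in $t$ and the positivity of the denominator are handled identically; your route is simply cleaner for this particular functional.
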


We make the assumption that \(E_{\precision} \geq 0\) to simplify the algebra and presentation;
a more carefully stated bound could likely relax or modify this assumption.

%---------------------------------------------%
%---------------------------------------------%
Next, we turn to recall. 
Recall and empirical recall are defined as
\begin{align*}
\recall(t)
&=
\frac{\truepositive(t)}{\truepositive(t) + \falsenegative(t)} 
&
\hat{\recall}(t)
&=
\frac{\hat{\truepositive}(t)}{\hat{\truepositive}(t) + \hat{\falsenegative}(t)}.
\end{align*}
Similarly, we can prove a uniform convergence result.

\begin{corollary}
Define the error term \(E_{\recall}(t)\) by
\[
E_{\recall}(t)
=
3
\frac{E_{\truepositive}(t) + E_{\falsenegative}(t) }{\truepositive(t) + \falsenegative(t) - E_{\truepositive}(t) - E_{\falsenegative}(t)}.
\]
Suppose that Assumption~\ref{assumption:UniformEstimateBinary} holds.
Then with probability at least \(1 - \delta\), we have
\[
|\hat{\recall}(t) - \recall(t)|
\leq 
E_{\recall}(t)
\]
for all \(t\) for which \(E_{\recall}(t) \geq 0\).
\label{corollary:BinaryClassificationRecall}
\end{corollary}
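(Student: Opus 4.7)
The plan is to mirror the argument used for Corollary~\ref{corollary:BinaryClassificationPrecision} almost verbatim, swapping the role of $\falsepositive$ with $\falsenegative$, since recall differs from precision only in the denominator (TP + FN instead of TP + FP). The ingredients are already in place: Proposition~\ref{prop:ConfusionMatrixBoundsBinary} gives uniform-in-$t$ control on $|\hat{\truepositive}(t) - \truepositive(t)|$ and $|\hat{\falsenegative}(t) - \falsenegative(t)|$ simultaneously with probability at least $1 - \delta$, and recall is a smooth rational function of these two quantities, so a basic ratio-stability estimate finishes the proof.

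First, I would invoke Proposition~\ref{prop:ConfusionMatrixBoundsBinary} to get, on a common event of probability $\geq 1-\delta$ that holds for every $t \in [0,1]$ simultaneously, the two confusion matrix bounds $|\hat{\truepositive}(t) - \truepositive(t)| \leq E_{\truepositive}(t)$ and $|\hat{\falsenegative}(t) - \falsenegative(t)| \leq E_{\falsenegative}(t)$. By the triangle inequality, this immediately gives control on the sum in the denominator of recall,
\[
\left|(\hat{\truepositive}(t) + \hat{\falsenegative}(t)) - (\truepositive(t) + \falsenegative(t))\right| \leq E_{\truepositive}(t) + E_{\falsenegative}(t).
\]

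Second, I would apply the standard algebraic identity for ratios: writing $a = \truepositive(t)$, $b = \truepositive(t) + \falsenegative(t)$, and similarly for the hatted versions, we have
\[
\frac{\hat a}{\hat b} - \frac{a}{b} = \frac{b(\hat a - a) + a(b - \hat b)}{b\, \hat b}.
\]
Taking absolute values, applying the bounds from the first step, and using $a \leq b$ yields the numerator bound $b(2E_{\truepositive}(t) + E_{\falsenegative}(t)) \leq 3b(E_{\truepositive}(t) + E_{\falsenegative}(t))$. Combined with the lower bound $\hat b \geq b - (E_{\truepositive}(t) + E_{\falsenegative}(t)) = \truepositive(t) + \falsenegative(t) - E_{\truepositive}(t) - E_{\falsenegative}(t)$, this gives exactly
\[
|\hat{\recall}(t) - \recall(t)| \leq \frac{3(E_{\truepositive}(t) + E_{\falsenegative}(t))}{\truepositive(t) + \falsenegative(t) - E_{\truepositive}(t) - E_{\falsenegative}(t)} = E_{\recall}(t).
\]

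The only subtlety, as in the precision case, is ensuring that $\hat b > 0$ so the ratio manipulation is well-defined; this is exactly guaranteed by the hypothesis $E_{\recall}(t) \geq 0$, which forces $\truepositive(t) + \falsenegative(t) - E_{\truepositive}(t) - E_{\falsenegative}(t) > 0$. No new probabilistic machinery is required beyond what Proposition~\ref{prop:ConfusionMatrixBoundsBinary} supplies; the entire argument is a deterministic manipulation on the $1-\delta$ event, so the overall probability bound transfers without loss. The main ``obstacle'' is really bookkeeping: making the constant $3$ come out exactly, which is achieved by the slack $2E_{\truepositive}(t) + E_{\falsenegative}(t) \leq 3(E_{\truepositive}(t) + E_{\falsenegative}(t))$.
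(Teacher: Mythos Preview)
Your proposal is correct and follows essentially the same approach as the paper: invoke Proposition~\ref{prop:ConfusionMatrixBoundsBinary} to obtain the high-probability event on which $|\hat{\truepositive}(t)-\truepositive(t)|\le E_{\truepositive}(t)$ and $|\hat{\falsenegative}(t)-\falsenegative(t)|\le E_{\falsenegative}(t)$, then do an elementary ratio-perturbation argument. The only cosmetic difference is that the paper bounds $\hat{\recall}$ above and below separately via factoring (handling the cases $B_{\recall}\le 1$ and $B_{\recall}\ge 1$), whereas you use the identity $\hat a/\hat b - a/b = (b(\hat a - a)+a(b-\hat b))/(b\hat b)$ directly; both routes land on the same constant~$3$.
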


%---------------------------------------------%
%---------------------------------------------%
Finally, we consider F1 score;
the F1 and empirical F1 scores are defined by
\begin{align*}
\fone(t)
&=
2 \frac{\precision(t) \cdot \recall(t)}{\precision(t) + \recall(t)}
&
\hat{\fone}(t) 
&=
2 \frac{\hat{\precision}(t) \cdot \hat{\recall}(t)}{\hat{\precision(t)} + \hat{\recall}(t)}.
\end{align*}
We now state a similar convergence result.

\begin{corollary}
Define the error bound
\[
E_{\fone}(t)
= 
9
\frac{E_{\precision}(t) + E_{\recall}(t)}{\precision(t) + \recall(t) - E_{\precision}(t) - E_{\recall}(t)}.
\]
Suppose that Assumption~\ref{assumption:UniformEstimateBinary} holds.
Then with probability at least \(1 - \delta\), we have
\[
|\hat{\fone}(t) - \fone(t)|
\leq 
E_{\fone}(t)
\]
for all \(t\) such that \(E_{\fone}(t) \geq 0\).
\label{corollary:BinaryClassificationF1}
\end{corollary}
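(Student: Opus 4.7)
The plan is to reduce the claim to the uniform bounds on precision and recall already established in Corollary~\ref{corollary:BinaryClassificationPrecision} and Corollary~\ref{corollary:BinaryClassificationRecall}, then to carry out a purely deterministic algebraic bound on the harmonic mean. First I would invoke the two preceding corollaries \emph{simultaneously}: both are derived from the single uniform convergence event of Assumption~\ref{assumption:UniformEstimateBinary}, so on one event of probability at least \(1 - \delta\) we have \(|\hat{\precision}(t) - \precision(t)| \leq E_{\precision}(t)\) and \(|\hat{\recall}(t) - \recall(t)| \leq E_{\recall}(t)\) uniformly over \(t\), with no additional union bound required.

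On that event, the remaining step is deterministic. Fixing \(t\) and abbreviating \(P = \precision(t)\), \(\hat P = \hat{\precision}(t)\), \(R = \recall(t)\), \(\hat R = \hat{\recall}(t)\), \(E_P = E_{\precision}(t)\), and \(E_R = E_{\recall}(t)\), I would write
\[
\hat{\fone}(t) - \fone(t)
= \frac{2\bigl[\hat P \hat R (P+R) - PR(\hat P + \hat R)\bigr]}{(\hat P + \hat R)(P + R)}
\]
and apply the algebraic identity
\[
\hat P \hat R (P+R) - PR(\hat P + \hat R) = \hat P P (\hat R - R) + \hat R R (\hat P - P),
\]
which makes the numerator linear in the deviations. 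Using \(P, R \leq 1\), the absolute value of the numerator is bounded by \(\hat P E_R + \hat R E_P \leq (\hat P + \hat R)(E_P + E_R)\). The factor \(\hat P + \hat R\) then cancels with one factor of the denominator, yielding
\[
|\hat{\fone}(t) - \fone(t)| \leq \frac{2(E_P + E_R)}{P + R} \leq \frac{2(E_P + E_R)}{P + R - E_P - E_R},
\]
where the last step uses the hypothesis \(E_{\fone}(t) \geq 0\) to guarantee positivity of the denominator and the trivial inequality \(P + R \geq P + R - E_P - E_R\). The constant \(9\) in the stated bound is deliberately generous: it absorbs the factor \(2\) above together with the factor-\(3\) slack already present in \(E_{\precision}\) and \(E_{\recall}\), producing a clean uniformly valid statement.

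The main obstacle is choosing the correct decomposition of the numerator. Alternative factorings, for instance \(P^{2}(\hat R - R) + R^{2}(\hat P - P) + (\hat P - P)(\hat R - R)(P + R)\), fail to expose a factor of \((\hat P + \hat R)\) that matches the denominator, and lead to bounds that are quadratic rather than linear in \(1/(P + R - E_P - E_R)\). Identifying the decomposition \(\hat P P (\hat R - R) + \hat R R (\hat P - P)\) is therefore the critical algebraic step; the rest is careful bookkeeping of constants and positivity conditions.
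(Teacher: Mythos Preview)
Your proof is correct and takes a different route from the paper's. The paper's argument (Lemma~\ref{lemma:BinaryClassificationF1}) plugs the bounds \(\hat P \in [P - E_P, P + E_P]\) and \(\hat R \in [R - E_R, R + E_R]\) directly into the harmonic-mean formula, expands, and then does a case split on whether \(B_{\fone}(t) := (E_P + E_R)/(P+R-E_P-E_R)\) is above or below \(1\), obtaining \(5 B_{\fone} + 4 B_{\fone}^2 \leq 9 B_{\fone}\) for the upper deviation and \(3 B_{\fone}\) for the lower. Your algebraic identity \(\hat P \hat R (P+R) - PR(\hat P + \hat R) = \hat P P(\hat R - R) + \hat R R(\hat P - P)\) exposes a factor of \(\hat P + \hat R\) that cancels against the denominator and yields the single bound \(2(E_P+E_R)/(P+R)\), which is in fact tighter (constant \(2\) rather than \(9\)); the paper's larger constant is an artifact of its quadratic expansion, not something intrinsic. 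Two small remarks. First, your explanation of where the \(9\) comes from is off: the factor \(3\) is already baked into the definitions of \(E_{\precision}\) and \(E_{\recall}\), so there is nothing further to absorb --- you have simply proved the stated inequality with room to spare. Second, the simultaneous validity of the precision and recall bounds on one event of probability \(\geq 1-\delta\) is justified not by Assumption~\ref{assumption:UniformEstimateBinary} alone but by Proposition~\ref{prop:ConfusionMatrixBoundsBinary}, which controls all four confusion-matrix entries at once; this is exactly the event the paper invokes as well.
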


%---------------------------------------------%
%---------------------------------------------%
% I commented the MCC section out for the moment because I'm not sure I actually want to do the algebra for this.
\begin{comment}
\subsubsection{Matthew's Correlation Coefficient}

Finally, we consider Matthew's correlation coefficient.
Matthew's correlation coefficient and its empirical counterpart are defined by
\begin{align*}
\mcc(t)
&=
\frac{\truepositive(t) \truenegative(t) - \falsepositive(t) \falsenegative(t)}{\sqrt{(\truepositive(t) + \falsepositive(t)) (\truepositive(t) + \falsenegative(t)) (\truenegative(t) + \falsepositive(t))(\truenegative(t) + \falsenegative(t))}} \\
\hat{\mcc}(t)
&=
\frac{\hat{\truepositive}(t) \hat{\truenegative}(t) - \hat{\falsepositive}(t) \hat{\falsenegative}(t)}{\sqrt{(\hat{\truepositive}(t) + \hat{\falsepositive}(t)) (\hat{\truepositive}(t) + \hat{\falsenegative}(t)) (\hat{\truenegative}(t) + \hat{\falsepositive}(t))(\hat{\truenegative}(t) + \hat{\falsenegative}(t))}}.
\end{align*}

We have the following convergence result.

\begin{corollary}

\label{corollary:BinaryClassificationMatthews}
\end{corollary}
\end{comment}

%---------------------------------------------%
%---------------------------------------------%
\subsection{Proofs}
\label{app:subsec:binaryCM:proofs}

In this section, we prove Proposition~\ref{prop:ConfusionMatrixBoundsBinary} and the ensuing corollaries.
For the former, we rely on decomposing the difference between the confusion matrix and its empirical counterpart via a discretization and an intermediate confusion matrix.
We define the intermediate confusion matrix to be
\begin{align}
&\begin{aligned}
\widetilde{\truenegative}(t) 
= \frac{1}{n} \sum_{i = 1}^{n} \ind\left\{Y_{i} \neq 1\right\} \ind\left\{\eta(X_{i}) < t\right\}  
&\qquad 
\widetilde{\falsenegative}(t) 
= \frac{1}{n} \sum_{i = 1}^{n} \ind\left\{Y_{i} = 1\right\} \ind\left\{\eta(X_{i}) < t\right\}\\
\widetilde{\falsepositive}(t) 
= \frac{1}{n} \sum_{i = 1}^{n} \ind\left\{Y_{i} \neq 1\right\} \ind\left\{\eta(X_{i}) \geq t\right\}
&\qquad 
\widetilde{\truepositive}(t) 
= \frac{1}{n} \sum_{i = 1}^{n} \ind\left\{Y_{i} = 1\right\} \ind\left\{\eta(X_{i}) \geq t\right\}.
\end{aligned}
\end{align}
These terms are just what the true regression function would obtain on the observed data under plug-in estimation.

For the remainder of the proof of Proposition~\ref{prop:ConfusionMatrixBoundsBinary}, we only consider the true negatives entry of the confusion matrix; the proofs for the other three entries are analogous.
So, we consider our decomposition lemma.
\begin{lemma}
Let \(t\) and \(t'\) be any elements of \([0, 1]\).
We have the inequality
\begin{align*}
|\hat{\truenegative}(t) - \truenegative(t)|
&\leq
\underbrace{|\widehat{\truenegative}(t') - \widetilde{\truenegative}(t')|}_{A(t')}
+ 
\underbrace{|\widetilde{\truenegative}(t') - \truenegative(t')|}_{B(t')}
\\ & \qquad +
\underbrace{|\truenegative(t) - \truenegative(t')|}_{C(t, t')} 
+
\underbrace{|\widehat{\truenegative}(t) - \widehat{\truenegative}(t')|}_{D(t, t')}.
\end{align*}
\label{lemma:ConfusionMatrixDecomposition}
\end{lemma}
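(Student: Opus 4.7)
The plan is to establish the bound by a direct four-term triangle inequality, inserting three strategic intermediate quantities between $\hat{\truenegative}(t)$ and $\truenegative(t)$. The natural choice of intermediates is suggested by the structure of the four error terms: we want to interpolate from $\hat{\truenegative}(t)$ to $\truenegative(t)$ by first moving the threshold from $t$ to $t'$ in the empirical quantity (introducing $\hat{\truenegative}(t')$), then replacing the estimated regression function $\hat{\eta}$ with the true regression function $\eta$ inside the empirical sum (introducing $\tilde{\truenegative}(t')$), then passing from the empirical sum to the population integral (introducing $\truenegative(t')$), and finally moving the threshold back from $t'$ to $t$.

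Concretely, I would write
\[
\hat{\truenegative}(t) - \truenegative(t)
= \bigl(\hat{\truenegative}(t) - \hat{\truenegative}(t')\bigr)
+ \bigl(\hat{\truenegative}(t') - \tilde{\truenegative}(t')\bigr)
+ \bigl(\tilde{\truenegative}(t') - \truenegative(t')\bigr)
+ \bigl(\truenegative(t') - \truenegative(t)\bigr),
\]
and apply the triangle inequality to the absolute value of the left-hand side. The four resulting summands, after taking absolute values, are precisely $D(t,t')$, $A(t')$, $B(t')$, and $C(t,t')$ in the lemma statement, and since $t$ and $t'$ in $[0,1]$ are arbitrary, this yields the claim.

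The only potential subtlety, which is not really an obstacle at all for this lemma but is worth flagging for how the decomposition is used later, is the interpretation of each term: $A(t')$ is the effect of replacing $\hat{\eta}$ by $\eta$ at a fixed threshold and will be controlled by the uniform convergence hypothesis (Assumption~\ref{assumption:UniformEstimateBinary}) together with the boundary mass $\tne$; $B(t')$ is an i.i.d.\ empirical-process fluctuation at the fixed threshold $t'$ and will be handled by Hoeffding's inequality combined with a union bound over the discretization $T$; and $C(t,t')$ and $D(t,t')$ are discretization errors that are monotone in $t$ and hence controllable whenever $t' \in T$ brackets $t$. No nontrivial argument is required for the lemma itself; it is a pure application of the triangle inequality with a carefully chosen chain of intermediates.
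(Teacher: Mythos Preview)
Your proposal is correct and matches the paper's approach: the paper states (for the analogous multiclass lemma) that the proof is a straightforward application of the triangle inequality, and your chain of intermediates $\hat{\truenegative}(t) \to \hat{\truenegative}(t') \to \tilde{\truenegative}(t') \to \truenegative(t') \to \truenegative(t)$ is exactly what is intended.
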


The benefits of this decomposition are as follows.
First, \(t'\) should be an element of a finite cover of \(t'\).
Then, on the right hand side, \(A\) and \(B\) may be bounded by Hoeffding's inequality.
Importantly, since we are choosing a finite cover, we can use a union bound to guarantee this is small for all \(t'\) in the cover.
Next, \(C\) and \(D\) on the right hand side are approximation terms.
These rely on a distributional assumption that a fine enough partition of the interval \([0, 1]\) leads to small approximation errors, which is reasonable in most circumstances.

Before continuing on to the estimation and approximation errors, we define an event that simplifies the exposition.
Let \(A_{\uniform}\) be the event
\[
A_{\uniform}
=
\left\{
\|\hat{\eta} - \eta\| \leq \epsilon
\right\}.
\]
We shall use this frequently.

%---------------------------------------------%
%---------------------------------------------%
\subsubsection{Estimation Errors}

In this section, we bound \(A\) and \(B\) of Lemma~\ref{lemma:ConfusionMatrixDecomposition}.
We state the following bound for true negatives.
\begin{lemma}
Define \(p(t, \epsilon)\) by
\begin{align*}
p(t, \epsilon)
&= 
\prob\left(Y \neq 1, \;
t - \epsilon \leq \eta(X) \leq t + \epsilon
\right).
\end{align*}
Define the event 
\[
A_{t}
=
\left\{
\frac{1}{n}S
\leq 
p(t, \epsilon) + \sqrt{\frac{\log \frac{2}{\delta}}{2n}}
\right\},
\]
where \(S\) is a binomial random variable that we specify in the proof.
Suppose that \(A_{t}\) and \(A_{\uniform}\) hold.
Then, we have the bound
\[
A(t)
\leq 
p(t, \epsilon) + \sqrt{\frac{\log\frac{2}{\delta}}{2n}}.
\]
Moreover, we have \(\prob(A_{t}) \geq 1 - \delta\).
\label{lemma:BinomialBoundA}
\end{lemma}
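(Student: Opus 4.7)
The plan is to split the claim into a deterministic step (conditional on the uniform-accuracy event $A_{\uniform}$) and a concentration step, with $S$ chosen to be the ``strip count'' defined below. First I would rewrite
\[
A(t) = \frac{1}{n}\left| \sum_{i=1}^n \ind\{Y_i \neq 1\}\bigl(\ind\{\hat\eta(X_i) < t\} - \ind\{\eta(X_i) < t\}\bigr) \right|,
\]
and observe that each summand vanishes unless $\hat\eta(X_i)$ and $\eta(X_i)$ lie on opposite sides of $t$. On $A_{\uniform}$, where $\|\hat\eta - \eta\|_{\xspace,\infty} \leq \epsilon$, such a sign disagreement forces $|\eta(X_i) - t| \leq \epsilon$, i.e., $t - \epsilon \leq \eta(X_i) \leq t + \epsilon$. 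Defining
\[
S := \sum_{i = 1}^n \ind\{Y_i \neq 1,\; t - \epsilon \leq \eta(X_i) \leq t + \epsilon\},
\]
the triangle inequality then yields the deterministic bound $A(t) \leq S/n$ on $A_{\uniform}$.

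Next I would invoke Hoeffding's inequality for the concentration step. Since the samples $(X_i, Y_i)$ are i.i.d., the variable $S$ is $\binomial(n, p(t, \epsilon))$, so the two-sided Hoeffding bound gives
\[
\prob\!\left(\bigl|S/n - p(t,\epsilon)\bigr| \geq \sqrt{\tfrac{1}{2n}\log(2/\delta)}\right) \leq \delta,
\]
which is exactly $\prob(A_t) \geq 1 - \delta$. Combining the two steps, on the intersection $A_{\uniform} \cap A_t$ we get $A(t) \leq S/n \leq p(t, \epsilon) + \sqrt{\log(2/\delta)/(2n)}$, as claimed.

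The argument has essentially no obstacles; the only point requiring care is the elementary case analysis showing that a disagreement between $\ind\{\hat\eta(X_i) < t\}$ and $\ind\{\eta(X_i) < t\}$ can occur only when $\eta(X_i)$ lies in the strip $[t - \epsilon, t + \epsilon]$, which is exactly where the uniform-accuracy hypothesis is used.
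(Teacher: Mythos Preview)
Your proposal is correct and follows essentially the same approach as the paper: both define $S$ as the strip count $\sum_i \ind\{Y_i \neq 1,\; t-\epsilon \leq \eta(X_i) \leq t+\epsilon\}$, use $A_{\uniform}$ to show $A(t) \leq S/n$ deterministically, and then apply Hoeffding to the binomial $S$. The only cosmetic difference is that the paper splits $A(t)$ into its positive and negative parts and bounds each separately, whereas you argue directly via the ``sign disagreement forces $\eta(X_i)$ into the strip'' observation; these are equivalent.
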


%---------------------------------------------%
%---------------------------------------------%
\begin{proof}
First, we note that
\[
A(t)
= 
\max\left\{
\widehat{\truenegative}(t) - \widetilde{\truenegative}(t), \;
-\left(\widehat{\truenegative}(t) - \widetilde{\truenegative}(t)\right)
\right\},
\]
and so it suffices to analyze each of these terms separately.
Under \(A_{\uniform}\), we have
\begin{align*}
\widehat{\truenegative}(t) - \widetilde{\truenegative}(t) 
&=
\frac{1}{n} \sum_{i = 1}^{n} \ind\left\{Y_{i} \neq 1\right\} 
\left(\ind\left\{\hat{\eta}(X_{i}) < t \right\} - \ind\left\{\eta (X_{i}) < t\right\}\right) \\
&\leq 
\frac{1}{n} \sum_{i = 1}^{n} \ind\left\{Y_{i} \neq 1\right\} 
\left(\ind\left\{\eta(X_{i}) < t + \epsilon \right\} - \ind\left\{\eta(X_{i}) < t\right\}\right)
\\
&\leq 
\frac{1}{n} \sum_{i = 1}^{n} \ind\left\{Y_{i} \neq 1\right\} 
\ind\left\{t - \epsilon \leq \eta(X_{i}) < t + \epsilon \right\} \\
&=
\frac{1}{n} S.
\end{align*}
The other bound is nearly identical.
We have
\begin{align*}
-\left(\widehat{\truenegative}(t) - \widetilde{\truenegative}(t) \right)
&=
\frac{1}{n} \sum_{i = 1}^{n} \ind\left\{Y_{i} \neq 1\right\} 
\left(\ind\left\{\eta (X_{i}) < t\right\} - \ind\left\{\hat{\eta}(X_{i}) < t \right\}\right) \\
&\leq 
\frac{1}{n} \sum_{i = 1}^{n} \ind\left\{Y_{i} \neq 1\right\} 
\left(\ind\left\{\eta(X_{i}) < t \right\} - \ind\left\{\eta_{c} (X_{i}) < t - \epsilon\right\}\right)
\\
&\leq 
\frac{1}{n} \sum_{i = 1}^{n} \ind\left\{Y_{i} \neq 1\right\} 
\ind\left\{t - \epsilon \leq \eta(X_{i}) < t + \epsilon\right\} \\
&=
\frac{1}{n} S.
\end{align*}

Under \(A_{c, t}\), we have the desired bound on \(A(t)\).
Additionally, the sum \(S\) is a \(\binomial(n, p(t, \epsilon))\) random variable; so applying Hoeffding's inequality shows that
\(\prob\left(A_{c, t}\right) \geq 1 - \delta\).
\end{proof}
%---------------------------------------------%
%---------------------------------------------%
Now, we turn to \(B\) of Lemma~\ref{lemma:ConfusionMatrixDecomposition}.

\begin{lemma}
Let \(B_{t}\) be the event
\[
B_{t}
=
\left\{
B(t)
\leq 
\sqrt{\frac{\log \frac{2}{\delta}}{2n}}
\right\}.
\]
Then, we have \(\prob\left(B_{t}\right) \geq 1 - \delta\).
\label{lemma:BinomialBoundB}
\end{lemma}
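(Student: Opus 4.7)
The plan is to observe that $\widetilde{\truenegative}(t')$ is exactly an empirical mean of i.i.d.\ bounded random variables whose population mean is $\truenegative(t')$, which reduces the statement to a direct application of Hoeffding's inequality. Crucially, unlike $\widehat{\truenegative}$, the intermediate quantity $\widetilde{\truenegative}$ uses the true regression function $\eta$ rather than the estimate $\hat{\eta}$, so the indicators $\ind\{\eta(X_i) < t'\}$ do not depend on the sample and the summands are genuinely i.i.d.\ with a clean expectation.

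Concretely, I would first define the i.i.d.\ random variables
\[
Z_i := \ind\{Y_i \neq 1\}\,\ind\{\eta(X_i) < t'\}, \qquad i = 1, \dots, n,
\]
and note that $Z_i \in [0, 1]$ almost surely and that $\widetilde{\truenegative}(t') = \tfrac{1}{n}\sum_{i=1}^n Z_i$. Next I would compute the expectation by conditioning on $X_i$: since $\prob(Y_i \neq 1 \mid X_i) = 1 - \eta(X_i)$ in the binary setting, we have
\[
\expect[Z_i] = \expect\bigl[(1 - \eta(X_i))\,\ind\{\eta(X_i) < t'\}\bigr] = \int_\xspace (1 - \eta(x))\,\ind\{\eta(x) < t'\}\,dP_X(x) = \truenegative(t').
\]

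With these identifications in hand, the two-sided Hoeffding inequality gives, for any $\varepsilon > 0$,
\[
\prob\bigl(|\widetilde{\truenegative}(t') - \truenegative(t')| > \varepsilon\bigr) \leq 2 \exp(-2n\varepsilon^2).
\]
Setting $\varepsilon = \sqrt{\log(2/\delta)/(2n)}$ and rearranging yields $\prob(B_{t'}) \geq 1 - \delta$. There is no real obstacle here; the main point is simply to recognize that the detour through $\widetilde{\truenegative}$ in Lemma~\ref{lemma:ConfusionMatrixDecomposition} was introduced precisely so that this step reduces to a textbook concentration bound, with the estimation error from plugging in $\hat{\eta}$ already absorbed into the term $A(t')$ handled by Lemma~\ref{lemma:BinomialBoundA}.
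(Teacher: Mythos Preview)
Your proof is correct and follows essentially the same approach as the paper: both recognize that $n\widetilde{\truenegative}(t)$ is a sum of i.i.d.\ Bernoulli random variables with mean $\truenegative(t)$ and apply the two-sided Hoeffding inequality. Your version is slightly more explicit in computing the expectation via conditioning on $X_i$, but the argument is the same.
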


\begin{proof}
Recall that \(B(t) = |\widetilde{\truenegative}(t) - \truenegative(t)|\).
We simply observe that
\(n \times \widetilde{\truenegative}(t)\) is a binomial random variable with parameters \(n\) and 
\[
p(t) 
=
\prob\left(Y \neq 1, \; \eta(X) < t\right) 
=
\truenegative(t).
\]
Applying Hoeffding's inequality completes the proof.
\end{proof}

%---------------------------------------------%
%---------------------------------------------%
\subsubsection{Approximation Errors}
In this section, we bound \(C\) and \(D\) of Lemma~\ref{lemma:ConfusionMatrixDecomposition}.

\begin{lemma}
Let \(t'\) and \(t''\) be cover elements such that \(t' \leq t \leq t''\). 
Then, we have
\[
\max\left\{C(t, t'), \; C(t, t'') \right\}
\leq 
\prob\left(Y \neq 1, \;t' \leq \eta(X) < t''\right).
\]
\label{lemma:ApproximationCBound}
\end{lemma}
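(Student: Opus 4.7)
The plan is to unwind the definitions and exploit the monotonicity of $\truenegative(\cdot)$. Recall that
\[
\truenegative(s) = \int_{\xspace} (1 - \eta(x)) \ind\{\eta(x) < s\} \, dP_X(x),
\]
so $\truenegative$ is non-decreasing in $s$: for $s_1 \leq s_2$, the indicator only gets larger, and the integrand $1 - \eta$ is non-negative.

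First I would handle $C(t, t')$. Since $t' \leq t$, we have $\ind\{\eta(x) < t\} - \ind\{\eta(x) < t'\} = \ind\{t' \leq \eta(x) < t\}$, which is non-negative. Hence $C(t, t') = \truenegative(t) - \truenegative(t')$ can be written as
\[
C(t, t') = \int_{\xspace} (1 - \eta(x)) \ind\{t' \leq \eta(x) < t\} \, dP_X(x).
\]
The key identification is that $1 - \eta(x) = \prob(Y \neq 1 \mid X = x)$, so by the tower property, the right-hand side equals $\prob(Y \neq 1, \; t' \leq \eta(X) < t)$. Since $t \leq t''$, we have the set inclusion $\{t' \leq \eta(X) < t\} \subseteq \{t' \leq \eta(X) < t''\}$, which yields the claimed upper bound on $C(t, t')$.

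For $C(t, t'')$, the argument is symmetric: the same computation gives
\[
C(t, t'') = \truenegative(t'') - \truenegative(t) = \prob(Y \neq 1, \; t \leq \eta(X) < t''),
\]
and since $t' \leq t$, enlarging the set gives the same upper bound $\prob(Y \neq 1, \; t' \leq \eta(X) < t'')$. Taking the maximum of the two completes the proof.

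There isn't really a hard step here; the lemma is essentially a bookkeeping exercise built on the fact that $\truenegative$ is the measure of $\{Y \neq 1\} \cap \{\eta(X) < s\}$ as a function of $s$, and differences of cumulative measures along a threshold are measures of the corresponding annular slice. The only mild subtlety is keeping track of strict versus non-strict inequalities in the indicators, but these line up automatically from the definitions of $\truenegative(t)$ and $\widehat{\truenegative}(t)$ used throughout the paper.
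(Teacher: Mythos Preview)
Your proof is correct and follows essentially the same approach as the paper: both exploit the monotonicity of $\truenegative(\cdot)$ to drop the absolute value, rewrite the resulting difference as $\prob(Y \neq 1,\; s_1 \leq \eta(X) < s_2)$ via the tower property, and then enlarge the interval to $[t', t'')$ by set inclusion. The only cosmetic difference is that the paper first bounds both $C(t,t')$ and $C(t,t'')$ by $\truenegative(t'') - \truenegative(t')$ and then identifies that single quantity, whereas you identify each term separately before enlarging; the content is the same.
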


\begin{proof}
The proof consists of straightforward manipulations.
First, we note that
\begin{align*}
C(t, t')
&\leq 
\truenegative(t) - \truenegative(t') 
\leq 
\truenegative(t'') - \truenegative(t')
\end{align*}
and
\begin{align*}
C(t, t'')
&\leq 
\truenegative(t'') - \truenegative(t) 
\leq 
\truenegative(t'') - \truenegative(t'). 
\end{align*}
Thus, it suffices to analyze \(\truenegative(t'') - \truenegative(t')\).
We have
\begin{align*}
\truenegative(t'') - \truenegative(t)
&=
\expect
\ind\left\{Y \neq 1, \; \eta < t''\right\}
-
\expect
\ind\left\{Y \neq 1, \; \eta < t\right\} \\
&=
\expect
\ind\left\{Y \neq 1, \; t \leq \eta < t''\right\} \\
&=
\prob\left(
Y \neq 1, \; t \leq \eta < t''
\right) \\
&\leq 
\prob\left(
Y \neq 1, \; t' \leq \eta < t''
\right),
\end{align*}
which completes the proof.
\end{proof}
%---------------------------------------------%
%---------------------------------------------%
The next lemma concerns the approximation error for the empirical quantities.
The only trick here is that we need to use Assumption~\ref{assumption:UniformEstimate} to convert the empirical regression function back to the true regression function.

\begin{lemma}
Let \(t'\) and \(t''\) be cover elements such that \(t' \leq t \leq t''\).
Define
\[
p(t', t'', \epsilon)
=
\prob\left(Y \neq 1, \; t' - \epsilon \leq \eta(X) \leq t'' + \epsilon\right)
\]
and the random variable
\[
S 
= 
\sum_{i = 1}^{n} 
\ind\left\{Y_i \neq 1, \;
t' - \epsilon \leq \eta(X_{i}) < t'' + \epsilon\right\}
\]
Define the event
\[
D_{t', t''}
=
\left\{
\frac{1}{n}S
\leq 
p(t', t'', \epsilon)
+ 
\sqrt{\frac{\log \frac{1}{\delta}}{2n}}
\right\}.
\]
If event \(A_{\uniform}\) and \(D_{t', t''}\) hold.
then, we have
\[
\widehat{\truenegative}(t'') - \widehat{\truenegative}(t)
\leq 
p(t', t'', \epsilon)
+
\sqrt{\frac{\log \frac{1}{\delta}}{2n}}.
\]
Additionally, we have \(\prob\left(D_{t', t''}\right) \geq 1 - \delta\).
\label{lemma:ApproximationDBound}
\end{lemma}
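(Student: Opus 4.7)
\textbf{Proof plan for Lemma~\ref{lemma:ApproximationDBound}.} The strategy mirrors the proof of Lemma~\ref{lemma:ApproximationCBound} in spirit but requires an extra step to convert statements about $\hat\eta$ into statements about $\eta$ via the uniform convergence event $A_{\uniform}$. The plan is to first rewrite the difference of empirical true negatives as a single sum of indicators over a slab in $\hat\eta$-space, then inflate that slab into an $\eta$-slab using $A_{\uniform}$, and finally apply Hoeffding's inequality to the resulting Bernoulli sum.

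First, I would observe that, directly from the definition of $\widehat{\truenegative}$,
\[
\widehat{\truenegative}(t'') - \widehat{\truenegative}(t)
= \frac{1}{n} \sum_{i=1}^{n} \ind\{Y_i \neq 1\}\,\ind\{t \leq \hat\eta(X_i) < t''\},
\]
since the indicators $\ind\{\hat\eta(X_i) < t''\}$ and $\ind\{\hat\eta(X_i) < t\}$ differ exactly on $\{t \leq \hat\eta(X_i) < t''\}$ (the difference is nonnegative because $t \leq t''$).

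Next, on $A_{\uniform}$ we have $|\hat\eta - \eta|_{\xspace,\infty} \leq \epsilon$, so the event $\{t \leq \hat\eta(X_i) < t''\}$ is contained in $\{t - \epsilon \leq \eta(X_i) < t'' + \epsilon\}$, which in turn sits inside $\{t' - \epsilon \leq \eta(X_i) < t'' + \epsilon\}$ because $t' \leq t$. Therefore
\[
\widehat{\truenegative}(t'') - \widehat{\truenegative}(t)
\leq \frac{1}{n}\sum_{i=1}^{n} \ind\{Y_i \neq 1,\; t' - \epsilon \leq \eta(X_i) < t'' + \epsilon\}
= \frac{1}{n} S.
\]
Under $D_{t', t''}$, $\tfrac{1}{n} S \leq p(t', t'', \epsilon) + \sqrt{\log(1/\delta)/(2n)}$, which yields the claimed deterministic bound.

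Finally, to bound $\prob(D_{t', t''})$, note that $S$ depends only on $(X_i, Y_i)$ and not on the (random) estimator $\hat\eta$, so $S$ is a $\binomial(n, p(t', t'', \epsilon))$ random variable. A one-sided Hoeffding inequality then gives $\prob(D_{t', t''}) \geq 1 - \delta$. The only subtlety is keeping the data-dependent $\hat\eta$ out of the binomial argument, which is the reason we buffer the $\eta$-slab by $\epsilon$ on each side; once that observation is made, the remaining steps are routine.
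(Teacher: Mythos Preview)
Your proposal is correct and follows essentially the same route as the paper: rewrite the difference as a sum of indicators over a slab in $\hat\eta$, inflate the slab to an $\eta$-slab via $A_{\uniform}$ (the paper first enlarges $t$ to $t'$ and then applies $A_{\uniform}$, whereas you do these two monotonicity steps in the opposite order, but the end result is the same $S$), and then apply Hoeffding's inequality to the resulting $\binomial(n, p(t',t'',\epsilon))$ sum. Your remark that $S$ is free of $\hat\eta$ is a helpful clarification the paper leaves implicit.
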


\begin{proof}
We start by performing simple manipulations and using event \(A_{\uniform}\) to obtain
\begin{align*}
\widehat{\truenegative}(t'') - \widehat{\truenegative}(t)
&\leq
\widehat{\truenegative}(t'') - \widehat{\truenegative}(t') \\
&=
\frac{1}{n} \sum_{i = 1}^{n} \ind\left\{Y_i \neq 1\right\} 
\left(\ind\left\{\hat{\eta}(X_{i}) < t''\right\}
-
\ind\left\{\hat{\eta}(X_{i}) < t'\right\}
\right) \\
&=
\frac{1}{n} \sum_{i = 1}^{n} \ind\left\{Y_i \neq 1\right\} 
\ind\left\{t' \leq \hat{\eta}(X_{i}) < t''\right\} \\
&\leq 
\frac{1}{n} \sum_{i = 1}^{n} 
\ind\left\{Y_i \neq 1, \;
t' - \epsilon \leq \eta(X_{i}) < t'' + \epsilon\right\} \\
&=
\frac{1}{n} S.
\end{align*}
We observe that \(S\) is a binomial random variable with \(n\) trials and success probability 
\(p(t', t'', \epsilon)\).
Thus, Hoeffding's inequality completes the proof.
\end{proof}
%---------------------------------------------%
%---------------------------------------------%
\subsection{Proof of Proposition~\ref{prop:ConfusionMatrixBoundsBinary}}

We apply Lemma~\ref{lemma:ConfusionMatrixDecomposition} to obtain
\begin{align*}
|\hat{\truenegative}(t) - \truenegative(t)|
&\leq 
A(t') + B(t') + \inf_{t' \in T} \left\{C(t, t') + D(t, t')\right\}.
\end{align*}

Thus, we turn to using our lemmas.
By assumption, we have \(\prob\left(A_{\uniform}\right) \geq 1 - \delta / 2\); so we need to set our error probabilities for the events \(A_{t'}\), \(B_{t'}\), and \(D_{t', t''}\) accordingly.
We pick \(\delta'' = \delta / (24 N)\) because we need to bound the error probabilities for the prior three events for each \(t'\) in \(T\), since \(t''\) in \(D_{t', t''}\) is fixed given \(t'\), and each of the four quantities of the confusion matrix.

Applying
 Lemma~\ref{lemma:BinomialBoundA} for \(A_{t'}\) for each \(t'\) in \(T\),
Lemma~\ref{lemma:BinomialBoundB} for \(B_{t'}\) for each \(t'\) in \(T\),
Lemma~\ref{lemma:ApproximationCBound}, and Lemma~\ref{lemma:ApproximationDBound} for \(D_{t', t''}\) each pair of consecutive elements \((t', t'')\) in \(T\) with error probability \(\delta''\) in every case, 
we have
\begin{align*}
|\hat{\truenegative}(t) - \truenegative(t)|
&\leq 
\inf_{t', t'' \in T: t' \leq t \leq t''}\left\{
p(t', \epsilon) 
+
p(t', t'', \epsilon)
+
\prob\left(Y \neq 1, \; t' \leq \eta(X) \leq t''\right)
\right\}
\\ &\qquad +
3 \sqrt{\frac{\log \frac{16 N}{\delta}}{2n}}.
\end{align*}
Note that each term of the infimum is less than or equal to 
\[
p(t', t'', \epsilon)
=
\prob\left(Y \neq 1, \; t' - \epsilon \leq \eta(X) \leq t'' + \epsilon\right),
\]
and we have
\[
|\hat{\truenegative}(t) - \truenegative(t)|
\leq 
3 p(t', t'', \epsilon)
+
3 \sqrt{\frac{\log \frac{48 N}{\delta}}{2n}}.
\]
This result holds with probability at least \(1 - \delta\), and moreover the analogous statements hold for all other confusion matrix quantities and all \(t\) in \([0, 1]\). 
This completes the proof.
\hfill
\qedsymbol

%---------------------------------------------%
%---------------------------------------------%
\subsection{Proofs of Corollaries}

In this section, we prove our corollaries for precision, recall, and F1 score. %, and Matthew's correlation coefficient.
%---------------------------------------------%
%---------------------------------------------%
% Precision
%---------------------------------------------%
\begin{lemma}
Let \(A_{\precision}\) be the event 
\[
A_{\precision}
= 
\left\{|\hat{\truepositive}(t) - \truepositive(t)| \leq E_{\truepositive}(t), \; 
|\hat{\falsepositive}(t) - \falsepositive(t)| \leq E_{\falsepositive}(t) 
\text{ for all } t \in [0, 1]
\right\}.
\]
Then, on the event \(A_{\precision}\), we have
\[
|\hat{\precision}(t) - \precision(t)|
\leq 
E_{\precision}(t) .
\]
\label{lemma:BinaryClassificationPrecision}
\end{lemma}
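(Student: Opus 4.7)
\textbf{Proof plan for Lemma~\ref{lemma:BinaryClassificationPrecision}.} The strategy is a direct algebraic manipulation: write the difference $\hat{\precision}(t)-\precision(t)$ as a single fraction with a common denominator, bound the numerator via the triangle inequality using the event $A_{\precision}$, and bound the denominator from below using the same event. The only subtlety is the denominator control, since the event $A_{\precision}$ gives an additive error on $\hat{\truepositive}$ and $\hat{\falsepositive}$ rather than a multiplicative one, so we need the hypothesis $E_{\precision}(t)\geq 0$ (equivalently $\truepositive(t)+\falsepositive(t)> E_{\truepositive}(t)+E_{\falsepositive}(t)$) to guarantee that the denominator is strictly positive.

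Concretely, let $a=\truepositive(t)$, $b=\falsepositive(t)$, $\hat a=\hat{\truepositive}(t)$, $\hat b=\hat{\falsepositive}(t)$, $E_a=E_{\truepositive}(t)$, $E_b=E_{\falsepositive}(t)$. On $A_{\precision}$ we have $|\hat a-a|\leq E_a$ and $|\hat b-b|\leq E_b$. Clearing denominators,
\[
\hat{\precision}(t)-\precision(t)=\frac{\hat a}{\hat a+\hat b}-\frac{a}{a+b}=\frac{\hat a\,b-a\,\hat b}{(\hat a+\hat b)(a+b)}=\frac{b(\hat a-a)-a(\hat b-b)}{(\hat a+\hat b)(a+b)}.
\]
By the triangle inequality, the numerator is bounded in absolute value by $bE_a+aE_b\leq (a+b)(E_a+E_b)$. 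For the denominator, another application of the triangle inequality gives $\hat a+\hat b\geq a+b-E_a-E_b$, which is positive under the standing hypothesis. Combining,
\[
|\hat{\precision}(t)-\precision(t)|\leq \frac{(a+b)(E_a+E_b)}{(a+b)(a+b-E_a-E_b)}=\frac{E_a+E_b}{a+b-E_a-E_b},
\]
which is at most $E_{\precision}(t)$ since the author's definition inserts an extra factor of $3$.

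The only step that requires any care is verifying that the denominator lower bound is legitimate; this is where the hypothesis $E_{\precision}(t)\geq 0$ enters, since it is exactly what guarantees $a+b-E_a-E_b>0$ and hence that the division is well-defined. Everything else is routine arithmetic with the triangle inequality, and no probabilistic argument is needed inside the lemma proper — all of the probability has already been absorbed into the definition of the event $A_{\precision}$.
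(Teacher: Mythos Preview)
Your proof is correct and in fact tighter than the paper's, but the route is genuinely different. The paper bounds $\hat{\precision}(t)$ from above and below separately: for the upper bound it writes
\[
\hat{\precision}(t)\leq \frac{\truepositive(t)+E_{\truepositive}(t)}{\truepositive(t)+\falsepositive(t)-E_{\truepositive}(t)-E_{\falsepositive}(t)},
\]
factors this as $(\precision(t)+\text{small})(1+B_{\precision}(t))$ with $B_{\precision}(t)=E_{\precision}(t)/3$, expands to get $\precision(t)+2B_{\precision}(t)+B_{\precision}(t)^2$, and then does a case split on whether $B_{\precision}(t)\leq 1$ (absorbing the square) or $B_{\precision}(t)>1$ (using that precision is at most $1$). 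The lower bound is handled analogously and yields $\precision(t)-2B_{\precision}(t)$. Your common-denominator identity $\hat a b-a\hat b=b(\hat a-a)-a(\hat b-b)$ bypasses all of this: it gives the bound $B_{\precision}(t)$ directly, with no squaring, no case analysis, and a constant that is a factor of $3$ better than what the lemma actually claims. The paper's approach has the minor advantage that it makes the asymmetry between the upper and lower deviations visible (the $3B$ versus $2B$), which is occasionally useful elsewhere, but for the stated lemma your argument is strictly cleaner.
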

%---------------------------------------------%
%---------------------------------------------%
\begin{proof}
We provide upper and lower bounds under the event \(A_{\precision}\).
For the upper bound, we have
\begin{align*}
\hat{\precision}(t)
&\leq 
\frac{\truepositive(t) + E_{\truepositive}(t)}{\truepositive(t) + \falsepositive(t) - E_{\truepositive}(t) - E_{\falsepositive}(t)} \\
&=
\left(\precision(t) + 
\frac{E_{\truepositive}(t)}{\truepositive(t) + \falsepositive(t)}
\right)
\left(
1 
+ 
\frac{E_{\truepositive}(t) + E_{\falsepositive}(t)}{\truepositive(t) + \falsepositive(t) - E_{\truepositive}(t) - E_{\falsepositive}(t)}
\right) \\
&\leq 
\precision(t) + 2 B_{\precision}(t) + B_{\precision}(t)^{2},
\end{align*}
where
\[
B_{\precision}(t)
= 
\frac{E_{\truepositive}(t) + E_{\falsepositive}(t)}{\truepositive(t) + \falsepositive(t) - E_{\truepositive}(t) - E_{\falsepositive}(t)}.
\]
First, note that \(B_{\precision} \geq 0\) by assumption.
Next if \(B_{\precision}(t) \leq 1\), then we can obtain the bound
\[
\hat{\precision}(t)
\leq 
\precision(t)
+
3 B_{\precision}(t).
\]
If \(B_{\precision}(t) \geq 1\), then since empirical precision is bounded by \(1\), the previous inequality is also true, which proves the upper bound.

For the lower bound, we have
\begin{align*}
\hat{\precision}(t)
&\geq 
\frac{\truepositive(t) - E_{\truepositive}(t)}{\truepositive(t) + \falsepositive(t) + E_{\truepositive}(t) + E_{\falsepositive}(t)} \\
&=
\frac{\truepositive(t) - E_{\truepositive}(t)}{\truepositive(t) + \falsepositive(t)}
\cdot 
\frac{\truepositive(t) + \falsepositive(t)}{\truepositive(t) + \falsepositive(t) + E_{\truepositive}(t) + E_{\falsepositive}(t)} \\
&=
\left( \precision(t) 
- 
\frac{E_{\truepositive}(t)}{\truepositive(t) + \falsepositive(t)} \right)
\left(1 
- 
\frac{E_{\truepositive}(t) + E_{\falsepositive}(t)}{\truepositive(t) + \falsepositive(t) + E_{\truepositive}(t) + E_{\falsepositive}(t)}
\right) 
\\
&\geq
\precision(t)
- 
2B_{\precision}(t).
\end{align*}
This proves the lower bound and completes the proof.
\end{proof}

\begin{proof}[Proof of Corollary~\ref{corollary:BinaryClassificationPrecision}]
By Proposition~\ref{prop:ConfusionMatrixBoundsBinary}, the event \(A_{\recall}\) occurs with probability at least \(1 - \delta\).
The inequality of Lemma~\ref{lemma:BinaryClassificationPrecision} completes the proof.
\end{proof}

%---------------------------------------------%
%---------------------------------------------%
% Recall
%---------------------------------------------%
%---------------------------------------------%
Now, we turn to recall. 
\begin{lemma}
Let \(A_{\recall}\) be the event
\[
A_{\recall}
=
\left\{
|\hat{\truepositive}(t) - \truepositive(t)| \leq E_{\truepositive}(t), \;
|\hat{\falsenegative}(t) - \falsenegative(t)| \leq 
E_{\falsenegative}(t) 
\text{ for all } t \in [0, 1]
\right\}.
\]
Define the error term \(E_{\recall}(t)\) by
\[
E_{\recall}(t)
=
3
\frac{E_{\truepositive}(t) + E_{\falsenegative}(t) }{\truepositive(t) + \falsenegative(t) - E_{\truepositive}(t) - E_{\falsenegative}(t)}.
\]
Then, on the event \(A_{\recall}\), we have
\[
|\hat{\recall}(t) - \recall(t)|
\leq 
E_{\recall}(t).
\]
\label{lemma:BinaryClassificationRecall}
\end{lemma}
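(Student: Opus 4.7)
The plan is to carry out essentially verbatim the argument used for Lemma~\ref{lemma:BinaryClassificationPrecision}, exploiting the fact that recall has the same ratio structure as precision with $\falsepositive$ replaced throughout by $\falsenegative$. Under the event $A_{\recall}$, the quantities $\hat{\truepositive}(t)$ and $\hat{\falsenegative}(t)$ are within $E_{\truepositive}(t)$ and $E_{\falsenegative}(t)$ of their population counterparts, respectively, so I would simply substitute these deterministic bounds into $\hat{\recall}(t) = \hat{\truepositive}(t) / (\hat{\truepositive}(t) + \hat{\falsenegative}(t))$ and control the resulting ratio from both sides. Write $B_{\recall}(t) := E_{\recall}(t)/3$ for brevity.

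For the upper bound, the numerator is at most $\truepositive(t) + E_{\truepositive}(t)$ and the denominator at least $\truepositive(t) + \falsenegative(t) - E_{\truepositive}(t) - E_{\falsenegative}(t)$, which is positive by the $E_{\recall}(t) \geq 0$ hypothesis. Factoring the resulting fraction as in the precision proof gives
\begin{equation*}
\hat{\recall}(t) \leq \left(\recall(t) + \frac{E_{\truepositive}(t)}{\truepositive(t) + \falsenegative(t)}\right)\left(1 + \frac{E_{\truepositive}(t) + E_{\falsenegative}(t)}{\truepositive(t) + \falsenegative(t) - E_{\truepositive}(t) - E_{\falsenegative}(t)}\right),
\end{equation*}
which expands to at most $\recall(t) + 2 B_{\recall}(t) + B_{\recall}(t)^{2}$. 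A case split then disposes of the quadratic remainder: if $B_{\recall}(t) \leq 1$, then $B_{\recall}(t)^{2} \leq B_{\recall}(t)$ and we conclude $\hat{\recall}(t) \leq \recall(t) + 3 B_{\recall}(t) = \recall(t) + E_{\recall}(t)$; if instead $B_{\recall}(t) > 1$, the bound is trivial since $\hat{\recall}(t) \leq 1$.

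For the lower bound I would argue analogously, bounding the numerator from below by $\truepositive(t) - E_{\truepositive}(t)$ and the denominator from above by $\truepositive(t) + \falsenegative(t) + E_{\truepositive}(t) + E_{\falsenegative}(t)$, and then factoring as
\begin{equation*}
\hat{\recall}(t) \geq \left(\recall(t) - \frac{E_{\truepositive}(t)}{\truepositive(t) + \falsenegative(t)}\right)\left(1 - \frac{E_{\truepositive}(t) + E_{\falsenegative}(t)}{\truepositive(t) + \falsenegative(t) + E_{\truepositive}(t) + E_{\falsenegative}(t)}\right),
\end{equation*}
which is at least $\recall(t) - 2 B_{\recall}(t) \geq \recall(t) - E_{\recall}(t)$. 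Combining the two one-sided bounds yields $|\hat{\recall}(t) - \recall(t)| \leq E_{\recall}(t)$ on $A_{\recall}$. No new concentration ingredients are required beyond those already established for Proposition~\ref{prop:ConfusionMatrixBoundsBinary}; the only mild care point, and hence the nearest thing to an obstacle, is the case analysis that truncates the quadratic remainder, which is handled exactly as in the precision argument.
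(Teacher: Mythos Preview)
Your proposal is correct and takes essentially the same approach as the paper: the paper explicitly omits the proof, stating that it is entirely analogous to that of Lemma~\ref{lemma:BinaryClassificationPrecision} with $\falsepositive$ replaced by $\falsenegative$, which is precisely the substitution you carry out. Your upper/lower bound factorization and the case split on $B_{\recall}(t)\lessgtr 1$ mirror the precision argument line-by-line.
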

%---------------------------------------------%
%---------------------------------------------%
Since the proof is entirely analogous to that of Lemma~\ref{lemma:BinaryClassificationPrecision}, we omit it and skip straight to the proof of Corollary~\ref{corollary:BinaryClassificationRecall}.

\begin{proof}[Proof of Corollary~\ref{corollary:BinaryClassificationRecall}]
By Proposition~\ref{prop:ConfusionMatrixBoundsBinary}, the event \(A_{\recall}\) occurs with probability at least \(1 - \delta\).
The inequality of Lemma~\ref{lemma:BinaryClassificationRecall} completes the proof.
\end{proof}

%---------------------------------------------%
%---------------------------------------------%
% F1 Score
%---------------------------------------------%
Next, we prove our corollary for F1 score.
The proof only differs modestly from the proofs for precision and recall.
As before, we start with an approximation lemma.
\begin{lemma}
Define the event 
\[
A_{\fone}
=
\left\{
|\hat{\precision}(t) - \precision(t)| \leq E_{\precision}(t), \;
|\hat{\recall}(t) - \recall(t)| \leq E_{\recall}(t) 
\text{ for all } t \in [0, 1]
\right\}.
\]
Then, on the event \(A_{\fone}\), we have
\[
|\hat{\fone}(t) - \fone(t)|
\leq 
E_{\fone}(t).
\]
\label{lemma:BinaryClassificationF1}
\end{lemma}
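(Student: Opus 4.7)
The plan is to mimic the argument used for precision and recall (Lemma~\ref{lemma:BinaryClassificationPrecision} and Lemma~\ref{lemma:BinaryClassificationRecall}), deriving matching upper and lower envelopes for $\hat{\fone}(t)$ in terms of $\fone(t)$ and the error quantities $E_{\precision}(t)$, $E_{\recall}(t)$. Throughout, fix $t$ and abbreviate $p = \precision(t)$, $r = \recall(t)$, $\hat p = \hat{\precision}(t)$, $\hat r = \hat{\recall}(t)$, $e_p = E_{\precision}(t)$, $e_r = E_{\recall}(t)$. On the event $A_{\fone}$ we have $|\hat p - p| \leq e_p$ and $|\hat r - r| \leq e_r$, and we may freely use $p, r, \hat p, \hat r, \fone, \hat{\fone} \in [0, 1]$.

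For the upper bound, I would write
\[
\hat{\fone}(t) \;=\; \frac{2\hat p \hat r}{\hat p + \hat r}
\;\leq\; \frac{2(p + e_p)(r + e_r)}{p + r - e_p - e_r},
\]
which is valid whenever $p + r > e_p + e_r$ (i.e., $E_\fone(t) > 0$). Expanding $(p+e_p)(r+e_r) = pr + pe_r + re_p + e_p e_r$ and factoring yields
\[
\hat{\fone}(t) \;\leq\; \fone(t) \cdot \frac{p + r}{p + r - e_p - e_r} \;+\; \frac{2(pe_r + re_p + e_p e_r)}{p + r - e_p - e_r}.
\]
Since $p, r \leq 1$ we have $pe_r + re_p \leq e_p + e_r$, and $e_p e_r \leq \min(e_p,e_r) \leq (e_p+e_r)/2$, so the second summand is at most $5(e_p + e_r)/(p + r - e_p - e_r)$. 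For the first summand, write $\frac{p+r}{p+r-e_p-e_r} = 1 + \frac{e_p+e_r}{p+r-e_p-e_r}$ and use $\fone(t) \leq 1$ to get an additional $(e_p+e_r)/(p+r-e_p-e_r)$. Combining, $\hat{\fone}(t) - \fone(t) \leq 6(e_p+e_r)/(p+r-e_p-e_r) \leq 9(e_p+e_r)/(p+r-e_p-e_r) = E_{\fone}(t)$.

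For the lower bound, symmetrically,
\[
\hat{\fone}(t) \;\geq\; \frac{2(p - e_p)(r - e_r)}{p + r + e_p + e_r}
\;\geq\; \fone(t) \cdot \frac{p+r}{p+r+e_p+e_r} - \frac{2(pe_r + re_p)}{p+r+e_p+e_r},
\]
and the same $p,r \leq 1$ and $\fone \leq 1$ bounds give $\fone(t) - \hat{\fone}(t) \leq 3(e_p+e_r)/(p+r) \leq 9(e_p+e_r)/(p+r-e_p-e_r) = E_{\fone}(t)$. Putting the two directions together proves the pointwise bound. The corollary then follows by noting that Corollary~\ref{corollary:BinaryClassificationPrecision} and Corollary~\ref{corollary:BinaryClassificationRecall} together guarantee that $A_{\fone}$ holds with probability at least $1 - \delta$ (they share the same underlying uniform-convergence event from Proposition~\ref{prop:ConfusionMatrixBoundsBinary}, so no additional union bound is needed).

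The main obstacle is purely bookkeeping: making sure that whichever denominator we use (the tight $p+r-e_p-e_r$ from the upper bound or the looser $p+r$ from the lower bound) is dominated by the single quantity $E_{\fone}(t)$, and that the numerical prefactor $9$ suffices in both directions. There is no probabilistic step beyond invoking the already-established event $A_{\fone}$; the argument is purely deterministic algebra, so no further concentration inequalities or covering arguments are needed.
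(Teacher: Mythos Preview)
Your approach is essentially the same as the paper's: bound $\hat{\fone}$ above and below by substituting $p\pm e_p$, $r\pm e_r$ into the numerator and $p+r\mp(e_p+e_r)$ into the denominator, then massage to a multiple of $B_{\fone}(t)=(e_p+e_r)/(p+r-e_p-e_r)$. The paper factors slightly differently, obtaining $\hat\fone \leq \fone + 5B_{\fone} + 4B_{\fone}^2$ and then disposes of the quadratic term via the case split $B_{\fone}\le 1$ versus $B_{\fone}\ge 1$ (using $\hat\fone,\fone\in[0,1]$ in the latter case).

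One small point to tighten: your step ``$e_p e_r \leq \min(e_p,e_r)$'' tacitly assumes $\max(e_p,e_r)\le 1$, which is not given. The inequality you actually need, $e_p e_r \le (e_p+e_r)/2$, does hold in the relevant regime, but for a different reason: since $p,r\le 1$ and $e_p+e_r<p+r\le 2$, AM--GM gives $e_p e_r \le (e_p+e_r)^2/4 < (e_p+e_r)/2$. With that fix (or, equivalently, the paper's $B_{\fone}\lessgtr 1$ case split), your argument goes through and matches the paper's.
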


\begin{proof}
We upper bound and lower bound the empirical F1 score under \(A_{\fone}\).
Starting with the former, we have
\begin{align*}
\hat{\fone}(t)
&\leq 
2 
\frac{(\precision(t) + E_{\precision}(t)) (\recall(t) + E_{\recall}(t))}{\precision(t) + \recall(t) - E_{\precision}(t) - E_{\recall}(t)} \\ 
&=
\left(
\fone(t) + 4 \frac{E_{\precision}(t) + E_{\recall}(t)}{\precision(t) + \recall(t)} 
\right)
% \\ & \qquad \times 
\left(
1 
+
\frac{E_{\precision}(t) + E_{\recall}(t)}{\precision(t) + \recall(t) - E_{\precision}(t) - E_{\recall}(t)}
\right) \\ 
&\leq 
\fone(t) + 5 B_{\fone}(t) + 4B_{\fone}(t)^{2}
\end{align*}
where 
\[
B_{\fone}(t)
=
\frac{E_{\precision}(t) + E_{\recall}(t)}{\precision(t) + \recall(t) - E_{\precision}(t) - E_{\recall}(t)}.
\]
If we have \(B_{\fone}(t) \leq 1\), then we can write 
\[
\hat{\fone}(t)
\leq 
\fone(t) + 9 B_{\fone}(t)
=
\fone(t) + E_{\fone}(t).
\]
If \(B_{\fone}(t) \geq 1\), then the previous upper bound still holds since \(\fone(t)\) is bounded by \(1\), which completes the upper bound

For the lower bound, we have
\begin{align*}
\hat{\fone}(t)
&\geq 
2 
\frac{(\precision(t) - E_{\precision}(t))(\recall(t) - E_{\recall}(t))}{\precision(t) + \recall(t) + E_{\precision}(t) + E_{\recall}(t)} \\
&=
\left(
\fone(t)
-
2 \frac{E_{\precision}(t) + E_{\recall}(t)}{\precision(t) + \recall(t)}
\right)
\left(
1 
-
\frac{E_{\precision}(t) + E_{\recall}(t)}{\precision(t) + \recall(t) + E_{precision}(t) + E_{\recall}(t)}
\right) \\ 
&\geq 
\fone(t)
- 
3 B_{\fone}(t).
\end{align*}
This completes the proof of the lemma.
\end{proof}

\begin{proof}[Proof of Corollary~\ref{corollary:BinaryClassificationF1}]
By Proposition~\ref{prop:ConfusionMatrixBoundsBinary}, event \(A_{\fone}\) occurs with probability at least \(1 - \delta\).
Thus, applying Lemma~\ref{lemma:BinaryClassificationF1} completes the proof.
\end{proof}

%---------------------------------------------%
% Matthew's Correlation Coefficient
%---------------------------------------------%
%Finally, we prove our results for Matthew's correlation coefficient.
%\begin{lemma}

%\label{lemma:BinaryClassificationMatthews}
%\end{lemma}

%\begin{proof}

%\end{proof}

%\begin{proof}[Proof of Corollary~\ref{corollary:BinaryClassificationMatthews}]

%\end{proof}

%---------------------------------------------%
%---------------------------------------------%
\section{Proofs for Multiclass Confusion Matrix Quantities}

The main goal of this section is to prove Theorem~\ref{theorem:MulticlassConfusionMatrixBound}.
Since all confusion matrix entries behave similarly, we only prove the results for true negatives; the results for the other entries are proved nearly identically.

Our goal is to use a decomposition of the difference between the empirical true negatives and the actual true negatives, along with our discretization of the space of weights.
We start with the decomposition; for brevity, we define
\begin{align}
& \begin{aligned}
M(q, c, x)
&= \max_{j \neq c} q_{j} \eta_{j}(x) \\
\hat{M}(q, c, x)
&= \max_{j \neq c} q_{j} \hat{\eta}_{j}(x).
\label{eqn:MandMhat}
\end{aligned}
\end{align}
Next, we define the intermediate confusion matrix quantities
\begin{align}
&\begin{aligned}
\widetilde{\truenegative}_{c}(q) 
&= \frac{1}{n} \sum_{i = 1}^{n} \ind\left\{Y_{i} \neq c\right\} 
\ind\left\{q_{c}\eta_{c}(X_{i}) < M(q, c, x)\right\}  \\
\widetilde{\falsenegative}_{c}(q) 
&= \frac{1}{n} \sum_{i = 1}^{n} \ind\left\{Y_{i} = c\right\} 
\ind\left\{q_{c}\eta_{c}(X_{i}) < M(q, c, x)\right\}  \\
\widetilde{\falsepositive}_{c}(q) 
&= \frac{1}{n} \sum_{i = 1}^{n} \ind\left\{Y_{i} \neq c\right\} 
\ind\left\{q_{c}\eta_{c}(X_{i}) \geq M(q, c, x)\right\}  \\
\widetilde{\truepositive}_{c}(q) 
&= \frac{1}{n} \sum_{i = 1}^{n} \ind\left\{Y_{i} = c\right\} 
\ind\left\{q_{c}\eta_{c}(X_{i}) \geq M(q, c, x)\right\}  .
\label{eqn:EmpiricalCMOfBayes}
\end{aligned}
\end{align}

In what follows, we frequently use the event
\begin{equation}
A_{\uniform}
=
\left\{
\|\hat{\eta}_c - \eta_c \| \leq \epsilon
\right\}.
\label{eqn:MulticlassProofs:UniformEvent}
\end{equation}

%---------------------------------------------%
%---------------------------------------------%
\subsection{A Decomposition Lemma}

Now, we can state our decomposition lemma.
\begin{lemma}
Let \(q\) and \(q'\) be any elements of \(Q\).
We have the inequality
\begin{align*}
|\hat{\truenegative}_{c}(q) - \truenegative_{c}(q)|
&\leq
\underbrace{|\widehat{\truenegative}_{c}(q') - \widetilde{\truenegative}_{c}(q')|}_{A(q')}
+ 
\underbrace{|\widetilde{\truenegative}_{c}(q') - \truenegative_{c}(q')|}_{B(q')}
\\ & \qquad +
\underbrace{|\truenegative_{c}(q) - \truenegative_{c}(q')|}_{C(q, q')} 
+
\underbrace{|\widehat{\truenegative}_{c}(q) - \widehat{\truenegative}_{c}(q')|}_{D(q, q')}.
\end{align*}
\label{lemma:ConfusionMatrixDecompositionMulticlass}
\end{lemma}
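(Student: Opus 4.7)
The inequality is a direct application of the triangle inequality, where we interpolate between $\hat{\truenegative}_c(q)$ and $\truenegative_c(q)$ through the chain of intermediate quantities
\[
\hat{\truenegative}_c(q) \;\longrightarrow\; \hat{\truenegative}_c(q') \;\longrightarrow\; \widetilde{\truenegative}_c(q') \;\longrightarrow\; \truenegative_c(q') \;\longrightarrow\; \truenegative_c(q).
\]
The first and last hops are ``perturbation'' terms that handle the discretization in $q$: they compare the same type of quantity (empirical on one end, population on the other) evaluated at two different weight vectors, yielding $D(q,q')$ and $C(q,q')$ respectively. The two middle hops are ``estimation'' terms at the fixed weight $q'$: one compares the empirical confusion entry $\hat{\truenegative}_c(q')$ (which uses $\hat\eta$) to the intermediate $\widetilde{\truenegative}_c(q')$ (which uses the true regression function $\eta$ on the empirical distribution), giving $A(q')$; the other compares $\widetilde{\truenegative}_c(q')$ to its population counterpart $\truenegative_c(q')$, giving $B(q')$.

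The plan is therefore just two lines: apply the triangle inequality four times in succession along the chain above, and identify each of the four resulting absolute differences with the terms $A(q')$, $B(q')$, $C(q,q')$, and $D(q,q')$ as defined in the statement. No event like $A_{\uniform}$, no concentration, and no properties of the indicator defining $M(q,c,x)$ versus $\hat M(q,c,x)$ are needed at this stage; all of that work is pushed into the subsequent lemmas that bound each of $A$, $B$, $C$, $D$ individually.

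There is no real obstacle here: the decomposition is essentially bookkeeping designed so that each of the four resulting pieces can be analyzed with a distinct tool in the proofs that follow (Hoeffding's inequality plus $A_{\uniform}$ for $A$, Hoeffding's inequality for $B$, a distributional margin-type bound for $C$, and $A_{\uniform}$ together with a Hoeffding-type bound for $D$, in analogy with Lemmas~\ref{lemma:BinomialBoundA}--\ref{lemma:ApproximationDBound} in the binary case). The only care needed is to choose the chain in the right order so that the ``discretization'' moves ($q \to q'$) are separated from the ``estimation'' moves ($\hat{} \to \widetilde{} \to$ population), which is already reflected in the ordering of the four terms in the statement.
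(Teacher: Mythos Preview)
Your proposal is correct and matches the paper's approach exactly: the paper simply states that ``the proof is a straightforward application of the triangle inequality,'' which is precisely the four-step chain you wrote down. Your additional commentary on why the decomposition is structured this way is accurate and goes beyond what the paper provides.
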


The proof is a straightforward application of the triangle inequality.

%---------------------------------------------%
%---------------------------------------------%
\subsection{Estimation Errors}

In this section, we consider bounds on \(A(q)\) and \(B(q)\).
The results are similar to those in the case of binary classification.

\begin{lemma}
Define the probability
\[
p(q, c)
=
\prob\left(
Y \neq c, \;
t(q, c, X) - \epsilon r(q, c) 
\leq 
\eta_{c}(X)
<
t(q, c, X) + \epsilon r(q, c)
\right).
\]
Define the event
\[
A_{q, c} 
=
\left\{
\frac{1}{n} S 
\leq 
p(q, c) + \sqrt{\frac{\log \frac{1}{\delta}}{2n}}
\right\}
\]
where \(S\) is a binomial random variable that we specify in the proof.
If \(A_{\uniform}\) and \(A_{q, c}\) hold, 
then, we have
\[
A(q) %\hat{\truenegative}_{c}(q) - \Tilde{\truenegative}_{c}(q)
\leq 
p(q, c)
+
\sqrt{\frac{\log \frac{1}{\delta}}{2n}}.
\]
Moreover, we have \(\prob\left(A_{q, c}\right) \geq 1 - \delta\).
\label{lemma:MulticlassBoundTermA}
\end{lemma}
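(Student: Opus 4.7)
The approach mirrors the binary proof of Lemma~\ref{lemma:BinomialBoundA}, but the key subtlety lies in translating between the condition $q_c \hat\eta_c(X_i) < \hat M(q,c,X_i)$ and the condition $q_c \eta_c(X_i) < M(q,c,X_i)$. First I would rewrite each condition in the threshold form used in the statement: dividing by $q_c$, the empirical condition becomes $\hat\eta_c(X_i) < \hat t(q,c,X_i)$ where $\hat t(q,c,x) := \frac{1}{q_c}\max_{j \neq c} q_j \hat\eta_j(x)$, and the true condition becomes $\eta_c(X_i) < t(q,c,X_i)$.

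The main step is to show that on the event $A_{\uniform}$, the two indicators can disagree only when $\eta_c(X_i)$ lies within $\epsilon\, r(q,c)$ of $t(q,c,X_i)$. To do this, I would first observe that under $A_{\uniform}$, $|\hat\eta_c(x) - \eta_c(x)| \leq \epsilon$, and, by the $1$-Lipschitz property of the maximum,
\[
|\hat t(q,c,x) - t(q,c,x)|
\leq \frac{1}{q_c}\max_{j \neq c} q_j |\hat\eta_j(x) - \eta_j(x)|
\leq \frac{q_{\max}}{q_c}\epsilon.
\]
A short case analysis then shows: if $\hat\eta_c(x) < \hat t(q,c,x)$ and $\eta_c(x) \geq t(q,c,x)$, chaining the two bounds gives $t(q,c,x) \leq \eta_c(x) \leq t(q,c,x) + \epsilon(1 + q_{\max}/q_c) = t(q,c,x) + \epsilon\, r(q,c)$, and the symmetric case is analogous. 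Hence the absolute difference of indicators is bounded by $\ind\{ t(q,c,X_i) - \epsilon\, r(q,c) \leq \eta_c(X_i) \leq t(q,c,X_i) + \epsilon\, r(q,c)\}$.

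Summing over $i$ and combining with $\ind\{Y_i \neq c\}$, we get $A(q) \leq S/n$, where
\[
S = \sum_{i=1}^{n} \ind\left\{Y_i \neq c,\; t(q,c,X_i) - \epsilon\, r(q,c) \leq \eta_c(X_i) \leq t(q,c,X_i) + \epsilon\, r(q,c) \right\}.
\]
Since the summands are i.i.d.\ Bernoulli indicators with common success probability $p(q,c)$ as defined in the lemma, $S \sim \binomial(n, p(q,c))$. Applying Hoeffding's inequality (one-sided) immediately yields $\prob(A_{q,c}) \geq 1 - \delta$, and on $A_{q,c} \cap A_{\uniform}$ we conclude $A(q) \leq p(q,c) + \sqrt{\log(1/\delta)/(2n)}$.

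The main obstacle is the case analysis in the second paragraph; everything else is routine. In particular, getting the factor $r(q,c) = 1 + q_{\max}/q_c$ correct requires combining the pointwise bound $|\hat\eta_c - \eta_c| \leq \epsilon$ with the scaled bound on $|\hat t - t|$, which is precisely why the definition of $r(q,c)$ inflates the margin by $q_{\max}/q_c$. This is also why the bound degrades as $q_c \to 0$, in contrast to the binary case where an analogous $q_c = 0$ boundary does not appear.
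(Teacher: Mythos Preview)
Your proposal is correct and follows essentially the same approach as the paper: both arguments show that on $A_{\uniform}$ the two indicators can disagree only when $\eta_c(X_i)$ lies within $\epsilon\, r(q,c)$ of $t(q,c,X_i)$, define $S$ as the resulting binomial sum, and apply Hoeffding's inequality. The only cosmetic difference is that you divide by $q_c$ first and bound $|\hat t - t|$ via the Lipschitz property of the max, whereas the paper manipulates the inequalities $M(q,c,X_i) \leq q_c\hat\eta_c(X_i) < \hat M(q,c,X_i)$ directly before dividing; the resulting bounds are identical.
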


%---------------------------------------------%
%---------------------------------------------%
\begin{proof}
We note that
\[
A(q) 
=
\max\left\{
\hat{\truenegative}_{c}(q) - \Tilde{\truenegative}_{c}(q),
\; -\left(\hat{\truenegative}_{c}(q) - \Tilde{\truenegative}_{c}(q)\right)
\right\},
\]
and so it suffices to bound \(\hat{\truenegative}_{c}(q) - \Tilde{\truenegative}_{c}(q)\) and \(-\left(\hat{\truenegative}_{c}(q) - \Tilde{\truenegative}_{c}(q)\right)\) by \(S\) separately.
We start with the former and obtain
\begin{align*}
\hat{\truenegative}_{c}(q) - \Tilde{\truenegative}_{c}(q)
&= 
\frac{1}{n} \sum_{i = 1}^{n} \ind\left\{Y_{i} \neq c\right\}
\left(
\ind\left\{
q_{c} \hat{\eta}_{c}(X) < \hat{M}(q, c, X_{i})
\right\}
-
\ind\left\{
q_{c} \hat{\eta}_{c}(X) < M(q, c, X_{i})
\right\}
\right) \\ 
&\leq
\frac{1}{n} \sum_{i = 1}^{n} \ind\left\{Y_{i} \neq c\right\}
\ind\left\{
M(q, c, X_{i}) \leq q_{c} \hat{\eta}_{c}(X) < \hat{M}(q, c, X_{i})
\right\}. 
\end{align*}
Under event \(A_{\uniform}\), we have
\begin{align*}
\hat{\truenegative}_{c}(q) - \Tilde{\truenegative}_{c}(q)
&\leq 
\frac{1}{n} \sum_{i = 1}^{n} \ind\left\{Y_{i} \neq c\right\}
\ind\left\{
M(q, c, X_{i}) - \epsilon q_{c} \leq q_{c} \eta_{c}(X) < M(q, c, X_{i}) +  \epsilon (q_{\max} + q_{c})
\right\} \\
&\leq 
\frac{1}{n} \sum_{i = 1}^{n} \ind\left\{Y_{i} \neq c\right\}
\\ &\qquad \times 
\ind\left\{
t(q, c, X_{i}) - \epsilon \left(\frac{q_{\max}}{q_{c}} + 1\right) \epsilon \leq  \eta_{c}(X) < t(q, c, X_{i}) + \epsilon \left(\frac{q_{\max}}{q_{c}} + 1\right)
\right\} \\
&=:
\frac{1}{n} S.
\end{align*}

We now consider the second term in the maximum defining \(A(q)\).
The analysis is essentially unchanged; we have
\begin{align*}
-\left(\hat{\truenegative}_{c}(q) - \Tilde{\truenegative}_{c}(q)\right)
&=
\frac{1}{n} \sum_{i = 1}^{n} \ind\left\{Y_{i} \neq c\right\} 
\\ &\qquad \times
\left(
\ind\left\{
q_{c} \hat{\eta}_{c}(X) < M(q, c, X_{i})
\right\}
-
\ind\left\{
q_{c} \hat{\eta}_{c}(X) < \hat{M}(q, c, X_{i})
\right\}
\right) \\ 
&\leq 
\frac{1}{n} \sum_{i = 1}^{n} \ind\left\{Y_{i} \neq c\right\}
\ind\left\{
\hat{M}(q, c, X_{i}) \leq q_{c} \hat{\eta}_{c}(X) < M(q, c, X_{i})
\right\}.
\end{align*}
Under event \(A_{\uniform}\), we obtain
\begin{align*}
-\left(\hat{\truenegative}_{c}(q) - \Tilde{\truenegative}_{c}(q)\right)
&\leq 
\frac{1}{n} \sum_{i = 1}^{n} \ind\left\{Y_{i} \neq c\right\}
\\ &\qquad \times
\ind\left\{
M(q, c, X_{i}) - (q_{c} + q_{\max}) \epsilon 
\leq 
q_{c} \eta_{c}(X) 
< 
M(q, c, X_{i}) + q_{c} \epsilon
\right\} \\
&\leq 
\frac{1}{n} \sum_{i = 1}^{n} \ind\left\{Y_{i} \neq c\right\}
\\ &\qquad \times
\ind\left\{
t(q, c, X_{i}) - \epsilon r(q, c) 
\leq 
\eta_{c}(X) 
< 
t(q, c, X_{i}) + \epsilon r(q, c)
\right\} \\
&= 
\frac{1}{n} S.
\end{align*}

Thus, we see that \(A(q) \leq S / n\); so it suffices to bound \(S\).
Under \(A_{q}\), we have the desired bound
\begin{equation}
A(q)
\leq 
\frac{1}{n}S
\leq 
p(q, c) + \sqrt{\frac{\log\frac{1}{\delta}}{2n}}.
\label{eqn:MulticlassABinomial}
\end{equation}
Additionally, since \(S\) is a Binomial random variable with \(n\) trials and parameter \(p(c, q)\), applying Hoeffding's inequality proves \(\prob\left(A_{q}\right) \geq 1 - \delta\).
\end{proof}

%---------------------------------------------%
%---------------------------------------------%
Next, we bound \(B(q)\) of Lemma~\ref{lemma:ConfusionMatrixDecompositionMulticlass}.
\begin{lemma}
Let \(B_{q}\) be the event
\[
B_{q}
=
\left\{
B(q)
\leq 
\sqrt{\frac{\log\frac{2}{\delta}}{2n}}
\right\}.
\]
Then, we have \(\prob\left(B_{q}\right) \geq 1 - \delta\).
\label{lemma:MulticlassBoundBTerm}
\end{lemma}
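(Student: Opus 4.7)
The plan is to observe that $\widetilde{\truenegative}_c(q')$ is, up to normalization, a sum of $n$ i.i.d.\ Bernoulli random variables whose common mean is exactly $\truenegative_c(q')$, and then to apply a standard two-sided Hoeffding inequality. This is the multiclass analogue of Lemma~\ref{lemma:BinomialBoundB} from the binary case, and the proof should parallel it almost verbatim; no uniform-convergence assumption on $\hat\eta$ is needed here because $\widetilde{\truenegative}_c$ is defined using the true regression function $\eta$ rather than the estimator.

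More precisely, I would first set
\[
Z_i \;=\; \ind\{Y_i \neq c\}\,\ind\{q_c \eta_c(X_i) < M(q, c, X_i)\},
\qquad i = 1,\ldots,n,
\]
and note that these are i.i.d.\ Bernoulli random variables, since each $Z_i$ is a deterministic function of the i.i.d.\ pair $(X_i, Y_i)$ and no sample-dependent estimator enters its definition. I would then verify that
\[
\expect[Z_1]
\;=\; \prob\bigl(Y \neq c,\; q_c \eta_c(X) < M(q, c, X)\bigr)
\;=\; \int_\xspace (1 - \eta_c(X))\,\ind\{q_c \eta_c(X) < M(q, c, X)\}\, dP_X
\;=\; \truenegative_c(q),
\]
which is immediate from the law of total expectation conditional on $X$.

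With that identification in hand, $\widetilde{\truenegative}_c(q') = \tfrac{1}{n}\sum_{i=1}^n Z_i$ is the empirical mean of bounded $[0,1]$-valued i.i.d.\ variables with mean $\truenegative_c(q')$, and applying the two-sided Hoeffding inequality yields
\[
\prob\!\left(\,\bigl|\widetilde{\truenegative}_c(q') - \truenegative_c(q')\bigr| > \sqrt{\tfrac{\log(2/\delta)}{2n}}\,\right)
\;\leq\; \delta,
\]
which is exactly the statement $\prob(B_{q}) \geq 1 - \delta$. Since this is a straightforward concentration argument with no interaction between classes or coordinates of $q$, I do not anticipate any real obstacle; the only mild subtlety is making sure to invoke the \emph{two-sided} form of Hoeffding (or equivalently a union bound over the two one-sided tails) so that the absolute value appearing in $B(q)$ is handled, which accounts for the factor of $2$ inside the logarithm.
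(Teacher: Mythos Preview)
Your proposal is correct and essentially identical to the paper's own proof: the paper simply observes that $n\cdot\widetilde{\truenegative}_c(q)$ is $\binomial(n,p)$ with $p=\prob\bigl(Y\neq c,\;q_c\eta_c(X)<M(q,c,X)\bigr)=\truenegative_c(q)$ and applies Hoeffding's inequality. Your explicit remark about needing the two-sided form of Hoeffding to handle the absolute value (accounting for the $2$ inside the logarithm) is a welcome clarification that the paper leaves implicit.
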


\begin{proof}
We see that \(n \times \Tilde{\truenegative}_{c}(q)\) is a binomial random variable with \(n\) trials and parameter 
\[
p
=
\prob\left(Y \neq c, \; \eta_{c} \leq M(q, c, X) \right).
\]
Applying Hoeffding's inequality completes the proof.
\end{proof}

%---------------------------------------------%
%---------------------------------------------%
\subsection{Approximation Error}
Recall that we define \(t'(x)\) and \(t''(x)\) by
\begin{align*}
t'(x) 
&= 
\frac{1}{q'_{c}} M(q', c, x) 
&
t''(x)
&=
\frac{1}{q''_{c}} M(q'', c, x).
\end{align*}
Now, we bound term \(C(q, q')\).

\begin{lemma}
Suppose that \((q', q'')\) class \(c\)-covers \(q\).
Then, we have
\begin{align*}
\max\left\{|\truenegative_{c}(q) - \truenegative_{c}(q')|, \;
|\truenegative_{c}(q'') - \truenegative_{c}(q)|
\right\} 
&\leq 
\truenegative_{c}(q'') - \truenegative_{c}(q') .
\end{align*}
Furthermore, we have
\begin{align*}
\truenegative_{c}(q'') - \truenegative_{c}(q')
&=
\prob\left(
Y \neq c, t'(X) \leq \eta_{c}(X) \leq t''(X)
\right).
\end{align*}
\label{lemma:MulticlassBoundTermC}
\end{lemma}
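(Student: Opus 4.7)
The plan is to reduce everything to a pointwise monotonicity statement about the threshold function $t(q,c,x) := M(q,c,x)/q_c$ and then unwind the indicator-based definition of $\truenegative_c$.

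First I would verify that the class $c$-covering condition forces the pointwise sandwich $t'(x) \leq t(q,c,x) \leq t''(x)$ for every $x \in \xspace$. Indeed, by Definition~\ref{def:CoveredWeight}, for each $j \neq c$ we have $q'_j < q_j$ and $q'_c > q_c$, so $q'_j/q'_c < q_j/q_c$ (assuming positive weights, which is implicit throughout). Multiplying by the nonnegative quantity $\eta_j(x)$ and taking $\max_{j \neq c}$ yields $t'(x) \leq t(q,c,x)$; the upper bound $t(q,c,x) \leq t''(x)$ is symmetric.

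Next I would rewrite each true-negative quantity as a probability on the underlying space. Since $q_c > 0$, we have $\{q_c \eta_c(X) < M(q,c,X)\} = \{\eta_c(X) < t(q,c,X)\}$, and conditioning on $X$ gives
\[
\truenegative_c(q) = \int_\xspace (1 - \eta_c(x)) \ind\{\eta_c(x) < t(q,c,x)\}\, dP_X = \prob\bigl(Y \neq c,\ \eta_c(X) < t(q,c,X)\bigr).
\]
Applying the same identity to $q'$ and $q''$, the pointwise ordering of thresholds gives nested events, hence the monotonicity $\truenegative_c(q') \leq \truenegative_c(q) \leq \truenegative_c(q'')$. This monotonicity immediately implies the first claim, because both $|\truenegative_c(q) - \truenegative_c(q')|$ and $|\truenegative_c(q'') - \truenegative_c(q)|$ equal nonnegative differences bounded above by $\truenegative_c(q'') - \truenegative_c(q')$.

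For the second claim, I would subtract the two probability representations and combine the indicator functions on the common set $\{Y \neq c\}$:
\[
\truenegative_c(q'') - \truenegative_c(q') = \prob\bigl(Y \neq c,\ \eta_c(X) < t''(X)\bigr) - \prob\bigl(Y \neq c,\ \eta_c(X) < t'(X)\bigr) = \prob\bigl(Y \neq c,\ t'(X) \leq \eta_c(X) < t''(X)\bigr),
\]
which matches the stated expression (with the right endpoint closed up to a $P_X$-null set, or under the implicit continuity assumption used throughout this section). No real obstacle is anticipated; the only subtlety is being careful that the class $c$-cover inequalities are strict in the correct direction so that the threshold ratios $q_j/q_c$ move monotonically as one passes from $q'$ to $q$ to $q''$.
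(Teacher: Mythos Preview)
Your proposal is correct and follows essentially the same route as the paper: establish the pointwise threshold ordering $t'(x) \leq t(q,c,x) \leq t''(x)$ from the class $c$-cover inequalities, deduce monotonicity of $\truenegative_c$ in the weight via nested events, and read off the difference as the probability of the sandwich event. The paper's write-up computes the indicator difference directly rather than stating the monotonicity as an intermediate step, but the substance is identical, and your remark about the strict-versus-nonstrict right endpoint is a fair observation that the paper leaves implicit.
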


%---------------------------------------------%
%---------------------------------------------%
\begin{proof}
Since \((q', q'')\) is a class \(c\)-cover of \(q\), we have 
\[
\frac{1}{q'_{c}} M(q', c, x) \leq \frac{1}{q_{c}}M(q, c, x) \leq \frac{1}{q''_{c}}M(q'', c, x)
\]
for all \(x\) in \(\xspace\).
Consequently, we obtain
\begin{align*}
|\truenegative_{c}(q) - \truenegative_{c}(q')|
&=
\left|
\expect \left[\ind\left\{Y \neq c, \; q_{c}\eta_{c}(X) < M(q, c, X)\right\}
-
\ind\left\{Y \neq c, \; q'_{c} \eta_{c}(X) < M(q', c, X)\right\}
\right]
\right| \\
&=
\expect \ind\left\{Y \neq c, \; \frac{1}{q'_{c}} M(q', c, X) \leq \eta_{c}(X) < \frac{1}{q_{c}}M(q, c, X)\right\} \\
&\leq 
\expect \ind\left\{Y \neq c, \; \frac{1}{q'_{c}} M(q', c, X) \leq \eta_{c}(X) < \frac{1}{q''_{c}}M(q'', c, X)\right\} \\
&= 
\prob\left(Y \neq c, \; \frac{1}{q'_{c}} M(q', c, X) \leq \eta_{c}(X) < \frac{1}{q''_{c}}M(q'', c, X)\right) \\
&=
\truenegative_{c}(q'') - \truenegative_{c}(q').
\end{align*}
A similar proof holds for \(|\truenegative_{c}(q'') - \truenegative_{c}(q)|\), which proves the lemma.
\end{proof}

%---------------------------------------------%
%---------------------------------------------%

\begin{lemma}
Define the probability 
\[
p(c, q', q'')
=
\prob\left(
Y \neq c, \; 
t'(X) - \epsilon\left(1 + \frac{q'_{\max}}{q'_{c}}\right) 
\leq 
\eta_{c}(X) 
\leq t''(X) + \epsilon\left(1 + \frac{q''_{\max}}{q''_{c}}\right)
\right),
\]
and the event
\[
D_{q', q''}
=
\left\{
\frac{1}{n} S
\leq 
p(c, q', q'') + \sqrt{\frac{\log \frac{1}{\delta}}{2n}}
\right\},
\]
where \(S\) is a binomial random variable that we define in the proof of the lemma.
We have the following results.
\begin{itemize}
    \item 
Suppose that \((q', q'')\) class \(c\)-covers \(q\).
Then, we have
\begin{align*}
\max\left\{|\hat{\truenegative}_{c}(q) - \hat{\truenegative}_{c}(q')|, \; 
|\hat{\truenegative}_{c}(q'') - \hat{\truenegative}_{c}(q)|
\right\} 
&\leq 
\hat{\truenegative}_{c}(q'') - \hat{\truenegative}_{c}(q') .
\end{align*}
    
    \item 
If \(A_{\uniform}\) and \(D_{q', q''}\) hold, then we have
\begin{align*}
\hat{\truenegative}_{c}(q'') - \hat{\truenegative}_{c}(q')
&\leq 
p(c, q', q'')
+
\sqrt{\frac{\log \frac{1}{\delta}}{2n}}.
\end{align*}
    
    \item 
Finally, we have \(\prob\left(D_{q', q''}\right) \geq 1 - \delta\).
\end{itemize}
\label{lemma:MulticlassBoundTermD}
\end{lemma}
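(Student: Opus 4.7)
The plan is to mirror Lemma~\ref{lemma:ApproximationDBound} from the binary case, adapting the argument to handle the covariate-dependent threshold $\hat{t}(x) := \hat{M}(q, c, x)/q_c$ in place of the scalar threshold $t$. I will treat the three claims in order: the first via the monotonicity of $\hat{t}$ in the weights under the class $c$-cover condition; the second via the uniform closeness $A_{\uniform}$ to pass from $\hat{\eta}$ to $\eta$; and the third via a direct Hoeffding bound on the resulting binomial sum.

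For the first claim, I will unpack Definition~\ref{def:CoveredWeight}: the inequalities $q_c < q'_c$ and $q_j > q'_j$ for $j \neq c$ imply $q'_j \hat{\eta}_j(x) \leq q_j \hat{\eta}_j(x)$ for every $j \neq c$, so $\hat{M}(q', c, x) \leq \hat{M}(q, c, x)$ pointwise, while $q'_c > q_c$; dividing then yields $\hat{t}'(x) \leq \hat{t}(x)$ for every $x$. An identical argument on $(q, q'')$ gives $\hat{t}(x) \leq \hat{t}''(x)$. Since the indicator $\ind\{\hat{\eta}_c(x) < \hat{t}(x)\}$ is monotone nondecreasing in $\hat{t}(x)$, a direct subtraction gives
\[
\hat{\truenegative}_c(q) - \hat{\truenegative}_c(q') = \frac{1}{n}\sum_{i = 1}^n \ind\{Y_i \neq c\} \ind\{\hat{t}'(X_i) \leq \hat{\eta}_c(X_i) < \hat{t}(X_i)\} \geq 0,
\]
which is dominated termwise by the analogous sum with $\hat{t}''$ in place of $\hat{t}$. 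This yields $|\hat{\truenegative}_c(q) - \hat{\truenegative}_c(q')| \leq \hat{\truenegative}_c(q'') - \hat{\truenegative}_c(q')$, and the symmetric bound on $|\hat{\truenegative}_c(q'') - \hat{\truenegative}_c(q)|$ follows identically.

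For the second claim, I will translate the envelope $\hat{t}'(X_i) \leq \hat{\eta}_c(X_i) < \hat{t}''(X_i)$ back to the true regression function using $A_{\uniform}$. The bound $\|\hat{\eta}_j - \eta_j\|_{\xspace,\infty} \leq \epsilon$ gives $\hat{\eta}_c(X_i) \leq \eta_c(X_i) + \epsilon$ on the upper side, and on the lower side $q'_j \hat{\eta}_j(X_i) \geq q'_j \eta_j(X_i) - \epsilon q'_{\max}$ inside the maximum yields $\hat{M}(q', c, X_i) \geq M(q', c, X_i) - \epsilon q'_{\max}$ and hence $\hat{t}'(X_i) \geq t'(X_i) - \epsilon q'_{\max}/q'_c$. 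Combining these gives $\eta_c(X_i) \geq t'(X_i) - \epsilon(1 + q'_{\max}/q'_c) = t'(X_i) - \epsilon r'(c)$; the symmetric perturbation produces $\eta_c(X_i) \leq t''(X_i) + \epsilon r''(c)$. Defining
\[
S := \sum_{i = 1}^n \ind\{Y_i \neq c\} \ind\{t'(X_i) - \epsilon r'(c) \leq \eta_c(X_i) \leq t''(X_i) + \epsilon r''(c)\},
\]
we therefore obtain $\hat{\truenegative}_c(q'') - \hat{\truenegative}_c(q') \leq S/n$.

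For the third claim, each summand of $S$ is an iid Bernoulli variable with success probability exactly $p(c, q', q'')$, since the envelope depends only on the deterministic weights $q', q''$ and the regression function $\eta_c$. Thus $S \sim \binomial(n, p(c, q', q''))$, and Hoeffding's inequality gives $S/n \leq p(c, q', q'') + \sqrt{\log(1/\delta)/(2n)}$ with probability at least $1 - \delta$, completing the bound on $D_{q', q''}$ and chaining with the second part to yield the stated inequality. The main obstacle will be the bookkeeping in the second step: one must carefully track the separate $q_{\max}$ contributions picked up when perturbing $\hat{\eta}_c$ (giving the additive $\epsilon$) and when perturbing the $\hat{\eta}_j$ for $j \neq c$ inside the maximum (giving the multiplicative $q_{\max}/q_c$ factor), and verify that these combine into the $r(q, c) = 1 + q_{\max}/q_c$ scaling; the monotonicity in the first step and the binomial concentration in the third step are routine once the cover definition is unpacked.
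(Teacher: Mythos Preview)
Your proposal is correct and follows essentially the same approach as the paper's proof: the monotonicity of the threshold $\hat t(x) = \hat M(q,c,x)/q_c$ under the class $c$-cover condition for the first claim, the perturbation of $\hat\eta$ to $\eta$ under $A_{\uniform}$ yielding the $r(q,c) = 1 + q_{\max}/q_c$ envelope for the second, and Hoeffding on the resulting binomial $S$ for the third. Your bookkeeping of the separate $\epsilon$ contributions from $\hat\eta_c$ and from the $\hat\eta_j$ inside the maximum is exactly what is needed and matches the paper's computation.
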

%---------------------------------------------%
%---------------------------------------------%
\begin{proof}
First, we note that since \((q', q'')\) class \(c\)-covers \(q\), we have
\begin{align*}
|\hat{\truenegative}_{c}(q) - \hat{\truenegative}_{c}(q')|
&=
\frac{1}{n} \sum_{i = 1}^{n} 
\ind\left\{
Y_{i} \neq c
\right\}
\left(
\ind\left\{
q_{c} \hat{\eta}_{c}(X) < \hat{M}(q, c, X)
\right\}
-
\ind\left\{
q'_{c} \hat{\eta}_{c}(X) < \hat{M}(q', c, X)
\right\}
\right) \\
&\leq 
\frac{1}{n} \sum_{i = 1}^{n} 
\ind\left\{
Y_{i} \neq c
\right\}
\left(
\ind\left\{
q''_{c} \hat{\eta}_{c}(X) < \hat{M}(q'', c, X)
\right\}
-
\ind\left\{
q'_{c} \hat{\eta}_{c}(X) < \hat{M}(q', c, X)
\right\}
\right) \\ 
&\leq 
\frac{1}{n} \sum_{i = 1}^{n} 
\ind\left\{
Y_{i} \neq c
, \;
\hat{t}'(X) \leq \hat{\eta}_{c}(X) < \hat{t}''(X)
\right\} \\
&=
\hat{\truenegative}_{c}(q'') - \hat{\truenegative}_{c}(q').
\end{align*}
A similar analysis shows that \(|\hat{\truenegative}_{c}(q'') - \hat{\truenegative}_{c}(q)|\) is bounded by
\(\hat{\truenegative}_{c}(q'') - \hat{\truenegative}_{c}(q').\)
This proves the first claim of the lemma.

Under \(A_{\uniform}\), we have
\begin{align*}
\hat{\truenegative}_{c}(q'') &- \hat{\truenegative}_{c}(q') \\ &
\leq 
\frac{1}{n} \sum_{i = 1}^{n} \ind\left\{Y_{i} \neq c, \;
t'(X) - \epsilon\left(1 + \frac{q'_{\max}}{q'_{c}}\right)
\leq 
\eta_{c}(X) 
\leq 
t''(X) + \epsilon\left(1 + \frac{q''_{\max}}{q''_{c}}\right)
\right\} \\
&=:
\frac{1}{n} S.
\end{align*}
Thus, under \(D_{q', q''}\), we have
\[
\hat{\truenegative}_{c}(q'') - \hat{\truenegative}_{c}(q')
\leq 
p(c, q', q'') + \sqrt{\frac{\log \frac{1}{\delta}}{2n}}.
\]
This proves the second claim of the lemma.

Finally, since \(S\) is a binomial random variable with \(n\) trials and parameter \(p(c, q', q'')\), Hoeffding's inequality gives
\(\prob\left(D_{q', q''}\right) \geq 1 - \delta\), which completes the proof.
\end{proof}

%---------------------------------------------%
%---------------------------------------------%
\subsection{Proof of Theorem~\ref{theorem:MulticlassConfusionMatrixBound}}
We apply Lemma~\ref{lemma:ConfusionMatrixDecompositionMulticlass} to obtain
\[
|\hat{\truenegative}_{c}(q) - \truenegative_{c}(q)|
\leq 
A(q') + B(q') + C(q, q') + D(q, q').
\]
Subsequently, we examine each of these terms individually.
We apply Lemma~\ref{lemma:MulticlassBoundTermA}, Lemma~\ref{lemma:MulticlassBoundBTerm}, Lemma~\ref{lemma:MulticlassBoundTermC}, and
Lemma~\ref{lemma:MulticlassBoundTermD}.
For the events \(A_{q'}\), \(B_{q'}\), and \(D_{q', q''}\), we need to specify the error probabilities.

For \(A_{q'}\), we specify an error probability of \(\delta'' = \delta / (24 N C)\),
since we need to use the lemma for each \(q'\) in \(Q_{\cover}\), each \(c\) in \([C]\), and each entry of the confusion matrix.
Thus by Assumption~\ref{assumption:UniformEstimate} and Lemma~\ref{lemma:MulticlassBoundTermA},  we have
\[
A(q')
\leq 
p(q', c) + \sqrt{\frac{\log \frac{24NC}{\delta}}{2n}}
\leq 
p(q, q', c) + \sqrt{\frac{\log \frac{12 N C^{2}}{\delta}}{2n}}
\]
for all \(c\) and \(q'\) with probability at least \(1 - 2 \delta / 3.\)

For \(B_{q'}\), we specify an error probability of \(\delta'' = \delta / (24 N C)\), as well, since again we need to apply this lemma and its analogue for each \(q'\) in \(Q_{\cover}\), each \(c\) in \([C]\), and each entry of the confusion matrix.
Thus, by Assumption~\ref{assumption:UniformEstimate} and Lemma~\ref{lemma:MulticlassBoundBTerm}, we obtain
\[
B(q') 
\leq 
\sqrt{\frac{\log \frac{48 N C}{\delta}}{2n}}
\leq 
\sqrt{\frac{\log \frac{24 N C^{2}}{\delta}}{2n}}
\]
for all \(c\) and \(q'\) with probability at least \(1 - 2 \delta / 3\).

The term \(C(q, q')\) is the easiest since it is an almost-sure bound.
By Lemma~\ref{lemma:MulticlassBoundTermC}, we can bound \(C(q, q')\) by
\[
C(q, q')
\leq 
\prob\left(Y \neq c, \; 
t'(c, X) \leq \eta_{c}(X) \leq t''(c, X)
\right) 
\leq 
p(q', q', c).
\]

Finally, we turn to the event \(D_{q', q''}\).
Here we use the error probability \(\delta''' = \delta / (24 N C^{2})\) because we have to apply Lemma~\ref{lemma:MulticlassBoundTermD} for each element \(q'\) of \(Q_{\cover}\), each \(c\) in \([C]\), a different \(q''\) in \(Q_{\cover}\) for each \(q'\) and \(c\), and each of the confusion matrix quantities.
By Lemma~\ref{lemma:MulticlassBoundTermD}, we have the following bound for \(D(q, q')\):
\[
D(q, q')
\leq 
p(q', q'', c) + \sqrt{\frac{\log \frac{24 N C^{2}}{\delta}}{2n}}
\]
with probability at least \(1 - 2 \delta / 3\)
Putting everything together, we have 
\begin{align*}
|\hat{\truenegative}_{c}(q) - \truenegative_{c}(q)|
\leq 
3 p(q', q'', c) + 3\sqrt{\frac{\log \frac{24 N C^{2}}{\delta}}{2n}},
\end{align*}
with probability \(1 - \delta\) because we only need to count \(\prob\left(A_{\uniform}^{c}\right) \leq \delta / 2\) once in the final union bound.
Furthermore, this result holds simultaneously for all other confusion matrix quantities, which completes the proof.
\hfill
\qedsymbol
%---------------------------------------------%
%---------------------------------------------%
%\subsection{Proof of Corollary~\ref{corollary:knnMulticlassConfusionMatrixBound}}

%\qedsymbol

%---------------------------------------------%
%---------------------------------------------%
\subsection{Lower Bound Proof}

\begin{proof}[Proof of Proposition~\ref{prop:LowerBoundAlpha}]
We need to find two distributions \(P_{+}\) and \(P_{-}\) that are sufficiently close.
To this end, we use the canonical ``bump'' function
\[
g(x)
=
\begin{cases}
\exp\left(-\frac{1}{1 - \|x\|_{2}^{2}}\right) & \|x\|_{2} \leq 1 \\
0 & \text{otherwise.}
\end{cases}
\]
We then define the recentered version 
\[
h(x) 
= g\left(
Mx - \frac{1}{2}\ind 
\right),
\]
where in this case we use \(\ind\) to denote the all ones vector \((1, \ldots, 1)\) and the constant \(M\) is \(M = n^{1/(2\alpha + d)}\).
Now, we define the regression functions.
First, on \(x'\), we set \(\eta_{+}(x') = \eta_{-}(x') = 1\).
Next, we define the positive and negative regression functions on \([0, 1]^d\) by
\begin{align*}
\eta_{+}(x)
&=
\frac{1}{2} + \gamma 
+ \min\left\{\frac{1}{4}, \, \frac{L}{\|g\|_{\Sigma^{\alpha}}}\right\} M^{-\alpha} h(x).  \\
\eta_{-}(x)
&=
\frac{1}{2} - \gamma
- \min\left\{\frac{1}{4}, \, \frac{L}{\|g\|_{\Sigma^{\alpha}}}\right\} M^{-\alpha} h(x),
\end{align*}
where we recall that \(\|\cdot\|_{\Sigma^{\alpha}}\) is the \(\alpha\)-H\"{o}lder semi-norm as defined in equation~\eqref{eqn:HolderSeminorm}.
The purpose of \(\gamma\) is to ensure the proper values of \(\truenegative(1/2)\).
Otherwise, \(\gamma > 0\) is a small constant that satisfies
\begin{align}
\begin{aligned}
\|M^{-\alpha} h + 2\gamma\|_{L_{2}}^{2} 
&\leq 2 \|M^{-\alpha} h\|_{L_{2}}^{2} .
%& \text{ and } &&
%\gamma &\leq 
%\frac{M^{-\alpha}}{2}
%\sup_{x \in X}
%h(x).
\label{eqn:gammaProperties}
\end{aligned}
\end{align}
Finally, we define the distribution \(P_{+}\) such that \(X\) is uniformly distributed on \([0, 1]^{d}\) with probability \(p\), \(X\) takes the value \(x'\) with probability \(1 - p\), and the distribution of \(Y\) given \(X\) is determined by the regression function \(\eta_{+}\).
Similarly, \(P_{-}\) is defined such that \(X\) is uniformly distributed on \([0, 1]^{d}\) with probability \(p\), \(X\) takes the value \(x'\) with probability \(1 - p\), and the distribution of \(Y\) given \(X\) is given by the regression function \(\eta_{-}\).

Now, we have two things to prove: (1) that both \(P_{+}\) and \(P_{-}\) are actually in \(\Sigma^{\alpha}(L)\) for some \(L\) and (2) that any estimator must err with the specified probability one of these distributions.
For (1), we follow the same argument as in the proof of Theorem~\ref{theorem:UniformErrorLowerBound}.
Specifically, we see that
\begin{align*}
\|\eta_{+}(x)\|_{\Sigma^{\alpha}}
&\leq 
\frac{L}{\|g\|_{\Sigma^\alpha}} M^{-\alpha} \|h\|_{\Sigma^\alpha}
=
\frac{L}{\|g\|_{\Sigma^\alpha}} M^{-\alpha} M^{\alpha} \|g\|_{\Sigma^\alpha}
=
L.
\end{align*}
Thus, \(\eta_{+}\) is an element of \(\Sigma^{\alpha}(L)\), and one can show identically that \(\eta_{-}\) is an element of \(\Sigma^{\alpha}(L)\).

So, we continue to step (2): establishing the bound on the error probability.
Consider the event \(E = \left\{\hat{\truenegative}\left(1/2\right) > p / 4\right\}\), and let \(E^{c}\) denote its complement.
Then, we have the inequality
\begin{align*}
A
&:=
\sup_{(X, Y) \sim P \in \mathfrak{P}(\alpha, L)} 
\prob\left(
\left|
\hat{\truenegative}\left(\frac{1}{2}\right) - \truenegative\left(\frac{1}{2}\right)
\right|
> 
\frac{p}{4}
\right) \\
&\geq 
\max\left\{
P_{+}^{n}(E), \; P_{-}^{n}(E^{c})
\right\} \\
&\geq 
\frac{1}{2}
\left(P_{+}^{n}(E) + P_{-}^{n}(E^{c})\right),
\end{align*}
where the first inequality comes from considering a smaller set and specifying particular events, and the second inequality follows from the elementary fact that the mean is smaller than the maximum.
Using Lemma~\ref{lemma:PinskerInequality2} and the fact that any event \(F\) satisfies the inequality
\[
P(F) + Q(F^{c}) \geq \int \min\left\{dP, dQ\right\},
\]
we then have
\begin{align}
A
&\geq 
\frac{1}{4} \exp\left(-\kldiv(P_{+}^{n}, \; P_{-}^{n})\right) 
=
\frac{1}{4}\exp\left(-n \kldiv(P_{+}, \; P_{-})\right).
\label{eqn:BoundTheKLAlpha}
\end{align}
Thus, it suffices to bound the Kullback-Leibler divergence between \(P_{+}\) and \(P_{-}\).
To do this, we make use of the elementary fact that for \(x \geq -2/3\), we have \(-\log(1 + x) \leq x^{2} - x\).
Then, we have
\begin{align*}
P_{+}(x, 1) \log \frac{P_{+}(x, 1)}{P_{0}(x, 1)} 
&\leq 
p \eta_{+}(x) \log \frac{\eta_{+}(x)}{\eta_{-}(x)} \\
&=
-p \eta_{+}(x) \log \left(1 + \frac{\eta_{-}(x) - \eta_{+}(x)}{\eta_{+}(x)} \right) \\
&\leq 
p \eta_{+}(x) \left(
\left(
\frac{\eta_{-}(x) - \eta_{+}(x)}{\eta_{+}(x)}
\right)^{2}
-
\frac{\eta_{-}(x) - \eta_{+}(x)}{\eta_{+}(x)}
\right) \\
&=
p\left[\frac{(\eta_{-}(x) - \eta_{+}(x))^{2}}{\eta_{+}(x)}
-
\eta_{-}(x) + \eta_{+}(x)\right] \\
&\leq 
p\left[4 (\eta_{-}(x) - \eta_{+}(x))^{2}
- \eta_{-}(x) + \eta_{+}(x)\right], 
\end{align*}
where in the final inequality we also use the fact that \(n\) is sufficiently large and \(\gamma\) is sufficiently small so that \(\eta_{+}(x) \geq 1/4\).
Similarly, we may obtain
\[
P_{+}(x, 0) \log \frac{P_{+}(x, 0)}{P_{-}(x, 0)}
\leq 
p\left[4 (\eta_{-}(x) - \eta_{+}(x))^{2}
+ \eta_{-}(x) - \eta_{+}(x)\right].
\]
Putting the previous two equations together, we have
\begin{align*}
\kldiv(P_{+},\; P_{-})
&=
\int_{x \in [0, 1]^{d}}
P_{+}(x, 1) \log \frac{P_{+}(x, 1)}{P_{-}(x, 1)} dx
+
\int_{x \in [0, 1]^{d}}
P_{+}(0, 1) \log \frac{P_{+}(x, 0)}{P_{-}(x, 0)} dx \\
&\leq 
8p \int_{x \in [0, 1]^{d}}
(\eta_{+}(x) - \eta_{-}(x))^{2} dx \\
&=
8p  \int_{x \in [0, 1]^{d}}
(M^{-\alpha}h(x) + 2 \gamma)^{2} dx \\
&\leq 
8p M^{-2\alpha} \|h\|_{L_{2}}^{2},
\end{align*}
where the final inequality comes from equation~\eqref{eqn:gammaProperties}.
Now, by a change of variables, we have
\begin{align*}
\kldiv(P_{+},\; P_{-})
&\leq 
8\|g\|_{L_{2}}^{2} p M^{-(2\alpha + d)}.
\end{align*}
Substituting into the bound~\eqref{eqn:BoundTheKLAlpha}, we have the lower bound
\begin{equation}
A
\geq 
\frac{1}{4} \exp\left(-8\|g\|_{L_{2}}^{2} pn M^{-(2\alpha + d)} \right).
\label{eqn:ProbABound}
\end{equation}

Now, we need to determine the maximum error that is implied by this bound.
Denote the maximum error by \(\epsilon\). 
We have
\begin{align*}
\epsilon
&=
\min_{\eta \in \{\eta_{+}, \eta_{-}\}} \sup_{x \in [0, 1]^{d}} \left|\eta(x) - \frac{1}{2}\right| %\\
=
\gamma + \frac{M^{-\alpha}}{2} 
\sup_{x \in [0,1]^{d}}
h(x) %\\
>
\frac{M^{-\alpha}}{2}
\sup_{x \in [0,1]^{d}}
h(x) %\\
\geq  
B M^{-\alpha}
%B n^{-\frac{\alpha}{2\alpha + d}},
\end{align*}
for some constant \(B\).
Thus, writing \(\epsilon^{(2\alpha + d) / \alpha}\) in place of \(M^{-(2\alpha + d)}\) in equation~\eqref{eqn:ProbABound} and setting \(\delta\) to be equal to the right hand side of equation~\eqref{eqn:ProbABound}, we obtain
\[
\delta 
\geq
\frac{1}{4}
\exp\left(-8\|g\|_{L_{2}}^{2}B'' pn \epsilon^{\frac{2\alpha + d}{\alpha}} \right).
\]
Solving for \(\epsilon\), we obtain
\[
\epsilon
\geq
\left(\frac{\log \frac{1}{4 \delta}}{8 \|g\|_{L_{2}} B p n}\right)^{\frac{\alpha}{2\alpha + d}}
=
B'
\left(\frac{\log \frac{1}{4 \delta}}{p n}\right)^{\frac{\alpha}{2\alpha + d}}
\]
for constants \(B\) and \(B'\).
Thus, in summary, we have shown that with probability at least \(\delta\), we make an error of size at least \(p/4\) in estimating \(\truenegative(1/2)\); \(p\) has the form
\(p =
\prob\left(t - \epsilon \leq \eta(X) \leq t + \epsilon
\right)\);
and \(\epsilon\) may be of size at least \(\Omega(n^{-\alpha/(2\alpha + d)})\).
This establishes the proposition.
\end{proof}

%---------------------------------------------%
%---------------------------------------------%
%---------------------------------------------%
%---------------------------------------------%
\section{General Classification Metric Proofs}
\label{app:GeneralClassificationMetricProofs}

In this appendix, we prove the corollaries for precision, recall, and F1 score.

Define the event 
\begin{align*}
A_{\confusionmatrix}
&=
\bigg\{
|\hat{\truenegative}_{c}(q) - \truenegative_{c}(q)| 
\leq E_{\truenegative, c}(q), 
\;
|\hat{\falsenegative}_{c}(q) - \falsenegative{c}(q)| 
\leq E_{\falsenegative, c}(q), 
\\ &\qquad 
\; 
|\hat{\falsepositive}_{c}(q) - \falsepositive{c}(q)| 
\leq E_{\falsepositive, c}(q),
\;
|\hat{\truepositive}_{c}(q) - \truepositive{c}(q)| 
\leq E_{\truepositive, c}(q)
\text{ for all } c \in [C] 
\bigg\}
\end{align*}

\begin{lemma}
Under the event \(A_{\confusionmatrix}\), 
we have
\begin{align*}
|\hat{\precision}_{c}(q) - \precision_{c}(q)|
&\leq 
E_{\precision, c}(q)   \\
|\hat{\recall}_{c}(q) - \recall_{c}(q)|
&\leq 
E_{\recall, c}(q)
\end{align*}
for all \(q\) for which \(E_{\precision, c}(q) > 0\) and
for all \(q\) for which \(E_{\recall, c}(q) > 0\)
\label{lemma:MulticlassGeneral:PrecisionRecall}
\end{lemma}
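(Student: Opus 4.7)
The plan is to mirror the proofs of Lemmas~\ref{lemma:BinaryClassificationPrecision} and~\ref{lemma:BinaryClassificationRecall}, observing that once the class $c$ is fixed, the quantities $\truepositive_c(q)$, $\falsepositive_c(q)$, $\falsenegative_c(q)$, and $\truenegative_c(q)$ behave exactly like binary confusion matrix entries (``positive'' means ``class $c$'', ``negative'' means ``not class $c$''). So the only ingredient that actually changes between the binary and the multiclass arguments here is which event we condition on: we replace the binary event with $A_{\confusionmatrix}$, which by Theorem~\ref{theorem:MulticlassConfusionMatrixBound} holds with probability at least $1 - \delta$.

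First I would prove the precision bound. Under $A_{\confusionmatrix}$, we have the two-sided inequalities $|\hat{\truepositive}_c(q) - \truepositive_c(q)| \leq E_{\truepositive, c}(q)$ and $|\hat{\falsepositive}_c(q) - \falsepositive_c(q)| \leq E_{\falsepositive, c}(q)$. Plugging these into $\hat{\precision}_c(q) = \hat{\truepositive}_c(q) / (\hat{\truepositive}_c(q) + \hat{\falsepositive}_c(q))$ and maximizing the numerator while minimizing the denominator gives
\[
\hat{\precision}_c(q) \leq \frac{\truepositive_c(q) + E_{\truepositive, c}(q)}{\truepositive_c(q) + \falsepositive_c(q) - E_{\truepositive, c}(q) - E_{\falsepositive, c}(q)}.
\]
I would then factor the right-hand side as $\precision_c(q) + 2 B_{\precision,c}(q) + B_{\precision,c}(q)^2$, where $B_{\precision,c}(q) := (E_{\truepositive, c}(q) + E_{\falsepositive, c}(q))/(\truepositive_c(q) + \falsepositive_c(q) - E_{\truepositive, c}(q) - E_{\falsepositive, c}(q))$, exactly paralleling the binary calculation. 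For the case $B_{\precision,c}(q) \leq 1$, this gives $\hat{\precision}_c(q) - \precision_c(q) \leq 3 B_{\precision,c}(q) = E_{\precision, c}(q)$; for the case $B_{\precision,c}(q) > 1$ the bound is automatic since $\hat{\precision}_c(q) \in [0,1]$. The lower bound on $\hat{\precision}_c(q) - \precision_c(q)$ is symmetric: minimize the numerator, maximize the denominator, and factor to get $\hat{\precision}_c(q) \geq \precision_c(q) - 2 B_{\precision,c}(q) \geq \precision_c(q) - E_{\precision, c}(q)$.

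The recall bound is completely analogous, with $\falsenegative_c$ and $E_{\falsenegative, c}$ playing the roles of $\falsepositive_c$ and $E_{\falsepositive, c}$; the algebra is identical. Finally, since the events controlling all four confusion matrix entries across all classes are bundled into $A_{\confusionmatrix}$, the conclusions hold simultaneously for every $c \in [C]$ and every $q$ in the specified domain (the denominator-positivity condition $E_{\precision,c}(q) > 0$ or $E_{\recall,c}(q) > 0$ is exactly what is needed to ensure $B_{\precision,c}(q)$ or its recall analogue is well-defined and non-negative).

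There is no real obstacle here beyond bookkeeping: the proof is essentially a direct transcription of the binary argument with $c$ fixed, so the main thing to be careful about is the edge case where the denominator in $B_{\precision,c}(q)$ or its recall analogue would be nonpositive, which is precisely excluded by the condition $E_{\precision,c}(q) > 0$ (respectively $E_{\recall,c}(q) > 0$). I would therefore keep the proof short by explicitly pointing to the binary lemma proofs and noting the minor substitutions.
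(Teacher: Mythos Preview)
Your proposal is correct and matches the paper's proof essentially line for line: the paper also proves only the precision bound (stating recall is analogous), uses the same numerator/denominator manipulation under $A_{\confusionmatrix}$, the same factorization yielding $\precision_c(q) + 2B_{\precision,c}(q) + B_{\precision,c}(q)^2$, the same case split on $B_{\precision,c}(q) \lessgtr 1$, and the same lower bound giving $\precision_c(q) - 2B_{\precision,c}(q)$.
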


\begin{proof}
We only prove the result for precision, since the proof for recall only requires substituting false negatives in the place of false positives.

We provide upper and lower bounds on \(\hat{\precision}_{c}(q)\).
For the upper bound, we have
\begin{align*}
\hat{\precision}_{c}(q)
&\leq 
\frac{\truepositive_{c}(q) + E_{\truepositive, c}(q)}{\truepositive_{c}(q) + \falsepositive(t) - E_{\truepositive, c}(q) - E_{\falsepositive, c}(q)} \\
&=
\left(\precision_{c}(q) + 
\frac{E_{\truepositive, c}(q)}{\truepositive_{c}(q) + \falsepositive_{c}(q)}
\right)
\left(
1 
+ 
\frac{E_{\truepositive, c}(q) + E_{\falsepositive, c}(q)}{\truepositive_{c}(q) + \falsepositive_{c}(q) - E_{\truepositive, c}(q) - E_{\falsepositive, c}(q)}
\right) \\
&\leq 
\precision_{c}(q) + 2 B_{\precision, c}(q) + B_{\precision, c}(q)^{2},
\end{align*}
where
\[
B_{\precision, c}(q)
= 
\frac{E_{\truepositive, c}(q) + E_{\falsepositive, c}(q)}{\truepositive_{c}(q) + \falsepositive_{c}(q) - E_{\truepositive, c}(q) - E_{\falsepositive, c}(q)}.
\]
First, note that \(B_{\precision, c}(q) \geq 0\) by assumption.
Next if \(B_{\precision, c}(q) \leq 1\), then we can obtain the bound
\[
\hat{\precision}_{c}(q)
\leq 
\precision_{c}(q)
+
3 B_{\precision, c}(q).
\]
If \(B_{\precision, c}(q) \geq 1\), then since empirical precision is bounded by \(1\), the previous inequality is also true, which proves the upper bound.

For the lower bound, we have
\begin{align*}
\hat{\precision}_{c}(q)
&\geq 
\frac{\truepositive_{c}(q) - E_{\truepositive, c}(q)}{\truepositive_{c}(q) + \falsepositive_{c}(q) + E_{\truepositive, c}(q) + E_{\falsepositive, c}(q)} \\
&=
\frac{\truepositive_{c}(q) - E_{\truepositive, c}(q)}{\truepositive_{c}(q) + \falsepositive_{c}(q)}
\cdot 
\frac{\truepositive_{c}(q) + \falsepositive_{c}(q)}{\truepositive_{c}(q) + \falsepositive_{c}(q) + E_{\truepositive, c}(q) + E_{\falsepositive, c}(q)} \\
&=
\left( \precision_{c}(q) 
- 
\frac{E_{\truepositive, c}(q)}{\truepositive_{c}(q) + \falsepositive_{c}(q)} \right)
\left(1 
- 
\frac{E_{\truepositive, c}(q) + E_{\falsepositive, c}(q)}{\truepositive_{c}(q) + \falsepositive_{c}(q) + E_{\truepositive, c}(q) + E_{\falsepositive, c}(q)}
\right) 
\\
&\geq
\precision_{c}(q)
- 
2B_{\precision, c}(q).
\end{align*}
This proves the lower bound and completes the proof.
\end{proof}

\begin{proof}[Proof of Corollary~\ref{corollary:Examples:PrecisionRecall}]
Theorem~\ref{theorem:MulticlassConfusionMatrixBound} shows that \(A_{\confusionmatrix}\) occurs with probability at least \(1 - \delta\), and so Lemma~\ref{lemma:MulticlassGeneral:PrecisionRecall} completes the proof.
\end{proof}
%---------------------------------------------%
%---------------------------------------------%

Define the event 
\[
A_{\fone}
=
\left\{
|\hat{\precision}_{c}(q) - \precision_{c}(q)| \leq 
E_{\precision, c}(q), \;
|\hat{\recall}_{c}(q) - \recall_{c}(q)|
\leq 
E_{\recall, c}(q)
\text{ for all } c \in [C]
\right\}.
\]
\begin{lemma}
Under \(A_{\fone}\), we have
we have
\[
|\hat{\fone}_{c}(q) - \fone_{c}(q)|
\leq 
E_{\fone, c}(q),
\]
for each \(c\) in \([C]\). 
Additionally, we have
\[
|\hat{\fone}(q) - \fone(q)|
\leq 
\frac{1}{C} \sum_{j = 1}^{C} 
E_{\fone, c}(q).
\]
\label{lemma:Examples:F1Score}
\end{lemma}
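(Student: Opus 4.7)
The plan is to mirror the binary proof of Lemma~\ref{lemma:BinaryClassificationF1} essentially verbatim, one class at a time, and then combine via triangle inequality to obtain the macro-averaged bound. Fix $c \in [C]$. Under the event $A_{\fone}$, we have the two-sided sandwich $\precision_c(q) - E_{\precision, c}(q) \leq \hat{\precision}_c(q) \leq \precision_c(q) + E_{\precision, c}(q)$ and the analogous inequality for recall, and these are the only inputs we need.

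For the upper bound, I would substitute the upper estimates of $\hat{\precision}_c$ and $\hat{\recall}_c$ into the numerator of $\hat{\fone}_c(q) = 2 \hat{\precision}_c \hat{\recall}_c/(\hat{\precision}_c + \hat{\recall}_c)$ and the lower estimates into the denominator, then factor the result as in the binary proof:
\[
\hat{\fone}_c(q) \leq \left(\fone_c(q) + 4\,\frac{E_{\precision, c}(q) + E_{\recall, c}(q)}{\precision_c(q) + \recall_c(q)}\right)\left(1 + B_{\fone, c}(q)\right),
\]
where $B_{\fone, c}(q) := \left(E_{\precision, c}(q) + E_{\recall, c}(q)\right)/\left(\precision_c(q) + \recall_c(q) - E_{\precision, c}(q) - E_{\recall, c}(q)\right)$. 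Expanding, this yields $\hat{\fone}_c(q) - \fone_c(q) \leq 5 B_{\fone, c}(q) + 4 B_{\fone, c}(q)^2$. For the lower bound, the same substitution, swapping numerator and denominator roles, gives $\hat{\fone}_c(q) - \fone_c(q) \geq -3 B_{\fone, c}(q)$. Combining: $|\hat{\fone}_c(q) - \fone_c(q)| \leq 9 B_{\fone, c}(q) = E_{\fone, c}(q)$ whenever $B_{\fone, c}(q) \leq 1$; when $B_{\fone, c}(q) > 1$, we fall back on $\fone_c, \hat{\fone}_c \in [0,1]$, so the inequality trivially holds. This handles the per-class bound.

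For the macro-averaged bound, use $\fone(q) - \hat{\fone}(q) = \frac{1}{C}\sum_{c=1}^C (\fone_c(q) - \hat{\fone}_c(q))$, apply the triangle inequality, and substitute the per-class bound just obtained, giving $|\hat{\fone}(q) - \fone(q)| \leq \frac{1}{C}\sum_{c=1}^C E_{\fone, c}(q)$, as required.

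The only real obstacle is algebraic bookkeeping: keeping track of the constants in the factorization (especially that the coefficient works out to $9$ after accounting for both multiplicative error factors) and verifying that the trivial bound from $\fone_c \in [0,1]$ correctly absorbs the $B_{\fone, c}(q) \geq 1$ regime so that no additional hypothesis on the sign or magnitude of $E_{\fone, c}(q)$ is needed. Nothing in this proof uses the structure of the underlying estimator, the regression function, or $\xspace$; it is purely a deterministic consequence of the precision/recall bounds on the event $A_{\fone}$, which by Corollary~\ref{corollary:Examples:PrecisionRecall} has probability at least $1 - \delta$.
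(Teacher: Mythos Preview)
Your proposal is correct and follows essentially the same approach as the paper: the same upper/lower substitution into the F1 formula, the same factorization yielding $5B_{\fone,c}(q)+4B_{\fone,c}(q)^2$ and $-3B_{\fone,c}(q)$, the same case split on $B_{\fone,c}(q)\lessgtr 1$ using $\fone_c\in[0,1]$, and the triangle inequality for the macro-average. The exposition and constants match the paper's proof line for line.
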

%---------------------------------------------%
%---------------------------------------------%
\begin{proof}
We start by upper and lower bounding the empirical F1 score for class \(c\) under \(A_{\fone}\).
Starting with the upper bound, we have
\begin{align*}
\hat{\fone}_{c}(q)
&\leq 
2 
\frac{(\precision_{c}(q) + E_{\precision, c}(q)) (\recall_{c}(q) + E_{\recall, c}(q))}{\precision_{c}(q) + \recall_{c}(q) - E_{\precision, c}(q) - E_{\recall, c}(q)} \\ 
&=
\left(
\fone_{c}(q) + 4 \frac{E_{\precision, c}(q) + E_{\recall, c}(q)}{\precision_{c}(q) + \recall_{c}(q)} 
\right)
% \\ & \qquad \times 
\left(
1 
+
\frac{E_{\precision, c}(q) + E_{\recall, c}(q)}{\precision_{c}(q) + \recall_{c}(q) - E_{\precision, c}(q) - E_{\recall, c}(q)}
\right) \\ 
&\leq 
\fone_{c}(q) + 5 B_{\fone, c}(q) + 4B_{\fone, c}(q)^{2}
\end{align*}
where 
\[
B_{\fone, c}(q)
=
\frac{E_{\precision, c}(q) + E_{\recall, c}(q)}{\precision_{c}(q) + \recall_{c}(q) - E_{\precision, c}(q) - E_{\recall, c}(q)}.
\]
If we have \(B_{\fone, c}(q) \leq 1\), then we can write 
\[
\hat{\fone}_{c}(q)
\leq 
\fone_{c}(q) + 9 B_{\fone, c}(q)
=
\fone_{c}(q) + E_{\fone, c}(q).
\]
If \(B_{\fone, c}(q) \geq 1\), then the previous upper bound still holds since \(\fone_{c}(q)\) is bounded by \(1\), which completes the upper bound

For the lower bound, we have
\begin{align*}
\hat{\fone}_{c}(q)
&\geq 
2 
\frac{(\precision_{c}(q) - E_{\precision, c}(q))(\recall_{c}(q) - E_{\recall, c}(q))}{\precision_{c}(q) + \recall_{c}(q) + E_{\precision, c}(q) + E_{\recall, c}(q)} \\
&=
\left(
\fone_{c}(q)
-
2 \frac{E_{\precision, c}(q) + E_{\recall, c}(q)}{\precision_{c}(q) + \recall_{c}(q)}
\right)
\left(
1 
-
\frac{E_{\precision, c}(q) + E_{\recall, c}(q)}{\precision_{c}(q) + \recall_{c}(q) + E_{precision, c}(q) + E_{\recall, c}(q)}
\right) \\ 
&\geq 
\fone_{c}(q)
- 
3 B_{\fone, c}(q).
\end{align*}
This completes the first part of the lemma.
For the second, we have
\begin{align*}
|\hat{\fone}(q) - \fone(q)|
\leq 
\frac{1}{C} \sum_{j = 1}^{C}
|\hat{\fone}_{c}(q) - \fone_{c}(q)|,
\end{align*}
which completes the proof.
\end{proof}

\begin{proof}[Proof of Corollary~\ref{corollary:Examples:F1Score}]
By Corollary~\ref{corollary:Examples:PrecisionRecall}, we see that \(A_{\fone}\) has probability at least \(1 - \delta\). 
Thus, Lemma~\ref{lemma:Examples:F1Score} completes the proof.
\end{proof}

%---------------------------------------------%
%---------------------------------------------%
\section{Additional Optimization Details}
\label{sec:AdditionalOptimization}

In this section, we provide additional details for optimization results of section \ref{sec:NumericalResults}.
First, we provide the explicit algorithm for grid search in Algorithm~\ref{algorithm:GridSearch}.
%---------------------------------------------%
%---------------------------------------------%
\begin{algorithm}[t]
	\caption{Grid search weights}
	\SetKwInOut{Input}{Input}
	\SetKwInOut{Output}{Output}
	\Input{grid of weights \(W\), empirical F1 function \(\hat{\fone}\).}
	\(q_{\text{grid}}\gets \)None\\
	\ForEach{\(q\) in \(W\)}{
		\If{\(q_{\text{grid}}\) is None or \(\hat{\fone}(q) > \hat{\fone}(q_{\text{grid}})\)}{\(q_{\text{grid}}\gets q\)}
	}
	
	\Output{The weights \(q_{\text{grid}}\).}
	\caption{Grid search algorithm.}
	\label{algorithm:GridSearch}
\end{algorithm}
%---------------------------------------------%
%---------------------------------------------%

Second, we provide additional comparisons between the greedy algorithm and grid search in Figure~\ref{figure:GreedyGridDiffs}.
In particular, we examine the difference between the population F1 score for grid search and the greedy algorithm in Figure~\ref{figure:ExpDetails:F1AbsDiff}.
This shows that the difference is quite small.
Second, we examine the \(\ell_{2}\) distance between the learned weights in Figure~\ref{figure:ExpDetails:WeightL2Dist} as a function of \(n\).
Although there is no clear pattern, the distance between the weights is small.

\begin{figure}[t]
	\begin{subfigure}[t]{0.48\textwidth}
		\includegraphics[width=\textwidth]{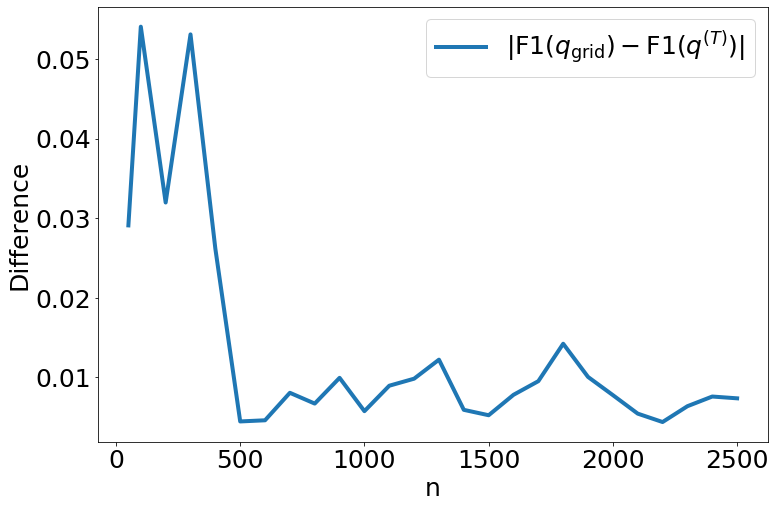}
		\caption{Absolute value of difference between \(\fone(q^{(T)})\) and \(\fone(q_{\text{grids}})\) for different training sizes \(n\). This difference decreases as \(n\) increases.}
		\label{figure:ExpDetails:F1AbsDiff}
	\end{subfigure}\hfil
	\begin{subfigure}[t]{0.48\textwidth}
		\includegraphics[width=\textwidth]{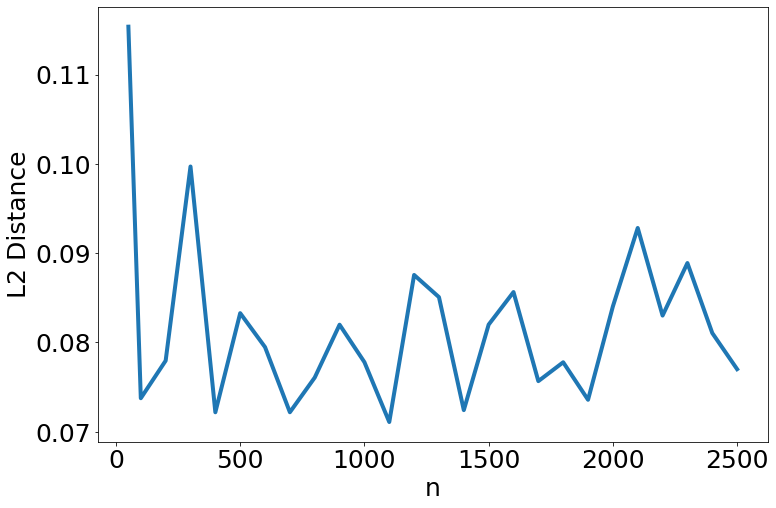}
		\caption{The \(\ell_{2}\) distance between \(q^{(T)}\) and \(q_{\text{grids}}\) for different training sizes \(n\). This difference decreases as \(n\) increases as well.}
		\label{figure:ExpDetails:WeightL2Dist}
	\end{subfigure}
	\caption{Comparing the greedy algorithm and grid search algorithms as a function of sample size \(n\).}
	\label{figure:GreedyGridDiffs}
\end{figure}

%---------------------------------------------%
%---------------------------------------------%
\section{Standard Tools}
\label{sec:app:StandardTools}

\subsection{Lemmas}

\begin{lemma}[Hoeffding's inequality]
Let \(X_{1}, \ldots, X_{n}\) be a sequence of random variables such that each \(X_{i}\) takes values in the interval \([a_{i}, b_{i}]\).
Define the sum
\(S_{n} = \sum_{i = 1}^{n} X_{i}\).
Let \(t > 0\).
Then, we have the inequality
\begin{align*}
& \begin{aligned}
\prob\left(
S_{n} - \expect S_{n} > t
\right)
&\leq 
\exp\left(
-\frac{2t^{2}}{\sum_{t = 1}^{n} (b_{i} - a_{i})^{2}}
\right).
\end{aligned}
\end{align*}
\label{lemma:Hoeffding}
\end{lemma}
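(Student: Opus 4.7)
The plan is to prove Hoeffding's inequality via the standard Chernoff-bound approach: exponentiate, apply Markov, factor the moment generating function across independent summands, bound each factor via Hoeffding's lemma, and then optimize the free parameter. Implicitly the $X_i$ are independent (as in the statement's intended use in the paper), so I will assume that throughout.

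First, I would observe that for any $\lambda > 0$,
\[
\prob\left(S_n - \expect S_n > t\right)
= \prob\left(e^{\lambda(S_n - \expect S_n)} > e^{\lambda t}\right)
\leq e^{-\lambda t}\, \expect e^{\lambda(S_n - \expect S_n)}
\]
by Markov's inequality. Writing $S_n - \expect S_n = \sum_{i=1}^n (X_i - \expect X_i)$ and using independence to factor the MGF,
\[
\expect e^{\lambda(S_n - \expect S_n)}
= \prod_{i=1}^n \expect e^{\lambda (X_i - \expect X_i)}.
\]

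Next, I would invoke Hoeffding's lemma: if $Z$ is a zero-mean random variable with $a \leq Z \leq b$ almost surely, then $\expect e^{\lambda Z} \leq \exp(\lambda^2 (b-a)^2 / 8)$. Since $X_i - \expect X_i$ is zero-mean and lies in an interval of length $b_i - a_i$, this yields $\expect e^{\lambda (X_i - \expect X_i)} \leq \exp(\lambda^2 (b_i - a_i)^2 / 8)$. Combining gives
\[
\prob\left(S_n - \expect S_n > t\right)
\leq \exp\!\left(-\lambda t + \tfrac{\lambda^2}{8}\sum_{i=1}^n (b_i - a_i)^2\right).
\]

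Finally, I would optimize the right-hand side over $\lambda > 0$. The quadratic exponent is minimized at $\lambda^\star = 4t / \sum_{i=1}^n (b_i - a_i)^2$, producing exponent $-2t^2 / \sum_{i=1}^n (b_i - a_i)^2$, which is exactly the stated bound. The only nontrivial step is Hoeffding's lemma itself; if a self-contained proof is wanted, I would prove it by noting that $z \mapsto e^{\lambda z}$ is convex, so $e^{\lambda Z} \leq \tfrac{b-Z}{b-a} e^{\lambda a} + \tfrac{Z-a}{b-a} e^{\lambda b}$, take expectations using $\expect Z = 0$, and then bound the resulting $\log$-MGF $\psi(\lambda)$ by Taylor expanding: $\psi(0) = \psi'(0) = 0$ and $\psi''(\lambda) \leq (b-a)^2/4$ (the variance of a $[a,b]$-valued random variable), so $\psi(\lambda) \leq \lambda^2 (b-a)^2 / 8$. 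This is the only technical obstacle; the rest is a mechanical Chernoff optimization.
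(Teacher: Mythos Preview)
Your proof is the standard Chernoff-bound derivation and is correct. The paper itself does not prove this lemma: it is listed in the ``Standard Tools'' appendix as a cited result with no proof, so there is nothing to compare against beyond noting that your argument is the canonical one. Your explicit remark that independence is implicitly assumed is apt, since the lemma as stated in the paper omits that hypothesis but every application of it in the paper is to independent summands.
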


\begin{lemma}[multiplicative Chernoff bound]
Let \(X_{1}, \ldots, X_{n}\) be a sequence of random variables such that each \(X_{i}\) takes values in \(\{0, 1\}\).
Define the sum
\(S_{n} = \sum_{i = 1}^{n} X_{i}\), and for brevity, let \(\mu = \expect S_{n}\).
Let \(\delta\) be in \((0, 1)\).
Then, we have the inequality
\begin{align*}
& \begin{aligned}
\prob\left(
S_{n} < (1 - \delta) \mu
\right)
&\leq 
\exp\left(
-\frac{\delta^{2} \mu }{2}
\right).
\end{aligned}
\end{align*}
\label{lemma:ChernoffMultiplicative}
\end{lemma}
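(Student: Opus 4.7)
The plan is to use the standard Chernoff method: exponentiate, apply Markov's inequality, factor the moment generating function via independence, optimize the tilt parameter, and finally convert the resulting Kullback--Leibler-type exponent into the cleaner quadratic form $\delta^2\mu/2$.

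First I would fix $t > 0$ (to be chosen later) and rewrite the target event $\{S_n < (1-\delta)\mu\}$ as $\{e^{-tS_n} > e^{-t(1-\delta)\mu}\}$. Markov's inequality then gives the bound $\prob(S_n < (1-\delta)\mu) \leq e^{t(1-\delta)\mu}\,\expect[e^{-tS_n}]$. Using independence of the $X_i$ and writing $p_i := \expect X_i$, the MGF factors as $\prod_i(1 + p_i(e^{-t} - 1))$, and the elementary inequality $1+x \leq e^x$ yields $\expect[e^{-tS_n}] \leq \exp(\mu(e^{-t} - 1))$, where I use $\sum_i p_i = \mu$. Combining gives the exponential bound $\prob(S_n < (1-\delta)\mu) \leq \exp\bigl(\mu(e^{-t} - 1) + t(1-\delta)\mu\bigr)$.

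Next I would optimize over $t$. Differentiating the exponent in $t$ and setting the derivative to zero yields $e^{-t} = 1-\delta$, i.e.\ $t = -\log(1-\delta)$, which is positive since $\delta \in (0,1)$. Substituting back produces the sharp ``relative entropy'' bound
\[
\prob(S_n < (1-\delta)\mu) \leq \exp\bigl(-\mu\,\varphi(\delta)\bigr),
\qquad
\varphi(\delta) := (1-\delta)\log(1-\delta) + \delta.
\]
To reach the stated form, it then suffices to show $\varphi(\delta) \geq \delta^2/2$ for $\delta \in [0,1)$. This is the only analytic step, and it is routine: set $g(\delta) := \varphi(\delta) - \delta^2/2$, observe $g(0) = 0$ and $g'(0) = 0$, and compute $g''(\delta) = \delta/(1-\delta) \geq 0$, so that $g$ is convex with a minimum of $0$ at $\delta = 0$, hence nonnegative throughout $[0,1)$.

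There is essentially no main obstacle here since every step is standard, but if anything warrants care it is the convexity check $\varphi(\delta) \geq \delta^2/2$, which is what converts the tight but unwieldy Kullback--Leibler exponent into the clean Gaussian-type tail. I would present the argument in the order above, making explicit that independence is used only in the MGF factorization and that the $\{0,1\}$-valued assumption is used only to compute each single-coordinate MGF as $1 + p_i(e^{-t} - 1)$.
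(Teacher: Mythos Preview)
Your argument is correct and is the textbook Chernoff-method proof of this bound. The paper does not actually prove this lemma: it appears in the ``Standard Tools'' appendix alongside Hoeffding's inequality and is merely cited as a known result, with no proof given. So there is no paper proof to compare against; your write-up would simply fill in what the authors left as standard.

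One small point worth making explicit in your write-up: the lemma as stated in the paper omits the word ``independent,'' but your proof (correctly) uses independence to factor the moment generating function. Without independence the statement is false (take all $X_i$ equal to a single Bernoulli variable), so you should flag that independence is being assumed even though the paper's statement neglects to say so. Otherwise every step---Markov on $e^{-tS_n}$, the bound $1+x\le e^x$, the optimal tilt $t=-\log(1-\delta)$, and the convexity check $\varphi(\delta)\ge\delta^2/2$ via $g''(\delta)=\delta/(1-\delta)\ge 0$---is clean and standard.
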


%---------------------------------------------%
%---------------------------------------------%

\begin{lemma}[Lemma~2.6 of \citealt{tsybakov2009introduction}]
We have the inequality
\[
\frac{1}{2} \exp\left(- \kldiv(P, Q)\right)
\leq 
\int \min\left\{dP, dQ\right\}.
\]
\label{lemma:PinskerInequality2}
\end{lemma}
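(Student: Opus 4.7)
The plan is to reduce the inequality to a two-step combination of Cauchy--Schwarz and Jensen's inequality, using the Hellinger affinity $A(P, Q) := \int \sqrt{pq} \, d\mu$ (taken with respect to any common dominating measure $\mu$, e.g.\ $\mu = P + Q$) as the bridge between $\int \min\{dP, dQ\}$ and $\exp(-\kldiv(P,Q))$. Whenever $Q \not\ll P$ the right-hand side vanishes because $\kldiv(P, Q) = +\infty$, so without loss of generality I may assume mutual absolute continuity and write everything in terms of densities $p$ and $q$.

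The first step is the purely measure-theoretic bound $A(P, Q)^2 \leq 2 \int \min\{p, q\} \, d\mu$. Using the identity $pq = \min\{p,q\} \cdot \max\{p,q\}$ and applying Cauchy--Schwarz to the factorization $\sqrt{pq} = \sqrt{\min\{p,q\}} \cdot \sqrt{\max\{p,q\}}$ gives
\[
A(P, Q)^2 \leq \int \min\{p,q\} \, d\mu \cdot \int \max\{p,q\} \, d\mu.
\]
Since $\max\{p,q\} = p + q - \min\{p,q\}$, the second factor equals $2 - \int \min\{p,q\} \, d\mu \leq 2$, which yields the claimed bound.

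The second step is the information-theoretic lower bound $A(P, Q) \geq \exp(-\kldiv(P, Q)/2)$. Rewriting $A(P, Q) = \expect_P[(q/p)^{1/2}] = \expect_P[\exp(\tfrac{1}{2}\log(q/p))]$ and applying Jensen's inequality to the convex function $\exp$ gives
\[
A(P, Q) \geq \exp\!\left(\tfrac{1}{2}\expect_P \log(q/p)\right) = \exp\!\left(-\tfrac{1}{2} \kldiv(P, Q)\right).
\]
Chaining the two bounds then produces the desired inequality $\int \min\{dP, dQ\} \geq A(P,Q)^2 / 2 \geq \tfrac{1}{2}\exp(-\kldiv(P, Q))$.

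There is essentially no obstacle here: both steps are textbook inequalities. The only bookkeeping concerns handling the degenerate case $\kldiv(P,Q) = +\infty$ (where the bound is trivial) and noting that the Cauchy--Schwarz step requires no additional integrability hypothesis because $A(P, Q) \leq 1$ holds automatically.
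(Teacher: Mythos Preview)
Your proof is correct and is essentially the standard textbook argument (in fact, this is the proof given in Tsybakov's book). The paper itself does not supply a proof of this lemma: it is stated in the ``Standard Tools'' appendix as a citation of Lemma~2.6 of \citet{tsybakov2009introduction} and invoked without further justification, so there is nothing to compare against beyond noting that your argument matches the original source.
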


%---------------------------------------------%
%---------------------------------------------%
\begin{lemma}[\citealt{slud1977distribution}]
	Let \(0 < p < 1 / 2\) and define \(B\) to be a \(\binomial(n, p)\) random variable.
	Let \(\ell\) be a nonnegative integer,
	and let \(Z\) be a standard normal random variable.
	
	\begin{itemize}
	    \item[(a)]
	    If \(\ell \leq np\),
	    then we have \(\prob(B \geq \ell) \geq \prob\left(Z \geq (l - np) / \sqrt{np}\right)\).
	    
	    \item[(b)]
	    If \(np \leq \ell \leq n(1 - p)\),
	    then we have \(\prob(B \geq \ell) \geq \prob\left(Z \geq (l - np) / \sqrt{np(1 - p)}\right)\).
	\end{itemize}
	\label{lemma:SludsInequalities}
\end{lemma}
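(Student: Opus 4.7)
\textbf{Proof proposal for Lemma~\ref{lemma:SludsInequalities}.} The plan is to follow the classical term-by-term comparison of the binomial tail sum to a Gaussian tail integral, as in the original argument of \citet{slud1977distribution}. The central idea is that, for $p \leq 1/2$, the binomial point masses $b(k) := \binom{n}{k} p^k (1-p)^{n-k}$ pointwise dominate corresponding strips of Gaussian mass in the regions of interest, so summing from $\ell$ to $n$ preserves the inequality.

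First, I would dispose of trivial edge cases ($\ell = 0$ or $\ell = n$), and then, in each of cases (a) and (b), pick the relevant scale $\sigma$ ($\sigma = \sqrt{np}$ in (a) and $\sigma = \sqrt{np(1-p)}$ in (b)) together with knots $x_k := (k - np - \tfrac{1}{2})/\sigma$. The target pointwise inequality is
\[
b(k) \;\geq\; \int_{x_k}^{x_{k+1}} \phi(u)\,du \qquad \text{for all } k \geq \ell,
\]
since summing this from $k = \ell$ to $n$ telescopes the right-hand side into $\prob(Z \geq x_\ell) \leq \prob(Z \geq (\ell - np)/\sigma)$ (up to the half-integer correction, which is absorbed), yielding the stated lower bound on $\prob(B \geq \ell)$.

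To prove the termwise comparison, I would combine two ingredients. The first is a sharp Stirling-type lower estimate
\[
b(k) \;\geq\; \frac{1}{\sigma \sqrt{2\pi}} \exp\!\left(-\frac{(k - np)^2}{2\sigma^2}\right) \cdot (1 + R_k),
\]
with an explicit remainder $R_k$. The second is the monotonicity of the ratio $b(k+1)/b(k) = (n-k)p / ((k+1)(1-p))$, which in the regime specified for each case can be compared directly to the Gaussian ratio $\exp(-(x_{k+1}^2 - x_k^2)/2)$. In case (b), where $np \leq \ell \leq n(1-p)$, both quantities behave symmetrically about the mean and the Stirling remainder is well-controlled; in case (a), where $\ell \leq np$, the inflated variance $\sigma^2 = np$ (larger than $np(1-p)$) gives a weaker Gaussian target and creates the necessary slack for the monotonicity argument to close.

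The main obstacle will be handling the comparison uniformly near the boundary $\ell \approx np$, where both the Stirling remainder and the skewness of the binomial (relative to the symmetric Gaussian) are most pronounced. Slud's resolution is to split into a local normal-approximation zone and a far-tail zone, and to exploit the assumption $p \leq 1/2$ so that the binomial's skewness points in the direction favorable to the inequality across the stated ranges of $\ell$. Since this lemma is a well-established classical tool, my proposal is to cite \citet{slud1977distribution} rather than reproduce the full calculation, invoking the two-sided statement (a)–(b) in the form given there.
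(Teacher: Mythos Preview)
Your proposal is correct and aligns with the paper's own treatment: this lemma is stated as a standard tool with a citation to \citet{slud1977distribution} and is not proved in the paper, so your plan to invoke the original reference rather than reproduce the full termwise comparison is exactly what is done here.
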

%---------------------------------------------%
%---------------------------------------------%
%---------------------------------------------%
%---------------------------------------------%
\end{document}